\newtheorem{lem}{Lemma}
\newtheorem{prop}{Proposition}
\newtheorem*{rep@theorem}{\rep@title}
\newcommand{\newreptheorem}[2]{%
\newenvironment{rep#1}[1]{%
 \def\rep@title{#2 \ref{##1}}%
 \begin{rep@theorem}}%
 {\end{rep@theorem}}}
\theoremstyle{definition}
\newtheorem{defn}{Definition}
\newtheorem{remark}{Remark}
\newcommand{\calR}{\mathcal{R}}
\newcommand{\calC}{\mathcal{C}}
\newcommand{\bc}{\mathbf{c}}
\newcommand{\VecSim}{\kappa}
\newcommand{\bC}{\mathbf{C}}
\newcommand{\bx}{\boldsymbol{x}}
\renewcommand{\tilde}{\widetilde}
\newcommand{\Reals}{\mathbb{R}}
\newcommand{\ones}{\mathbf{1}}
\newcommand{\by}{\mathbf{y}}
\newcommand{\ba}{\mathbf{a}}
\newmdenv[%
  backgroundcolor=SkyBlue!30, %
  linecolor=black,
  linewidth =1pt,%
  skipabove = 10pt,%
  skipbelow = 10pt
]{comment}
\newmdenv[%
  backgroundcolor=red, %
  linecolor=black,
  linewidth =1pt,%
  skipabove = 10pt,%
  skipbelow = 10pt
]{TODO}
\definecolor{quotemark}{gray}{0.7}
\def\fquote{%
    \@ifnextchar[{\fquote@i}{\fquote@i[]}
           }%
\def\fquote@i[#1]{%
    \def\tempa{#1}%
    \@ifnextchar[{\fquote@ii}{\fquote@ii[]}
                 }%
\def\fquote@ii[#1]{%
    \def\tempb{#1}%
    \@ifnextchar[{\fquote@iii}{\fquote@iii[]}
                      }%
\def\fquote@iii[#1]{%
    \def\tempc{#1}%
    \vspace{1em}%
    \noindent%
    \begin{list}{}{%
         \setlength{\leftmargin}{0.1\textwidth}%
         \setlength{\rightmargin}{0.1\textwidth}%
                  }%
         \item[]%
         \begin{picture}(0,0)%
         \put(-15,-5){\makebox(0,0){\scalebox{3}{\textcolor{quotemark}{``}}}}%
         \end{picture}%
         \begingroup\itshape}%
 \def\endfquote{%
 \endgroup\par%
 \makebox[0pt][l]{%
 \hspace{0.8\textwidth}%
 \begin{picture}(0,0)(0,0)%
 \put(15,15){\makebox(0,0){%
 \scalebox{3}{\color{quotemark}''}}}%
 \end{picture}}%
 \ifx\tempa\empty%
 \else%
    \ifx\tempc\empty%
       \hfill\rule{100pt}{0.5pt}\\\mbox{}\hfill\tempa,\ \emph{\tempb}%
   \else%
       \hfill\rule{100pt}{0.5pt}\\\mbox{}\hfill\tempa,\ \emph{\tempb},\ \tempc%
   \fi\fi\par%
   \vspace{0.5em}%
 \end{list}%
 }%
\DeclareMathOperator*{\argsup}{arg\,sup}
\DeclareMathOperator*{\arginf}{arg\,inf}
\newcommand{\xmark}{\textcolor{red}{\ding{55}}}
\newcommand*\colourcheck[1]{%
\expandafter\newcommand\csname #1check\endcsname{\textcolor{#1}{\ding{52}}}%
}
\newcommand*\colourx[1]{%
\expandafter\newcommand\csname #1x\endcsname{\textcolor{#1}{\ding{55}}}%
}
\newcommand{\retrieve}{\bx^r}
\newcommand{\curation}{\bx^c}
\newcommand{\MPR}{\mathrm{MPR}(\mathcal{C}, \mathcal{R},Q)}
\newcommand{\MPRc}{\mathrm{MPR}(\mathcal{C}, \mathcal{R}, \mathcal{D}_C)}
\DeclareMathOperator{\sign}{sign}
\newcommand{\methodname}{\texttt{MOPR}}
\newcommand\blfootnote[1]{%
  \begingroup
  \renewcommand\thefootnote{}\footnotetext{#1}%
  \endgroup
}
\title{Multi-Group Proportional Representation in Retrieval}
\author{%
  Alex Oesterling$^*$ \\
  Harvard University\\
  \And
  Claudio Mayrink Verdun$^*$\\
  Harvard University\\
  \And
  Carol Xuan Long\\
  Harvard University\\
  \And
  Alexander Glynn\\
  Harvard University\\
  \And
  Lucas Monteiro Paes\\
  Harvard University\\
  \And
  Sajani Vithana\\
  Harvard University\\
  \And
  Martina Cardone\\
  University of Minnesota\\
  \And
  Flavio P. Calmon\\
  Harvard University\\
}
\begin{document}

\maketitle

\blfootnote{$^*$ Equal contribution, correspondence to \texttt{aoesterling@g.harvard.edu}}

\begin{abstract}
  Image search and retrieval tasks can perpetuate harmful stereotypes, erase cultural identities, and amplify social disparities. Current approaches to mitigate these representational harms balance the number of retrieved items across population groups defined by a small number of (often binary) attributes. However, most existing methods overlook intersectional groups determined by combinations of group attributes, such as gender, race, and ethnicity. We introduce Multi-Group Proportional Representation (MPR), a novel metric that measures representation across intersectional groups.
We develop practical methods for estimating MPR, provide theoretical guarantees, and propose optimization algorithms to ensure MPR in retrieval. We demonstrate that existing methods optimizing for equal and proportional representation metrics may fail to promote MPR. Crucially, our work shows that optimizing MPR yields more proportional representation across multiple intersectional groups specified by a rich function class, often with minimal compromise in retrieval accuracy. Code is provided at \url{https://github.com/alex-oesterling/multigroup-proportional-representation}.
\end{abstract}

\section{Introduction}

\label{introduction}
A recognized objective in fair machine learning (ML) is discovering, reporting, and mitigating \emph{representational harms} \citep{barocas2023fairness,kay2015unequal,crawford2017}. Representational harms arise when systems reinforce and perpetuate the marginalization of population groups based on characteristics such as race, socioeconomic status, cultural background, and gender \citep{barocas2023fairness, west2019discriminating,barocas2017problem,gautam2024melting}. These harms often manifest through the biased portrayal or misrepresentation of these groups, stereotype reinforcement, and erasure of cultural identities \citep{suresh2021framework, katzman2023taxonomizing}. 
Representational harms have been widely documented in ML systems. For instance, many freely available datasets used in ML are not geo-diverse \citep{shankar2017no}, and generative models can output images that under-represent demographic minorities across gender and racial identities \citep{luccioni2024stable,bianchi2023easily, lin2023wordlevel}.

We focus on representation in retrieval tasks. Representational harms in retrieval and ranking tasks arise when retrieved items do not accurately reflect the true diversity and proportions present in reality, perpetuating harmful stereotypes and biases \citep{sweeney2013discrimination}. For instance, \citet{kay2015unequal} demonstrated that, in 2015, only $10\%$ of the top 100 results for CEO in Google Image Search were women, despite $28\%$ of CEOs in the US being women, and recent studies \citep{metaxa2021image,feng2022has} have demonstrated that this bias is still present in modern image search engines.
Later, \citet{otterbacher2017competent} showed that the search engine Bing usually produced twice as many men as compared to women when queried with the word ``person''. Finally, \cite{ulloa2024representativeness} systematically studied gender biases in image representations by
auditing the four most important image search engines: Google, Bing, Baidu, and Yandex. In particular, among other findings, this study showed that when a qualifier such as ``intelligent'' is added to the query ``person'', the engines still exhibit a significant gender discrepancy; see also \citep[Chapter 7]{barocas2023fairness}. 
When retrieval is based on a search over vector embeddings of images and text, biases in embedding models can propagate to retrieved results, leading to disparate representation of people by demographic groups such as gender or race \citep{wang2021gender,wang2022fairclip,berg2022prompt}, as well as propagating spurious correlations from the dataset more broadly \citep{Kim2023ExposingAM} or distorting results due to stereotypes present in the representation \citep{abbasi2019fairness}. Though correlations between unrelated semantic concepts do not necessarily constitute representational harms, a wealth of research charts undesirable bias in embedding models, including publicly available vision-language embedding models such as CLIP \citep{radford2021learning} used for retrieval. Bias in embeddings can lead to bias in retrieval \citep{hall2024visogender, wolfe2022markedness, bolukbasi2016man}. In fact, recent work by \citet{srinivasan2024generalized} argues that many image retrieval systems lack ``social diversity'' as perceived by humans. 

Several interventions aim to control gross statistical deviations in group representations in retrieval. A common approach is ensuring equal representation of pre-defined population groups in retrieved items \citep{Mehrotra2020MitigatingBI, kong2024mitigating}. An alternative goal is proportional representation (PR) \citep{celis2018fair, berg2022prompt}, where the target representation of different groups is proportional to some reference population or statistic. Various interventions have been proposed to achieve equal or proportional representation, including optimizing for diversity in addition to the similarity in retrieval \citep{celis2020implicit, srinivasan2024generalized}, directly selecting the objects retrieved based on balancing known or predicted sensitive attributes \citep{kong2024mitigating} and, for vector databases, modifying embeddings directly to remove information about group-defining attributes \citep{wang2021gender, berg2022prompt}.

Achieving proportional representation across multiple intersectional groups is challenging, as existing fairness interventions typically consider only a small number of pre-defined groups. Ensuring representation across individual groups (e.g., given by gender \textbf{or} race) \underline{does not} guarantee representation across \emph{intersectional} groups (e.g., given by gender \textbf{and} race), as demonstrated in Table \ref{table_mainpaper} (and Table \ref{table_appendix} in the Appendix). The study of intersectionality and its consequences is rooted in work in law and the social sciences \citep{crenshaw1989demarginalizing,crenshaw2013mapping,cho2013toward} and has been a well-discussed problem for several decades. In machine learning systems, lack of intersectional representation can contribute to the invisibility of historically marginalized groups determined by multiple axes of identity \citep{buolamwini2018gender}, or their mistreatment through ``fairness gerrymandering,'' where interventions on fairness for specific groups may harm fairness on intersectional subgroups \citep{kearns2018preventing}. 
However, as the number of group-denoting attributes increases, the number of intersectional groups grows exponentially, quickly surpassing the number of retrieved items and limiting the achievable proportional representation. In fact, achieving optimal proportional representation across multiple attributes has been shown to be NP-hard \cite{lang2018multi}.

We propose a metric called \emph{Multi-group Proportional Representation} (MPR) to quantify the representation of intersectional groups in retrieval tasks. 
MPR measures the worst-case deviation between the average values of a collection of \emph{representation statistics} computed over retrieved items relative to a reference population whose representation we aim to match. The set of representation statistics is given by a function class $\mathcal{C}$, where each function $c \in \mathcal{C}$ maps a retrieved item $x$ to a real number. 
For instance, $c(x) \in \{-1, 1\}$ may denote binary group membership for an input $x$.
In theory,  $\mathcal{C}$ can be given by bounded complexity function classes, such as linear functions or shallow decision trees. 
In practice, $\mathcal{C}$ are functions defined over item attributes, such as vector embeddings or group-denoting labels, enabling MPR to measure proportional representation across complex, intersectional subgroups.  Compared to naively counting the number of retrieved items across a (potentially exponential) number of pre-defined groups, MPR offers a more flexible, scalable, and theoretically grounded metric for multi-group representation in retrieval.
 
In addition to \textbf{introducing and carefully motivating the MPR metric}, cf. Section \ref{sec:MPRmetric}, we make three contributions. \textbf{First}, we show how to compute MPR for several function classes in Section \ref{section3_MG_prop_representation}. Computing MPR relies on a curated dataset that represents the desired population whose proportional representation we aim to reflect in retrieved items. We derive sample complexity bounds for the size of this curated dataset based on the complexity of the function class $\mathcal{C}$. We also prove that MPR can be computed in closed form for certain function classes or with a regression oracle. \textbf{Second}, we propose \methodname, an algorithm that retrieves items from a vector database to maximize their average similarity to a query embedding while satisfying an MPR constraint relative to the curated dataset (Section \ref{promoting_MG}). \methodname~achieves this by iteratively calling an oracle that computes MPR violations, yielding both relevant and representative retrieved items. \textbf{Third}, we evaluate \methodname~on retrieval tasks using datasets such as CelebA \citep{liu2015faceattributes}, the Occupations dataset \citep{celis2020implicit}, Fairface \citep{karkkainen2021fairface}, and UTKFace \citep{zhifei2017cvpr}. In section \ref{numerics}, we show that \methodname~Pareto-dominates competing approaches in balancing retrieval similarity and MPR.

\textbf{Related Work.} 
Broadly speaking, existing work on fair and diverse retrieval with pretrained embeddings either: 1) modifies the embedding space to prioritize downstream fairness, or 2) provides retrieval algorithms optimizing for diversity or fairness.

\textit{Mitigating Bias in Embedding Space.} 
Various approaches have been proposed to produce vector embeddings of text and images that target diversity or mitigate bias. These include modifying loss functions to encourage group fairness \citep{zemel2013fairrepresentations}, adversarial training \citep{edwards2016censoring, madras2018fairrepresentations}, disentangling representations \citep{creager2019flexibly}, and in-processing fair sampling methods for gender imbalance \citep{wang2021gender}. Many popular embedding models are not trained with such approaches, so post-processing methods for fairness have also been developed for multimodal models such as CLIP \citep{radford2021learning}, including CLIP-clip \citep{wang2021gender}, CLIP-debias \citep{berg2022prompt}, and FairCLIP \citep{wang2022fairclip}. Recent work by \citet{srinivasan2024generalized} builds upon the pretrained image-text model CoCa and leverages text-guided projections to extract an embedding, called PATHS, that captures complex notions of diversity in people, and then conduct diverse retrieval with a greedy algorithm. 
However, code for implementing and reproducing PATHS embeddings was not available at the time of publication. In our experiments, we report the MPR achieved by CLIP-clip and CLIP-debias in retrieval tasks and approximate the greedy retrieval algorithm proposed in \citep{srinivasan2024generalized} with vanilla CLIP embeddings. Unlike the aforementioned methods, \methodname~is not based on modifying embeddings. 

\textit{Retrieval Algorithms with Diversity or Fairness.} Instead of modifying embeddings, several methods aim to promote diversity, given possibly biased embeddings. 
Given an additional diversity constraint, a popular approach is Maximal Marginal Relevance (MMR) \citep{carbonell1998use}, where items are greedily retrieved in order to maximize a linear combination of similarity and diversity metrics.  \citet{celis2018fair} and \citet{celis2020implicit} focus on diversity and fairness in image retrieval, using a reweighing method that combines similarity and MMR to select diverse yet relevant images. Alternatively, the Post-Hoc Bias Mitigation \citep{kong2024mitigating} method aims to achieve equal representation over a pre-defined attribute (such as gender) through calls to an oracle classifier and selecting a balanced group of images. \textbf{We extend this prior work in two key ways:} (i) we aim to achieve proportional representation across intersectional groups, going beyond the representation of two groups defined by binary attributes; (ii) \methodname~explicitly balances MPR and retrieval similarity via an optimization that controls for both, offering an efficient alternative to greedy MMR-based methods.

\textit{Diversity Metrics.} A traditional measure of diversity for a set is the pairwise similarity of the retrieved items \citep{smyth2001similarity}. However, the general visual (dis)similarity metric is insufficient in optimizing for people-related diversity such as skin tone or gender \citep{celis2018fair}. When the attribute (e.g., race) along which diversity is optimized is known, or, in other words, we have access to a ground truth attribute label, one can use fairness definition-derived metrics such as proportional representation (see next section and \citep{pitoura2022fairness}). For more complex notions of diversity that aim to capture intersectional or subtle sociocultural identities (e.g., cultural backgrounds, lifestyles, nationalities, religions), human annotators can be employed to determine if the selected set of images is more diverse than the baseline (retrieval based on similarity to query only) \citep{srinivasan2024generalized}. MPR complements existing metrics by providing a rigorous approach to measuring representation across complex, intersectional groups defined by a rich class of functions. However, as with any fairness metric, MPR has limitations -- see Section \ref{sec:lim} -- and may not always align with human perceptions of representation. In particular, we underscore the importance of involving stakeholders in defining representation goals and auditing retrieval systems.

\textit{Multi-Group Fairness.} Our work is directly informed by the burgeoning literature on multi-group fairness in classification, particularly the work of \citet{hebert2018multicalibration,kim2019multiaccuracy,kim2022universal} who proposed rigorous frameworks of multi-group fairness auditing and post-processing to ensure fair predictions across identifiable subgroups. Recent efforts in this field include \citep{shabat2020sample,paes2023multigroup,gopalan2024computationally}, which analyze sample complexity aspects of measuring and reporting multi-group representation, and \citep{gopalan2022low,deng2023happymap} who develop stronger variants of multi-group fairness notions. Unlike prior work, we focus on multi-group representation in retrieval rather than multi-group fairness in classification and regression.

\textit{Multi-Attribute Proportional Representation.} A recognized challenge in computational social choice research is building committees, groups, or sets of items that ensure proportional representation across several attributes \citep{flanigan2021fair, elkind2017properties, do2021online, skowron2016finding}. The work most closely related to ours is by Lang and Skowron \citep{lang2018multi}, which introduced the problem of multi-attribute proportional representation, where a set of items must be selected to reflect desired distributions over multiple attributes simultaneously. 
Their work established fundamental computational complexity results, showing that finding optimal proportional representation is NP-hard, and developed approximation algorithms and integer linear programs for promoting proportional representation across attributes.
While our work shares similar goals, our MPR metric takes a different approach by measuring the worst-case deviation over computationally-identifiable groups defined by a function class $\mathcal{C}$. This formulation connects naturally with statistical learning theory through MMD \citep{gretton2012kernel} and sample complexity bounds (Propositions \ref{prop:generalization_MPR} and \ref{cor:query_budget}). Though we use the term \emph{multi-group} rather than \emph{multi-attribute} to emphasize connections with the burgeoning literature on multi-group fairness \citep{hebert2018multicalibration,kim2019multiaccuracy}, both terms describe related problems of ensuring representation across multiple dimensions. We also highlight that work on multi-attribute representation in computational social choice (including \citep{lang2018multi}) strives for proportional representation marginally on each attribute, while our primary goal with this work is to achieve intersectional representation. Finally, while \cite{lang2018multi} similarly seeks to match retrieved candidates to a target distribution, we focus on database retrieval rather than social choice applications. The rich history of proportional representation in social choice theory and political sciences provides, however, an important context for our work. We briefly discuss these connections in Appendix \ref{app:algos_description}. There, we elaborate on how our approach relates to and differs in terms of representation formulation from \citep{lang2018multi}.

\section{A Multi-Group Proportional Representation Metric}\label{sec:MPRmetric}

\paragraph{Preliminaries.} Consider a  \emph{retrieval dataset} of items from which we aim to retrieve relevant entries, denoted by $\mathcal{D}_R = \{\retrieve_1,...,\retrieve_n\}$. We assume that $\retrieve_i \in \mathbb{R}^d \times \mathcal{G}$, where $\mathcal{G}$ is a set of additional labels for each item.
For example,  items $\retrieve_i$ can be $d$-dimensional embeddings of images, videos, or textual content, in which case $\mathcal{G}=\emptyset$. Alternatively, $\retrieve_i=(\mathbf{e}_i,\mathbf{g}_i)$, where $\mathbf{e}_i\in \Reals^d$ is an embedding and $\mathbf{g}_i$ is a vector of labels indicating group membership (e.g., gender, race, ethnicity).

Given a query embedding $\mathbf{q} \in \mathbb{R}^d$, the goal of retrieval is to search and return the top-$k$ most similar items to $\mathbf{q} $ in $\mathcal{D}_R$. The similarity between an embedding of $\mathbf{q}$ and items in $\mathcal{D}_R$ is measured according to a metric $\VecSim: \mathbb{R}^d \times \mathbb{R}^d \rightarrow \mathbb{R}$.  Throughout the paper, we assume that $\mathcal{D}_R$ is a vector database and $\VecSim$ is cosine similarity between embeddings, though this formulation can be generalized. 
To retrieve items from $\mathcal{D}_R$, users prompt a query $q$ (e.g., "Fortune 500 CEOs") which is then embedded $\Reals^d$. Ideally, the user receives $k$ retrieved items $\mathcal{R}(\mathbf{q} ) = \{\retrieve_1,...,\retrieve_k\} \subset \mathcal{D}_R$ such that $\VecSim(\mathbf{q} , \retrieve_i) \geq \VecSim(\mathbf{q} , \bx)$ for all $\bx \notin \mathcal{R}(\mathbf{q} )$ and $i \in [k]$.

The simplest setting for measuring representation is to consider only two population groups denoted by a binary variable -- a setting commonly found in the fair retrieval and generation literature \cite{wang2021gender, wang2022fairclip}. In this case, group membership is determined by a \emph{group-denoting function} $c:\Reals^d\times \mathcal{G} \to \{-1,1\}$. For an item $\retrieve_i$, $c(\retrieve_i)$ indicates membership to group $-1$ or $1$ (e.g., male/female) based on its embedding $\mathbf{e}_i$ and associated labels $\mathbf{g}_i$. If the retrieval dataset $\mathcal{D}_R$ contains annotations for group membership, $c(\retrieve_i)$ can simply return the relevant feature from $\mathbf{g}_i$. However, when group labels are not present (i.e., $\mathcal{G} = \emptyset$), $c(\retrieve_i)$ can be implemented as a classifier that predicts group membership based on the item's embedding $\mathbf{e}_i$.

With a group-denoting function $c(\retrieve)$ in hand, we can measure the representation of each group in a set of retrieved items $\mathcal{R}(\mathbf{q})$. One popular constraint for representation is \emph{equal representation} \cite{Mehrotra2020MitigatingBI, kay2015unequal}, which aims to ensure that the number of retrieved items in each group is approximately the same, i.e., $   \frac{1}{k}\sum_{\retrieve \in \mathcal{R}(\mathbf{q})} c(\retrieve) \approx 0$.  
An alternative metric is \emph{proportional representation} \cite{jalal2021fairness,berg2022prompt}, which quantifies the deviation of group membership from a \emph{reference distribution} $Q$. Methods that promote proportional representation aim to ensure $\frac{1}{k}\sum_{\retrieve \in \mathcal{R}(\mathbf{q})} c(\retrieve) \approx \mathbb{E}_Q\left[  c(X) \right]$. Here, the measure $Q$ captures the distribution of a reference population whose representation statistics we aim to match. For example, if $Q$ were the distribution of individuals in the US, $\mathbb{E}_Q\left[  c(X) \right]$ could measure the proportion of men vs. women in the US and be approximated using Census data.

Proportional representation generalizes equal representation since different groups are rarely uniformly distributed over a given population. Naturally, the choice of distribution $Q$ is application and context-dependent -- we revisit the choice of $Q$ below. Importantly, \textbf{if $Q$ is biased, then biases will be propagated to the retrieved items.} 

\paragraph{Multi-Group Proportional Representation.} We aim to ensure that retrieved items represent individuals from diverse and intersectional population groups. Instead of measuring proportional representation in terms of the average of a single group-denoting function $c(\retrieve)$, we consider a class of $Q$-measurable functions, the \emph{representation statistics class}, $\mathcal{C}\subset \left\{c:\Reals^d\times \mathcal{G} \to \Reals \right\}$. This set $\mathcal{C}$ may represent multiple, potentially uncountable overlapping groups. We formally define multi-group proportional representation next.
\begin{defn}\label{mpr}
    For a  reference \emph{representation distribution} $Q$, a set of $Q$-measurable set of \emph{representation statistics} $\mathcal{C}$, and a set of $k$ retrieved items $\mathcal{R}$, 
    we define the \emph{Multi-Group Proportional Representation (MPR)} metric as 
    \begin{align} \label{eqn:MPR}
       \MPR \triangleq 
        \sup_{c \in \mathcal{C}}\left|\frac{1}{k}\sum_{\retrieve \in \mathcal{R}}c(\retrieve) - \mathbb{E}_Q[c(X)]\right|.
    \end{align}
    A set of  items $\mathcal{R}$ is ($\mathcal{C}, \rho$)-multi-group proportional representative of $Q$ if $ \MPR \leq \rho$.
\end{defn}

The MPR metric quantifies the ``representativeness'' of a rich class of statistics, denoted by functions in $\mathcal{C}$, within a set of retrieved items $\mathcal{R}$. This generalization is crucial for capturing intersectional groups. For example, $\mathcal{C}$ could contain functions that map items to demographic groups based on race, gender, age, or combinations thereof. Alternatively, when labels in $\mathcal{G}$ contain such group-denoting attributes, $\mathcal{C}$ can represent decision trees of a given depth over features (e.g., all combinations of pairs of race, gender, and age). The MPR metric compares the empirical average of each $c \in \mathcal{C}$ over the retrieved set $\mathcal{R}$ to its expectation under the reference distribution $Q$. By defining $Q$ appropriately, we can flexibly specify different statistical representation goals, such as equal representation ($\mathbb{E}_Q[c(X)] = 0$ for binary $c$) or proportional representation w.r.t. a target population. Measuring representation requires defining what constitutes ``fair and proportional representation.'' While equal representation may suffice for binary groups, proportional representation of more complex intersectional groups requires care. The choice of representation reference statistics (given by the distribution Q in the definition of MPR) should be application-dependent, context-aware, and culturally sensitive.

\begin{remark} MPR is equivalent to the maximum mean discrepancy (MMD) of representation statistics in $\mathcal{C}$ between the empirical distribution over retrieved items $\mathcal{R}$  and the reference distribution $Q$. MMD-based metrics have a long history \cite{borgwardt2006integrating,gretton2012kernel} and are used in hypothesis testing for comparing distributions. MMD is a particular case of the Integral Probability Metric (IPM) \cite{muller1997integral,sriperumbudur2012empirical}, allowing us to borrow from the rich literature on IPMs to  measure and ensure MPR in practice. 
\end{remark}

\begin{remark} MPR can be viewed as the ``representation in retrieval'' counterpart of multi-group fairness metrics found in classification, such as multiaccuracy \cite{kim2019multiaccuracy} and multicalibration \cite{hebert2018multicalibration}. Whereas multiaccuracy and multicalibration measure if classification error residuals are correlated with any group represented within a class $\mathcal{C}$, MPR captures \emph{proportional representation} of groups in $\mathcal{C}$ within a set of retrieved items -- a fundamentally different problem. The idea of representing groups in terms of a ``computationally identifiable'' class of functions $\mathcal{C}$ is directly inspired by the multi-group fairness in classification literature, and we adopt similar notation (i.e., $c$ and $\mathcal{C}$).
\end{remark}
\color{black}

\paragraph{Curated Datasets for Proportional Representation.} A key challenge in computing MPR is selecting the reference representation distribution $Q$ and measuring the expectation of functions in $\mathcal{C}$ against this distribution. In the simple case where $\mathcal{C}$ consists of a small number of functions indicating membership in individual groups, we could potentially set the proportional representation targets $\mathbb{E}_Q[c(X)]$ for each group $c\in \mathcal{C}$ manually, e.g., by defining a target fraction of men/women or individuals from different ethnic backgrounds. This quickly becomes infeasible when there are many intersectional groups -- and impossible if $\mathcal{C}$ is uncountable. Moreover, in most practical settings, $Q$ will very likely \emph{not} have a simple closed-form analytical expression. 

In practice, i.i.d. samples drawn for $Q$ may be available. For instance, the retrieval dataset $\mathcal{D}_R$ itself may be drawn from the target population for which we aim to preserve proportional representation in retrieved items. Alternatively, we may have access to a dataset that was carefully curated to be representative of a diverse population.  Examples include the FairFace dataset \citep{karkkainen2021fairface}, which was designed to be balanced across race, gender, and age groups, and the AVFS dataset \cite{andrews2022view}. More generally, we refer to datasets with samples drawn from the target population $Q$ as a \emph{curated dataset}.
\begin{defn}[Curated Dataset] 
\label{def:curated_dataset}
Let $Q$ be a probability distribution over $\Reals^d\times \mathcal{G}$ tailored to account for diversity and representation of stakeholders who query the retrieval dataset. The \emph{curated dataset} with $m$ samples drawn the distribution $Q$ is denoted as 
$\mathcal{D}_{C} \triangleq \{\curation_1, ..., \curation_m\}$. We denote the MPR of a set of retrieved items $\mathcal{R}$ relative to the \emph{empirical distribution} over $\mathcal{D}_C$ as $\MPRc$.
\end{defn}

Constructing a proper $\mathcal{D}_C$ is critical to properly measuring bias and preventing downstream harms, and the nature of $\mathcal{D}_C$ is context-dependent and may vary based on the specific application and desired representation goals. We reiterate that if the curated dataset is biased, then these biases will be propagated to downstream usages of MPR. For these reasons, we recommend that curated datasets used for MPR measurements be developed and verified through participatory design approaches \citep{delgado2021stakeholder, zytko2022participatory} in collaboration with diverse stakeholders. By involving stakeholders in defining representation goals, we can help ensure that the proportional representation in information retrieval systems measured by MPR aligns with the values and needs of the user base these systems serve.

\color{black}

Finally, we note that we can also condition the curated dataset on a given query. Specifically, for a query $\mathbf{q}$, we can retrieve relevant samples from both the curated dataset $\mathcal{D}_C$ and the retrieval dataset $\mathcal{D}_R$. The samples retrieved from $\mathcal{D}_C$ can then serve as a ``conditional'' curated dataset, denoted as $\mathcal{D}_{C|\mathbf{q}}$, which captures the desired representation target specific to the query $\mathbf{q}$. This approach allows for a more granular and context-aware proportional representation. 

In the next two sections, we introduce theoretical guarantees and algorithms that are agnostic to the specific choice of $\mathcal{D}_C$. In other words, our proposed methods for measuring and promoting MPR in retrieval are general and can be applied regardless of how the curated dataset is constructed. 
In our numerical experiments, presented in Section \ref{numerics}, we use FairFace \citep{karkkainen2021fairface} as the curated dataset.

\section{Computing Multi-Group Proportional Representation}
\label{section3_MG_prop_representation}

Computing MPR in Defnition \ref{mpr} requires approximating two quantities: the representation distribution $Q$ and the supremum $\sup_{c \in \mathcal{C}}$. We first establish generalization bounds for approximating $Q$ using i.i.d. samples. We then show how to compute $\sup_{c \in \mathcal{C}}$ for several classes of representation statistics $\calC$.

\paragraph{Error in approximation $Q$ via a curated dataset.}
Proposition~\ref{prop:generalization_MPR}, proved in \Cref{appendixA}, bounds the deviation between the empirical MPR computed over the curated dataset $\mathcal{D}_C$ drawn i.i.d. from reference distribution $Q$ and the true MPR measured over $Q$. 

\begin{restatable}[Generalization Gap of MPR]{prop}{Generalizationgap} 
\label{prop:generalization_MPR}
Let $\mathcal{R}(q)=\{\retrieve_i\}_{i = 1}^{k}$ be a set of $k$ retrieved samples,   $\mathcal{D}_{C}=\{\curation_i\}_{i = 1}^{m}$ be a curated dataset comprised of $m$ i.i.d. samples from a target representation distribution $Q$, and $\delta>0$. 
If $\mathcal{C} = \{c: \mathcal{R}^d\times \mathcal{G} \rightarrow \{-1, 1\}\}$ with Rademacher complexity ${\mathcal{R}}_{m}(\mathcal{C})$ then, with probability at least $1 - \delta$,
\begin{align}
\label{eq:vanBound}
    \left|\MPRc-\MPR  \right|  \leq {\mathcal{R}}_{m}(\mathcal{C}) + \sqrt{ \frac{\log\left( 2/\delta\right)}{8m}}.
\end{align}
\end{restatable}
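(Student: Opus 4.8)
The plan is to reduce the two-sided comparison of the two MPR functionals to a single uniform-convergence statement over the curated dataset, and then control that statement with standard concentration and symmetrization.

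First I would observe that both $\MPRc$ and $\MPR$ are suprema over the \emph{same} class $\mathcal{C}$, and that the only difference between them is whether the retrieved average $a_c \triangleq \frac{1}{k}\sum_{\retrieve\in\mathcal{R}} c(\retrieve)$ is compared against the empirical mean $\hat\mu_c \triangleq \frac{1}{m}\sum_{\curation\in\mathcal{D}_C} c(\curation)$ or the population mean $\mu_c \triangleq \mathbb{E}_Q[c(X)]$. Since $a_c$ is deterministic given $\mathcal{R}$ and common to both objects, the elementary inequality $|\sup_c f(c) - \sup_c g(c)| \le \sup_c |f(c)-g(c)|$ applied with $f(c)=|a_c-\hat\mu_c|$ and $g(c)=|a_c-\mu_c|$, followed by the reverse triangle inequality $\big||a_c-\hat\mu_c| - |a_c-\mu_c|\big| \le |\hat\mu_c - \mu_c|$, yields
$$\left|\MPRc - \MPR\right| \;\le\; \sup_{c\in\mathcal{C}} \left| \frac{1}{m}\sum_{\curation\in\mathcal{D}_C} c(\curation) - \mathbb{E}_Q[c(X)] \right|.$$
This is the conceptual crux: it collapses the entire statement into a bound on the uniform deviation of empirical means from their expectations over $\mathcal{C}$, with all randomness now residing in the i.i.d. draw $\mathcal{D}_C\sim Q^m$.

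Next I would bound this uniform deviation $\Phi(\mathcal{D}_C) \triangleq \sup_{c\in\mathcal{C}}|\hat\mu_c - \mu_c|$ with the two standard tools. For concentration, because each $c$ takes values in $\{-1,1\}$, replacing a single sample $\curation_i$ changes $\hat\mu_c$, and hence $\Phi$, by at most $2/m$; McDiarmid's bounded-differences inequality then gives sub-Gaussian concentration of $\Phi$ about $\mathbb{E}\,\Phi$, producing the $\sqrt{\log(2/\delta)/(8m)}$ tail once the constants are tracked. For the expectation, I would symmetrize: introducing a ghost sample $\mathcal{D}_C'\sim Q^m$ and Rademacher signs $\sigma_i$ bounds $\mathbb{E}\,\Phi$ by the (two-sided) Rademacher complexity $\mathcal{R}_m(\mathcal{C})$ of the class. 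Combining these gives the claimed inequality with probability at least $1-\delta$.

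The argument is structurally routine once the reduction is in place; the only real care is in the bookkeeping of constants, which is where I expect the (minor) obstacle to lie rather than in any substantive difficulty. The two delicate points are the range of $\mathcal{C}$ — the values $\{-1,1\}$ fix the bounded-difference constant $2/m$ and the factor appearing in symmetrization — and the handling of the absolute value, which governs the $\log(2/\delta)$ rather than $\log(1/\delta)$ in the tail and pins down the normalization of $\mathcal{R}_m(\mathcal{C})$ that absorbs the symmetrization factor into the stated bound. Throughout, the $Q$-measurability of $\mathcal{C}$ guarantees that $\mu_c$ is well defined and that the expectations and suprema above are meaningful.
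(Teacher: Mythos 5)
Your proposal is correct and follows essentially the same route as the paper: the same reduction via $|\sup_c f(c)-\sup_c g(c)|\le\sup_c|f(c)-g(c)|$ plus the (reverse) triangle inequality to $\sup_{c\in\mathcal{C}}\left|\frac{1}{m}\sum_i c(\curation_i)-\mathbb{E}_Q[c(X)]\right|$, followed by the standard Rademacher generalization bound, which the paper invokes as a cited black box and you unpack into McDiarmid plus symmetrization. The only caveat is the constant bookkeeping you already flag: with range $\{-1,1\}$ the bounded difference is $2/m$, and the paper obtains the stated $\sqrt{\log(2/\delta)/(8m)}$ by first rescaling to $c/2$ so the class has range in an interval of length one.
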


We can extend Proposition \ref{prop:generalization_MPR} to any set of bounded functions $\mathcal{C}$ (see, e.g., bounds on empirical MMD estimates in \citep{gretton2012kernel}). Note that the guarantee in \eqref{eq:vanBound} only holds for a single set of retrieved items $\mathcal{R}$ in response to a query. Proposition \ref{cor:query_budget} provides a bound on the size $m$ of an i.i.d. curated dataset $\mathcal{D}_{C}$ that ensures an $\epsilon$-accurate estimate of MPR for a set of $M$ queries.

\begin{restatable}[Query Budget Guarantee]{prop}{Querybudgetguarantee}
\label{cor:query_budget}
Consider any set of $M$ queries $\mathcal{Q} = \{\mathbf{q}_1, ..., \mathbf{q}_M\}$ where $M \in \mathbb{N}$. Let $\mathsf{VC}(\mathcal{C})$ denote the VC-dimension of the class $\mathcal{C}$ with range in $\{-1, 1\}$. For $\epsilon > 0$, if $\mathcal{D}_C$ consists of $m$ i.i.d. samples drawn from   $Q$ where $m$ satisfies
\begin{equation}
    m \geq \frac{32 \mathsf{VC}(\mathcal{C}) }{\epsilon^2} + \frac{2\log\left(\frac{2M}{\delta}\right)}{2\epsilon^2},
\end{equation}
then, with probability at least $ 1 - \delta$, 
$ |\MPRc - \MPR| \leq \epsilon.$
\end{restatable}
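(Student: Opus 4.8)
The plan is to lift the single-query guarantee of Proposition~\ref{prop:generalization_MPR} to the entire query set $\mathcal{Q}$ via a union bound, and then to convert the Rademacher complexity appearing there into the VC-dimension so as to read off an explicit sample size $m$.

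First I would record the elementary reduction underlying Proposition~\ref{prop:generalization_MPR}. For any fixed query $\mathbf{q}$, both $\mathrm{MPR}(\mathcal{C},\mathcal{R}(\mathbf{q}),Q)$ and $\mathrm{MPR}(\mathcal{C},\mathcal{R}(\mathbf{q}),\mathcal{D}_C)$ are suprema over $c\in\mathcal{C}$ of expressions that share the common term $\frac1k\sum_{\retrieve\in\mathcal{R}(\mathbf{q})}c(\retrieve)$ and differ only in whether the reference term is $\mathbb{E}_Q[c(X)]$ or $\frac1m\sum_{i}c(\curation_i)$. Using $|\sup_c f(c)-\sup_c g(c)|\le \sup_c|f(c)-g(c)|$ with the triangle inequality gives
\begin{align}
\left|\mathrm{MPR}(\mathcal{C},\mathcal{R}(\mathbf{q}),\mathcal{D}_C)-\mathrm{MPR}(\mathcal{C},\mathcal{R}(\mathbf{q}),Q)\right| \le \sup_{c\in\mathcal{C}}\left|\frac1m\sum_{i=1}^m c(\curation_i)-\mathbb{E}_Q[c(X)]\right|.
\end{align}
The right-hand side is the uniform deviation of empirical means from their expectations over $\mathcal{C}$, and crucially it does \emph{not} depend on $\mathbf{q}$; this is exactly the object that Proposition~\ref{prop:generalization_MPR} controls through symmetrization, bounded differences, and Rademacher complexity.

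Next I would apply Proposition~\ref{prop:generalization_MPR} to each query $\mathbf{q}_j$ with the confidence parameter sharpened from $\delta$ to $\delta/M$, so that for each fixed $j$, with probability at least $1-\delta/M$,
\begin{align}
\left|\mathrm{MPR}(\mathcal{C},\mathcal{R}(\mathbf{q}_j),\mathcal{D}_C)-\mathrm{MPR}(\mathcal{C},\mathcal{R}(\mathbf{q}_j),Q)\right| \le {\mathcal{R}}_m(\mathcal{C}) + \sqrt{\frac{\log(2M/\delta)}{8m}}.
\end{align}
A union bound over $j=1,\dots,M$ then makes this hold simultaneously for all queries with probability at least $1-\delta$. (I note in passing that, because the controlling quantity in the previous display is query-independent, a single application of the uniform-deviation bound already suffices and the $\log M$ factor is an artifact of this route; I nevertheless follow the union-bound argument since it produces the stated form.)

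Finally I would discharge the two error terms. I bound the Rademacher complexity of the $\{-1,1\}$-valued class $\mathcal{C}$ in terms of its VC-dimension via the standard distribution-free estimate ${\mathcal{R}}_m(\mathcal{C}) \le \sqrt{8\,\mathsf{VC}(\mathcal{C})/m}$, obtained from the Sauer--Shelah lemma together with Dudley's entropy integral. Requiring each term to be at most $\epsilon/2$ gives ${\mathcal{R}}_m(\mathcal{C})\le\epsilon/2$ whenever $m\ge 32\,\mathsf{VC}(\mathcal{C})/\epsilon^2$, and $\sqrt{\log(2M/\delta)/(8m)}\le\epsilon/2$ whenever $m\ge \log(2M/\delta)/(2\epsilon^2)$; summing these two sufficient conditions produces the claimed lower bound on $m$ (the precise constant in the second summand depends on how the budget $\epsilon$ is split between the two terms), and hence $|\mathrm{MPR}(\mathcal{C},\mathcal{R}(\mathbf{q}_j),\mathcal{D}_C)-\mathrm{MPR}(\mathcal{C},\mathcal{R}(\mathbf{q}_j),Q)|\le\epsilon$ for all $j$. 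The main obstacle is this last step: pinning down the correct universal constant in the VC bound for the Rademacher complexity, since it is what determines the factor $32$, and the log-free conversion requires the chaining/entropy-integral argument rather than the cruder bound carrying an extra $\log(m/\mathsf{VC}(\mathcal{C}))$ factor.
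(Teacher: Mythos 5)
Your proposal follows essentially the same route as the paper: apply Proposition~\ref{prop:generalization_MPR} to each query with confidence parameter $\delta/M$, union bound over the $M$ queries, and then choose $m$ so that the Rademacher term and the concentration term are each at most $\epsilon/2$. The one substantive difference is the VC-to-Rademacher conversion. The paper invokes ${\mathcal{R}}_m(\mathcal{C}) \leq \sqrt{2\,\mathsf{VC}(\mathcal{C})\log(em/\mathsf{VC}(\mathcal{C}))/m}$ and then asserts that $m \geq 32\,\mathsf{VC}(\mathcal{C})/\epsilon^2$ forces this to be at most $\epsilon/2$; because of the residual $\log(em/\mathsf{VC}(\mathcal{C}))$ factor, that implication does not literally hold for small $\epsilon$ without further argument. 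Your use of a log-free chaining bound of the form ${\mathcal{R}}_m(\mathcal{C}) \leq C\sqrt{\mathsf{VC}(\mathcal{C})/m}$ (Dudley's entropy integral with Haussler's covering-number bound) is exactly what is needed to make the constant $32$ legitimate, though you should verify that the universal constant $C$ you quote ($\sqrt{8}$) is actually achievable, since it is what determines the $32$. Your parenthetical observation that the union bound is superfluous is also correct and worth noting: the controlling quantity $\sup_{c\in\mathcal{C}}\bigl|\frac{1}{m}\sum_{i=1}^m c(\curation_i) - \mathbb{E}_Q[c(X)]\bigr|$ is query-independent, so a single application of the uniform-deviation bound covers all $M$ queries simultaneously and the $\log M$ term in the sample complexity can be dropped entirely, yielding a strictly better bound than the one stated. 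Finally, your remark that the second summand's constant depends on the $\epsilon$ split matches the discrepancy between the statement (which has $\tfrac{2\log(2M/\delta)}{2\epsilon^2}$) and the paper's own proof (which derives $\tfrac{\log(2M/\delta)}{2\epsilon^2}$).
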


The above results provide guidelines on the size of the curated dataset required to accurately estimate MPR relative to a true target representation distribution $Q$.  If the class of representation statistics $\mathcal{C}$ is very complex (in the VC-dimension sense), then the size $m$ of the curated dataset $\mathcal{D}_C$ must be proportionally large to represent $Q$ accurately.
Hence, the curation dataset should be designed having in mind (i) the number of diverse queries being asked and (ii) the complexity of the function class $\calC$. 
While Proposition \ref{cor:query_budget} provides a conservative bound, as generalization bounds are often not tight \citep{jiang2019fantastic}, these results nevertheless offer insight into the relationship between the complexity of $\calC$ and the accuracy of MPR estimation.

In the remainder of the paper, we assume that MPR is computed against a fixed curated dataset $\mathcal{D}_C$ of size $m$. 
We consider four specific instantiations of the set of representation statistics $\mathcal{C}$: (i) $\mathcal{C}$ is closed under scalar multiplication, i.e., if $c \in \mathcal{C}$, then $\lambda c \in \mathcal{C}$ for any $\lambda \in \mathbb{R}$, (ii) $\mathcal{C}$ is a set of functions taking values in $\{-1,1\}$; (iii)
$\mathcal{C}$ consists of linear functions; and (iv)
$\mathcal{C}$ consists of functions in a Reproducing Kernel Hilbert Space (RKHS).
Next, we show that for cases (i) and (ii) MPR can be computed using calls to an oracle that performs regression over $\mathcal{C}$ and, for linear functions (iii), MPR has a simple closed-form expression. Due to its technical nature, we defer the calculation of MPR for functions in an RKHS to Appendix \ref{app:MPR_RKHS}.

\paragraph{Computing MPR via Mean Square Error (MSE) Minimization.}  When $\mathcal{C}$ is closed under multiplication or consists of binary functions with values ${-1,1}$, MPR can be expressed as an MSE minimization. This enables us to compute MPR by leveraging existing black-box oracles that perform regression under quadratic loss, such as regressors implemented in scikit-learn \citep{scikit-learn}. 

The key observation for expressing MPR as an MSE minimization is that MPR can be formulated as a maximum correlation problem. 
Consider the (row-wise) concatenation of the retrieval dataset $\mathcal{D}_R$ and the curated dataset $\mathcal{D}_C$, given by $\mathbf{X}\triangleq [\retrieve_1, ..., \retrieve_n, \curation_1, ..., \curation_m]$, where $\bx_i$ is the $i$-th entry of $\mathbf{X}$. For the remainder of this section, we consider a fixed set of retrieved items $\mathcal{R}$. Let $\mathbf{a}\in \{0,1\}^n$ be a vector indicating items that are retrieved from $\mathcal{D}_R$ for a given query, i.e., $a_i=1 \Leftrightarrow \retrieve_i \in \mathcal{R}$. Under this notation, where retrieved items are indicated by the vector $\mathbf{a}$, MPR can be reformulated as 
\begin{equation}
\label{eq:correq}
    \MPRc = \sup_{c \in \mathcal{C}} \left|\frac{1}{k}\sum_{i=1}^n a_ic(\retrieve_i)-\frac{1}{m}\sum_{i=1}^m c(\curation_i) \right| = \sup_{c \in \mathcal{C}} \left| \sum_{i=1}^{n+m}c(\bx_i)\tilde{a}_i \right|,
\end{equation}
where $\mathbf{\tilde{a}} \in \mathbb{R}^{n+m}$ has $i$-th entry given by $\tilde{a}_i \triangleq \mathbbm{1}_{i\leq n}\frac{a_i}{k} - \mathbbm{1}_{i>n} \frac{1}{m}.$ This reformulation will be useful for explicitly casting MPR-constrained retrieval as an optimization problem.

When $\mathcal{C}$ consists of binary functions in range $\{-1,1\}$, MPR is equivalent to regression over $\mathcal{C}$ of $\mathbf{\tilde{a}}$ under quadratic loss, since  
\begin{equation}
    \inf_{c \in \mathcal{C}} \sum_{i=1}^{n+m}\left(c(\bx_i)-\tilde{a}_i\right)^2= n+m+ \mathbf{ \tilde a}^\intercal\mathbf{ \tilde a}+ 2\sup_{c \in \mathcal{C}} \sum_{i=1}^{n+m}c(\bx_i)\tilde{a}_i. \label{eq:binary_c_mpr} 
\end{equation}
The absolute value in \eqref{eq:correq} can be recovered by regressing $-\mathbf{ \tilde a}$ instead of $\mathbf{ \tilde a}$.

Most classes of regression functions in $\Reals$ are closed under scalar multiplication (e.g., multiplying the output of a decision tree regressor by a constant still yields a decision tree regressor). However, in this case, it follows from \eqref{eq:correq} that MPR is unbounded. Even for bounded $\mathcal{C}$, without proper normalization MPR can scale with dataset size, which is undesirable for a representational measure. We constrain $\calC$ to a set of normalized functions for MPR to be bounded and independent of dataset size. This constraint also allows us to cast MPR as an MSE minimization over $\calC$. We formally state this result next, proven in Appendix \ref{appendixA}, which will be applied in Section \ref{promoting_MG} to develop a cutting-plane-based algorithm for ensuring MPR in retrieval called \methodname.

\begin{restatable}{prop}{Equivalence}
\label{prop:equiv}
Let $\calC' \triangleq \{c \in \calC \mid \sum_{i=1}^{m+n}c(\bx_i)^2 = \frac{mk}{m+k}\}$ where $\mathcal{C}$ is closed under scalar multiplication. Let $c(\mathbf{X}) = [c(\mathbf{x_1}),...,c(\mathbf{x_{n+m}})]$. Then $0 \leq \mathrm{MPR}(\mathcal{C}',\mathcal{R},\mathcal{D}_C) \leq 1,$ and for any
\begin{equation}
    c^* \in \arginf_{c\in \mathcal{C}}  \sum_{i=1}^{n+m}(c(\bx_i)-\tilde{a}_i)^2,
\end{equation}
let $\hat{c}:\mathcal{X}\rightarrow\mathbb{R}$ be defined as $\hat{c}(\mathbf{x})=\frac{\sqrt{mk/(m+k)}}{\|c^*(\mathbf{X})\|_2}c^*(\mathbf{x})$. Then, we have that
\begin{equation}
    \hat{c} \in \argsup_{c \in \mathcal{C}'} \left| \sum_{i=1}^{n+m}c(\bx_i)\tilde{a}_i \right|.
\end{equation}
\end{restatable}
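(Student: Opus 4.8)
The plan is to prove the two claims separately, both resting on the reformulation $\mathrm{MPR}(\mathcal{C}',\mathcal{R},\mathcal{D}_C) = \sup_{c\in\mathcal{C}'}\left|\sum_{i=1}^{n+m}c(\bx_i)\tilde a_i\right| = \sup_{c\in\mathcal{C}'}|c(\mathbf{X})^\intercal\tilde{\mathbf a}|$ from \eqref{eq:correq}. For the bound $0\le\mathrm{MPR}(\mathcal{C}',\mathcal{R},\mathcal{D}_C)\le 1$, I would compute $\|\tilde{\mathbf a}\|_2$ explicitly: since $\mathbf a\in\{0,1\}^n$ indicates the $k$ retrieved items, $\sum_i a_i=k$ and $a_i^2=a_i$, so $\|\tilde{\mathbf a}\|_2^2 = k/k^2 + m/m^2 = (m+k)/(mk)$. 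Every $c\in\mathcal{C}'$ satisfies $\|c(\mathbf{X})\|_2^2 = mk/(m+k)$ by definition of $\mathcal{C}'$, so Cauchy--Schwarz gives $|c(\mathbf{X})^\intercal\tilde{\mathbf a}| \le \|c(\mathbf{X})\|_2\,\|\tilde{\mathbf a}\|_2 = 1$. Taking the supremum over $\mathcal{C}'$ yields the upper bound, and the lower bound is immediate as the quantity is a supremum of absolute values.

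For the equivalence, the key device is one-dimensional least squares combined with the scalar-multiplication closure of $\mathcal{C}$. For any $u$ with $u(\mathbf{X})\neq 0$, the optimal scaling satisfies $\inf_{\lambda\in\mathbb{R}}\|\lambda u(\mathbf{X})-\tilde{\mathbf a}\|_2^2 = \|\tilde{\mathbf a}\|_2^2 - (u(\mathbf{X})^\intercal\tilde{\mathbf a})^2/\|u(\mathbf{X})\|_2^2$. Because $\mathcal{C}$ is closed under scalar multiplication, every $c\in\mathcal{C}$ with $c(\mathbf{X})\neq 0$ can be written as $c=\lambda u$ with $u\in\mathcal{C}'$ (and $-u\in\mathcal{C}'$, so both signs are covered), and conversely $\lambda u\in\mathcal{C}$ for all $\lambda$ whenever $u\in\mathcal{C}'$. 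Hence the MSE minimization decomposes as $\inf_{c\in\mathcal{C}}\|c(\mathbf{X})-\tilde{\mathbf a}\|_2^2 = \inf_{u\in\mathcal{C}'}\inf_{\lambda}\|\lambda u(\mathbf{X})-\tilde{\mathbf a}\|_2^2$. On $\mathcal{C}'$ the denominator is the constant $mk/(m+k)$, so substituting the one-dimensional formula and writing $S\triangleq\mathrm{MPR}(\mathcal{C}',\mathcal{R},\mathcal{D}_C)$ gives $\inf_{c\in\mathcal{C}}\|c(\mathbf{X})-\tilde{\mathbf a}\|_2^2 = \|\tilde{\mathbf a}\|_2^2 - \frac{m+k}{mk}S^2$.

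To finish, I would feed the given minimizer $c^*$ into this identity. Its normalization $\hat c$ lies in $\mathcal{C}'$ by construction, and since $\lambda\hat c\in\mathcal{C}$ for every $\lambda$ while $c^*$ is globally optimal, $c^*$ must be the best scalar multiple of $\hat c$; therefore $\|c^*(\mathbf{X})-\tilde{\mathbf a}\|_2^2 = \|\tilde{\mathbf a}\|_2^2 - \frac{m+k}{mk}(\hat c(\mathbf{X})^\intercal\tilde{\mathbf a})^2$. Equating this with the expression for the infimum forces $(\hat c(\mathbf{X})^\intercal\tilde{\mathbf a})^2 = S^2$, i.e.\ $\hat c$ attains the supremum defining MPR over $\mathcal{C}'$, which is exactly the claim $\hat c\in\argsup_{c\in\mathcal{C}'}\left|\sum_{i=1}^{n+m}c(\bx_i)\tilde a_i\right|$.

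The main obstacle I anticipate is the degenerate case $c^*(\mathbf{X})=0$, which makes the normalization $\hat c$ ill-defined. I would handle it by noting that if the zero-on-data function minimizes the MSE, then the one-dimensional formula shows every $c\in\mathcal{C}$ has $c(\mathbf{X})^\intercal\tilde{\mathbf a}=0$, whence $\mathrm{MPR}(\mathcal{C}',\mathcal{R},\mathcal{D}_C)=0$ and the claim about $\hat c$ is vacuous, since any feasible point attains the supremum. The only other points requiring care are the bookkeeping that $\sum_i a_i=k$ (so that $\|\tilde{\mathbf a}\|_2^2$ and the norm constant of $\mathcal{C}'$ are exact reciprocals) and the sign handling in the reparametrization, both of which are routine.
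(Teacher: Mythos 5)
Your proposal is correct and follows essentially the same route as the paper: the same Cauchy--Schwarz computation with $\|\tilde{\mathbf a}\|_2^2 = (m+k)/(mk)$ for the bound, and the same decomposition of the MSE minimization into an inner one-dimensional least-squares over the scalar $\lambda$ (exploiting closure under scalar multiplication) to reduce the problem to maximizing $|\langle c(\mathbf{X}),\tilde{\mathbf a}\rangle|^2/\|c(\mathbf{X})\|_2^2$ and then normalizing. Your explicit treatment of the degenerate case $c^*(\mathbf{X})=0$ is a small improvement over the paper, which implicitly assumes $\|c^*(\mathbf{X})\|_2 \neq 0$.
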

\color{black}

\paragraph{Computing MPR for Bounded-Norm Linear Regression.} Recall that each item in the retrieval or curated datasets is of the form $x_i = (e_i, g_i)$, where $e_i$ is an embedding and $g_i$ are labels. When $x_i \in \mathbb{R}^l$  (i.e., the labels have numerical values), we can consider $\calC$ as a linear set of functions over retrieved items. In this case, MPR enjoys a closed-form expression, as stated in the next proposition.

\begin{restatable}{prop}{Closedformmpr}\label{prop:closed_form_MPR}
    Let items in the retrieval and curated datasets be vectors in $\Reals^l$ for $l<m+n$ and $\calC = \left\{x\mapsto w^\intercal x \mid w \in \Reals^k \right\}.$ Moreover, let $\mathbf{X}\in \Reals^{(m+n)\times l}$ be the matrix formed by concatenating items in the retrieval and curated dataset $\mathcal{D}_R$ and $\mathcal{D}_C$, respectively.  For $\calC'$ in Proposition \ref{prop:equiv}, we have
    \begin{equation}
    \label{eq:MPR-linear}
    \mathrm{MPR}(\mathcal{C}',\mathcal{R},\mathcal{D}_C) = \sqrt{\frac{mk}{m+k}}\left\| \mathbf{U}_l^\intercal \tilde{\mathbf{a}}\right\|_2,
    \end{equation}
where $\mathbf{X}= \mathbf{U}\mathbf{\Sigma}\mathbf{V}^\intercal$ is the SVD of $\mathbf{X}$ and $\mathbf{U}_l\in \Reals^{(m+n)\times l} $ are the  left singular vectors in $\mathbf{U}$ corresponding to the top-$l$ largest singular values.\footnote{For analysis of functions in an RKHS, see Appendix \ref{app:MPR_RKHS}}
\end{restatable}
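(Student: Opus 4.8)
The plan is to reduce the supremum defining MPR over the normalized linear class $\mathcal{C}'$ to the maximization of a fixed linear functional over the column space of $\mathbf{X}$, and then to solve that problem exactly by Cauchy--Schwarz. Throughout I would work with the correlation form \eqref{eq:correq}, so that the quantity to evaluate is
\[
\mathrm{MPR}(\mathcal{C}',\mathcal{R},\mathcal{D}_C) = \sup_{c \in \mathcal{C}'} \left| \sum_{i=1}^{n+m} c(\bx_i)\tilde{a}_i\right|.
\]
First I would pass from functions to their coefficient vectors. For $c(x) = w^\intercal x$ with $w \in \Reals^l$, the vector of evaluations is exactly $c(\mathbf{X}) = \mathbf{X} w \in \Reals^{m+n}$, so the objective becomes the inner product $\tilde{\mathbf{a}}^\intercal \mathbf{X} w$, while the normalization defining $\mathcal{C}'$ in Proposition \ref{prop:equiv}, namely $\sum_{i} c(\bx_i)^2 = \tfrac{mk}{m+k}$, becomes $\|\mathbf{X} w\|_2^2 = \tfrac{mk}{m+k}$. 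Hence
\[
\mathrm{MPR}(\mathcal{C}',\mathcal{R},\mathcal{D}_C) = \sup_{w \,:\, \|\mathbf{X} w\|_2^2 = \frac{mk}{m+k}} \left| \tilde{\mathbf{a}}^\intercal \mathbf{X} w\right|.
\]

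Next I would change variables to $u = \mathbf{X} w$. As $w$ ranges over $\Reals^l$, the image $u$ ranges over the column space $\mathrm{Col}(\mathbf{X})$. Writing the SVD $\mathbf{X} = \mathbf{U}\mathbf{\Sigma}\mathbf{V}^\intercal$ and using that $l < m+n$, under the (generic) assumption that $\mathbf{X}$ has full column rank $l$ the top-$l$ left singular vectors $\mathbf{U}_l$ form an orthonormal basis of $\mathrm{Col}(\mathbf{X})$. Thus every feasible $u$ can be written as $u = \mathbf{U}_l z$ with $z \in \Reals^l$, and orthonormality of the columns of $\mathbf{U}_l$ gives $\|u\|_2 = \|z\|_2$. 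The constraint becomes $\|z\|_2^2 = \tfrac{mk}{m+k}$ and the objective becomes $\tilde{\mathbf{a}}^\intercal \mathbf{U}_l z = (\mathbf{U}_l^\intercal \tilde{\mathbf{a}})^\intercal z$, collapsing the problem to maximizing a fixed linear functional of $z$ over a Euclidean sphere of radius $\sqrt{mk/(m+k)}$.

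Finally, Cauchy--Schwarz yields $\sup_{\|z\|_2 = r} |(\mathbf{U}_l^\intercal \tilde{\mathbf{a}})^\intercal z| = r\,\|\mathbf{U}_l^\intercal \tilde{\mathbf{a}}\|_2$ with $r = \sqrt{mk/(m+k)}$, which is exactly \eqref{eq:MPR-linear}; the maximizer is $z = r\,\mathbf{U}_l^\intercal \tilde{\mathbf{a}} / \|\mathbf{U}_l^\intercal \tilde{\mathbf{a}}\|_2$ (when the norm is nonzero; otherwise the supremum is trivially $0$), with a corresponding $w$ recovered from $\mathbf{X} w = \mathbf{U}_l z$. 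I expect the main obstacle to be justifying that the top-$l$ singular vectors span exactly the column space of $\mathbf{X}$: if $\mathbf{X}$ is rank-deficient, the left singular vectors attached to zero singular values lie \emph{outside} $\mathrm{Col}(\mathbf{X})$, so one must instead restrict to the $\mathrm{rank}(\mathbf{X})$ nonzero singular directions. The formula stated with $\mathbf{U}_l$ is correct precisely under the full-column-rank hypothesis (the generic case when $l < m+n$), which I would make explicit, noting that the components of $\tilde{\mathbf{a}}$ along zero-singular-value directions are unconstrained by the normalization and would otherwise render the supremum ill-posed unless those directions are excluded.
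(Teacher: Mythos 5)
Your proposal is correct and follows essentially the same route as the paper's proof: both reduce the supremum over $\mathcal{C}'$ to maximizing $\tilde{\mathbf{a}}^\intercal \mathbf{X} w$ subject to $\|\mathbf{X}w\|_2^2 = \tfrac{mk}{m+k}$, reparametrize via the SVD so that $\mathbf{U}_l$ spans the feasible directions, and solve the resulting sphere-constrained linear maximization (you via Cauchy--Schwarz, the paper via Lagrange multipliers -- an immaterial difference). Your explicit flagging of the full-column-rank hypothesis is a welcome tightening of the paper's implicit ``assuming $\mathbf{V}$ and $\mathbf{\Sigma}$ are full rank,'' but it does not change the argument.
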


Proposition \ref{prop:closed_form_MPR} can be directly adapted to linear functions defined only over embeddings or a subset of features of retrieved items. The closed-form expression for MPR in \eqref{eq:MPR-linear} also allows MPR-constrained retrieval to be computed by a quadratic program, as discussed in Appendix \ref{app:quadratic_programming_mopr}. 

\section{Promoting Multi-Group Proportional Representation in Retrieval Tasks}\label{algorithms}

\label{promoting_MG}
We develop an optimization framework for retrieving the $k$ most similar items in a vector database to a given query $\mathbf{q}$ while satisfying a target MPR threshold of $\rho$.
We formulate the retrieval goal as maximizing the average similarity between a query and retrieved items, given by $s_i = \VecSim(\retrieve_i,\mathbf{q})$ (recall Section \ref{sec:MPRmetric} for notation), where $\VecSim(\retrieve_i,\mathbf{q})$ is given by cosine similarity in our experiments. The problem of maximizing utility in retrieval while satisfying an MPR constraint (expressed as Eq. \eqref{eq:correq}) can be formulated as the  integer program:
\begin{align}
    \max_{\mathbf{a} \in \{0,1\}^n}~ \mathbf{a}^\intercal \mathbf{s}~~
    \mbox{subject to}~~  \sup_{c \in \mathcal{C}}\left|\frac{1}{k}\sum_{i=1}^n a_ic(\retrieve_i)-\frac{1}{m}\sum_{i=1}^m c(\curation_i) \right|\leq \rho, \quad \sum_{i=1}^k a_i = k.  \label{eqn:mip_supremum}
\end{align}
When $\mathcal{C}$ is given by normalized linear functions as in Proposition \ref{prop:closed_form_MPR}, the integer constraints can be relaxed to $\mathbf{a} \in [0,1]^n$ and the optimization can be approximated via standard quadratic solvers; see Appendix \ref{app:quadratic_programming_mopr}. 

\begin{algorithm}[!tb]
\small 
    \caption{\methodname~(Multi-group Optimized Proportional Retrieval)}\label{retrievalAlgo}
    \hspace*{\algorithmicindent} \textbf{Input}: $\mathcal{D}_R$, query $\mathbf{q}$, number of iterations $T$, number of items to retrieve $k$, MPR constraint $\rho$, $\mathtt{Oracle}(\mathbf{a})$ that computes solves \eqref{eq:correq} and returns $c\in \mathcal{C}$ that approximates the sup. \vspace{-.15in}\\
    \begin{algorithmic}[1]
    \State $\tilde{\mathcal{C}} \leftarrow \emptyset$, $s_i = \VecSim(\retrieve_i,q)$
    \For{$\mathtt{iter} \in \{1, 2, ..., T\}$}
        \State $\mathbf{a} \leftarrow \arg\max_{\mathbf{a}\in [0,1]^n}\mathbf{a}^\intercal \mathbf{s}$ subject to 
        $\left|\frac{1}{k}\sum_{i=1}^n a_ic(\retrieve_i)-\frac{1}{m}\sum_{i=1}^m c(\curation_i) \right|\leq \rho$ for $c\in \tilde{\mathcal{C}}$
        \State $(\mbox{\texttt{mpr\_violation}}, c)\leftarrow \mathtt{Oracle}(\mathbf{a}) $ \Comment{Solve \eqref{eq:correq}}
        \If{\texttt{mpr\_violation}~$\leq \rho$ or $\mathtt{iter}=T$}\label{line MPRc}
        \State return  $\mathcal{R} \leftarrow \{\retrieve_i \in \mathcal{D}_R \mbox{ corresponding to $k$ largest entries of $ a_i$} \}$
        \Else
        \State $\tilde{\mathcal{C}}\leftarrow \tilde{\mathcal{C}}\cup \{c\}$
        \EndIf
    \EndFor
    \end{algorithmic}
\end{algorithm}

\paragraph{Multi-Group Optimized Proportional Retrieval.} 
We approximate \eqref{eqn:mip_supremum} via the Multi-Group Optimized Proportional Retrieval Algorithm (\methodname), described in Algorithm \ref{retrievalAlgo}. \methodname~is essentially a cutting-plane method that aims to find a set of $k$ items that maximize utility while approximately satisfying an MPR constraint of $\rho$. The algorithm iterates between (i) a call to an oracle that computes MPR  and returns a function $c\in \mathcal{C}$ that achieves the supremum in \eqref{eq:correq} and (ii) a call to a linear program (LP) solver that approximates  the top-$k$ most similar items to a query subject to linear constraints on the violation measured by $c$. Each oracle call adds an additional constraint to the LP solver which, in turn, approximates the top-$k$ items under an increasing set of constraints. The solution of the LP is rounded to satisfy the integer constraint $\mathbf{a}\in \{0,1\}^n$. In our implementation, we assume that MPR can be computed via MSE minimization (cf. Section  \ref{section3_MG_prop_representation}) -- in which case we consider the normalized class $\mathcal{C'}$. The oracle call consists of running a black-box quadratic loss minimization over $\mathcal{C}$ and normalizing.

Interestingly, we observe that, despite relaxing the integer constraints in each LP call, the solution to the relaxed problem is very sparse and (after rounding) approximates well the solution of the ideal integer program \eqref{eqn:mip_supremum} for moderate values of $\rho$. However, the method can fail to accurately approximate the IP solution when $\rho$ is small. We present a more detailed analysis of \methodname~in  Appendix \ref{section:MPR-aware_alg}, which discusses potential convergence issues with the cutting plane method as well as stopping conditions.

\section{Numerical Experiments}
\label{numerics}
In this section, we show that \methodname~is  effective in promoting more proportional representation across intersectional groups while preserving similarity between retrieved items and a given query. Notably, \methodname~Pareto-dominates competing benchmarks in terms of achieved MPR gap and utility.

\begin{figure}[!tb]

\centering
\includegraphics[width=.3\textwidth]{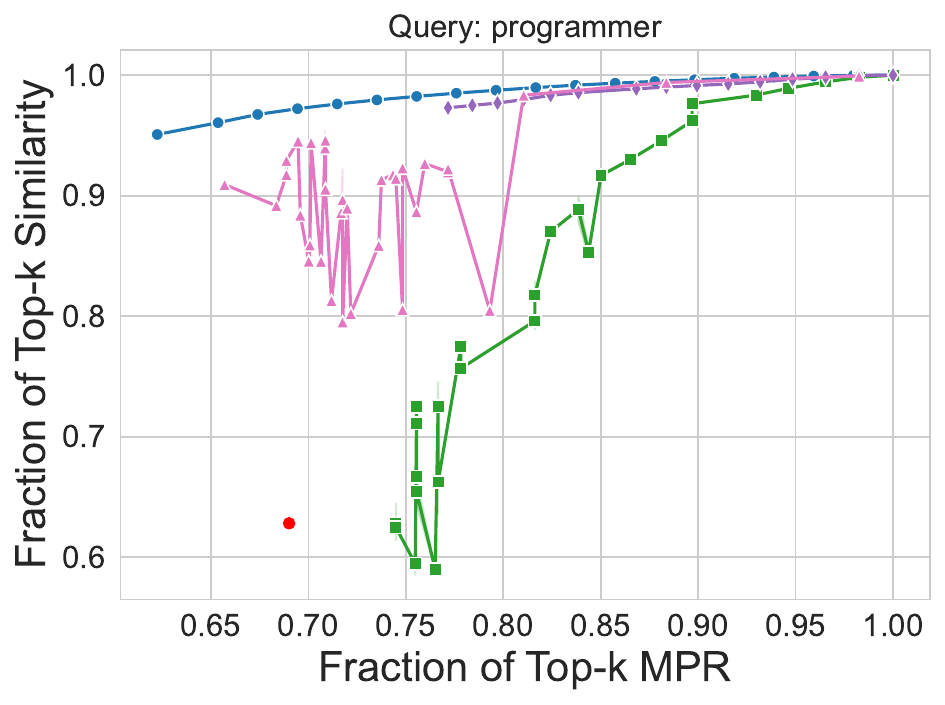}\hfill
\includegraphics[width=.3\textwidth]{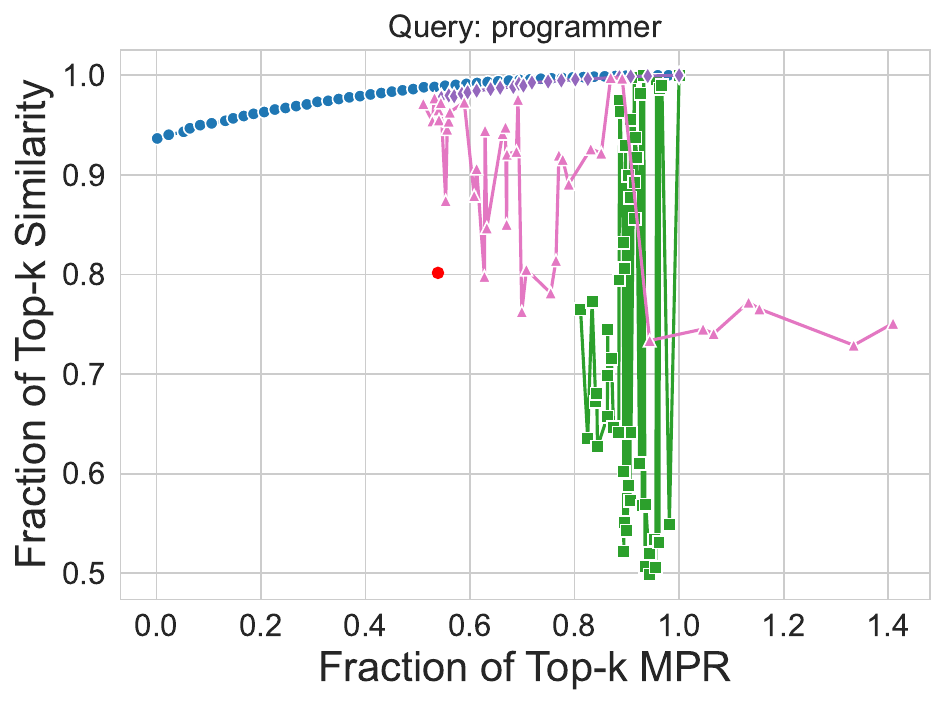}\hfill
\includegraphics[width=.3\textwidth]{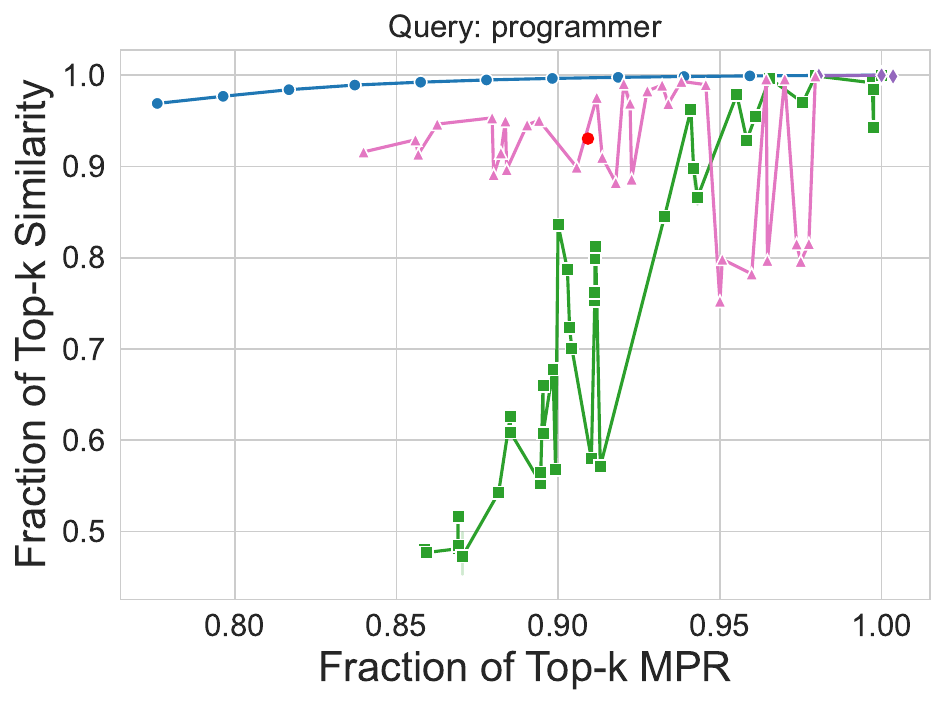}
\begin{minipage}[t]{\textwidth}
        \centering
        \includegraphics[width=.6\textwidth]{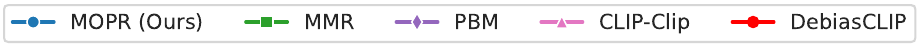}
        \label{fig:figureD}
    \end{minipage}

\caption{\small Fraction of Top-$k$ cosine similarity vs Fraction of Top-$k$ MPR averaged over 10 queries for $k=50$ images retrieved. From Left-to-Right: CelebA, UTKFaces, Occupations. Values are normalized so Top-$k$ MPR and similarity is the point (1,1).  \methodname~Pareto-dominates baselines and significantly closes the MPR gap.}
\label{fig:figure1}

\end{figure}

\paragraph{Datasets.} We conduct retrieval over three image datasets of faces: \textbf{CelebA} \citep{liu2015faceattributes}, which includes labels for gender, age, and various other attributes, \textbf{UTKFace} \citep{zhifei2017cvpr}, which contains gender, age, and race attributes,  and \textbf{Occupations} \citep{kay2015unequal}, which contains gender attributes. We compute MPR for equal representation by constructing a synthetic dataset $\mathcal{D}_C$ balanced over all attributes and for proportional representation using \textbf{FairFace} \citep{karkkainen2021fairface} as the curated dataset $\mathcal{D}_C$, since it is a carefully designed dataset of faces with subgroup attributes for race, gender, and age. In all cases, each dataset entry consists of an image (CLIP) embedding $\mathbf{e}_i$ and a set of labels $\mathbf{g}_i$. 

\paragraph{Benchmarks.} We compare our method with four baselines. DebiasCLIP \citep{berg2022prompt} modifies text queries with a prepended learned embedding for debiased retrieval but does not allow for tunable control of the representation-similarity trade-off, so it only results in a single retrieval set over which we compute similarity and MPR. CLIP-Clip \citep{wang2021gender} trims features from CLIP embeddings that have high ``mutual information'' with gender, allowing for control of how many features are clipped from the embedding. PBM \citep{kong2024mitigating} post-processes CLIP embeddings to mitigate bias by predicting gender attributes and subsampling from each gender equally. PBM also allows tuning the representation-similarity tradeoff by controlling the likelihood of sampling at random or in a balanced manner from the retrieval dataset. Finally, the state-of-the-art for fair retrieval first constructs PATHS embeddings \citep{srinivasan2024generalized}, which use a set of adjective-noun keywords to capture a broad set of identity vectors in vision-language models, and then greedily samples using the MMR \citep{carbonell1998use} algorithm to navigate the representation-similarity tradeoff. However, the authors' method of computing PATHS embeddings is not public, so we used CLIP embeddings to replicate their algorithm. We refer to this method as MMR. For a detailed discussion of these algorithms and their approaches to representation, see Appendix \ref{app:algos_description}.

\paragraph{Experimental Setup.} We consider three classes of representation statistics $\calC$: linear regression, decision trees, and MLPs (the last two are presented in Appendix \ref{appendix:additional_results}). In both cases, we compute MPR over $\calC'$, i.e., $\calC$ normalized over the retrieval and curated datasets (see Prop. \ref{prop:equiv}).  We report results over 10 queries for occupations suggested by ChatGPT 3.5 (Appendix \ref{sec:experiment_notes}) and an additional 45 occupations for the the Occupations dataset. To accelerate  retrieval, we query each of our retrieval datasets for the top 10k items according to the prompt \texttt{``A photo of a \{query\}"} in terms of raw similarity using FAISS \citep{johnson2019billion}. We similarly retrieve the top 10k items from FairFace as the query-conditioned curated dataset $\mathcal{D}_C$. One reason we first filter for the top 10k items is the runtime of MMR, which is a greedy algorithm that traverses the full dataset for each retrieval $k$ (even for 10k entries and $k=50$ retrieved items, MMR takes hours to run for 10 queries).

In order to evaluate representation over interpretable population groups, we consider that representation statistics $\calC$ are computed only over labels $\mathbf{g}_i$ given in FairFace. These labels are not present in all retrieval datasets. 
Thus, for CelebA and Occupations, we train linear probes on FairFace's CLIP embeddings to predict their race and age attributes, and when using UTKFace, we map from FairFace's 
race labels to UTKFace's 
race labels by mapping \texttt{Southeast Asian} and \texttt{East Asian} to \texttt{Asian} and \texttt{Middle Eastern} and \texttt{Latino_Hispanic} to \texttt{Other}.
While the fair ML community should engage in broader discussions addressing the ethics of predicting sensitive attributes from CLIP embeddings \citep{hamidi2018gender} (especially given that CLIP itself has been found to be biased) and the issues surrounding the grouping and re-grouping of diverse racial identities, we acknowledge that these are larger-scale issues that our work does not presume to address.

We retrieve $k=50$ items for each of the above queries. For a given function class and query, we compute the baseline MPR and average cosine similarity given by the top 50 most similar items. Then, we conduct a parameter sweep over $\rho$ starting from this max-MPR value, inputting each value to \methodname~in Algorithm~\ref{retrievalAlgo}. We normalize our results such that the Top-$k$ MPR and similarity are the point (1,1) on each graph, and each point measures the fraction of Top-$k$ MPR and similarity.

\paragraph{Results.} In Fig.~\ref{fig:figure1} we report results for proportional representation for the query \texttt{``A photo of a programmer''} with respect to FairFace for linear regression models, with additional results for other queries, as well as decision trees and MLPs in Appendix~\ref{appendix:additional_results} (Figs. \ref{fig:celeba_linregs}-\ref{fig:occupations_linreg_occupations3}). We observe that \methodname~Pareto-dominates benchmark methods, and is able to preserve similarity while reaching lower levels of MPR. Methods based on directly modifying embeddings or queries for a single group attribute such as CLIP-Clip and DebiasCLIP only partially reduce MPR at a high utility cost. This is because these methods do not modify the retrieval algorithm but use an unbiased embedding and hope by chance that the retrieved items will be representational. The most competitive method to ours is the retrieval algorithm MMR, which is our attempt to replicate PATHS. Though MMR is competitive to ours for large MPR, it fails to achieve small values of MPR relative to \methodname. It is also notable that \methodname~can drive MPR to near zero for UTKFace, yet gaps remain in CelebA and Occupations, indicating this gap may be caused by the use of probes to estimate group attribute labels.

In Table \ref{table_mainpaper}, we construct a synthetic curation set with an equal number of each group attribute. This allows us to evaluate the performance of \methodname~in achieving equal representation. We conduct a similar hyperparameter sweep as above, and take the retrieval set with the minimum MPR averaged over all 10 queries to fairly compare the best possible representation of each method.\footnote{Aggregate results should be handled with care. As each query-conditioned retrieval setting results in a different distribution, average-case performance does not preclude the existence of a query where baseline methods achieve superior results. See Limitations (Appendix \ref{sec:limitations}) for further discussion.} Our results demonstrate that \methodname~can exactly achieve our equal representation goals when provided with an equally-balanced curation dataset, whereas prior baselines (see DebiasCLIP, CLIP-Clip, and PBM in Appendix Table \ref{table_appendix}) underrepresent several intersectional groups.

\begin{table}[t]
 \caption{\small Retrieval averaged over 10 queries with 50 retrievals on UTKFace dataset \cite{zhifei2017cvpr} for \methodname, $k$-NN, and MMR \cite{srinivasan2024generalized}. Entries indicate average percentage representation in retrieved items.  \methodname~is able to balance representation across classes, whereas $k$-NN and MMR miss intersectional groups (highlighted in red).}
  \label{table_mainpaper}

\resizebox{\linewidth}{!}{
\begin{tabular}{cccccccc}
\toprule
\multicolumn{1}{l}{} & \multicolumn{1}{l}{} & \multicolumn{1}{l}{White} & \multicolumn{1}{l}{Black} & \multicolumn{1}{l}{Asian} & \multicolumn{1}{l}{Indian} & \multicolumn{1}{l}{Others} & \multicolumn{1}{l}{Sum} \\ \hline \midrule
\multirow{3}{*}{\textbf{Top-$k$}} & Male & 21.2 $\pm$ 11.84 & 13.2 $\pm$ 7.70 & 10.8 $\pm$ 7.54 & 21.2 $\pm$ 20.70 & \cellcolor{red!25}1.4 $\pm$ 1.56 & \textbf{67.8 $\pm$ 23.72} \\
& Female & 17.2 $\pm$ 21.04 & 6.6 $\pm$ 6.14 & \cellcolor{red!25}4.0 $\pm$ 2.82 & \cellcolor{red!25}2.8 $\pm$ 2.56 & \cellcolor{red!25}1.6 $\pm$ 1.50 & \textbf{32.2 $\pm$ 23.72} \\ \cline{2-2}
& Sum & \textbf{38.4 $\pm$ 17.18} & \textbf{19.8 $\pm$ 12.56} & \textbf{14.8 $\pm$ 7.38} & \textbf{24 $\pm$ 19.38} & \textbf{3.0 $\pm$ 2.04} & \\
\hline
\multirow{3}{*}{\textbf{MMR} } & Male & 28.8 $\pm$ 8.96 & 11.0 $\pm$ 4.58 & 10.2 $\pm$ 4.04 & 10.8 $\pm$ 2.40 & \cellcolor{red!25}3.8 $\pm$ 1.06 & \textbf{64.6 $\pm$ 7.74} \\
& Female & 16.2 $\pm$ 9.06 & 7.6 $\pm$ 3.66 & \cellcolor{red!25}5.0 $\pm$ 3.00 & \cellcolor{red!25}4.0 $\pm$ 1.26 & \cellcolor{red!25}2.6 $\pm$ 1.56 & \textbf{35.4 $\pm$ 7.74} \\ \cline{2-2}
& Sum & \textbf{45.0 $\pm$ 7.76} & \textbf{18.6 $\pm$ 5.38} & \textbf{15.2 $\pm$ 5.82} & \textbf{14.8 $\pm$ 3.12} & \textbf{6.4 $\pm$ 1.96} & \\
\hline
\multirow{3}{*}{\textbf{\methodname~ (Ours)}} & Male & 7.8 $\pm$ 3.74 & 10.2 $\pm$ 2.90 & 13.0 $\pm$ 2.56 & 11.8 $\pm$ 2.44 & 7.2 $\pm$ 3.70 & \textbf{50 $\pm$ 0} \\
& Female & 12.2 $\pm$ 3.74 & 9.8 $\pm$ 2.90 & 7.0 $\pm$ 2.56 & 8.2 $\pm$ 2.44 & 12.8 $\pm$ 3.70 & \textbf{50 $\pm$ 0} \\
\cline{2-2}
& Sum & \textbf{20 $\pm$ 0} & \textbf{20 $\pm$ 0} & \textbf{20 $\pm$ 0} & \textbf{20 $\pm$ 0} & \textbf{20 $\pm$ 0} & \\ \hline
\end{tabular}
}
\end{table}

\section{Concluding Remarks}
\label{sec:lim}

In this work, we introduced a novel retrieval metric called Multi-group Proportional Representation (MPR) and developed algorithms to ensure MPR in  retrieval tasks. By measuring deviations in  a set of representation statistics, MPR provides a scalable approach to quantifying and enforcing proportional representation for complex, intersectional groups. We analyzed the generalization properties and realizable regimes of MPR and, through experiments in image retrieval, demonstrated its favorable utility-fairness trade-off compared to existing fair retrieval algorithms.

\newpage

\section{Acknowledgements}
The authors would like to thank the discussions with Filipe Goulart Cabral. This material is based upon work supported by the National Science Foundation under awards CAREER-1845852, CIF-1900750, CIF-2231707, and CIF-2312667, FAI-2040880, and also by the National Science Foundation Graduate Research Fellowship under Grant No. DGE-2140743. The views expressed here are those of the authors and do not reflect the official policy or position of the funding agencies.

\bibliographystyle{unsrtnat}
\bibliography{camera_ready/bib}

\newpage 
\appendix

\textbf{Supplementary material to the paper \textit{Multi-Group Proportional Representation in Retrieval}}

In this supplement to the paper, we begin in Section \ref{sec:ethics_consideration} and Section \ref{sec:limitations} by noting some ethical considerations and limitations for our work, respectively. Then, we present our proofs of Propositions \ref{prop:generalization_MPR}, \ref{prop:equiv}, \ref{cor:query_budget}, and \ref{prop:closed_form_MPR} in Appendix \ref{appendixA}. In Appendix \ref{section:MPR-aware_alg}, we discuss the \methodname~algorithm, its convergence aspects, and \methodname~for linear functions and functions belonging to an RKHS. We also compare the linear relaxation to the integer program optimization problem in Equation \ref{eqn:mip_supremum} empirically and similarly compare the result of \methodname~to directly solving the closed form linear program in the case of linear models. In Appendix \ref{app:algos_description}, we describe the baselines reported in this paper and provide further retrieval statistics. In particular, we describe the algorithms CLIP-clip \citet{wang2021gender}, CLIP-debias \citet{berg2022prompt}, Post-Hoc Bias Mitigation (\cite{kong2024mitigating} and PATHS \citep{srinivasan2024generalized}. In \ref{app:social choice}, we provide a brief discussion about proportional representation in political sciences, give a few historical comments about social choice theory, and discuss the contribution given by \cite{lang2018multi}. Then, in Appendix \ref{appendix:additional_results}, we include an additional table of results for equal representation with the other baseline methods (Table~\ref{table_appendix}). We also include plots for additional queries for the linear regression model (Figs.~\ref{fig:celeba_linregs}, \ref{fig:utkface_linregs}, and \ref{fig:occupations_linregs}) for as well as two additional models, 1) decision tree models in Figs.~\ref{fig:celeba_trees}, \ref{fig:utkface_trees}, and \ref{fig:occupations_trees} and for 2) multilayer perceptrons in Figs.~\ref{fig:celeba_mlps}, \ref{fig:utkface_mlps}, and \ref{fig:occupations_mlps}. We provide additional details on the experiment setting in Appendix \ref{sec:experiment_notes}. Finally, we include some example sets of retrieved images for qualitative comparison.

\section{Ethics considerations}\label{sec:ethics_consideration}

It is important to consider the long-term societal impact of enforcing MPR in real-world retrieval systems to understand its potential benefits and risks. First, applying MPR, such as with \methodname~in deployment settings, could result in ``ethics-washing"--where a company claims their system is fair because they measured MPR when, in fact, there are still representational harms--or could provide users and developers with a false sense of fairness. Second, MPR is just one of many ways of analyzing representation, and as a statistical metric, it fails to consider aspects such as human perceptions of representation. There are legal and regulatory risks with overreliance on a single metric for fairness, especially if this metric is used to inform policy and decision-making. As mentioned previously, any biases present in the curation dataset will propagate to the results and downstream uses of MPR; furthermore, the choice of what dataset and what attributes to consider is an inherently social and political choice made by humans, subject to its own limitations and biases. Finally, the introduction of MPR in real-world retrieval systems could introduce new stereotypes or erase identities if not conducted properly.

\section{Limitations} \label{sec:limitations} Our aggregate results (e.g., Table \ref{table_mainpaper}) should be approached with caution, as each MPR-similarity curve is sampled from a different distribution of data conditioned on a query, limiting our ability to draw statistical conclusions. This does not preclude the existence of a query where a competing method may achieve an MPR-similarity trade-off point not achieved by \methodname. However, we note that \methodname~is specifically optimized to promote MPR while preserving similarity, explaining its favorable performance. In all instances observed by the authors, \methodname~Pareto dominates competing benchmarks and was the most successful method in achieving low values of MPR while maintaining high levels of similarity with a given query. Second, while MPR is designed to handle a large number of groups, the computational complexity of the proposed cutting-plane algorithm in the case of a general function class $\mathcal{C}$ can be a concern for high-dimensional feature spaces. Developing more efficient algorithms or approximation techniques could help address this issue. Finally, MPR is inherently limited by the curated dataset. If this dataset is biased against a given group, this bias will be propagated by \methodname. The creation and selection of curated datasets must carefully account for stakeholder interest and target representation statistics. Additionally, we note that not every combination of intersectional identities (such as ``white male" as opposed to ``black female") may be considered intersectional from a sociological perspective, and furthermore, these datasets do not contain multiracial labels. We do not presume to address representational issues in ML datasets, and instead, our experiments aim to demonstrate the flexibility and scalability of the MPR metric in handling complex intersectional groups. We hope that MPR will only prove more useful as datasets become more representative.

\section{Proofs from Section \ref{section3_MG_prop_representation}}

\label{appendixA}

We start by proving Proposition \ref{prop:generalization_MPR} from Section \ref{section3_MG_prop_representation} which bounds the error in MPR estimates when $Q$ is approximated via a curated dataset $\mathcal{D}_C$. We do so by using tools from standard Rademacher complexity results in statistical learning theory, e.g., \cite{Bartlett2002}. See also \cite[Chapter 4]{wainwright2019high}. We restate the theorems here to facilitate readability. We start with a lemma about the difference of suprema.

\begin{lem} 
\label{apx_lem:sup_of_diff}
Let $P$ and $P'$ be two distributions over $\mathcal{X}$ and $\mathcal{C} = \{c: \mathcal{X} \rightarrow [0, 1]\}$.
Then
\begin{equation}
    \sup_{c \in \mathcal{C}} \mathbb{E}_{P}[c(X)] - \sup_{c \in \mathcal{C} } \mathbb{E}_{P'}[c(X)]  \leq \sup_{c \in \mathcal{C}}   \left\{\mathbb{E}_{P}[c(X)] - \mathbb{E}_{P'}[c(X)]\right\}.
\end{equation}
\end{lem}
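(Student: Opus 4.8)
The plan is to exploit the elementary fact that the supremum is \emph{subadditive}: for any two functionals $F,G$ on $\mathcal{C}$, one has $\sup_{c}\, (F(c)+G(c)) \le \sup_c F(c) + \sup_c G(c)$. Here I would take $F(c) = \mathbb{E}_{P'}[c(X)]$ and $G(c) = \mathbb{E}_{P}[c(X)] - \mathbb{E}_{P'}[c(X)]$, so that $F(c)+G(c) = \mathbb{E}_{P}[c(X)]$. The whole statement then follows by a one-line rearrangement, and there is really no substantial obstacle — the only thing to get right is the direction of the inequality and the choice of which expectation to ``add and subtract.''

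Concretely, I would fix an arbitrary $c \in \mathcal{C}$ and write the trivial decomposition
\begin{equation}
    \mathbb{E}_{P}[c(X)] = \mathbb{E}_{P'}[c(X)] + \bigl(\mathbb{E}_{P}[c(X)] - \mathbb{E}_{P'}[c(X)]\bigr).
\end{equation}
Bounding each summand by its own supremum over $\mathcal{C}$ gives
\begin{equation}
    \mathbb{E}_{P}[c(X)] \le \sup_{c' \in \mathcal{C}} \mathbb{E}_{P'}[c'(X)] + \sup_{c' \in \mathcal{C}} \bigl(\mathbb{E}_{P}[c'(X)] - \mathbb{E}_{P'}[c'(X)]\bigr).
\end{equation}
The right-hand side no longer depends on $c$, so I would take the supremum over $c \in \mathcal{C}$ on the left, and then subtract $\sup_{c' \in \mathcal{C}} \mathbb{E}_{P'}[c'(X)]$ from both sides to obtain exactly the claimed inequality.

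The one point worth a remark is that the suprema are finite (so that subtracting them is legitimate): since every $c \in \mathcal{C}$ takes values in $[0,1]$, each expectation lies in $[0,1]$, hence all three suprema in the statement are finite and the rearrangement carries no ambiguity of the $\infty - \infty$ type. I expect this boundedness check to be the only subtlety, and it is immediate from the hypothesis $\mathcal{C} = \{c: \mathcal{X} \rightarrow [0,1]\}$. I would also note that the same argument applies symmetrically with the roles of $P$ and $P'$ swapped, which is the form that will be used downstream when combining this lemma with the Rademacher bound to control $\bigl|\sup_c \mathbb{E}_P[c] - \sup_c \mathbb{E}_{P'}[c]\bigr|$ in the proof of Proposition~\ref{prop:generalization_MPR}.
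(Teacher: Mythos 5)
Your proof is correct and is essentially the same argument as the paper's: both rest on the add-and-subtract decomposition $\mathbb{E}_{P}[c(X)] = \mathbb{E}_{P'}[c(X)] + (\mathbb{E}_{P}[c(X)] - \mathbb{E}_{P'}[c(X)])$ followed by subadditivity of the supremum, with your version merely unfolding that subadditivity via an arbitrary fixed $c$. Your added remark on finiteness of the suprema is a harmless (and correct) extra precaution that the paper leaves implicit.
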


\begin{proof}

\begin{align*}
    & \sup_{c \in \mathcal{C}} \mathbb{E}_{P}[c(X)] - \sup_{c \in \mathcal{C} } \mathbb{E}_{P'}[c(X)],  \\
    &= \sup_{c \in \mathcal{C}} \left\{ \mathbb{E}_{P}[c(X)] - \mathbb{E}_{P'}[c(X)] + \mathbb{E}_{P'}[c(X)]  \right\}- \sup_{c \in \mathcal{C} } \mathbb{E}_{P'}[c(X)],  \\
    & \leq \sup_{c \in \mathcal{C}} \left\{ \mathbb{E}_{P}[c(X)] - \mathbb{E}_{P'}[c(X)]\right\} + \sup_{c \in \mathcal{C} } \mathbb{E}_{P'}[c(X)] - \sup_{c \in \mathcal{C} } \mathbb{E}_{P'}[c(X)], \\
    & \leq \sup_{c \in \mathcal{C}} \left\{ \mathbb{E}_{P}[c(X)] - \mathbb{E}_{P'}[c(X)]\right\}.
\end{align*}
    
\end{proof}

\Generalizationgap*

\begin{proof}
First, note that

\begin{align}
        & \ \ \ \ \   \MPRc-\MPR 
        \label{apx_eq:begining_simmetrization}
        \\
        &=  \sup_{c \in \mathcal{C}}\left|\frac{1}{k}\sum_{i=1}^{k}c(\retrieve_i) - \frac{1}{m}\sum_{i=1}^{m}c(\curation_i)\right|  -  \sup_{c \in \mathcal{C}}\left|\frac{1}{k}\sum_{i=1}^{k}c(\retrieve_i) - \mathbb{E}_Q[c(X)]\right|, \\
        & \leq \sup_{c \in \mathcal{C}} \left\{\left|\frac{1}{k}\sum_{i=1}^{k}c(\retrieve_i) - \frac{1}{m}\sum_{i=1}^{m}c(\curation_i)\right|  - \left|\frac{1}{k}\sum_{i=1}^{k}c(\retrieve_i) - \mathbb{E}_Q[c(X)]\right| \right\}
        \label{apx_eq:diff_of_sup_to_sup_diff},\\
        & \leq \sup_{c \in \mathcal{C}} \left|\left|\frac{1}{k}\sum_{i=1}^{k}c(\retrieve_i) - \frac{1}{m}\sum_{i=1}^{m}c(\curation_i)\right|  - \left|\frac{1}{k}\sum_{i=1}^{k}c(\retrieve_i) - \mathbb{E}_Q[c(X)]\right| \right|,\\
        & \leq \sup_{c \in \mathcal{C}} \left|\frac{1}{m}\sum_{i=1}^{m}c(\curation_i) - \mathbb{E}_Q[c(X)]\right|.
        \label{apx_eq:triangular_inequality}
\end{align}
Where \eqref{apx_eq:diff_of_sup_to_sup_diff} comes from Lemma \ref{apx_lem:sup_of_diff} and \eqref{apx_eq:triangular_inequality} comes from the triangular inequality. 

Additionally, by switching $\MPR$ by $\sup_{c \in \mathcal{C}}\left|\frac{1}{k}\sum_{i=1}^{k}c(x_i) - \mathbb{E}_Q[c(X)]\right|$ in \eqref{apx_eq:begining_simmetrization} and repeating the previous argument, we conclude that:

\begin{equation}
    \left| \textrm{MPR}(\mathcal{C}, \mathcal{R}, \mathcal{D}_C)  -  \sup_{c \in \mathcal{C}}\left|\frac{1}{k}\sum_{i=1}^{k}c(x_i) - \mathbb{E}_Q[c(X)]\right| \right| \leq \sup_{c \in \mathcal{C}} \left|\frac{1}{m}\sum_{i=1}^{m}c(\curation_i) - \mathbb{E}_Q[c(X)]\right|.
    \label{apx_eq:bound_gap_by_generalization}
\end{equation}

Therefore, from \eqref{apx_eq:bound_gap_by_generalization} we show
\begin{align}
    & \Pr\left[ \left| \textrm{MPR}(\mathcal{C}, \mathcal{R}, \mathcal{D}_C)  -  \sup_{c \in \mathcal{C}}\left|\frac{1}{k}\sum_{i=1}^{k}c(x_i) - \mathbb{E}_Q[c(X)]\right| \right|  \geq 2{\mathcal{R}}_{m}(\mathcal{C}) + \sqrt{\frac{\log\left( \frac{2}{\delta}\right)}{2m}} \right], \\ 
    & \leq \Pr\left[\sup_{c \in \mathcal{C}} \left|\frac{1}{m}\sum_{i=1}^{m}c(\curation_i) - \mathbb{E}_Q[c(X)]\right|  \geq 2{\mathcal{R}}_{m}(\mathcal{C}) + \sqrt{\frac{\log\left( \frac{2}{\delta}\right)}{2m}} \right],\\
    & \leq \Pr\left[\sup_{c \in \mathcal{C}} \left|\frac{1}{m}\sum_{i=1}^{m}\frac{c(\curation_i)}{2} - \mathbb{E}_Q\left[\frac{c(X)}{2}\right]\right|  \geq {\mathcal{R}}_{m}(\mathcal{C}) + \sqrt{\frac{\log\left( \frac{2}{\delta}\right)}{8m}} \right] \leq \delta,
    \label{apx_eq:generalization_bound_for_sup}
\end{align}

here \eqref{apx_eq:generalization_bound_for_sup} comes from the generalization bound using Rademacher complexity from \cite{Bartlett2002} and the fact that $\frac{c(x)}{2} \in [-\frac{1}{2}, \frac{1}{2}]$ hence the image of $\frac{c}{2}$ is in an interval of size $1$.
\end{proof}

We now prove Proposition \ref{cor:query_budget}, which provides a bound on the size $m$ of an i.i.d. curated dataset $\mathcal{D}_{C}$ that ensures an $\epsilon$-accurate estimate of MPR for a set of $M$ queries.

\Querybudgetguarantee*

\begin{proof}
    First, recall that the Rademacher complexity ${\mathcal{R}}_{m}(\mathcal{C})$ is upper bounded by the VC-dimension of $\mathcal{C}$ by the relation
    \begin{equation}
        {\mathcal{R}}_{m}(\mathcal{C}) \leq \sqrt{\frac{2 \texttt{VC}(\mathcal{C}) \log\left(\frac{em}{\texttt{VC}(\mathcal{C})}\right)}{m}},
        \label{app_eq:vc_upper_bound}
    \end{equation}
    where $e$ is Euler's number.

    Denote the samples retrieved for a given query $q$ by $\mathcal{R}(q) = \{\retrieve_i(q)\}_{i = 1}^{k}$. From Proposition \ref{prop:generalization_MPR} we have that for all $\delta^*$

    \begin{equation*}
        \Pr\left[ \left|\MPRc-\MPR  \right|  \geq {\mathcal{R}}_{m}(\mathcal{C}) + \sqrt{\frac{\log\left( \frac{2}{\delta^*}\right)}{8m}} \right] \leq \delta^*.
    \end{equation*}

    Therefore, using the previous equation if we take $\delta^* = \frac{\delta}{M}$ and denote $\mathcal{R}(q) = \calR$ we have that
    
    \begin{align}
        & \Pr\left[ \sup_{q \in \mathcal{Q}} \left|\MPRc-\MPR  \right|  \geq {\mathcal{R}}_{m}(\mathcal{C}) + \sqrt{\frac{\log\left( \frac{2}{\delta^*}\right)}{8m}} \right], \nonumber \\
        & \leq \sum_{q \in \mathcal{Q}} \Pr\left[\left|\MPRc-\MPR  \right| \geq {\mathcal{R}}_{m}(\mathcal{C}) + \sqrt{\frac{\log\left( \frac{2}{\delta^*}\right)}{8m}} \right], \nonumber \\
        & \leq \sum_{q \in \mathcal{Q}} \delta^* = \frac{\delta M}{ M } = \delta.
    \end{align}

    Hence, we have that, with probability at least $1 - \delta$ for all $q \in \mathcal{Q}$
    \begin{align}
         \sup_{c \in \mathcal{C}}\left|\frac{1}{k}\sum_{i=1}^{k}c(\retrieve_i(q)) - \mathbb{E}_Q[c(X)]\right|  & \leq \textrm{MPR}(\mathcal{C}, \mathcal{R}(q) ,  \mathcal{D}_C) + {\mathcal{R}}_{m}(\mathcal{C}) + \sqrt{\frac{\log\left( \frac{2M}{\delta}\right)}{8m}}, \\
         & \leq \textrm{MPR}(\mathcal{C}, \mathcal{R}(q) ,  \mathcal{D}_C) + \sqrt{\frac{2 \texttt{VC}(\mathcal{C}) \log\left(\frac{em}{\texttt{VC}(\mathcal{C})}\right)}{m}} + \sqrt{\frac{\log\left( \frac{2M}{\delta}\right)}{8m}}. \label{app_eq:applying_vc_bound} 
    \end{align}
    Where the inequality in \eqref{app_eq:applying_vc_bound} comes from the upper bound in \eqref{app_eq:vc_upper_bound}.

    Moreover, if $m \geq \frac{32 \texttt{VC}(\mathcal{C}) }{\epsilon^2}$ then  $\sqrt{\frac{2 \texttt{VC}(\mathcal{C}) \log\left(\frac{em}{\texttt{VC}(\mathcal{C})}\right)}{m}} \leq \frac{\epsilon}{2}$ and if $m \geq \frac{\log\left(\frac{2M}{\delta}\right)}{2\epsilon^2}$ then $\sqrt{\frac{\log\left( \frac{2M}{\delta}\right)}{8m}} \leq \frac{\epsilon}{2}$.
    Therefore, we conclude that if $m$ is such that
    \begin{equation}
        m \geq \frac{32 \texttt{VC}(\mathcal{C}) }{\epsilon^2} + \frac{\log\left(\frac{2M}{\delta}\right)}{2\epsilon^2},
    \end{equation}
    then 
    \begin{equation}
        \sup_{c \in \mathcal{C}}\left|\frac{1}{k}\sum_{i=1}^{k}c(\retrieve_i(q)) - \mathbb{E}_Q[c(X)]\right|  \leq \textrm{MPR}(\mathcal{C}, \mathcal{R}(q) ,  \mathcal{D}_C) + \epsilon.
    \end{equation}
       
\end{proof}

As described in Section \ref{section3_MG_prop_representation}, such bounds tend to be conservative and these bounds can be improved by doing a sharper analysis with PAC-Bayesian \cite{yang2019fast} or mutual information bounds \cite{bu2020tightening}. See also \cite{asadi2018chaining} and \cite{amit2022integral}.

We now prove Proposition \ref{prop:equiv}. This proposition states that for specific function classes, computing MPR is equivalent to solving an MSE regression problem. This proposition allows us to efficiently estimate MPR with standard regression models from packages such as scikit-learn \cite{scikit-learn}.

\Equivalence*

\begin{proof}
    We begin proving $0 \leq \mathrm{MPR}(\mathcal{C}',\mathcal{R},\mathcal{D}_C) \leq 1,$. As a shorthand, let $c(\mathbf{X}) = [c(\mathbf{x_1}), ..., c(\mathbf{x_{n+m}})]$. The Cauchy-Schwarz inequality gives,
    \begin{equation}
        \left| \sum_{i=1}^{n+m}c(\bx_i)\tilde{a}_i \right| \leq \|c(\mathbf{X})\|_2 \|\mathbf{\tilde{a}}\|_2,
    \end{equation}
    where $\mathbf{X}\in \Reals^{(m+n)\times l}$ is the matrix formed by the row-wise concatenation of $\mathcal{D}_R$ and $\mathcal{D}_C$ and $\mathbf{\tilde{a}}$ is the vector $[\tilde{a}_0, ..., \tilde{a}_{m+n}]$. $\mathcal{C}'$ gives us a constraint on the norm of $c(\mathbf{X})$, and we can compute $\mathbf{\tilde{a}}$ given there are $k$ terms with value $1/k$, $m$ terms with value $1/m$, and $n-k$ terms with value $0$. Also note that $\frac{mk}{m+k}$ = $(\frac{1}{k} + \frac{1}{m})^{-1}$
    \begin{equation}
        \left| \sum_{i=1}^{n+m}c(\bx_i)\tilde{a}_i \right| \leq (\frac{1}{k} + \frac{1}{m})^{-\frac{1}{2}}(\frac{1}{k} + \frac{1}{m})^{\frac{1}{2}} = 1.
    \end{equation}

    Next, we prove the second statement. Consider the minimizer of the MSE problem, 
    \begin{equation}
        \arginf_{c\in \mathcal{C}} \sum_{i=1}^{n+m}(c(\bx_i)-\tilde{a}_i)^2,
    \end{equation}
    As $\mathcal{C}$ is closed under scalar multiplication, we can equivalently write this a two-part optimization problem
    \begin{equation}\label{eq:arginf_lemma}
        \arginf_{c\in \mathcal{C}} \left(\inf_\lambda \sum_{i=1}^{n+m}(\lambda c(\bx_i)-\tilde{a}_i)^2\right),
    \end{equation}
    where the inner optimization is quadratic and yields a solution
    \begin{equation}
        \lambda^* = \frac{\sum_{i=1}^{n+m}c(\mathbf{x}_i) \tilde{a}_i}{\|c(\mathbf{X})\|_2^2},
    \end{equation}
    as $\|c(\mathbf{X})\|_2<\infty$. Plugging this back into Equation \eqref{eq:arginf_lemma}, the MSE problem becomes
    \begin{align*}
        &\arginf_{c\in \mathcal{C}} \sum_{i=1}^{n+m}(\frac{\sum_{i=1}^{n+m}c(\mathbf{x}_i) \tilde{a}_i}{\|c(\mathbf{X})\|_2^2} c(\bx_i)-\tilde{a}_i)^2, \\
        &= \arginf_{c\in \mathcal{C}} \sum_{i=1}^{n+m}\left(\frac{\langle c(\mathbf{X}), \mathbf{\tilde{a}}\rangle^2}{\|c(\mathbf{X})\|_2^4}c(\bx_i)^2 - 2\frac{\langle c(\mathbf{X}), \mathbf{\tilde{a}}\rangle}{\|c(\mathbf{X})\|_2^2} c(\bx_i)\tilde{a}_i+\tilde{a}_i^2\right),\\
        &= \arginf_{c\in \mathcal{C}} \left( \frac{\langle c(\mathbf{X}), \mathbf{\tilde{a}}\rangle^2} {\|c(\mathbf{X})\|_2^4}\|c(\mathbf{X})\|_2^2 - 2\frac{\langle c(\mathbf{X}), \mathbf{\tilde{a}}\rangle}{\|c(\mathbf{X})\|_2^2} \langle c(\mathbf{X}), \mathbf{\tilde{a}}\rangle + \sum_{i=1}^{n+m}\tilde{a}_i^2\right), \\
        &= \arginf_{c\in \mathcal{C}}-\frac{\langle c(\mathbf{X}), \mathbf{\tilde{a}}\rangle^2} {\|c(\mathbf{X})\|_2^2}+ \sum_{i=1}^{n+m}\tilde{a}_i^2.
    \end{align*}
    As $\tilde{a}_i$ does not depend on $c$, this optimization is equivalent to
    \begin{equation}
        \argsup_{c\in \mathcal{C}} \frac{\langle c(\mathbf{X}), \mathbf{\tilde{a}}\rangle^2} {\|c(\mathbf{X})\|_2^2} = \argsup_{c\in \mathcal{C}} \frac{|\langle c(\mathbf{X}), \mathbf{\tilde{a}}\rangle|^2} {\|c(\mathbf{X})\|_2^2}.  \nonumber
    \end{equation}
    Note that the supremum can be multiplied by a constant. Then we can multiply by some $\Delta > 0$ and get
    \begin{equation}
        \argsup_{c\in \mathcal{C}} \frac{\Delta|\langle c(\mathbf{X}), \mathbf{\tilde{a}}\rangle|^2} {\|c(\mathbf{X})\|_2^2} = \argsup_{\substack{c\in \mathcal{C} \\ \|c(\mathbf{X})\|_2=\sqrt{\Delta}}} |\langle c(\mathbf{X}), \mathbf{\tilde{a}}\rangle|^2. \label{eq:norm_constrained_sup}
    \end{equation}
    
    In other words, for $\Delta > 0$ and
    \begin{equation}
        c^* \in \argsup_{c\in \mathcal{C}} \frac{|\langle c(\mathbf{X}), \mathbf{\tilde{a}}\rangle|^2} {\|c(\mathbf{X})\|_2^2},  \nonumber
    \end{equation}
    and letting $\hat{c}:\mathcal{X}\rightarrow\mathbb{R}$ be defined as $\hat{c}(\mathbf{x})=\frac{\sqrt{\Delta}}{\|c^*(\mathbf{X})\|_2}c^*(\mathbf{x})$, we have that
    \begin{equation}
        \hat{c} \in \argsup_{\substack{c \in \mathcal{C},\\\|c(\mathbf{X})\|_2 = \sqrt{\Delta}}} |\langle c(\mathbf{X}), \mathbf{\tilde{a}}\rangle|^2 = \argsup_{\substack{c \in \mathcal{C},\\\|c(\mathbf{X})\|_2 = \sqrt{\Delta}}} \left| \sum_{i=1}^{n+m}c(\bx_i)\tilde{a}_i \right|^2 = \argsup_{\substack{c \in \mathcal{C},\\\|c(\mathbf{X})\|_2 = \sqrt{\Delta}}} \left| \sum_{i=1}^{n+m}c(\bx_i)\tilde{a}_i \right|.
    \end{equation}
    Letting $\Delta = \frac{mk}{m+k}$, we can conclude that solving an MSE regression problem is equivalent to finding the representation statistic for MPR.
\end{proof}

Now, we establish Proposition \ref{prop:closed_form_MPR}, restated below for convenience. It says that MPR enjoys a closed-form expression whenever the class $\mathcal{C}$ is given by linear functions. This proposition allows us to simplify \methodname~ since, in this case, the optimization problem can be cast as a quadratic program.

\Closedformmpr*

The closed-form expression for MPR in \eqref{eq:MPR-linear} allows MPR-constrained retrieval to be approximated by a quadratic program, whose proof can be found in Appendix \ref{section:MPR-aware_alg}. Moreover, the above proposition can be directly adapted to linear functions defined only for embeddings or a subset of features of retrieved items.

\begin{proof} Since we are considering the class $\calC' \triangleq \{c \in \calC \mid \sum_{i=1}^{m+n}c(\bx_i)^2 = \frac{mk}{m+k}\}$, where $\calC = \left\{x\mapsto w^\intercal x \mid w \in \Reals^d \right\}$, the computation of the Multi-Group Proportional Representation metric
\begin{equation}
    \mathrm{MPR}(\mathcal{C}',\mathcal{R},\mathcal{D}_C) = \argsup_{c \in \mathcal{C}} \left| \sum_{i=1}^{n+m}c(x_i) \tilde{a}_i \right|
\end{equation}
boils down to the following optimization problem
 \begin{align}
    \max_{c \in \mathcal{C}}~~& \sum_{i=1}^{n+m} (w^\intercal x) \tilde{a_i} \\
    \mbox{subject to}~~ &  \sum_{i=1}^{m+n} [w^\intercal x]^2= \frac{mk}{m+k},  \nonumber
\end{align}
which can be written as 
\begin{align}
    \max_{ w \in \mathbb{R}^{n+m}}~~&  \mathbf{\tilde{a}}^T \mathbf{X}\mathbf{w}\\
    \mbox{subject to}~~ &  \|\mathbf{X} \textbf{w}\|_2^2 = \frac{mk}{m+k},  \nonumber
\end{align}
where $\mathbf{X}\in \Reals^{(n+m)\times l}$ is the matrix formed by concatenating items in the retrieval and curated dataset $\mathcal{D}_R$ and $\mathcal{D}_C$, respectively. Let $\mathbf{X}= \mathbf{U}\mathbf{\Sigma}\mathbf{V}^\intercal$ be the SVD of $\mathbf{X}$ and $\mathbf{U}_l\in \Reals^{(m+n)\times l} $ are the  left singular vectors in $\mathbf{U}$ corresponding to the top-$l$ largest singular values. Then, the problem becomes
\begin{align}
    \max_{w \in \mathbb{R}^{m+n}}~~&  \mathbf{\tilde{a}}^T \mathbf{U}\mathbf{\Sigma} \mathbf{V}^\intercal \textbf{w} \\
    \mbox{subject to}~~ &  \| \mathbf{\Sigma} \mathbf{V}^\intercal\textbf{w}\|_2^2 = \frac{mk}{m+k}. \nonumber
\end{align}
Now, by a change of variables $\mathbf{\Sigma} \mathbf{V}^\intercal\textbf{w} = \tilde{\textbf{w}}$ and $\mathbf{U}_l^\intercal \mathbf{\tilde{a}}= \mathbf{z}$ 
\begin{align}
    \max_{\tilde{\textbf{w}}, \textbf{w} \in \mathbb{R}^{m+n}}~~& \mathbf{z}^T \tilde{\textbf{w}} \\
    \mbox{subject to}~~ &  \|\tilde{\textbf{w}}\|_2^2 = \frac{mk}{m+k}, \nonumber \\ 
    \mathbf{\Sigma} \mathbf{V}^\intercal\textbf{w} = \tilde{\textbf{w}}. \nonumber
\end{align}
We can use the method of Lagrange multipliers. Let us define the Lagrangian function:
\begin{align}
L(\tilde{\textbf{w}}, \textbf{w}, \lambda, \boldsymbol{\mu}) = \mathbf{z}^T \tilde{\textbf{w}} - \frac{\lambda}{2}\left(\|\tilde{\textbf{w}}\|_2^2 - \frac{mk}{m+k}\right) - \boldsymbol{\mu}^T(\mathbf{\Sigma} \mathbf{V}^\intercal\textbf{w} - \tilde{\textbf{w}}).
\end{align}
To find the optimal solution, we set the partial derivatives of the Lagrangian with respect to $\tilde{\textbf{w}}$, $\textbf{w}$, $\lambda$, and $\boldsymbol{\mu}$ to zero:
\begin{align}
\frac{\partial L}{\partial \tilde{\textbf{w}}} = \mathbf{z} - \lambda\tilde{\textbf{w}} + \boldsymbol{\mu} = 0,\label{eqn:partial_w_tilde}\\
\frac{\partial L}{\partial \textbf{w}} = -\mathbf{V}\mathbf{\Sigma}\boldsymbol{\mu} = 0, \label{eqn:partial_w}\\
\frac{\partial L}{\partial \lambda} = \frac{1}{2}\left(\|\tilde{\textbf{w}}\|_2^2 - \frac{mk}{m+k}\right) = 0,\\
\frac{\partial L}{\partial \boldsymbol{\mu}} = \mathbf{\Sigma} \mathbf{V}^\intercal\textbf{w} - \tilde{\textbf{w}} = 0. \label{eqn:partial_lambda_mu}
\end{align}
From~\eqref{eqn:partial_lambda_mu}, we have $\mathbf{V}\mathbf{\Sigma}\boldsymbol{\mu} = 0$. Assuming $\mathbf{V}$ and $\mathbf{\Sigma}$ are full rank, this implies $\boldsymbol{\mu} = 0$. Substituting this into~\eqref{eqn:partial_w_tilde}, we get:
\begin{align}
\mathbf{z} - \lambda\tilde{\textbf{w}} = 0 \implies \tilde{\textbf{w}} = \frac{1}{\lambda}\mathbf{z}. \label{eqn:w_z_relation}
\end{align}
Using~\eqref{eqn:w_z_relation} and the constraint $\|\tilde{\textbf{w}}\|_2^2 = \frac{mk}{m+k}$, we can solve for $\lambda$:
\begin{align}
\|\tilde{\textbf{w}}\|_2^2 = \frac{1}{\lambda^2}\|\mathbf{z}\|_2^2 = \frac{mk}{m+k} \implies \lambda = \pm\|\mathbf{z}\|_2\sqrt{\frac{m+k}{mk}}
\end{align}
Since we are maximizing $\mathbf{z}^T \tilde{\textbf{w}}$, we choose the positive value of $\lambda$. Thus, the optimal $\tilde{\textbf{w}}$ is:
\begin{align}
\tilde{\textbf{w}}^* = \frac{\sqrt{\frac{mk}{m+k}}}{\|\mathbf{z}\|_2}\mathbf{z}. \label{eqn:optimal_w_tilde}
\end{align}
Finally, using~\eqref{eqn:optimal_w_tilde}, we can find the optimal $\textbf{w}$:
\begin{align}
\mathbf{\Sigma} \mathbf{V}^\intercal\textbf{w}^* = \tilde{\textbf{w}}^* \implies \textbf{w}^* = (\mathbf{\Sigma} \mathbf{V}^\intercal)^{-1}\tilde{\textbf{w}}^* = \frac{\sqrt{\frac{mk}{m+k}}}{\|\mathbf{z}\|_2}(\mathbf{\Sigma} \mathbf{V}^\intercal)^{-1}\mathbf{z} = \frac{\sqrt{\frac{mk}{m+k}}}{\|\mathbf{z}\|_2}(\mathbf{\Sigma} \mathbf{V}^\intercal)^{-1} \mathbf{U}_l^\intercal \mathbf{\tilde{a}}.
\end{align}
The optimal $\textbf{w}$ is then obtained by solving the linear system $\mathbf{\Sigma} \mathbf{V}^\intercal \textbf{w}^* = {\tilde{\textbf{w}}}^*$. Finally, plugging the solution back in to our objective and canceling terms, we arrive at our solution. Note that we can replace $\mathbf{U}$ with $\mathbf{U}_l$ in our original SVD as $\mathbf{U} \in \mathbb{R}^{(m+n) \times l}$, so we can truncate the vectors in the null space of $\mathbf{U}$.

\begin{align*}
    \max_{w \in \mathbb{R}^{m+n}}~~&  \mathbf{\tilde{a}}^T \mathbf{U}\mathbf{\Sigma} \mathbf{V}^\intercal \textbf{w}\\
    &= \mathbf{\tilde{a}}^T \mathbf{U}_l\mathbf{\Sigma} \mathbf{V}^\intercal \frac{\sqrt{\frac{mk}{m+k}}}{\|\mathbf{z}\|_2}(\mathbf{\Sigma} \mathbf{V}^\intercal)^{-1} \mathbf{U}_l^\intercal \mathbf{\tilde{a}} \\
    &= \frac{\sqrt{\frac{mk}{m+k}}}{\|\mathbf{U}_l^\intercal \tilde{a}\|_2}\mathbf{\tilde{a}}^T \mathbf{U}_l \mathbf{U}_l^\intercal \mathbf{\tilde{a}} \\
    &= \sqrt{\frac{mk}{m+k}}\|\mathbf{U}_l^\intercal \tilde{a}\|_2
\end{align*}

\end{proof}

\section{The Multi-group Optimized Retrieval Algorithm}\label{section:MPR-aware_alg}

In this section, we discuss the convergence of \methodname, the Multi-group Optimized Proportional Retrieval algorithm. As described in \Cref{algorithms}, the \methodname~is essentially a cutting plane method (also known as the Kelley-Cheney-Goldstein method \cite{kelley1960cutting,cheney1959newton}) applied to the linear function $\mathbf{a}^T \mathbf{s}$; see also \cite{eaves1971generalized}.

The algorithm begins by identifying the $k$-most similar items to a given query in $\mathcal{D}_R$. Then, an MSE-minimizing oracle is called and returns the statistic (or group-denoting function) $c\in \cal{C}$ with the most disproportionate representation for the set of retrieved items. The function $c$ is then added as a constraint to a linear program that outputs a vector $\mathbf{a}\in[0,1]^n$ maximizing average similarity as in \eqref{eqn:mip_supremum} subject to a single constraint $\left|\frac{1}{k}\sum{i=1}^n a_ic(\retrieve_i)-\frac{1}{m}\sum_{i=1}^m c(\curation_i) \right|\leq \rho$, where integer constraints are relaxed. The optimal $\mathbf{a}$ is rounded so only the largest $k$ entries are marked as 1, corresponding to a new set of retrieved items. The process repeats with an increasing number of constraints to the linear program and halts when the oracle does not return a function that violates the target MPR constraint $\rho$. In mathematical terms, we start by solving the problem

\begin{align}
    \max_{\mathbf{a} \in \{0,1\}^n} & \mathbf{a}^\intercal \mathbf{s} \\
    \mbox{subject to}~~ & \sum_{i=1}^k \mathbf{a}_i = k. \nonumber
\end{align}\label{basic_optproblem}
Note that the solution to this problem is given by the vector $\ba$ that finds the $k$th largest entries of $\mathbf{s}$, i.e., it identifies the $k$-most similar items to a given query in $\mathcal{D}_R$. Then, Proposition \ref{prop:equiv} provides a way to generate cuts given the vector $\ba_k \in \mathbb{R}^n$ found as a solution of the previous iteration. These cuts are given by the function $c\in \cal{C}$ with the most disproportionate representation for the set of retrieved items.

At each iteration, we test if the MPR constraint, less or equal to $\rho$, is satisfied. If not, this gives us a new inequality to be incorporated into the linear problem \eqref{basic_optproblem} that outputs a vector $\mathbf{a}\in[0,1]^n$ maximizing average similarity as in \eqref{eqn:mip_supremum} subject to a single constraint $\left|\frac{1}{k}\sum_{i=1}^n a_ic(\retrieve_i)-\frac{1}{m}\sum_{i=1}^m c(\curation_i) \right|\leq \rho$, where integer constraints are relaxed. Therefore, the algorithm consists of iteratively adding a new constraint (cut) of the form

\begin{align}
     G(a)=: \sup_{c \in C} \left|\frac{1}{k}\sum_{i=1}^n a_ic(\retrieve_i)-\frac{1}{m}\sum_{i=1}^m c(\curation_i) \right|\label{constraint}
\end{align}

and solving the problem again until all the inequalities are fulfilled, i.e., until the oracle does not return a function that violates the target MPR constraint $\rho$. At this point, the algorithm halts. It remains to be shown that the description above indeed corresponds to a traditional cutting plane method for convex functions. For the subsequent discussion, we will denote the term inside the supremum in \eqref{constraint} by $g(\ba,c)$, i.e.,

\begin{align}
     g(\ba,c)= \frac{1}{k}\sum_{i=1}^n a_ic(\retrieve_i)-\frac{1}{m}\sum_{i=1}^m c(\curation_i), \quad \text{for} \ c\in \tilde{\mathcal{C}}.
\end{align}

The next proposition links the oracle via MSE, described in Proposition \ref{prop:equiv}, with the subdifferential of the function $G(\ba)$.

\begin{prop}\label{prop:cuttingplane} Let $\calC$ be in infinite set and let $G(\ba)=\sup_{c \in \mathcal{C}} |g(\ba,c)|$ be the convex restriction of the optimization problem \eqref{eqn:mip_supremum} given by the supremum, i.e., $G(\ba) \leq \rho, \forall \ba \in \mathbb{R}^n$. If $c^* \in \arg \sup_{c \in \calC} | g(\ba^*,c)|$ and $\bc$ denote the function $c^*$ evaluated at the points $x_1, \dots, x_n$, i.e., $\bc=[c_*(x_1), \dots, c_*(x_m)]$, then $g^* := \sign(g(\ba^*,c^*)) \cdot \bc \in \partial G(\ba^*)$, i.e., $g^*$ is a subgradient of $G$ at a point $\ba^*$.
\end{prop}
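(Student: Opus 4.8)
The plan is to recognize $G$ as a pointwise supremum of convex functions and then invoke the ``easy direction'' of the subdifferential-of-a-supremum rule, reducing the claim to computing the subgradient of the single branch $|g(\ba,c^*)|$ that attains the supremum at $\ba^*$. First I would verify the convexity that justifies calling $G$ a \emph{convex} restriction: for each fixed $c$, the map $\ba \mapsto g(\ba,c) = \frac{1}{k}\sum_{i=1}^n a_i c(\retrieve_i) - \frac{1}{m}\sum_{i=1}^m c(\curation_i)$ is affine in $\ba$, because the second sum is a constant independent of $\ba$. Hence $\ba \mapsto |g(\ba,c)|$ is convex (an absolute value composed with an affine map), and $G(\ba)=\sup_{c\in\mathcal{C}}|g(\ba,c)|$ is convex as a pointwise supremum of convex functions.

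Next I would establish the key reduction: any subgradient of a branch attaining the supremum is automatically a subgradient of the supremum. Writing $f_c(\ba):=|g(\ba,c)|$ and fixing $c^*\in\arg\sup_{c}|g(\ba^*,c)|$ so that $G(\ba^*)=f_{c^*}(\ba^*)$, for any candidate $v\in\partial f_{c^*}(\ba^*)$ and every $\ba$ one has
\[
G(\ba)\ \geq\ f_{c^*}(\ba)\ \geq\ f_{c^*}(\ba^*)+\langle v,\ \ba-\ba^*\rangle\ =\ G(\ba^*)+\langle v,\ \ba-\ba^*\rangle,
\]
where the first inequality is $G\geq f_{c^*}$ pointwise and the second is convexity of $f_{c^*}$. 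This is exactly the subgradient inequality, so $v\in\partial G(\ba^*)$; note this argument needs only attainment of the supremum at $\ba^*$, not any regularity of the (possibly infinite) class $\mathcal{C}$.

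It then remains to compute $\partial f_{c^*}(\ba^*)$ explicitly. Since $g(\cdot,c^*)$ is affine with $\nabla_\ba g(\ba,c^*)=\frac{1}{k}\bc$, where $\bc=[c^*(\retrieve_1),\dots,c^*(\retrieve_n)]$ collects $c^*$ evaluated on the retrieval items, the subdifferential rule for $|\cdot|$ gives $\partial f_{c^*}(\ba^*)=\{\sign(g(\ba^*,c^*))\,\tfrac{1}{k}\bc\}$ when $g(\ba^*,c^*)\neq 0$, and $\partial f_{c^*}(\ba^*)=\{t\,\tfrac{1}{k}\bc : t\in[-1,1]\}$ when $g(\ba^*,c^*)=0$. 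In either case the vector $\sign(g(\ba^*,c^*))\,\tfrac{1}{k}\bc$ lies in $\partial f_{c^*}(\ba^*)$, hence in $\partial G(\ba^*)$, which is the claimed $g^*$ (up to the harmless positive scalar $1/k$). The main obstacle I anticipate is precisely the kink of the absolute value at $g=0$: it forces the set-valued subdifferential rather than a plain gradient, and I would confirm that \emph{either} convention $\sign(0)\in\{\pm1\}$ still selects a point of the segment $\{t\,\tfrac1k\bc : t\in[-1,1]\}$, so the statement is valid regardless of how ties are broken.

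Finally, I would close the loop with Proposition~\ref{prop:equiv}: the MSE-minimizing oracle returns, after normalization, precisely a maximizer $c^*$ of $|g(\ba^*,\cdot)|$ over $\mathcal{C}$. Consequently the vector $\bc$ it produces yields, via the computation above, a genuine element of $\partial G(\ba^*)$, certifying that the inequality added at each iteration of \methodname~is a legitimate cutting plane (equivalently, a supporting hyperplane of the convex feasible set $\{G\leq\rho\}$), which is what justifies the convergence analysis of the Kelley--Cheney--Goldstein method in this setting.
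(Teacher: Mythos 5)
Your proof is correct and follows essentially the same route as the paper's: a Danskin-type argument showing that a subgradient of the branch $|g(\cdot,c^*)|$ active at $\ba^*$ is a subgradient of the pointwise supremum $G$, combined with the subdifferential of an absolute value composed with an affine map. The paper carries this out as a single direct verification of the subgradient inequality, dividing by $|g(\ba^*,c^*)|$ along the way; your two-step organization (active-branch lemma, then chain rule for $|\cdot|\circ\text{affine}$) is cleaner and, importantly, remains valid at the kink $g(\ba^*,c^*)=0$, where the paper's displayed computation literally divides by zero — your observation that either convention for $\sign(0)$ still lands in the segment $\{t\,\tfrac1k\bc: t\in[-1,1]\}\subseteq\partial f_{c^*}(\ba^*)$ patches that. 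You are also right to flag the factor $1/k$: the gradient of $\ba\mapsto g(\ba,c^*)$ is $\tfrac1k\bc$, not $\bc$, so the subgradient is $\sign(g(\ba^*,c^*))\,\tfrac1k\bc$; the proposition as stated only holds if one reads $\bc$ as already absorbing the $1/k$ normalization (as the paper implicitly does when writing $G(\ba)=\sup_c|\bc^\intercal\ba+b_c|$), and while a positive rescaling of the cut vector does not change the cutting plane's validity once the constant term is rescaled consistently, it is not literally an element of $\partial G(\ba^*)$ without it.
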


\begin{proof}
We start by noting that $G(\ba)$ is a convex function since it is a supremum of a composition of an affine function with the absolute value. Also, note that $G(\ba)$ can be written as $G(\ba)=\sup_{c \in \mathcal{C}} |g(\ba,c)| = \sup_{c \in \mathcal{C}} |{\bc}^\intercal\ba+ b_c|$, where $b: \mathcal{C} \rightarrow \mathbb{R}$. We need to prove that 
\begin{equation}\label{subdiff}
    G(\ba^*) +  {g^*}^\intercal(\ba-\ba^*) \leq G(\ba), \forall \ba \in \mathbb{R}^n
\end{equation}
In fact, starting from the left-hand side leads to
\begin{align*}
      &|g(\ba^*,c^*) | + \sign(g(\ba^*,c^*)) {\bc}^\intercal(\ba-\ba^*) \\
      &= |{\bc}^\intercal\ba+ b_c| + \frac{ {\bc}^\intercal\ba+ b_c}{ |{\bc}^\intercal\ba+ b_c|} {\bc}^\intercal(\ba-\ba^*)\\
      &= |{\bc}^\intercal\ba+ b_c| + \frac{ {\bc}^\intercal\ba+ b_c}{ |{\bc}^\intercal\ba+ b_c|} -{\bc}^\intercal\ba^* - b_{c^*} + {\bc}^\intercal\ba + b_{c^*}\\
      &= |{\bc}^\intercal\ba+ b_c| - \frac{|{\bc}^\intercal\ba+ b_c|^2}{|{\bc}^\intercal\ba+ b_c|} + \sign(g(\ba^*,c^*)) ({\bc}^\intercal\ba + b_{c^*})\\
      & \leq |{\bc}^\intercal\ba + b_{c^*}| \leq G(\ba), \forall \ba \in \mathbb{R}^n
\end{align*}
This shows that $\bc =[c_*(x_1), \dots, c_*(x_m)]$ is a subgradient of $G(\ba)$.
\end{proof}

The cutting plane strategy described in \Cref{retrievalAlgo} addresses the constraints of the form $G(\ba) \leq \rho$ that are not fulfilled. Since \eqref{subdiff} holds, the algorithm adds linear approximations (or cutting planes) of the form
\begin{equation}
G(\ba^*) + {g^*}^\intercal(\ba-\ba^*) \leq \rho
\end{equation}
to the problem at each iteration until the solution satisfies all the inequalities. This reformulation demonstrates that the \methodname~algorithm can be viewed as a traditional cutting plane method for minimizing a linear function over the polyhedron $P = \{ a_i \in [0,1] \cap \sum_{i=1}^n a_i =k \}$. If, at a certain iteration, the algorithm finds an iterate that satisfies all the inequalities, it terminates since the lower bound provided by the cutting plane method matches the upper bound. In the case where the algorithm continues adding constraints, the accumulation point (guaranteed to exist due to the compactness of the set $P$) will converge to an optimal solution, as proven in \cite[Proposition 4.1.2]{bertsekas2015convex} or the main theorem in \cite[Section 2]{kelley1960cutting}.

However, it is important to note that even when the algorithm terminates in a finite number of steps, the cutting plane method often requires a substantial number of iterations to converge and may exhibit numerical instabilities. Addressing these instabilities or exploring techniques to discard certain cuts for improved convergence, such as partial cutting plane methods, is beyond the scope of this paper. For further information on these topics, refer to \cite{topkis1970cutting,goffin2002convex} and the references therein.

The integer program may not be as scalable as its convex relaxation. If we relax it to $a_i\in [0,1],$ this becomes a convex optimization problem solvable by standard methods (even for very large $\bC$) and provides an upper bound to the original program for reasonable values of the MPR violation (and for which Proposition \ref{prop:cuttingplane} applies). However, the optimal solution of the relaxed problem might not be feasible for the original problem, because it might have fractional values for some $a_i$, while the original problem requires these to be binary. There are certain special cases where the optimal solution of the relaxed problem is guaranteed to be integral (i.e., all $a_i$ are 0 or 1), in which case it is also an optimal solution for the original problem. This is the case, for example, when the constraint matrix is totally unimodular and the right-hand sides of the constraints are integral \cite{wolsey2014integer}. Still, we observe from empirical experiments that selecting the top k $a_i$ of the relaxed problem is a computationally efficient alternative to solving the integer program in Equation~\eqref{eqn:mip_supremum}, since results from rounding produce a negligible difference to the upper bound (see Figures \ref{fig:ip_vs_lp} and \ref{fig:additional_ip_vs_lp}).

\begin{figure}[ht]
    \centering
    \includegraphics[width=0.6\textwidth]{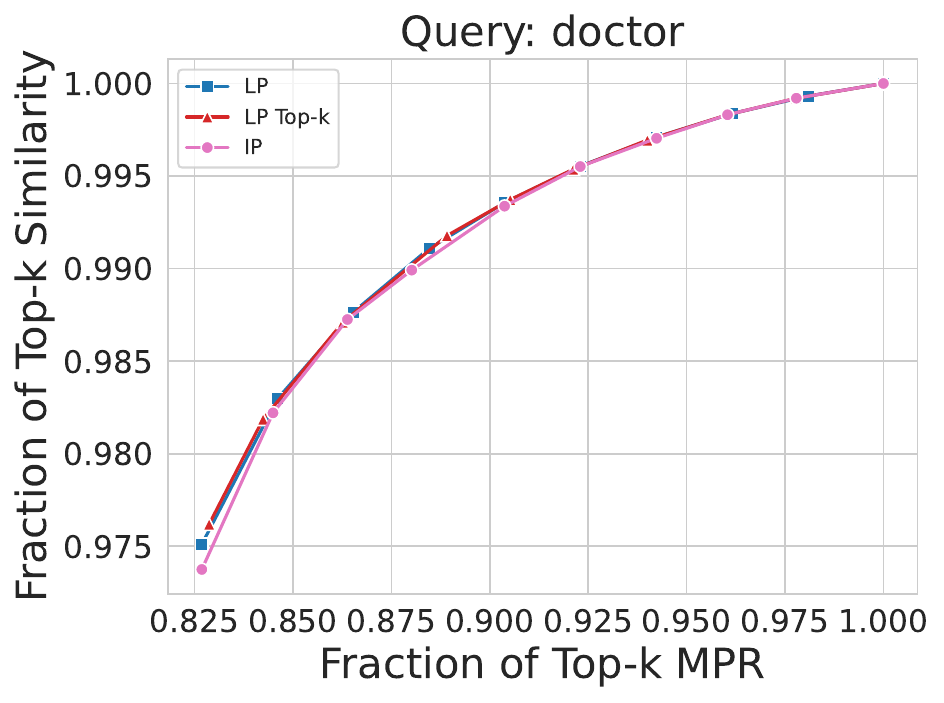}
    \caption{Comparison of linear program with and without taking the top-$k$ to integer program. Solving the relaxed program is much more computationally efficient and achieves similar performance after rounding to solving the integer program.}
    \label{fig:ip_vs_lp}
\end{figure}

\begin{figure}[ht]
    \includegraphics[width=0.3\textwidth]{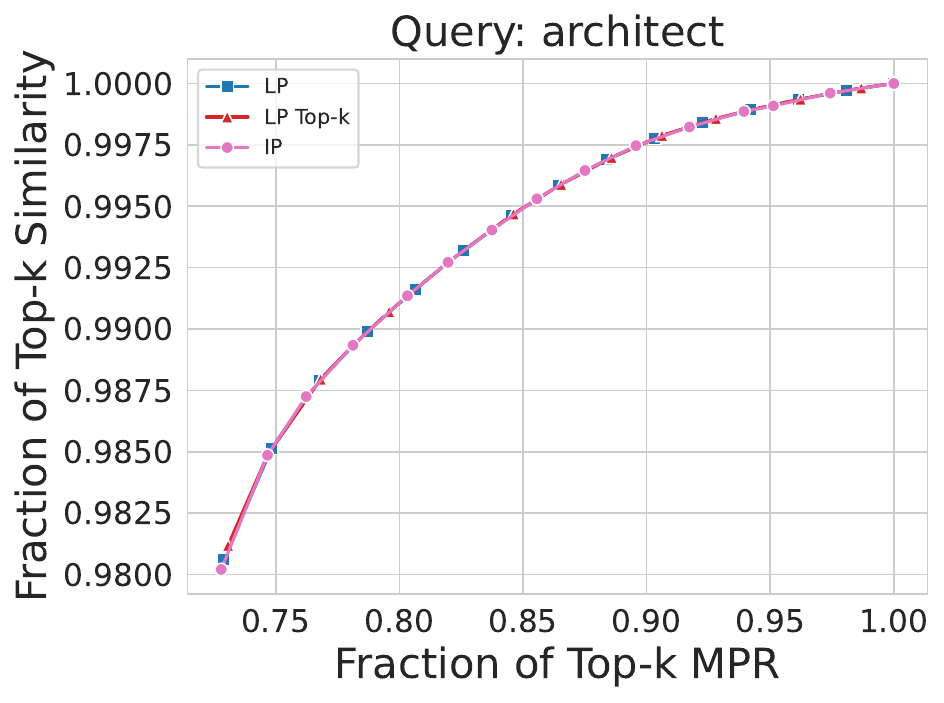}
    \hfill
    \includegraphics[width=0.3\textwidth]{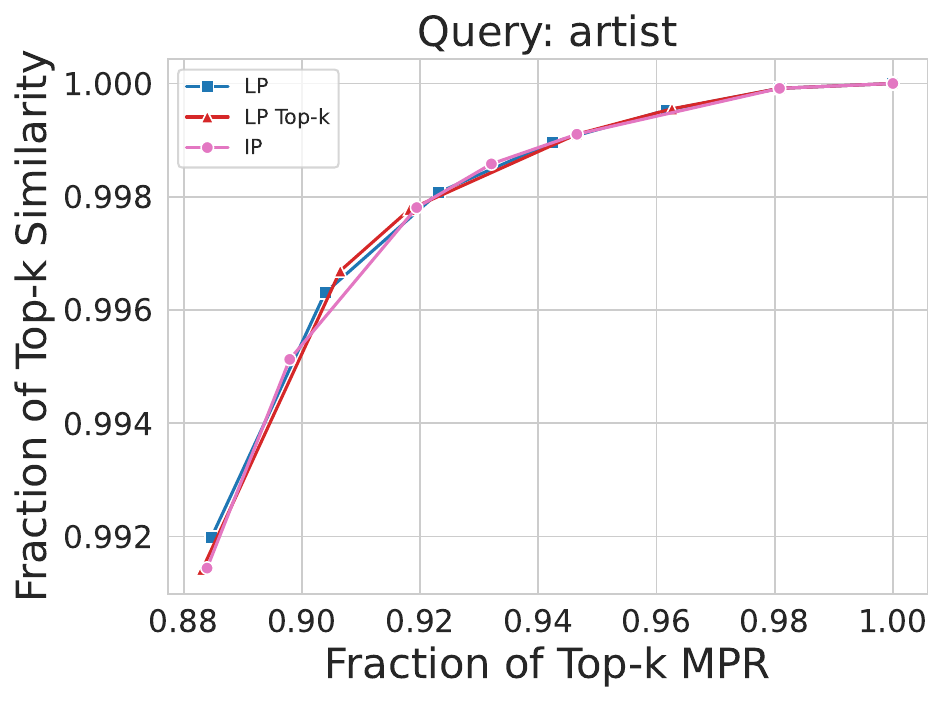}
    \hfill
    \includegraphics[width=0.3\textwidth]{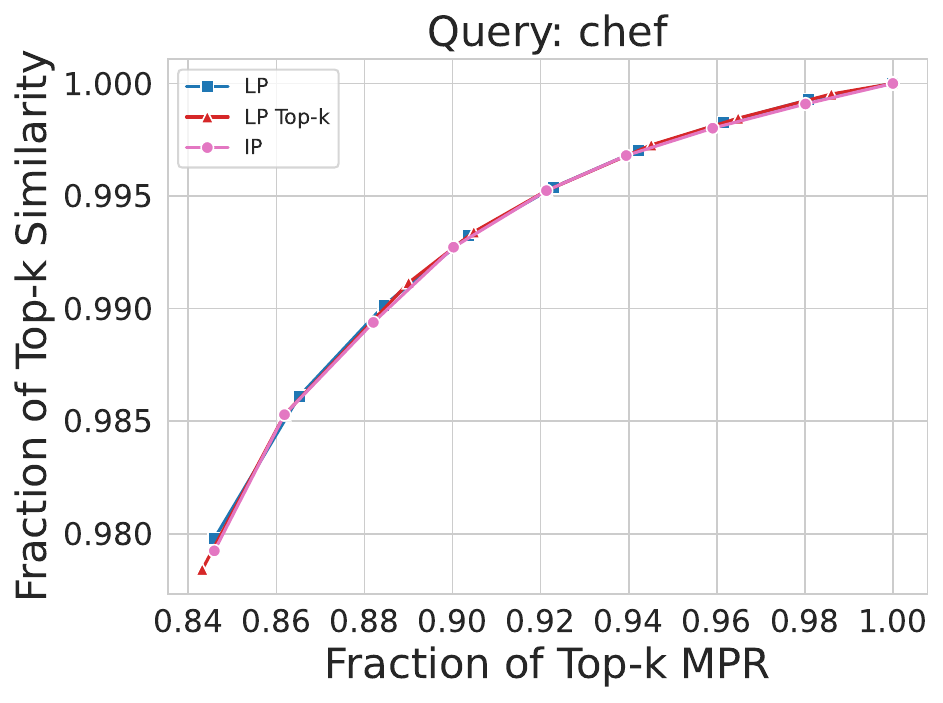}
    \\
    \includegraphics[width=0.3\textwidth]{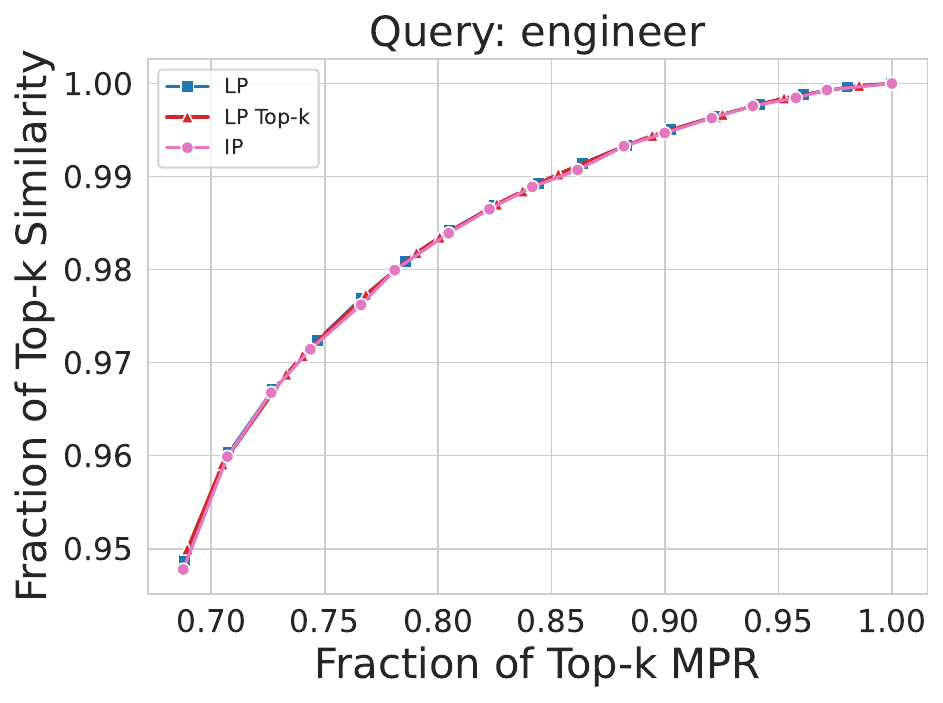}
    \hfill
    \includegraphics[width=0.3\textwidth]{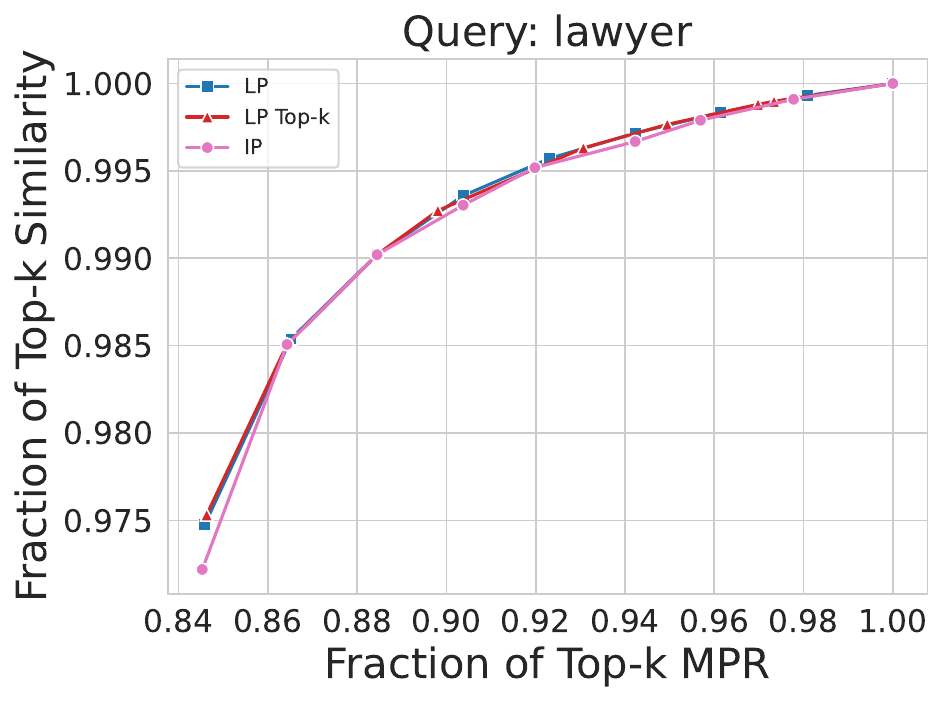}
    \hfill
    \includegraphics[width=0.3\textwidth]{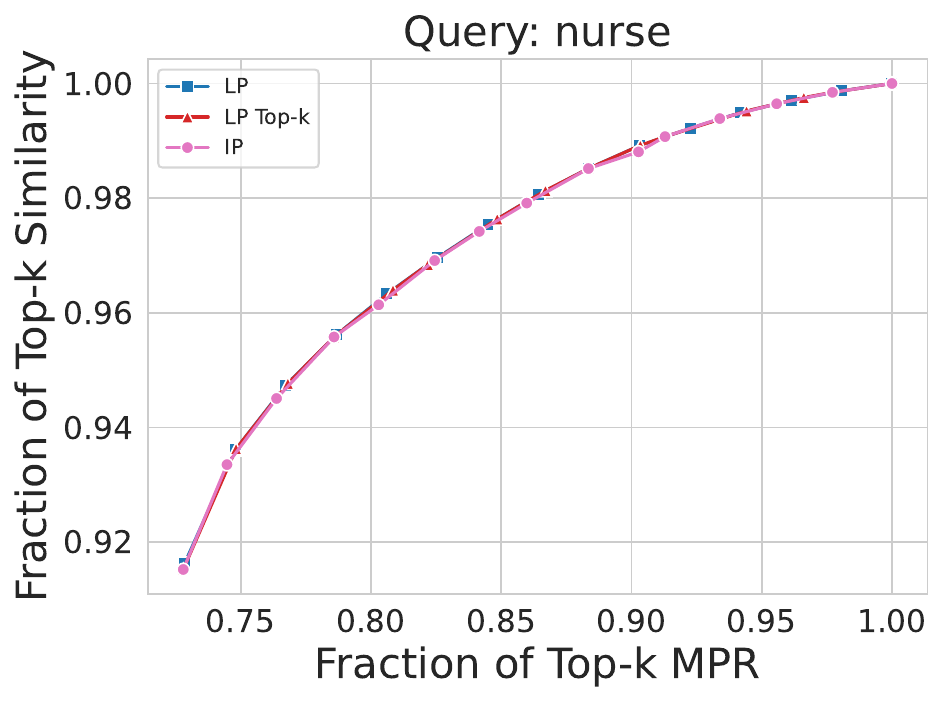}
    \\
    \includegraphics[width=0.3\textwidth]{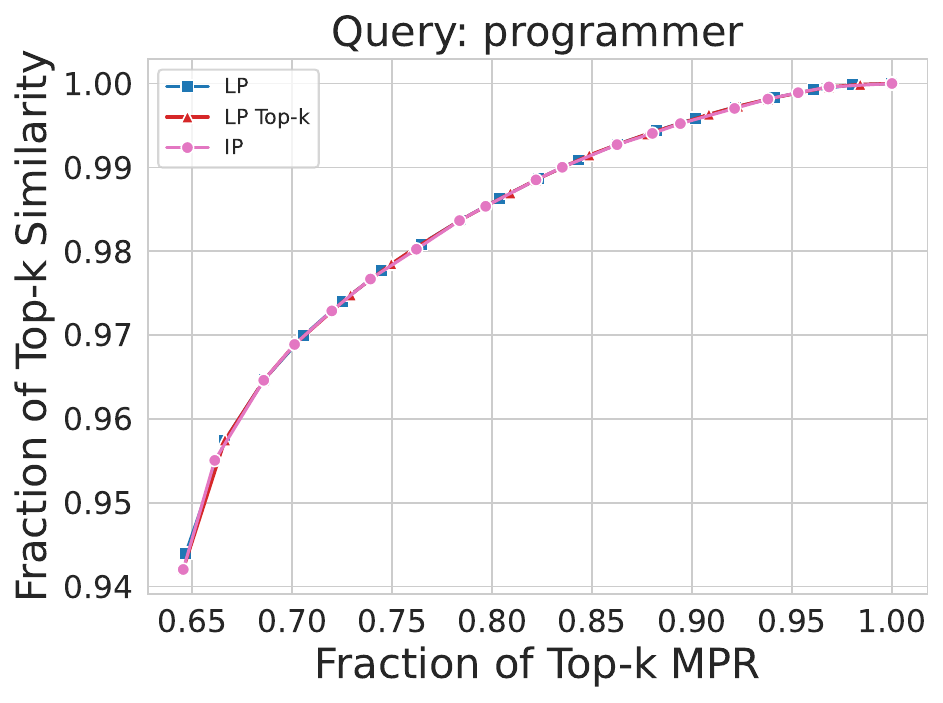}
    \hfill
    \includegraphics[width=0.3\textwidth]{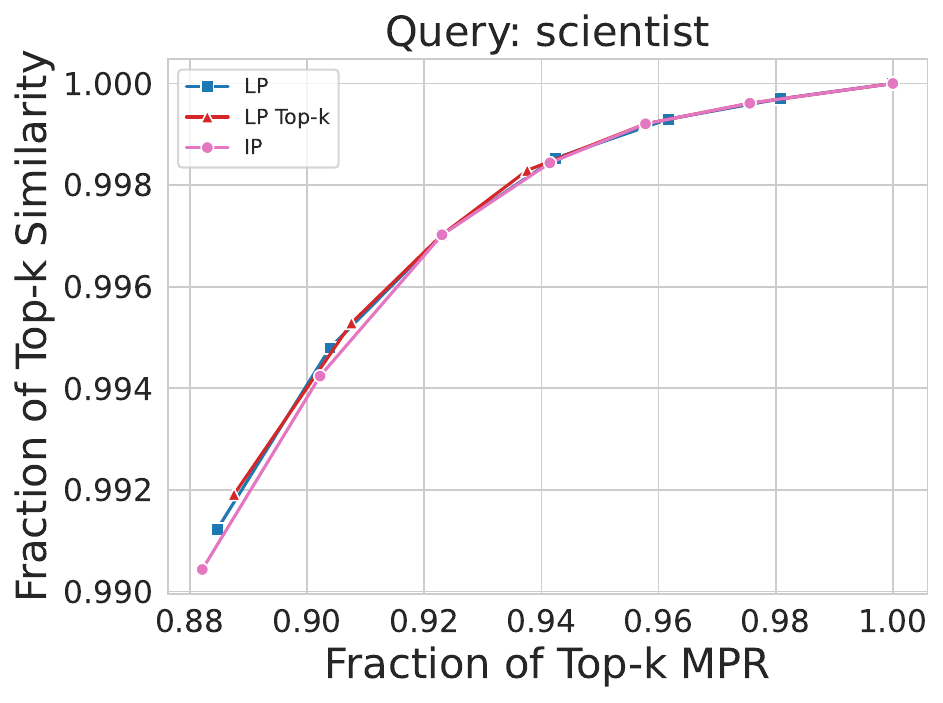}
    \hfill
    \includegraphics[width=0.3\textwidth]{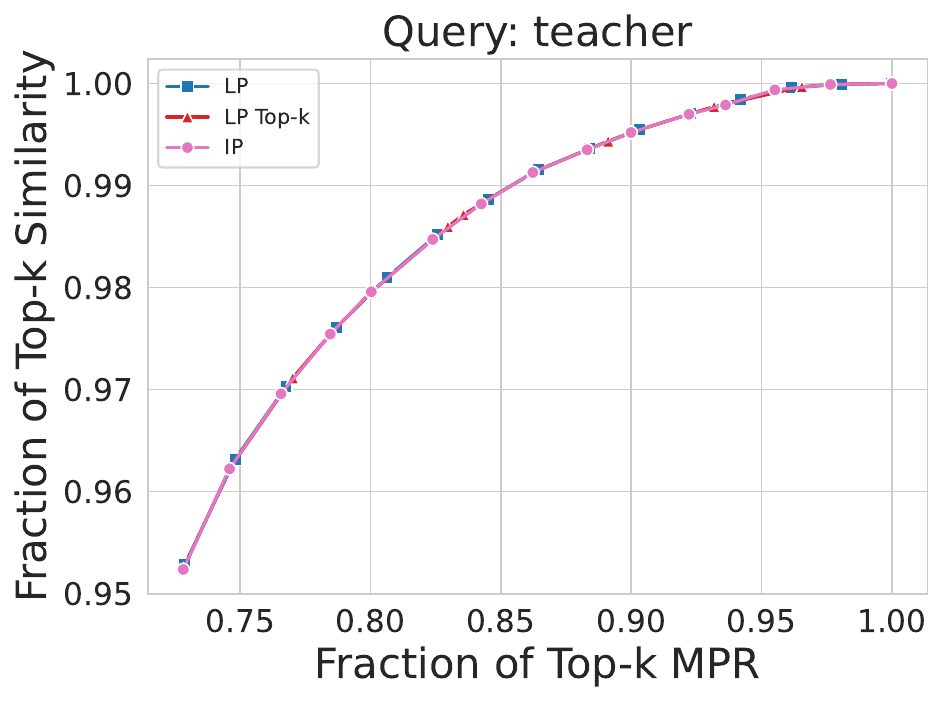}
    \\
    \caption{Additional comparisons of relaxed problem and top-$k$ selection to integer program. ``chef", ``nurse", ``artist", ``lawyer", ``teacher", ``engineer", ``architect", ``scientist", and ``programmer".}
    \label{fig:additional_ip_vs_lp}
\end{figure}

\subsection{Multi-group Optimized Retrieval via Quadratic Programming}\label{app:quadratic_programming_mopr}

Proposition \ref{prop:closed_form_MPR} allowed us to have a simple closed-form representation of MPR in the case of bounded-norm linear regression functions, i.e., when $\calC' \triangleq \{c \in \calC \mid \sqrt{\frac{mk}{m+k}}\sum_{i=1}^{m+n}c(x_i)^2 = 1\}$, and $\calC = \left\{x\mapsto w^\intercal x \mid w \in \Reals^d \right\}$. In this case, \methodname~ becomes 

\begin{align}
    \max_{\ba,\by}~~& \ba^\intercal \mathbf{s} \label{eqn:mopr_closed}\\
    \mbox{subject to}~~ &  \sqrt{\frac{mk}{m+k}}\left\| \mathbf{U}_l^\intercal \tilde{\mathbf{a}}\right\|_2 \leq \rho,  \nonumber\\
    & \ba^\intercal \ones = k,  \nonumber \\
    & a_i\in \{0,1\}, \nonumber
\end{align}
where $\mathbf{\tilde{a}} \in \mathbb{R}^{n+m}$ has $i$-th entry given by $\tilde{a}_i \triangleq \mathbbm{1}_{i\leq n}\frac{a_i}{k} - \mathbbm{1}_{i>n} \frac{1}{m}.$ 

This result allows us to explore the efficiency of the cutting plane algorithm in \methodname~by comparing it to the solution to \eqref{eqn:mopr_closed}, which is a closed-form, convex quadratic program that can be easily optimized with existing solvers. Note that in these experiments, top-$k$ refers to the process of solving the relaxed linear program and then rounding to get a solution in $\{0,1\}^n$ by selecting the $k$ items with the highest score given by $\mathbf{a}$. This is different than the Top-$k$ described elsewhere in the paper, which denotes conducting vanilla retrieval without fairness considerations. In Figures \ref{fig:LP_vs_closedform} and \ref{fig:additional_lp_closed}, we compare the quadratic optimization problem (Eqn. \ref{eqn:mopr_closed}) to \methodname~(a linear program, Algorithm \ref{retrievalAlgo}) for linear regression and measure the MPR of our solutions with the closed form value from Proposition \ref{prop:closed_form_MPR}. As we can see, while the quadratic program traces a smoother curve, \methodname's cutting plane approach well-approximates the closed form solution. In these experiments, we retrieve 50 items from CelebA, without a curation set. Finally, in Figures \ref{fig:mse_vs_closedform} and \ref{fig:additional_mse_vs_closedform}, we run \methodname~with the quadratic program oracle and compare our measurement of MPR with the closed form solution and linear regressor-approximated measurement of MPR. In other words, where Figures \ref{fig:LP_vs_closedform} and \ref{fig:additional_lp_closed} measure the quality of our \textit{retrieval algorithm} given a perfect (closed form) MPR oracle, Figures \ref{fig:mse_vs_closedform} and \ref{fig:additional_mse_vs_closedform} measure the quality of our MSE estimator of MPR in comparison to the perfect oracle. We observe that the MSE oracle is a perfect approximator of the closed form for MPR, as expected.

\begin{figure}[ht]
    \centering
    \includegraphics[width=0.6\textwidth]{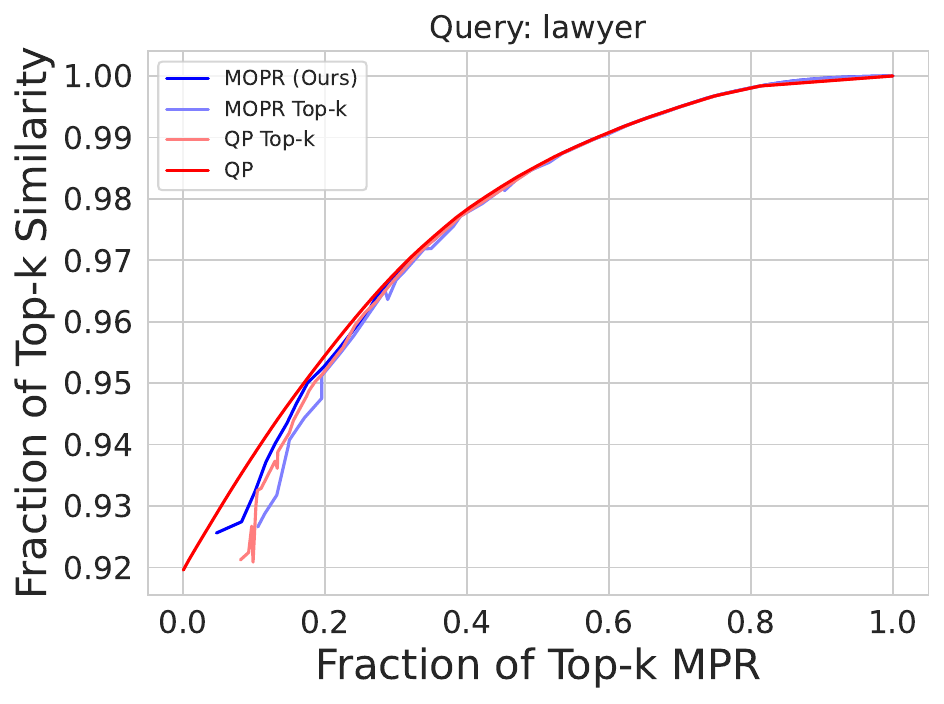}
    \caption{Similarity vs MPR for Quadratic Program (Eqn. \ref{eqn:mopr_closed}) and \methodname. \methodname~well approximates the quadratic program along the Pareto frontier. Measured over a single query ``A photo of a lawyer" for 50 retrieved samples on CelebA.}
    \label{fig:LP_vs_closedform}
\end{figure}

\begin{figure}[ht]
    \centering
    \includegraphics[width=0.6\textwidth]{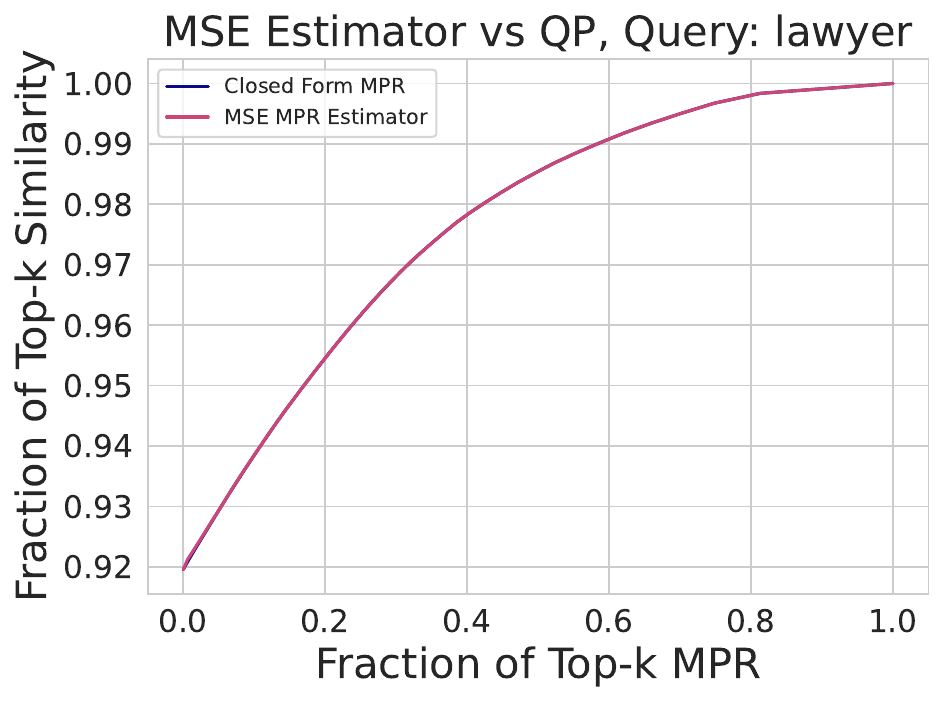}
    \caption{Similarity vs MPR for MSE estimated (Prop \ref{prop:equiv} and closed form (Prop \ref{prop:closed_form_MPR}) measures of MPR. For the class of linear models, a linear regression oracle perfectly achieves the analytical solution for MPR. Measured over a single query ``A photo of a lawyer" for 50 retrieved samples on CelebA.}
    \label{fig:mse_vs_closedform}
\end{figure}
\begin{figure}[ht]
    \includegraphics[width=0.3\textwidth]{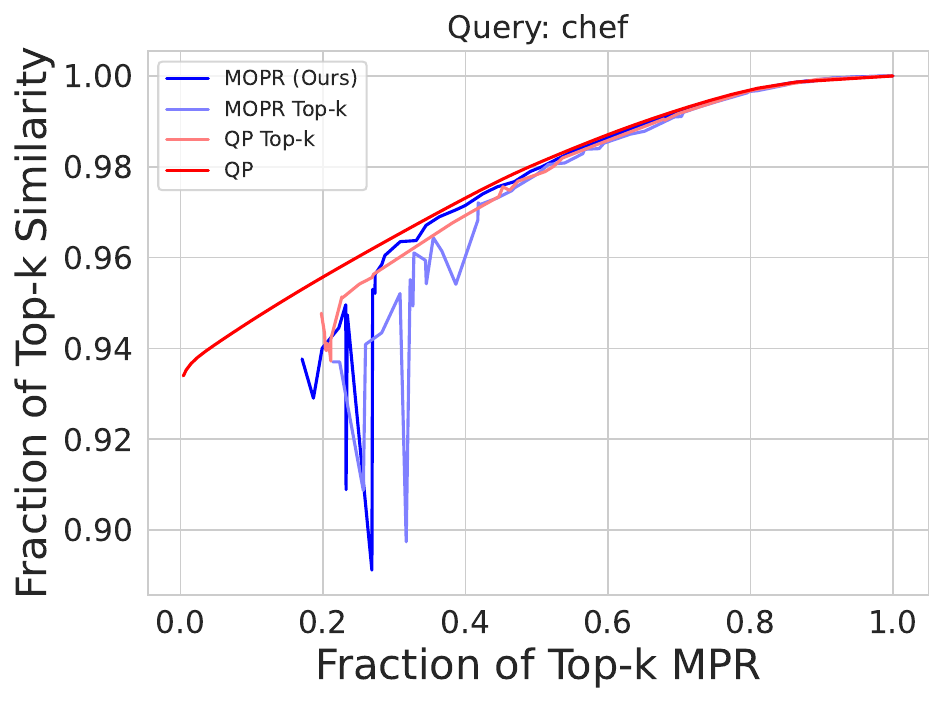}
    \hfill
    \includegraphics[width=0.3\textwidth]{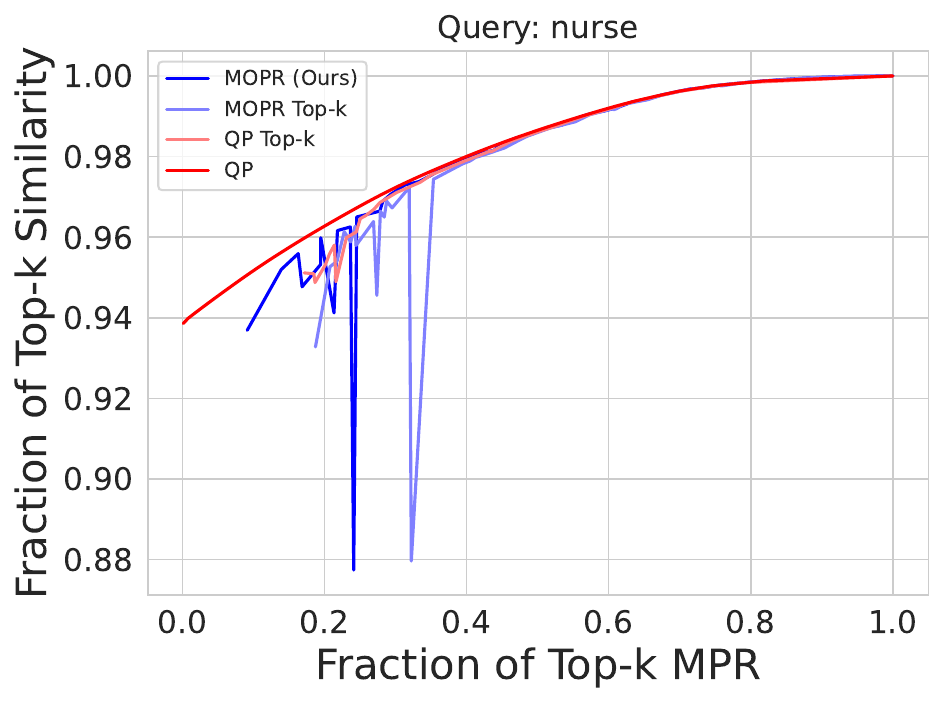}
    \hfill
    \includegraphics[width=0.3\textwidth]{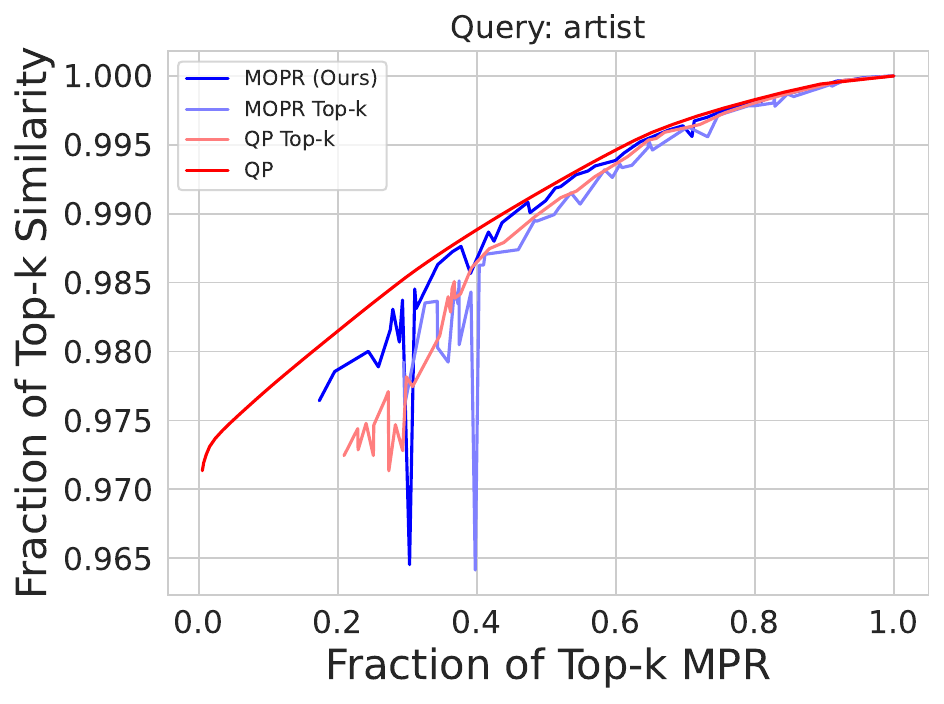}
    \\
    \includegraphics[width=0.3\textwidth]{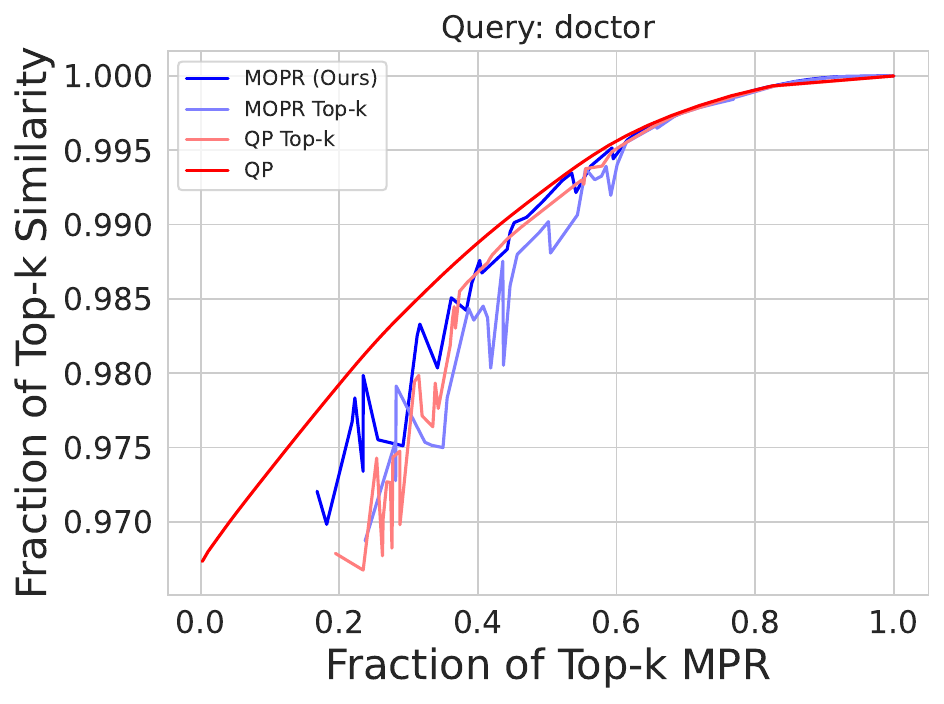}
    \hfill
    \includegraphics[width=0.3\textwidth]{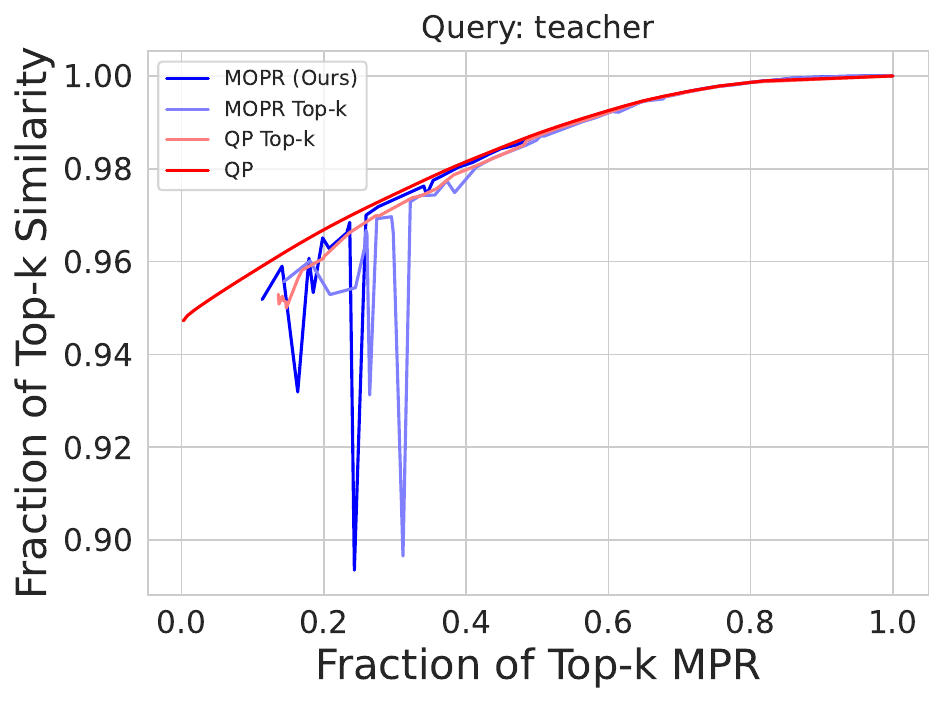}
    \hfill
    \includegraphics[width=0.3\textwidth]{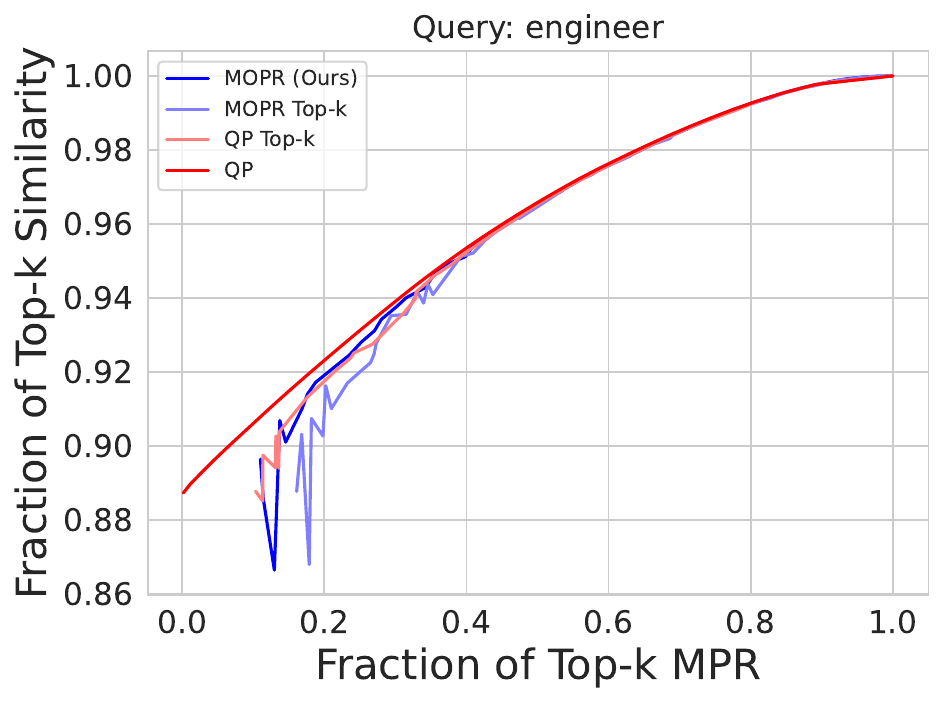}
    \\
    \includegraphics[width=0.3\textwidth]{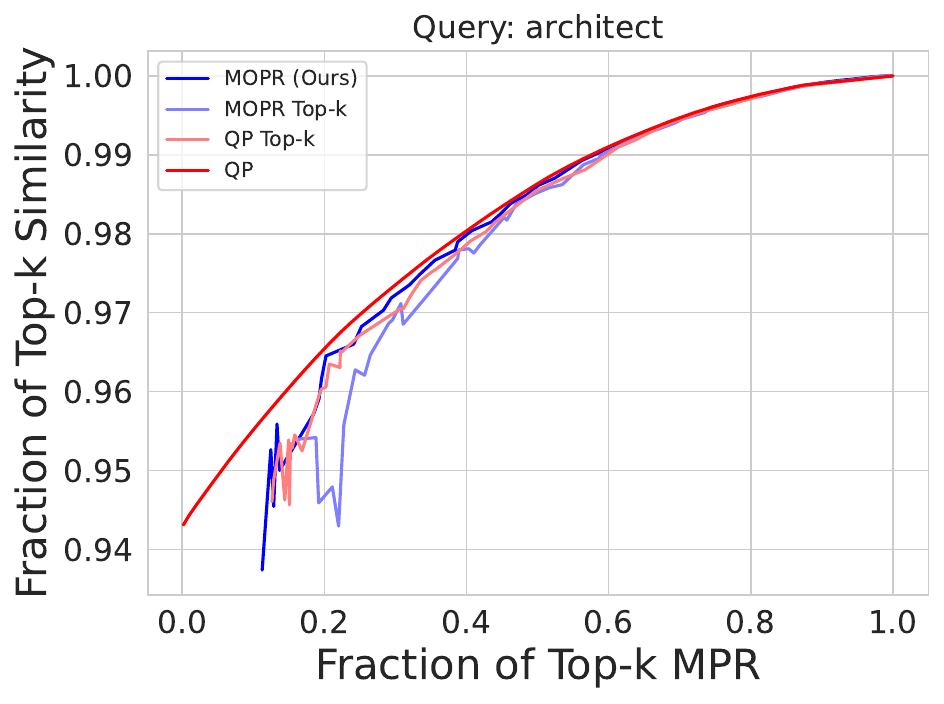}
    \hfill
    \includegraphics[width=0.3\textwidth]{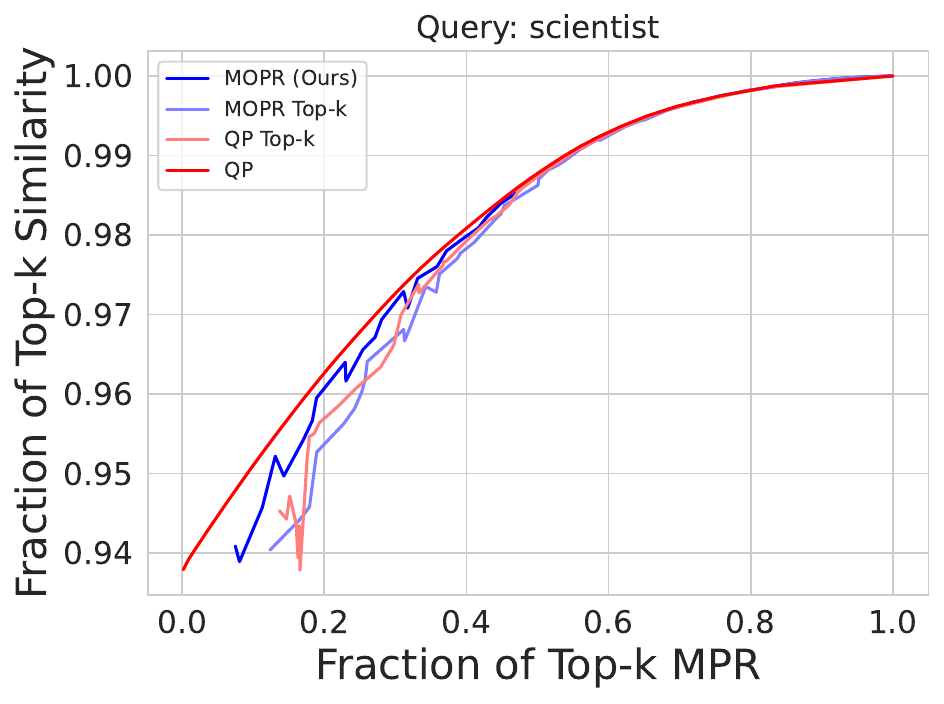}
    \hfill
    \includegraphics[width=0.3\textwidth]{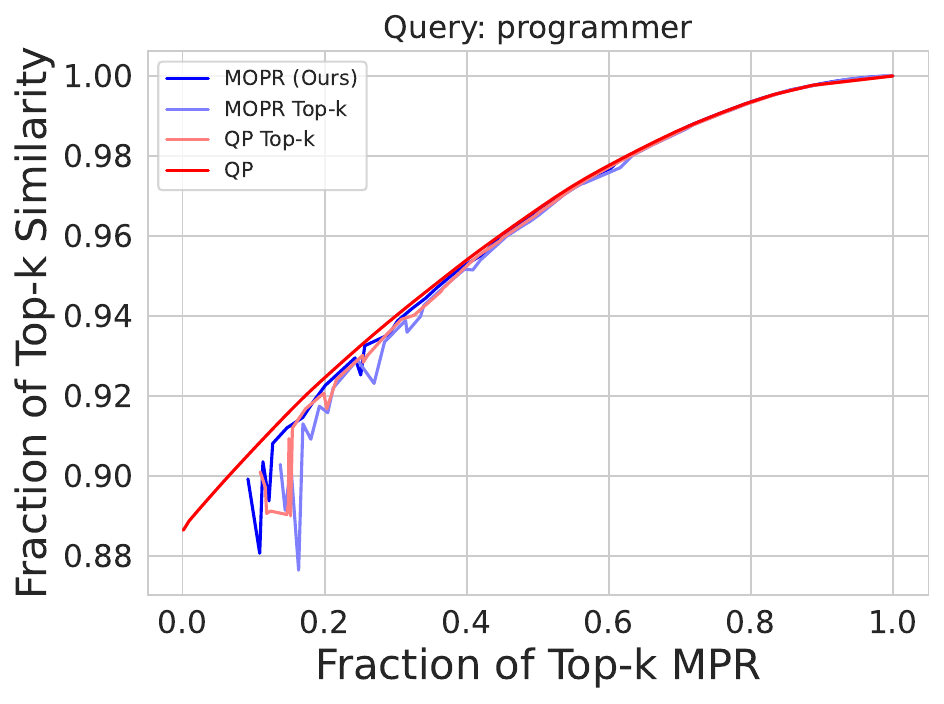}
    \\
    \caption{Additional comparisons of Quadratic Program (Eqn. \ref{eqn:mopr_closed}) and \methodname~over queries ``chef", ``nurse", ``artist", ``doctor", ``teacher", ``engineer", ``architect", ``scientist", and ``programmer".}
    \label{fig:additional_lp_closed}
\end{figure}

\begin{figure}[!ht]
    \includegraphics[width=0.3\textwidth]{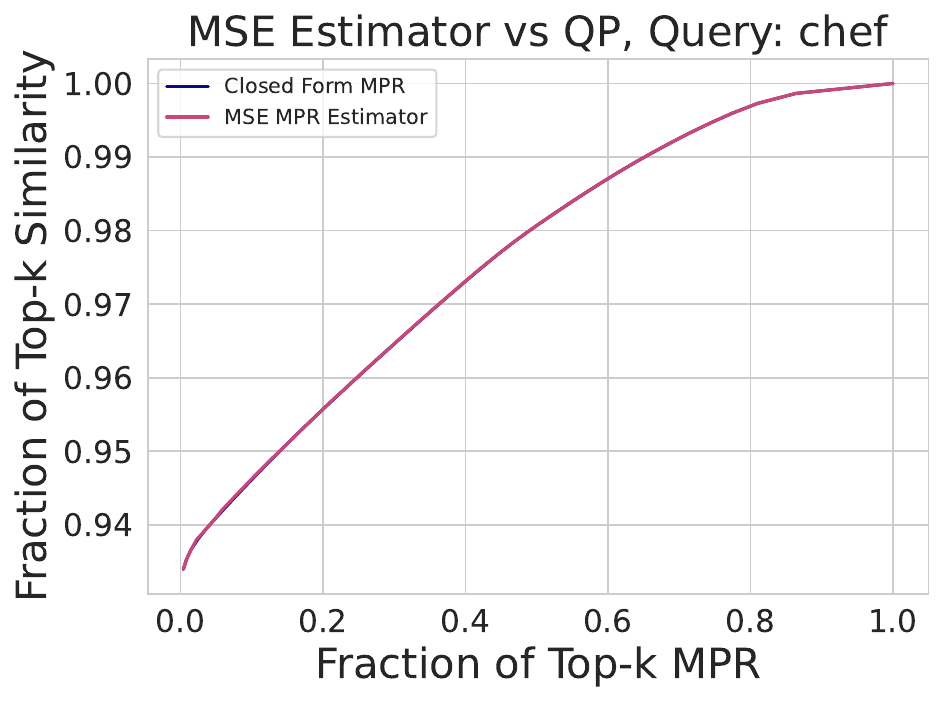}
    \hfill
    \includegraphics[width=0.3\textwidth]{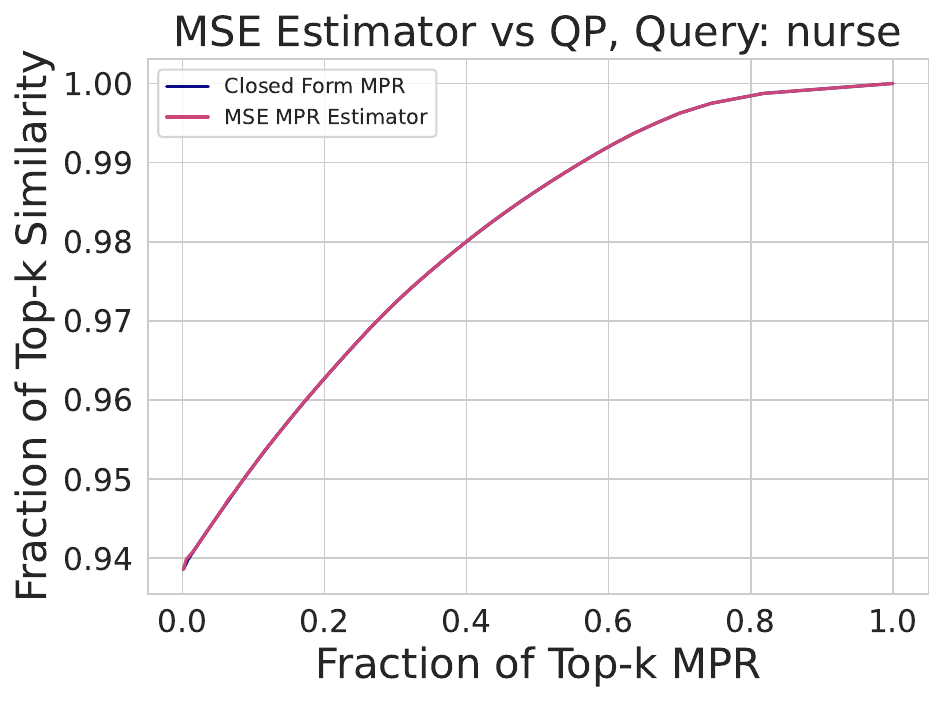}
    \hfill
    \includegraphics[width=0.3\textwidth]{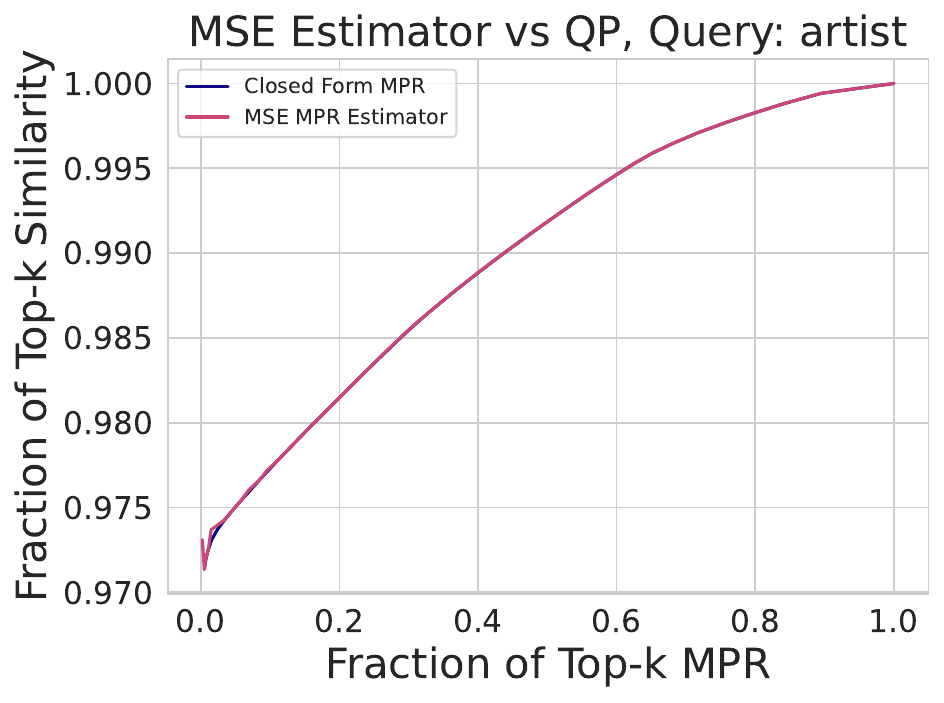}
    \\
    \includegraphics[width=0.3\textwidth]{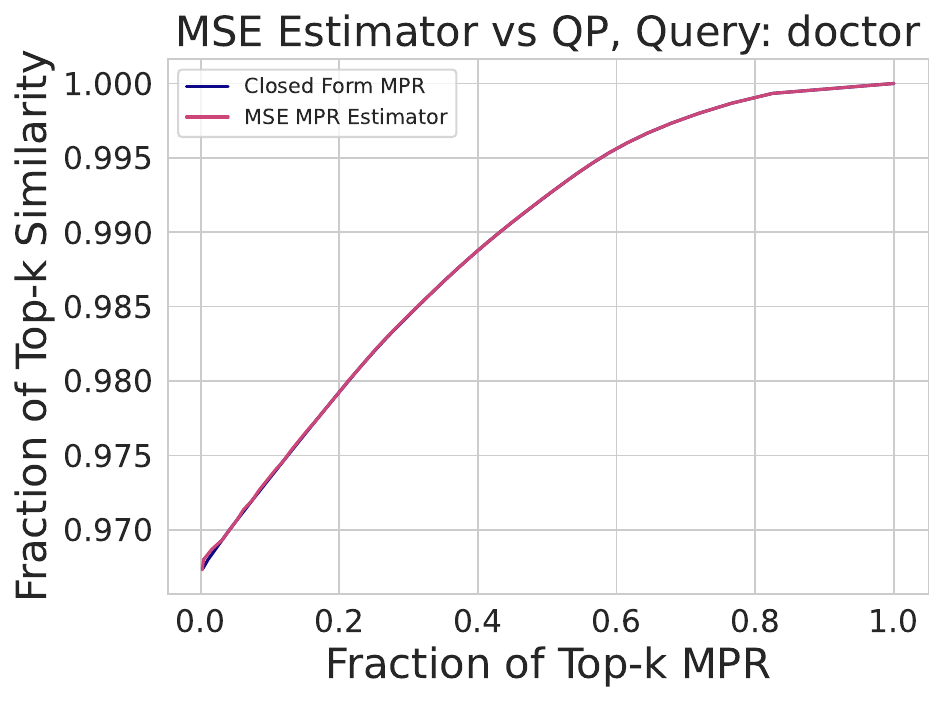}
    \hfill
    \includegraphics[width=0.3\textwidth]{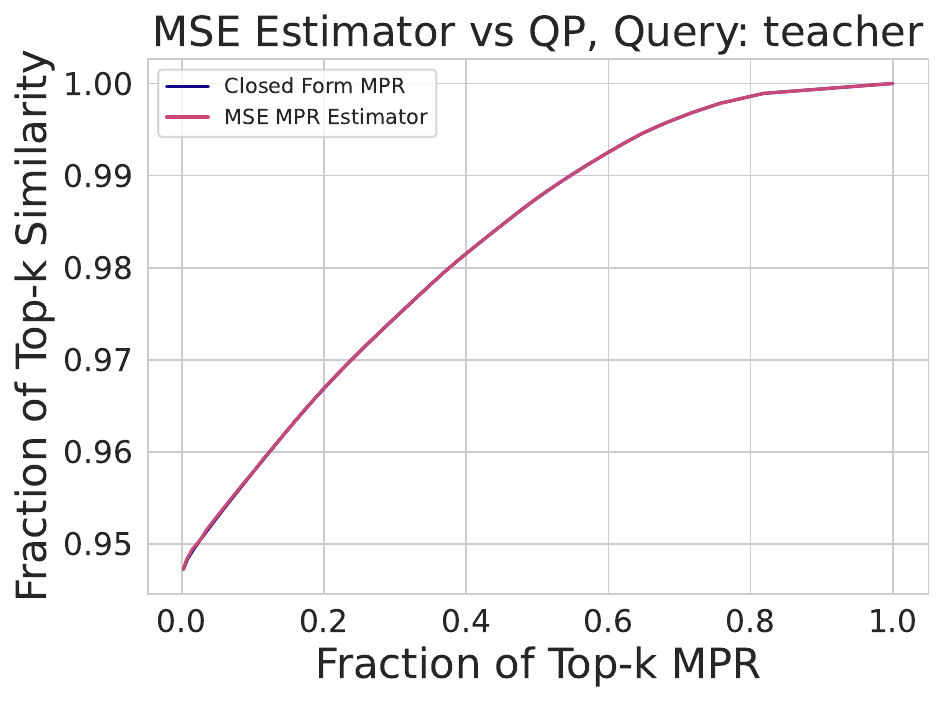}
    \hfill
    \includegraphics[width=0.3\textwidth]{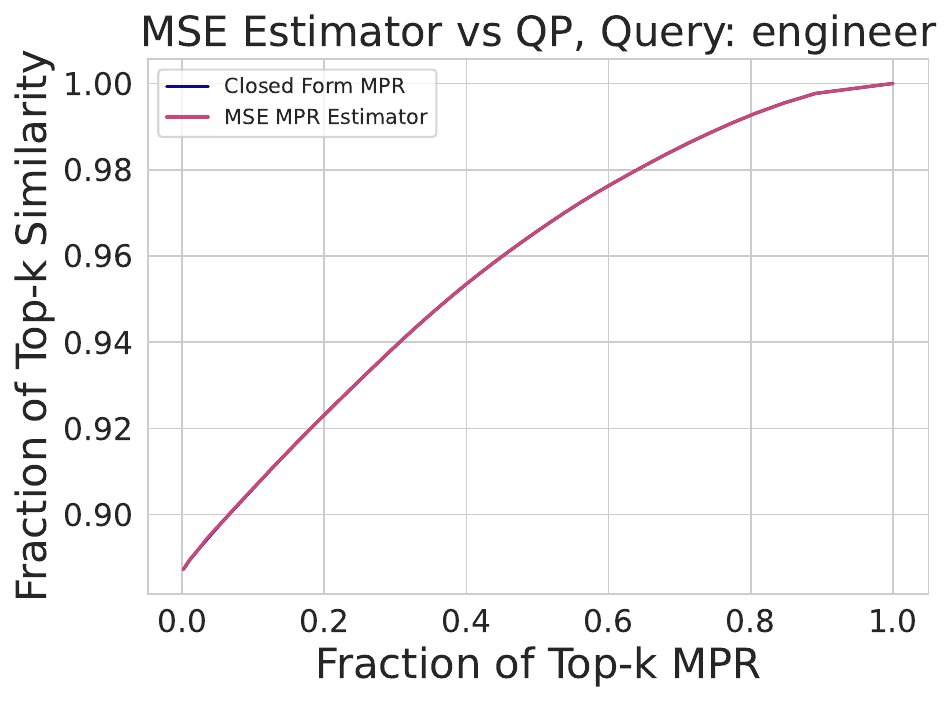}
    \\
    \includegraphics[width=0.3\textwidth]{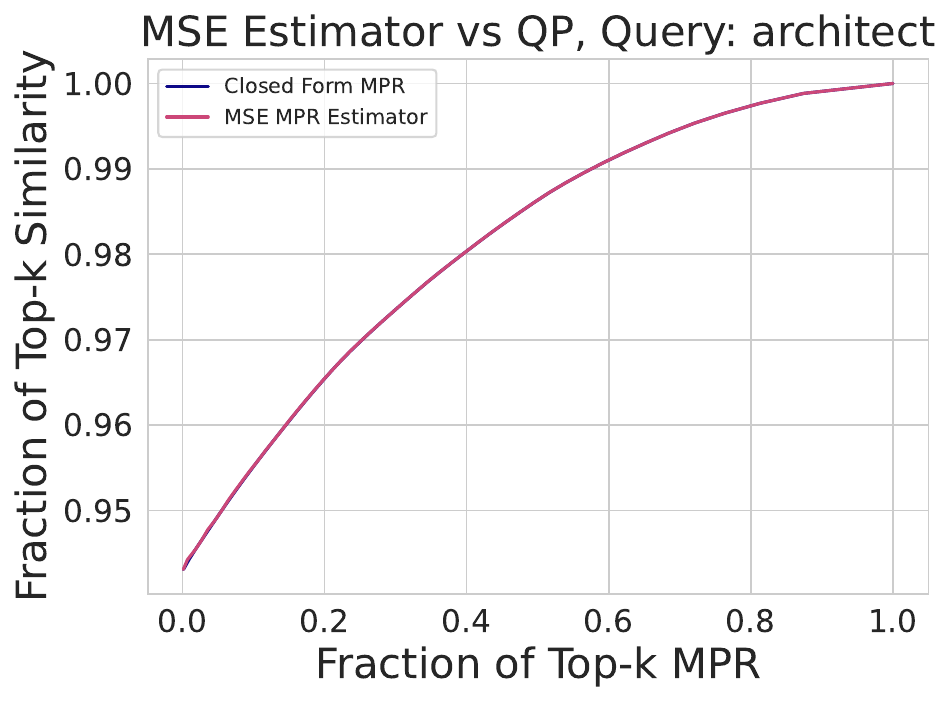}
    \hfill
    \includegraphics[width=0.3\textwidth]{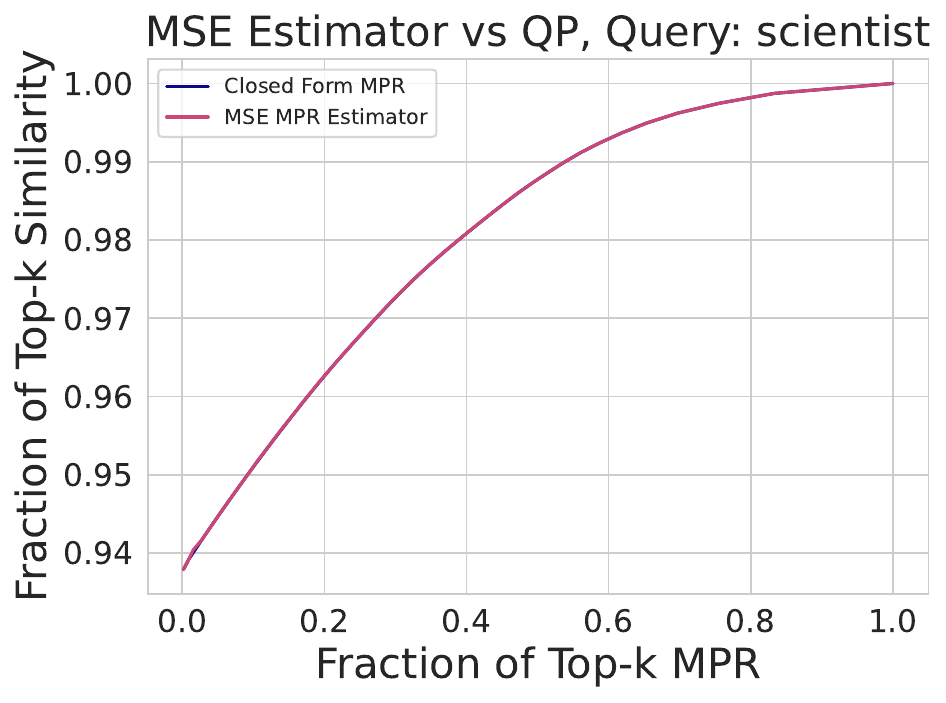}
    \hfill
    \includegraphics[width=0.3\textwidth]{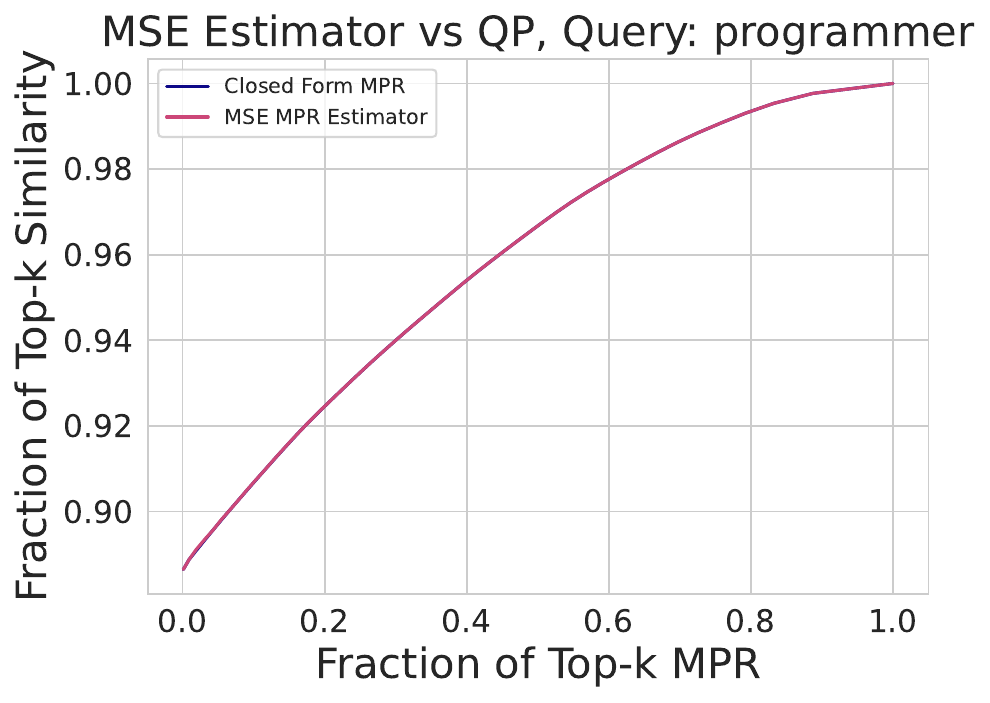}
    \\
    \caption{Additional comparisons of MSE to closed form MPR estimator for queries ``chef", ``nurse", ``artist", ``doctor", ``teacher", ``engineer", ``architect", ``scientist", and ``programmer".}
    \label{fig:additional_mse_vs_closedform}
\end{figure}
\newpage
\subsection{Computing MPR for Functions in an RKHS}\label{app:MPR_RKHS}

Kernel methods are a generalized but practical, non-linear, and easy technique to implement algorithms for a plethora of important problems in machine learning \cite{scholkopf2002learning,hofmann2008kernel}. As MPR is closely related to the maximum mean discrepancy problem, a closed-form expression of MPR can be easily deduced by using results from \cite[Lemma 6]{gretton2012kernel} when $\mathcal{C}$ is a reproducing kernel Hilbert space, which includes, for example, linear regression, logistic regression, Gaussian kernel ridge regression, and hard-margin SVMs.

\begin{prop}[MPR for RKHS \cite{gretton2012kernel}] Let $\{\retrieve_i\}_{i = 1}^{k}$ be retrieved samples $\mathcal{R}$ and $\{\curation_i\}_{i = 1}^{m}$ the samples from the curation dataset $\mathcal{D}_{C}$.
Let $\mathcal{C}$ be a reproducing kernel hilbert space with kernel $\mathcal{K}(., .)$, then
    \begin{align} \label{eqn:MPR2}
        \MPR = \left[ \frac{1}{k^2} \sum_{i, j = 1}^{k} \mathcal{K}(\retrieve_i, \retrieve_j) - \frac{2}{mk} \sum_{i, j = 1}^{k, m} \mathcal{K}(\retrieve_i, \curation_j) + \frac{1}{n^2} \sum_{i, j = 1}^{m} \mathcal{K}(\curation_i, \curation_j)    \right]^{1/2}.
    \end{align}
\end{prop}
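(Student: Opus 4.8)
The plan is to recognize the statement as the standard closed form for the maximum mean discrepancy and to derive it via the reproducing property of $\calK$ together with Cauchy--Schwarz. The one modeling point that must be made explicit is that the supremum defining $\MPR$ is taken over the \emph{unit ball} of the RKHS, i.e.\ $\calC = \{c \in \calH \mid \|c\|_{\calH} \leq 1\}$; without a norm constraint the supremum is infinite, exactly as in the discussion preceding Proposition~\ref{prop:equiv}. (I also note that the final term should carry the factor $\tfrac{1}{m^2}$ rather than $\tfrac{1}{n^2}$, since the curated dataset $\mathcal{D}_C$ has $m$ samples.)

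First I would use the reproducing property $c(x) = \langle c, \calK(x,\cdot)\rangle_{\calH}$, valid for every $c \in \calH$, to collapse the quantity inside the absolute value into a single inner product. By linearity,
\[
\frac{1}{k}\sum_{i=1}^{k} c(\retrieve_i) - \frac{1}{m}\sum_{j=1}^{m} c(\curation_j) = \left\langle c,\ \mu \right\rangle_{\calH}, \qquad \mu \triangleq \frac{1}{k}\sum_{i=1}^{k}\calK(\retrieve_i,\cdot) - \frac{1}{m}\sum_{j=1}^{m}\calK(\curation_j,\cdot),
\]
where $\mu \in \calH$ is the difference of the empirical kernel mean embeddings of $\calR$ and $\mathcal{D}_C$; it is well defined because both sums are finite.

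Next I would evaluate the supremum. Cauchy--Schwarz gives $|\langle c,\mu\rangle_{\calH}| \leq \|c\|_{\calH}\,\|\mu\|_{\calH}$, with equality attained at $c = \mu/\|\mu\|_{\calH}$ when $\mu \neq 0$; hence over the unit ball $\MPR = \sup_{\|c\|_{\calH}\leq 1}|\langle c,\mu\rangle_{\calH}| = \|\mu\|_{\calH}$ (and $\MPR = 0 = \|\mu\|_{\calH}$ trivially when $\mu = 0$). It then remains to expand $\|\mu\|_{\calH}^2 = \langle \mu,\mu\rangle_{\calH}$, distributing over the three pairings and applying the reproducing property once more as $\langle \calK(x,\cdot),\calK(y,\cdot)\rangle_{\calH} = \calK(x,y)$. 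This yields the diagonal block $\tfrac{1}{k^2}\sum_{i,j}\calK(\retrieve_i,\retrieve_j)$, the cross block $-\tfrac{2}{km}\sum_{i,j}\calK(\retrieve_i,\curation_j)$, and the block $\tfrac{1}{m^2}\sum_{i,j}\calK(\curation_i,\curation_j)$; taking square roots delivers \eqref{eqn:MPR2}.

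There is no substantive obstacle, since the computation is a direct transcription of \cite[Lemma~6]{gretton2012kernel}; the only load-bearing choices are (i) fixing the normalization of $\calC$ to the unit ball so that the supremum is finite and attained, and (ii) tracking the factor of $2$ that arises from the symmetric expansion of the squared norm into its cross term.
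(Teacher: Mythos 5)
Your proposal is correct and is essentially the paper's own argument: the paper offers no independent derivation but simply invokes \cite[Lemma 6]{gretton2012kernel}, and what you write out — reproducing property, mean-embedding difference $\mu$, Cauchy--Schwarz over the unit ball, expansion of $\|\mu\|_{\mathcal{H}}^2$ — is precisely the standard proof of that lemma. Your two corrections are also both right and necessary for the statement to hold as written: the supremum must be restricted to the unit ball $\{c \in \mathcal{C} : \|c\|_{\mathcal{H}} \leq 1\}$ (otherwise it is infinite, consistent with the normalization issue the paper itself raises before Proposition \ref{prop:equiv}), and the last term's normalization should be $1/m^2$, not $1/n^2$, since the curated dataset has $m$ samples.
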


Therefore, given a kernel $\mathcal{K}(., .)$, we can easily compute $\MPR$ by evaluating the kernel at the points $\retrieve_i,\curation_j$ and use this in Algorithm \ref{retrievalAlgo}.

\section{Description of Fair Retrieval Algorithms and Retrieval Statistics}\label{app:algos_description}

In this section, we describe the methods we benchmark against that aim to promote fairness multimodal models like CLIP \citep{radford2021learning}. We also briefly overview proportional representation in political sciences.

\begin{itemize}
    \item CLIP-clip (\citet{wang2021gender}): CLIP-clip is a post-processing algorithm with two stages. First, the mutual information between each positional feature of an embedding and some sensitive attributes is estimated (treating each sensitive attribute as a discrete random variable and each position as a continuous random variable \citep{gao2018estimating}). Then, in order of greatest to lowest mutual information, at test-time some number of CLIP features are dropped from both the text and image embedding, resulting in less bias (at cost to the fidelity of the embedding). By varying the number of features dropped, a recall-bias trade-off is induced. 
    \item CLIP-debias (\citet{berg2022prompt}): The algorithm learns a series of tokens that are pre-pended to text queries into CLIP via adversarial training to minimize the ability to predict a sensitive attribute of the resulting embedding (in the default case, gender). Then, retrieval is conducted with the debiased embeddings. 
    \item Post-Hoc Bias Mitigation (\cite{kong2024mitigating}): Post-Hoc Bias Mitigation is a post-processing method that relies on an oracle classifier to label elements that might be retrieved with either one of several sensitive attributes or a neutral label. Then, iteratively, Post-Hoc Bias Mitigation adds either a set of elements (one of each predicted sensitive attribute) or a predicted neutral element to target equal representation, based on which option has higher average cosine similarity. The oracle may be a pretrained classifier, a zero-shot method using CLIP embeddings, or the true labels.
    \item PATHS (\citet{srinivasan2024generalized}): PATHS is a new embedding space built upon the pretrained image-text model CoCa and leveraged text-guided projections to extract an embedding that captures complex notions of diversity in people, upon which a diverse retrieval method (such as the MMR method detailed above, or even \methodname) can be run to effectively increase the diversity of the returned samples.
\end{itemize}

\begin{table}[!tb]
    \centering
\resizebox{\linewidth}{!}{
\begin{tabular}{cccccc}
 \noalign{\hrule height 1.5pt}
                                & \textbf{Embedding} & \textbf{Change Embedding} & \textbf{Change Retrieval} & \textbf{Pre-defined features} & \textbf{Approach to Diversity}                    \\ \noalign{\hrule height 1.5pt} 
\textbf{DebiasCLIP}           & Custom      & \bluecheck                & \xmark                & \redcheck                    & Ignore/Decorrelate Attributes                         \\ \hline
\textbf{PBM}                    & Any VLM       & \xmark              & \bluecheck                 & \redcheck                    & Equal Representation over Attributes               \\ \hline
\textbf{PATHS} & Custom       & \bluecheck                & \bluecheck                 & \redcheck                    &    High Variance Representation over Attributes                               \\ \hline
\textbf{CLIP-Clip}               & Any VLM      & \bluecheck                & \xmark                & \redcheck                    & Ignore/Decorrelate Attributes                              \\ \hline
\textbf{\methodname~(ours)}                   & Any       & \xmark                & \bluecheck                & \bluex                  & Multigroup Proportional Representation over Attributes           \\  \noalign{\hrule height 1.5pt}
\end{tabular}
}
\caption{Axis of comparison for ours and existing fair retrieval methods. The blue color indicates a positive feature of the method, while the red color indicates a negative one.}

\label{fig:table of methods}
\end{table}

\section{Connection to Social Choice Theory}\label{app:social choice}

In this section, we comment on the role of proportional representation in social sciences. As described in the \emph{related works} in Section \ref{introduction}, proportional representation (broadly construed) has a long history in social choice theory and political sciences \citep{black1958theory,chamberlin1983representative,monroe1995fully,pukelsheim2017proportional,hodge2018mathematics} and has been a guiding principle in the design of political systems, aiming to ensure representation reflects underlying population preferences and demographics. The foundations of social choice theory can be traced back to the late 18th century, with the seminal works of Jean-Charles de Borda and the Marquis de Condorcet, who developed fundamental voting methods and paradoxes that continue to influence modern computational approaches. During this same period, early proponents of proportional representation emerged, such as Honoré Gabriel Riqueti, Comte de Mirabeau, who discussed the idea in 1780.

The 19th century saw significant contributions to the development of proportional representation, including the works of Thomas Wright Hill, Thomas Hare, and Charles Dodgson (better known as Lewis Carroll), who made substantial contributions to voting theory and proportional representation systems. Hare's 1859 publication, ``A Treatise on the Election of Representatives, Parliamentary and Municipal'' \cite{hare1861treatise}, outlined a system of single transferable vote for proportional representation. In the late 19th and early 20th centuries, the works of Carl Andrae, Victor D'Hondt, and John Stuart Mill further explored and contributed to the discussion of proportional representation in the political science literature. The field was revolutionized in the 20th century by Kenneth Arrow's seminal work, whose impossibility theorem fundamentally shaped our understanding of social choice mechanisms. See \citep{black1958theory,chamberlin1983representative,monroe1995fully,pukelsheim2017proportional,hodge2018mathematics}.

Similarly, in ML, our aim is proportional representation in data and outputs, ensuring equitable performance across groups and mirroring representation goals in political systems. PR electoral systems \citep{balinski2008fair}, used by countries like Uruguay, Sweden, South Africa, and New Zealand, allocate legislative seats proportionally to votes received by parties. Challenges include, e.g., gerrymandering \citep{balinski2008fair} and the complexity of apportionment and seat allocation \citep{procaccia2008complexity,betzler2013computation}. 

The emergence of computational social choice theory \cite{brandt2016handbook} in recent decades has built upon these classical foundations while introducing new algorithmic perspectives to address modern challenges. Researchers began studying algorithmic aspects of proportional representation, particularly focusing on the complexity of achieving different notions of proportionality in committee selection \citep{procaccia2008complexity,betzler2013computation}. The Hamilton and d'Hondt methods, initially developed for parliamentary seat allocation, have been extended and generalized to handle more complex scenarios. As mentioned at the end of Section \ref{introduction}, a notable contribution in this direction is the work of Lang and Skowron \citep{lang2018multi}, who introduced multi-attribute proportional representation. Their framework generalizes classical apportionment methods to scenarios where candidates possess multiple attributes (such as gender, profession, and age), and representation goals must be satisfied across all these dimensions simultaneously. Their work provides both theoretical guarantees through approximation algorithms and practical methods for achieving proportional representation in modern committee selection problems, bridging the gap between classical political science approaches and contemporary computational challenges. In particular, they introduced the problem of multi-attribute proportional representation in committee selection. Their framework can be viewed as solving an $(m,t)$-filling problem: given $m$ buckets (representing intersectional groups) with target capacities $\{t_1,...,t_m\}$ (derived from marginal probabilities), the goal is to optimally fill these buckets with appropriately labeled balls (committee members). 

While groundbreaking in extending classical apportionment methods to multiple attributes, their approach differs from ours in several key aspects. First, their work considers multiple attributes, but they primarily focus on matching target proportions derived from marginal distributions for each attribute independently rather than truly addressing intersectional representation. That is, their approach aims to satisfy target proportions for each individual attribute (like gender or profession separately) but does not explicitly handle the complex interactions between attributes (like representation of specific gender-profession combinations). This is in contrast to our framework, where the function class $\mathcal{C}$ can directly capture and measure such intersectional relationships. Moreover, their formulation focuses solely on matching target distributions without additional objectives, whereas our work explicitly balances representation with retrieval utility through cosine similarity maximization. Second, while they work with pre-specified target probabilities, we estimate the desired representation from a reference dataset, making our approach more flexible and data-driven. Third, and perhaps most importantly, our MPR metric measures representation through the lens of computationally identifiable groups defined by the aforementioned function class $\mathcal{C}$ rather than explicitly enumerating all possible intersectional groups. Through $\mathcal{C}$, our framework provides a unified way to handle both simple and complex representations: it can capture discrete indicators for each attribute-value pair (like the Hamilton method; see Definition 5 in \cite{lang2018multi}) but also complex nonlinear relationships (like d'Hondt method; see Definition 6 in \cite{lang2018multi}), and goes beyond both to measure intersectional representation via decision trees, linear functions, neural networks, or more generally, any function in a reproducing kernel Hilbert space (RKHS). This allows us to handle more complex notions of representation while maintaining computational tractability. In essence, while \cite{lang2018multi} provides fundamental insights for multi-attribute committee selection, our framework offers a more general approach suitable for modern retrieval systems where both representation and relevance matter.

\section{Additional Results}\label{appendix:additional_results}

In this section, we include additional experimental results. First, we include a comparison of runtimes for each method in Table~\ref{tab:runtimes}. We find that \methodname~provides up to a 100x speedup when compared with the most competitive retrieval algorithm, MMR, while remaining competitive to methods such as CLIP-Clip and DebiasCLIP that do not modify the retrieval algorithm at all and just use embedding similarity search. We additionally provide the similarity score for the minimum MPR achieved by each algorithm, which was used to report the results in Tables~\ref{table_mainpaper} and~\ref{table_appendix}. These similarity scores can be found in Table~\ref{tab:sim_for_tables}. Even when significantly closing the MPR gap and achieving more representation than other methods, \methodname~also preserves high cosine similarity with respect to the query.

Second, we include full tables of intersectional retrieval performance on UTKFace for our method, Top-$k$, MMR, CLIP-Clip, PBM, and DebiasCLIP. We report averaged results for each of the 10 queries while retrieving 50 images. In Table \ref{table_appendix}, we compute MPR with a linear regression oracle and optimize our method with the same linear regression oracle. We are able to perfectly balance across intersectional groups while other methods often fail to account for the `Others' race category and the `Female' gender category. Some methods, such as PBM, which optimize for equal representation, are able to achieve balanced representation across the gender axis but fail to consider diversity in race. In Table \ref{table_appendix2}, we see a similar story for MPR measured and optimized with a decision tree.

Next, we include additional similarity-MPR tradeoff curves for the other 9 queries not reported in the main paper (Figs. \ref{fig:celeba_linregs},\ref{fig:utkface_linregs},\ref{fig:occupations_linregs}), as well as for all 10 queries on two additional models: a decision tree (Figs. \ref{fig:celeba_trees}, \ref{fig:utkface_trees}, \ref{fig:occupations_trees}) and with an MLP with 1 hidden layer of dimension 64 (Figs. \ref{fig:celeba_mlps}, \ref{fig:utkface_mlps}, \ref{fig:occupations_mlps}). We observe a similar Pareto-domination to that observed in the main paper, as our method is able to lower MPR the most while preserving similarity even with more nonlinear oracle estimators of MPR. In the MLP setting, while PBM preserves high similarity on the whole, it is still unable to decrease MPR as low as \methodname is able to at $\approx$ 95\% of top-k similarity.

We also include results for an additional 45 queries for the Occupations dataset (with FairFace as the curation dataset), as this dataset comes with 45 ground-truth occupations labels. These results again highlight the Pareto-dominance of \methodname, as shown in Figures \ref{fig:occupations_linreg_occupations1}, \ref{fig:occupations_linreg_occupations2}, and \ref{fig:occupations_linreg_occupations3}.

In Figure \ref{fig:ceo_precision}, we explore the performance of \methodname~in terms of precision, a common retrieval metric. We can do this using the Occupations dataset, as it has ground truth occupations labels with which we can benchmark retrieval precision. We see that \methodname~performs competitively in terms of precision while also traversing more of the precision-MPR Pareto frontier. In Figure \ref{fig:additional_precision_curves}, we include nine additional precision-representation curves. Similar to the result for the query ``chief executive officer", \methodname~maintains similar precision-representation performance to baseline methods.

Finally, we include example sets of retrievals for `architect' and `teacher' queries over UTKFace with each of our baselines. Similar to the tables, we select each algorithm's solution with the lowest MPR after doing a hyperparameter sweep for each respective algorithm and plot the corresponding images below. The rest of the Appendix is solely tables and figures.

\begin{table}[h!]
\centering
\begin{tabular}{llll}
\hline
 & k=10 & k=50 & k=150 \\ \hline
\methodname (LP, 10 iterations)     & 1.434 $\pm$ 0.304 & 1.6 $\pm$ 0.54 & 1.371 $\pm$ 0.361 \\ \hline
\methodname (LP, 50 iterations)     & 5.14 $\pm$ 0.352 & 4.199 $\pm$ 1.74 & 4.342 $\pm$ 1.803   \\ \hline
\methodname (QP, linear regression) & 0.528 $\pm$ 0.061 & 0.533 $\pm$ 0.066 &  0.584 $\pm$ 0.089 \\ \hline
MMR                          & 151.961 $\pm$ 1.126 & 267.412 $\pm$ 3.863 & 556.184 $\pm$ 6.324 \\ \hline
PBM                          & 0.275 $\pm$ 0.162 & 0.241 $\pm$ 0.013 & 0.276 $\pm$ 0.151 \\ \hline
CLIP-Clip                     & 21.303 $\pm$ 0.361 & 21.907 $\pm$ 0.525 & 21.34 $\pm$ 0.353 \\ \hline
DebiasCLIP                  & 2e-4 $\pm$ 1e-6 & 2e-4 $\pm$ 1e-6 & 2e-4 $\pm$ 1e-6 \\ \hline
\end{tabular}
\caption{Average running time (in seconds) over ten queries for k=10, 50, and 150 items for the different methods. \methodname~is able to achieve 100x speedup over the SOTA retrieval algorithm, MMR, and maintains competitive performance with vanilla KNN-based methods such as CLIP-Clip and DebiasCLIP. For more discussion, see the general comment in the rebuttal.}
\label{tab:runtimes}
\centering
\end{table}

\begin{table}[h!]
\begin{center}
\begin{tabular}{lc}
\hline
 & \multicolumn{1}{l}{Normalized Mean Cosine Similarity} \\ \hline
k-NN         & 1.0000 $\pm$ 0.0000                                               \\ \hline
\methodname    & 0.9049 $\pm$ 0.0692                                             \\ \hline
MMR          & 0.8047 $\pm$ 0.1477                                               \\ \hline
PBM          & 0.9416 $\pm$ 0.0802                                               \\ \hline
CLIP-Clip     & 0.7758 $\pm$ 0.1433                                               \\ \hline
DebiasCLIP   & 0.8497 $\pm$ 0.0985                                              \\ \hline
\end{tabular}
\caption{Normalized mean cosine similarity between the query embedding and retrieved
image embeddings for retrieved images from Tables \ref{table_mainpaper} and \ref{table_appendix}. The normalization is performed by dividing the mean similarity by the mean similarity given by the Top-$k$ nearest neighbors of a query.}
\label{tab:sim_for_tables}
\end{center}
\end{table}
\newpage
\begin{table}[h!]
 \caption{\small Retrieval averaged over 10 queries with 50 retrievals on UTKFace dataset \cite{zhifei2017cvpr} for \methodname, $k$-NN, MMR \cite{srinivasan2024generalized}, CLIP-Clip, PBM, and DebiasCLIP with MPR optimized and measured with a linear regression oracle. Entries indicate the average percentage representation in retrieved items. \methodname~is able to balance representation across classes, whereas other baselines miss intersectional groups (highlighted in red).}
\label{table_appendix}
\resizebox{\linewidth}{!}{
\begin{tabular}{cccccccc}
\toprule
\multicolumn{1}{l}{} & \multicolumn{1}{l}{} & \multicolumn{1}{l}{White} & \multicolumn{1}{l}{Black} & \multicolumn{1}{l}{Asian} & \multicolumn{1}{l}{Indian} & \multicolumn{1}{l}{Others} & \multicolumn{1}{l}{Sum} \\ \hline

\multirow{3}{*}{\textbf{\methodname~ (Ours)}} & Male & 7.8 $\pm$ 3.74 & 10.2 $\pm$ 2.90 & 13.0 $\pm$ 2.56 & 11.8 $\pm$ 2.44 & 7.2 $\pm$ 3.70 & \textbf{50 $\pm$ 0} \\
& Female & 12.2 $\pm$ 3.74 & 9.8 $\pm$ 2.90 & 7.0 $\pm$ 2.56 & 8.2 $\pm$ 2.44 & 12.8 $\pm$ 3.70 & \textbf{50 $\pm$ 0} \\
\cline{2-2}
& Sum & \textbf{20 $\pm$ 0} & \textbf{20 $\pm$ 0} & \textbf{20 $\pm$ 0} & \textbf{20 $\pm$ 0} & \textbf{20 $\pm$ 0} & \\ \hline
\multirow{3}{*}{\textbf{Top-K}} & Male & 21.2 $\pm$ 11.84 & 13.2 $\pm$ 7.70 & 10.8 $\pm$ 7.54 & 21.2 $\pm$ 20.70 & \cellcolor{red!25}1.4 $\pm$ 1.56 & \textbf{67.8 $\pm$ 23.72} \\
& Female & 17.2 $\pm$ 21.04 & 6.6 $\pm$ 6.14 & \cellcolor{red!25}4.0 $\pm$ 2.82 & \cellcolor{red!25}2.8 $\pm$ 2.56 & \cellcolor{red!25}1.6 $\pm$ 1.50 & \textbf{32.2 $\pm$ 23.72} \\ \cline{2-2}
& Sum & \textbf{38.4 $\pm$ 17.18} & \textbf{19.8 $\pm$ 12.56} & \textbf{14.8 $\pm$ 7.38} & \textbf{24 $\pm$ 19.38} & \textbf{3.0 $\pm$ 2.04} & \\
\hline
\multirow{3}{*}{\textbf{MMR} } & Male & 28.8 $\pm$ 8.96 & 11.0 $\pm$ 4.58 & 10.2 $\pm$ 4.04 & 10.8 $\pm$ 2.40 & \cellcolor{red!25}3.8 $\pm$ 1.06 & \textbf{64.6 $\pm$ 7.74} \\
& Female & 16.2 $\pm$ 9.06 & 7.6 $\pm$ 3.66 & \cellcolor{red!25}5.0 $\pm$ 3.00 & \cellcolor{red!25}4.0 $\pm$ 1.26 & \cellcolor{red!25}2.6 $\pm$ 1.56 & \textbf{35.4 $\pm$ 7.74} \\ \cline{2-2}
& Sum & \textbf{45.0 $\pm$ 7.76} & \textbf{18.6 $\pm$ 5.38} & \textbf{15.2 $\pm$ 5.82} & \textbf{14.8 $\pm$ 3.12} & \textbf{6.4 $\pm$ 1.96} & \\
\hline

& Male & 23.2 $\pm$ 9.96 & 11.0 $\pm$ 5.16 & 10.8 $\pm$ 5.88 & 14.2 $\pm$ 8.70 & \cellcolor{red!25}1.8 $\pm$ 2.08 & \textbf{61.0 $\pm$ 17.08} \\
& Female & 15.2 $\pm$ 9.44 & 7.0 $\pm$ 4.32 & 7.4 $\pm$ 5.30 & 6.6 $\pm$ 5.52 & \cellcolor{red!25}2.8 $\pm$ 1.84 & \textbf{39.0 $\pm$ 17.08} \\ \cline{2-2}
\multirow{-3}{*}{\textbf{CLIP-Clip}} & Sum & \textbf{38.4 $\pm$ 11.42} & \textbf{18.0 $\pm$ 8.20} & \textbf{18.2 $\pm$ 5.68} & \textbf{20.8 $\pm$ 9.52} & \textbf{4.6 $\pm$ 2.2} & \\ \hline
& Male & 19.8 $\pm$ 7.82 & 13.0 $\pm$ 7.12 & 9.4 $\pm$ 5.66 & 15.2 $\pm$ 8.72 & \cellcolor{red!25}1.2 $\pm$ 1.6 & \textbf{58.6 $\pm$ 11.62} \\
& Female & 18.6 $\pm$ 12.96 & 8.4 $\pm$ 5.92 & 5.6 $\pm$ 3.88 & 7.2 $\pm$ 5.16 & \cellcolor{red!25}1.6 $\pm$ 1.50 & \textbf{41.4 $\pm$ 11.62} \\ \cline{2-2}
\multirow{-3}{*}{\textbf{PBM}} & Sum & \textbf{38.4 $\pm$ 13.50} & \textbf{21.4 $\pm$ 11.38} & \textbf{15.0 $\pm$ 6.70} & \textbf{22.4 $\pm$ 13.44} & \textbf{2.8 $\pm$ 2.04} & \\ \hline
& Male & 20.4 $\pm$ 7.48 & 8.2 $\pm$ 3.94 & 5.4 $\pm$ 2.54 & 11.0 $\pm$ 3.50 & \cellcolor{red!25}3.8 $\pm$ 2.6 & \textbf{48.8 $\pm$ 5.52} \\
& Female & 19.4 $\pm$ 6.98 & 9.4 $\pm$ 5.14 & 8.4 $\pm$ 4.28 & 10.0 $\pm$ 4.64 & \cellcolor{red!25}4.0 $\pm$ 1.78 & \textbf{51.2 $\pm$ 5.52} \\ \cline{2-2}
\multirow{-3}{*}{\textbf{DebiasCLIP}} & Sum & \textbf{39.8 $\pm$ 9.78} & \textbf{17.6 $\pm$ 3.66} & \textbf{13.8 $\pm$ 5.62} & \textbf{21.0 $\pm$ 6.52} & \textbf{7.8 $\pm$ 3.16} & \\ \hline
\end{tabular}
}
\end{table}

\begin{table}[h!]
 \caption{\small Retrieval averaged over 10 queries with 50 retrievals on UTKFace dataset \cite{zhifei2017cvpr} for \methodname, $k$-NN, MMR \cite{srinivasan2024generalized}, PBM, and DebiasCLIP with MPR optimized and measured with a decision tree oracle. Entries indicate the average percentage representation in retrieved items.  \methodname~is able to balance representation across classes, whereas other baselines miss intersectional groups (highlighted in red).}
\label{table_appendix2}
\resizebox{\linewidth}{!}{
\begin{tabular}{cccccccc}
\toprule
\multicolumn{1}{l}{} & \multicolumn{1}{l}{} & \multicolumn{1}{l}{White} & \multicolumn{1}{l}{Black} & \multicolumn{1}{l}{Asian} & \multicolumn{1}{l}{Indian} & \multicolumn{1}{l}{Others} & \multicolumn{1}{l}{Sum} \\ \hline
\multirow{3}{*}{\textbf{Ours}} & Male & 11.0 $\pm$ 1.34 & 10.4 $\pm$ 0.8 & 10.0 $\pm$ 0 & 10.2 $\pm$ 0.6 & 8.8 $\pm$ 0.98 & \textbf{50.4 $\pm$ 1.64} \\
& Female & 10.6 $\pm$ 1.28 & 9.6 $\pm$ 0.80 & 9.6 $\pm$ 1.20 & 10.2 $\pm$ 1.08 & 9.6 $\pm$ 1.2 & \textbf{49.6 $\pm$ 1.64} \\
\cline{2-2}
& Sum & \textbf{21.6 $\pm$ 1.64} & \textbf{20 $\pm$ 1.26} & \textbf{18.6 $\pm$ 1.2} & \textbf{20.4 $\pm$ 1.2} & \textbf{18.4 $\pm$ 1.2} & \\ \hline

\multirow{3}{*}{\textbf{Top-K}} & Male & 21.2 $\pm$ 11.84 & 13.2 $\pm$ 7.70 & 10.8 $\pm$ 7.54 & 21.2 $\pm$ 20.70 & \cellcolor{red!25}1.4 $\pm$ 1.56 & \textbf{67.8 $\pm$ 23.72} \\
& Female & 17.2 $\pm$ 21.04 & 6.6 $\pm$ 6.14 & \cellcolor{red!25}4.0 $\pm$ 2.82 & \cellcolor{red!25}2.8 $\pm$ 2.56 & \cellcolor{red!25}1.6 $\pm$ 1.50 & \textbf{32.2 $\pm$ 23.72} \\ \cline{2-2}
& Sum & \textbf{38.4 $\pm$ 17.18} & \textbf{19.8 $\pm$ 12.56} & \textbf{14.8 $\pm$ 7.38} & \textbf{24 $\pm$ 19.38} & \textbf{3.0 $\pm$ 2.04} & \\
\hline
\multirow{3}{*}{\textbf{MMR} } & Male & 27.4 $\pm$ 9.13&11.8 $\pm$ 5.02&10.2 $\pm$ 4.33&10.2 $\pm$ 3.03&\cellcolor{red!25} 3.2 $\pm$ 2.4& \textbf{62.8 $\pm$ 11.77} \\
& Female & 17 $\pm$ 9.13&9 $\pm$ 5.67&5.4 $\pm$ 3.47& \cellcolor{red!25}3.6 $\pm$ 1.5& \cellcolor{red!25}2.2 $\pm$ 1.08& \textbf{37.2 $\pm$ 11.77} \\ \cline{2-2}
& Sum & \textbf{44.4 $\pm$ 6.62} & \textbf{20.8 $\pm$ 9.39} & \textbf{15.6 $\pm$ 5.64} & \textbf{13.8 $\pm$ 3.84} & \textbf{5.4 $\pm$ 2.37} & \\
\hline
& Male & 19.6 $\pm$ 13.11&12.4 $\pm$ 6.56&9.4 $\pm$ 6&15 $\pm$ 8.91&\cellcolor{red!25} 1.2 $\pm$ 1.6& \textbf{57.6 $\pm$ 12.74} \\
& Female & 19.6 $\pm$ 13.11&8.4 $\pm$ 5.92&5.4 $\pm$ 3.69&7.2 $\pm$ 5.15&\cellcolor{red!25} 1.8 $\pm$ 1.66& \textbf{42.4 $\pm$ 12.74} \\ \cline{2-2}
\multirow{-3}{*}{\textbf{PBM}} & Sum & \textbf{39.2 $\pm$ 13.45} & \textbf{20.8 $\pm$ 11.07} & \textbf{14.8 $\pm$ 6.46} & \textbf{22.2 $\pm$ 13.61} & \textbf{3 $\pm$ 2.05} & \\ \hline
& Male & 20.4 $\pm$ 6.99&8.2 $\pm$ 3.94&5.4 $\pm$ 2.54&11 $\pm$ 3.49&\cellcolor{red!25} 3.8 $\pm$ 2.6& \textbf{48.8 $\pm$ 5.53} \\
& Female & 19.4 $\pm$ 6.99&9.4 $\pm$ 5.14&8.4 $\pm$ 4.27&10 $\pm$ 4.65&\cellcolor{red!25} 4 $\pm$ 1.79& \textbf{51.2 $\pm$ 5.53} \\ \cline{2-2}
\multirow{-3}{*}{\textbf{DebiasCLIP}} & Sum & \textbf{39.8 $\pm$ 9.78} & \textbf{17.6 $\pm$ 3.67} & \textbf{13.8 $\pm$ 5.62} & \textbf{21 $\pm$ 6.53} & \textbf{7.8 $\pm$ 3.16} & \\ \hline
\end{tabular}
}
\end{table}
\newpage 
\begin{figure}[h!]
 \begin{minipage}[t]{\textwidth}
        \centering
        \includegraphics[width=.6\textwidth]{camera_ready/figures/camera_ready_legend.pdf}
    \end{minipage}
\centering
\includegraphics[width=.3\textwidth]{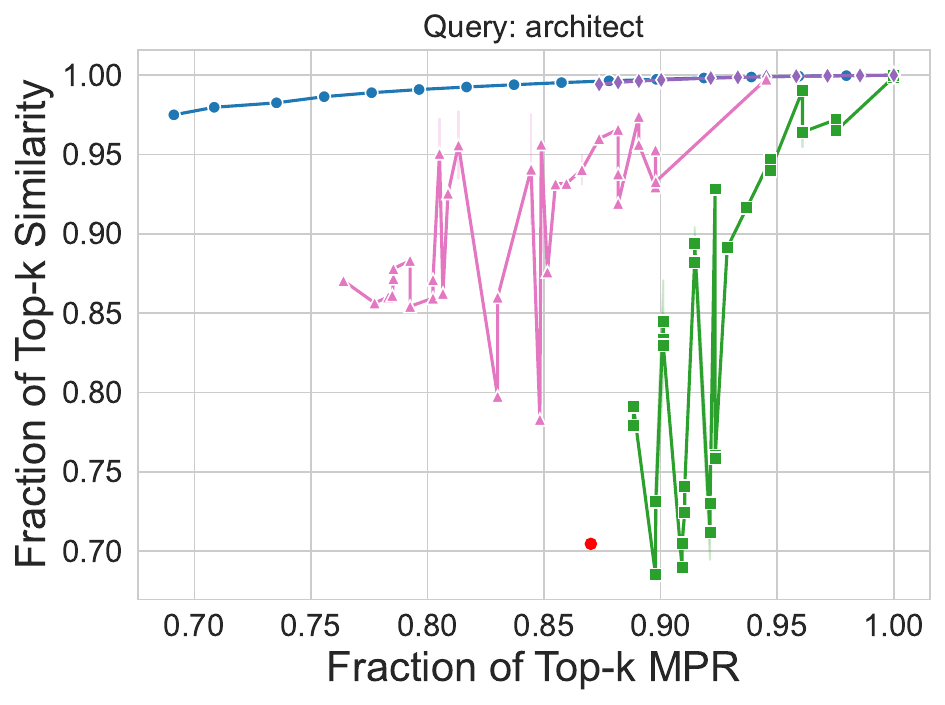}\hfill
\includegraphics[width=.3\textwidth]{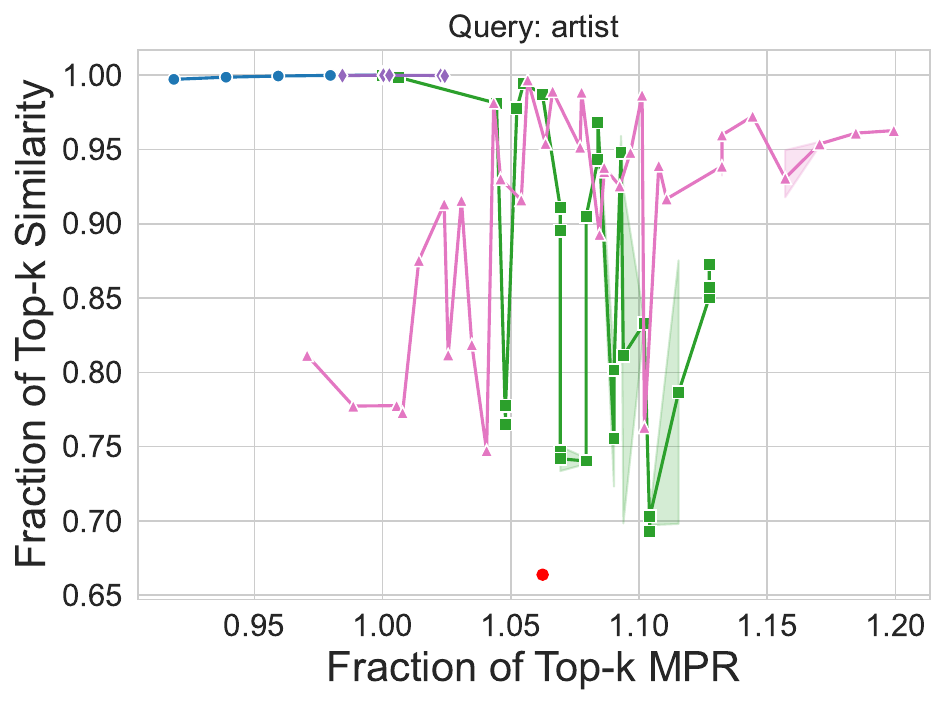}\hfill
\includegraphics[width=.3\textwidth]{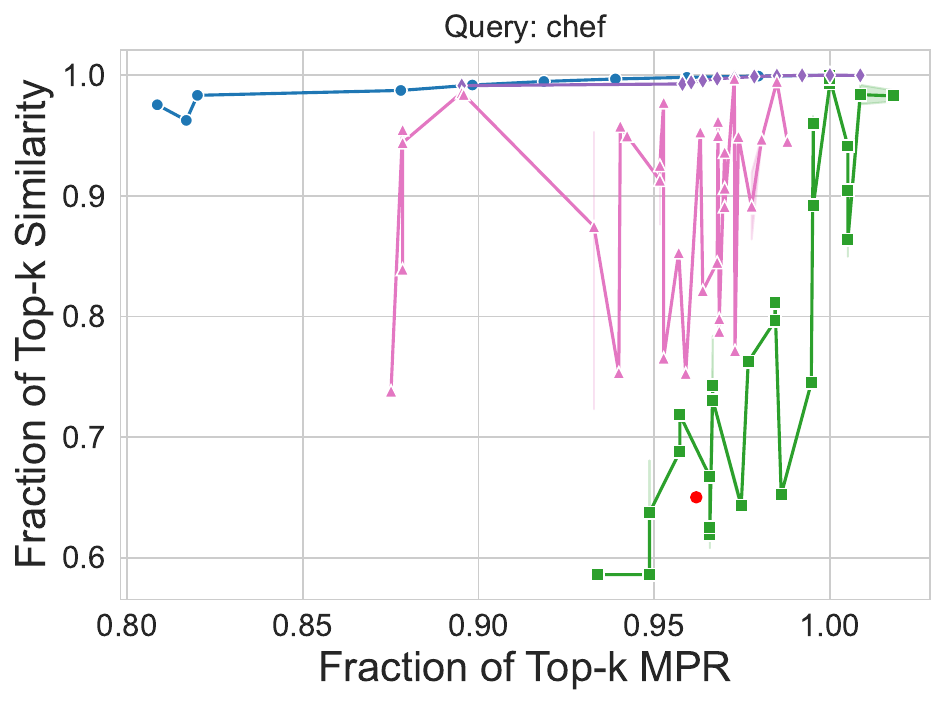} \\
\includegraphics[width=.3\textwidth]{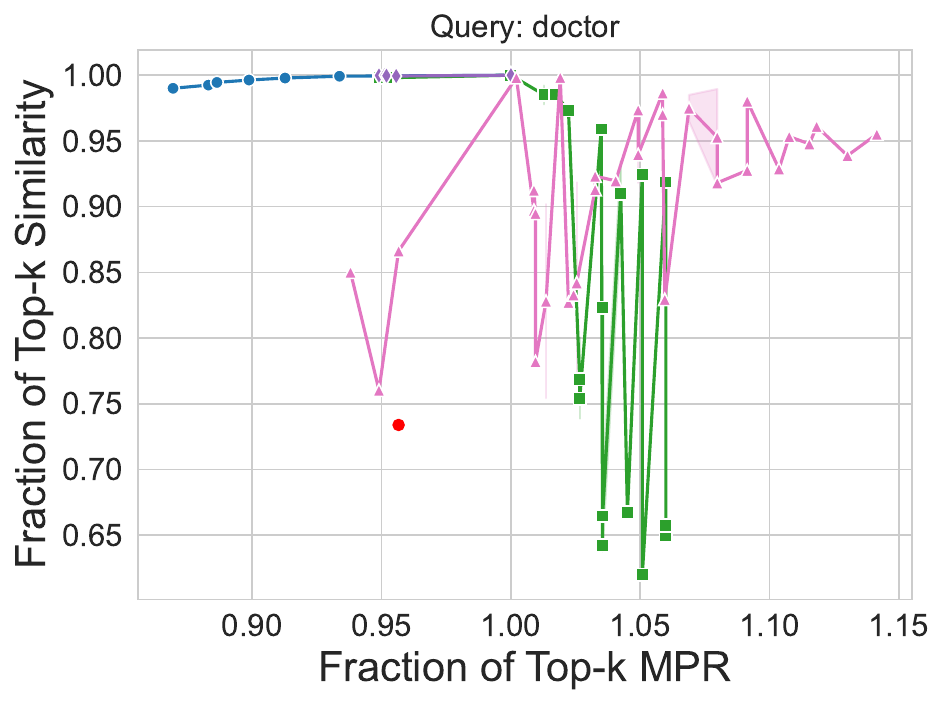}\hfill
\includegraphics[width=.3\textwidth]{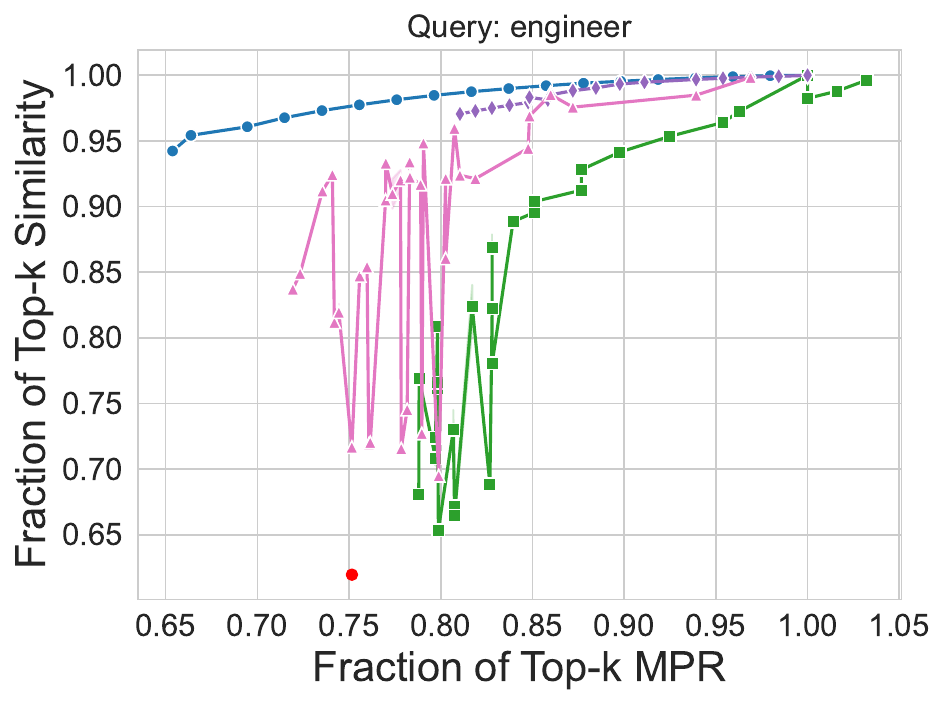}\hfill
\includegraphics[width=.3\textwidth]{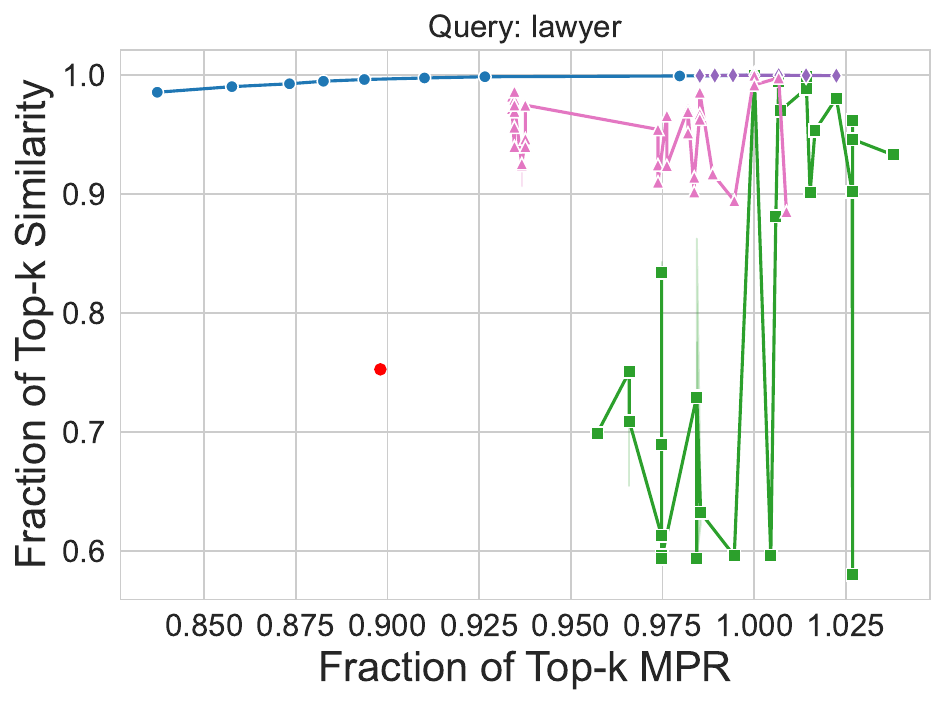} \\
\includegraphics[width=.3\textwidth]{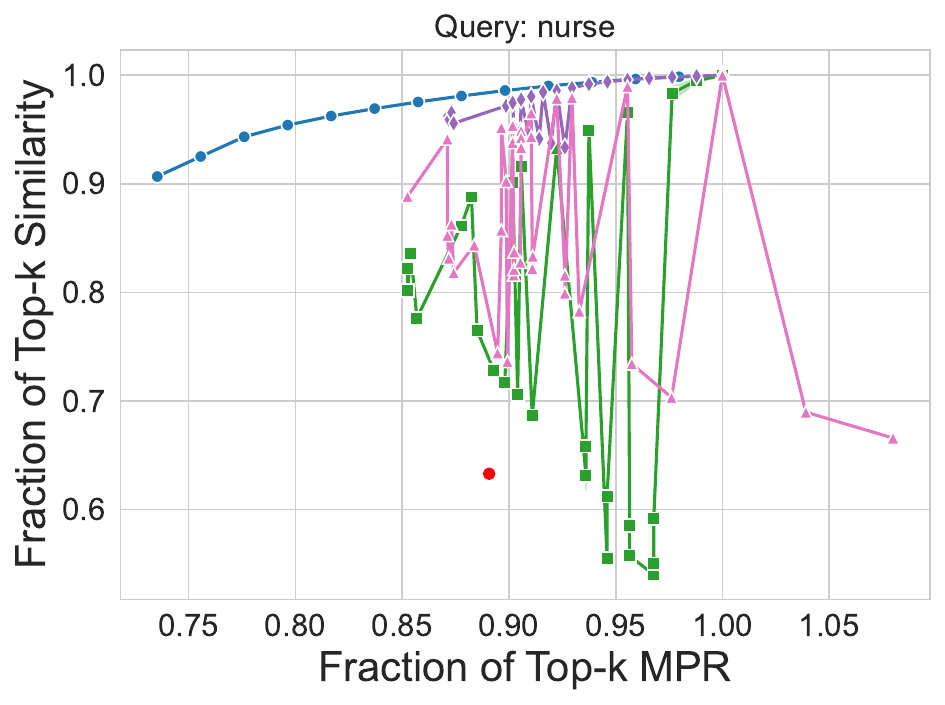}\hfill
\includegraphics[width=.3\textwidth]{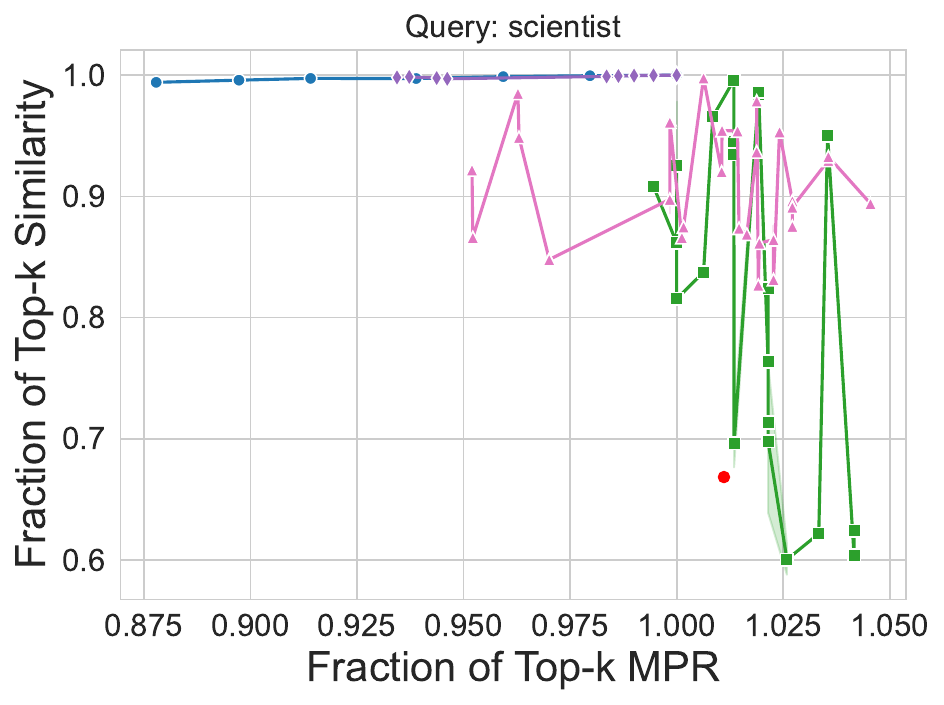}\hfill
\includegraphics[width=.3\textwidth]{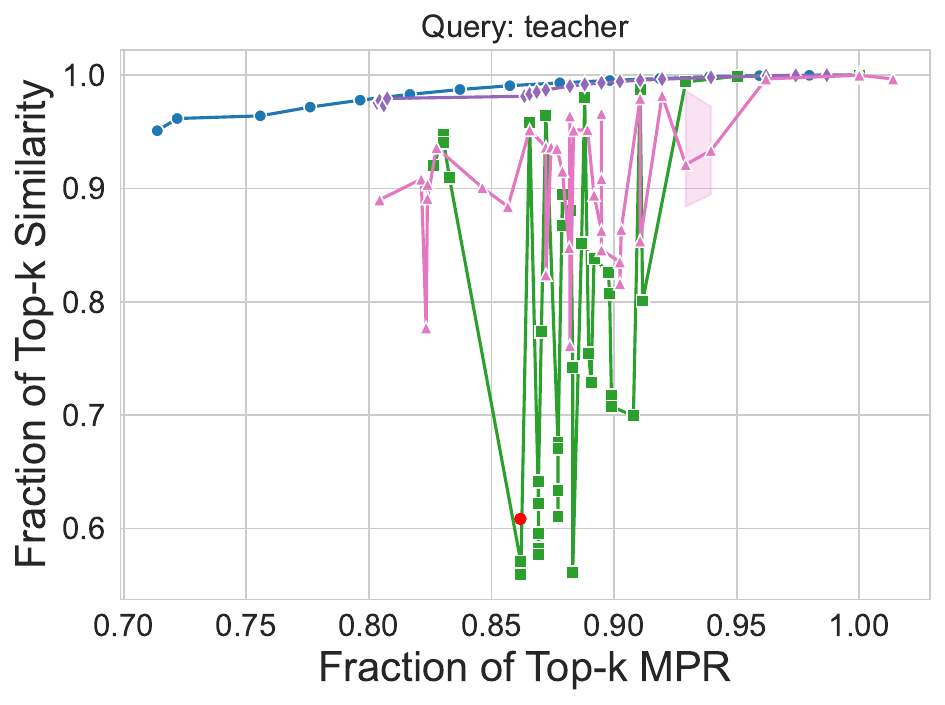} \\
\caption{\small Fraction of Top-$k$ cosine similarity vs Fraction of Top-$k$ MPR over linear regression models for 9 additional queries for the CelebA dataset. Values are normalized so Top-$k$ achieves point (1,1) in each case.  \methodname~Pareto-dominates baselines and significantly closes the MPR gap.}
\label{fig:celeba_linregs}
\end{figure}

\begin{figure}[h!]
 \begin{minipage}[t]{\textwidth}
        \centering
        \includegraphics[width=.6\textwidth]{camera_ready/figures/camera_ready_legend.pdf}
    \end{minipage}
\centering
\includegraphics[width=.3\textwidth]{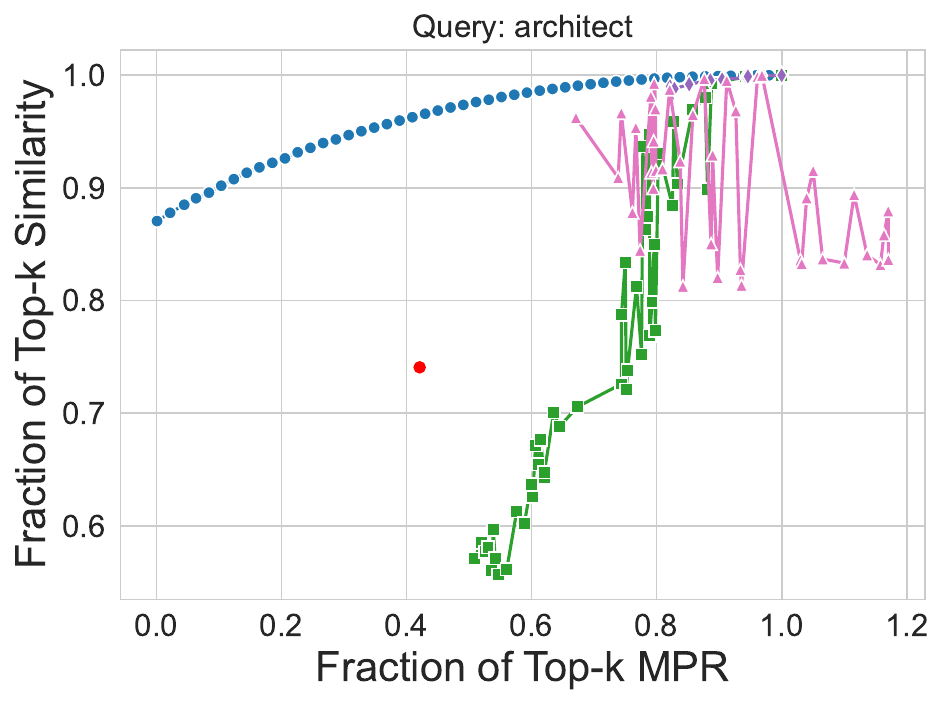}\hfill
\includegraphics[width=.3\textwidth]{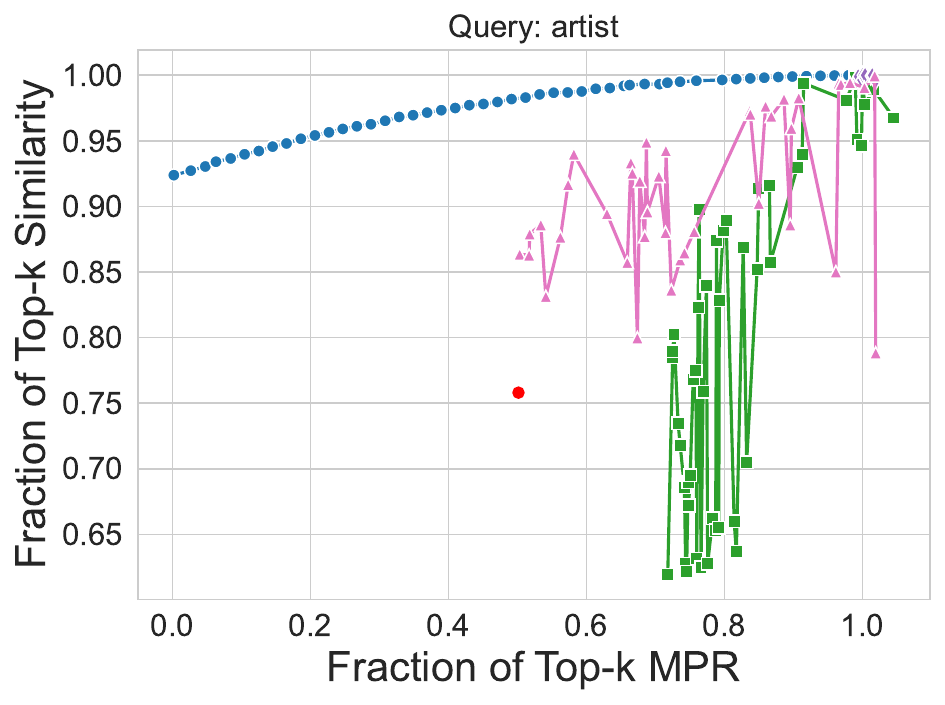}\hfill
\includegraphics[width=.3\textwidth]{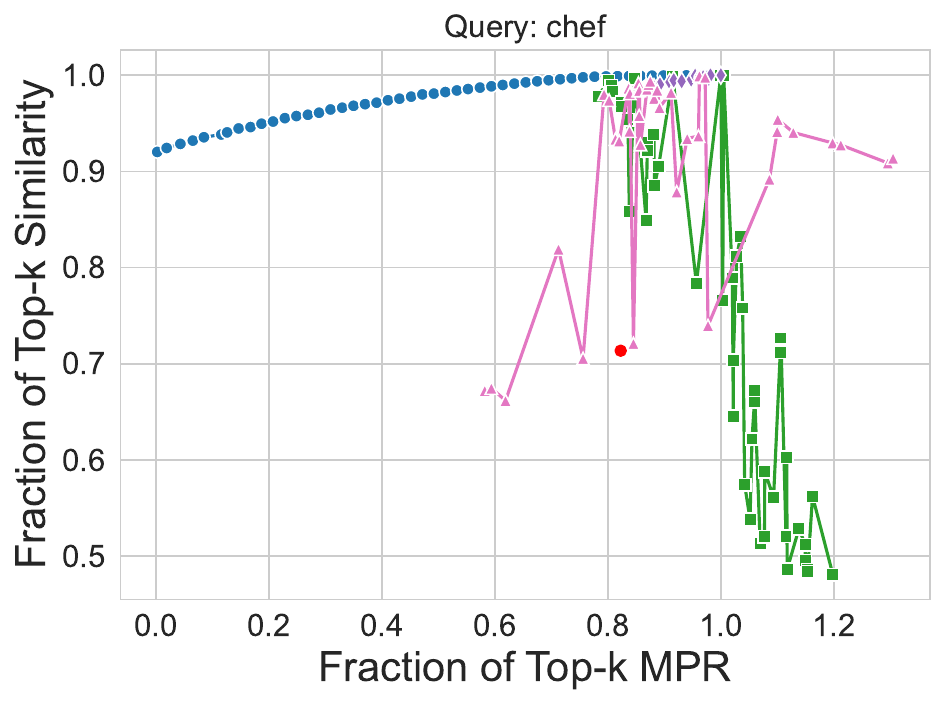} \\
\includegraphics[width=.3\textwidth]{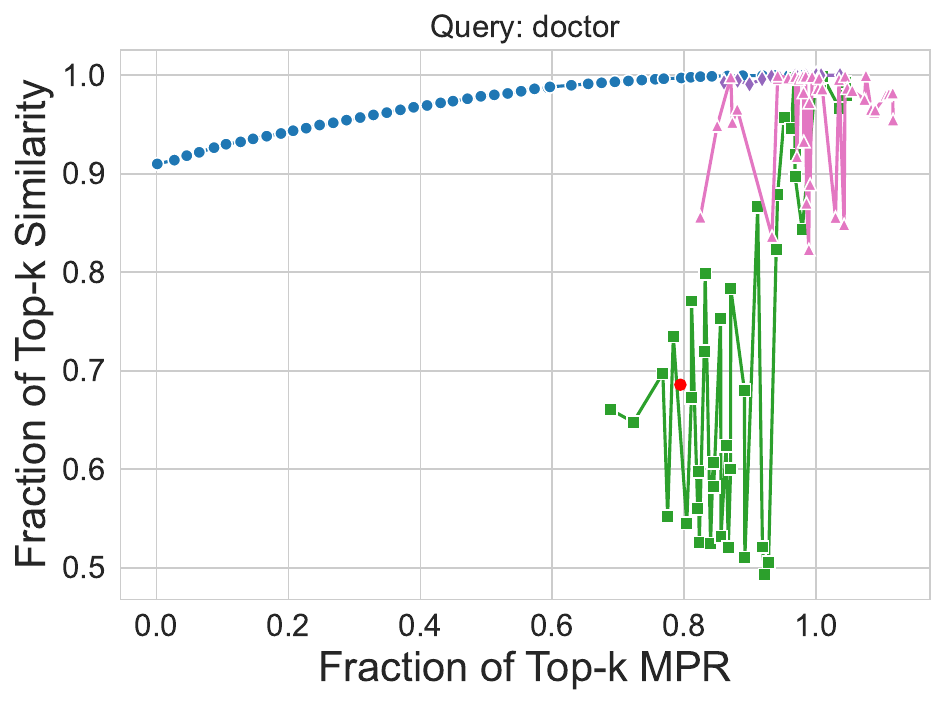}\hfill
\includegraphics[width=.3\textwidth]{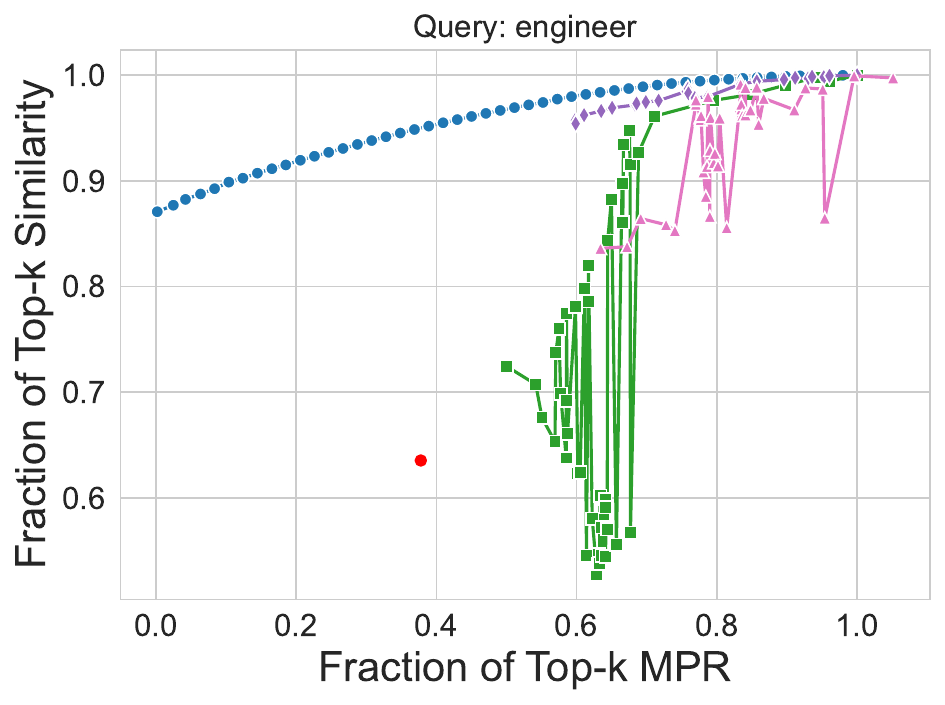}\hfill
\includegraphics[width=.3\textwidth]{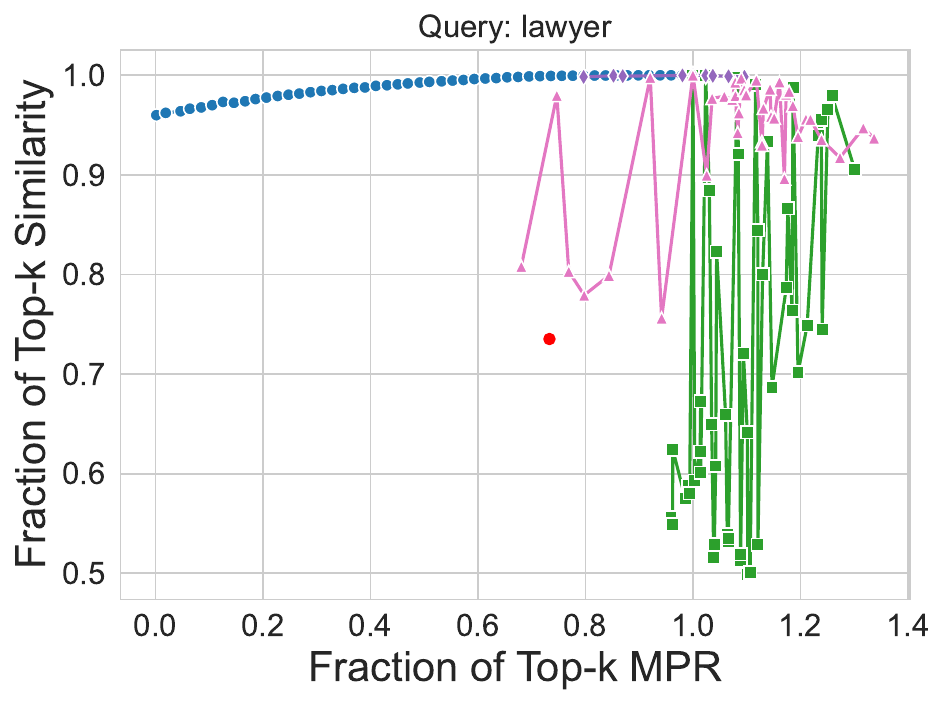} \\
\includegraphics[width=.3\textwidth]{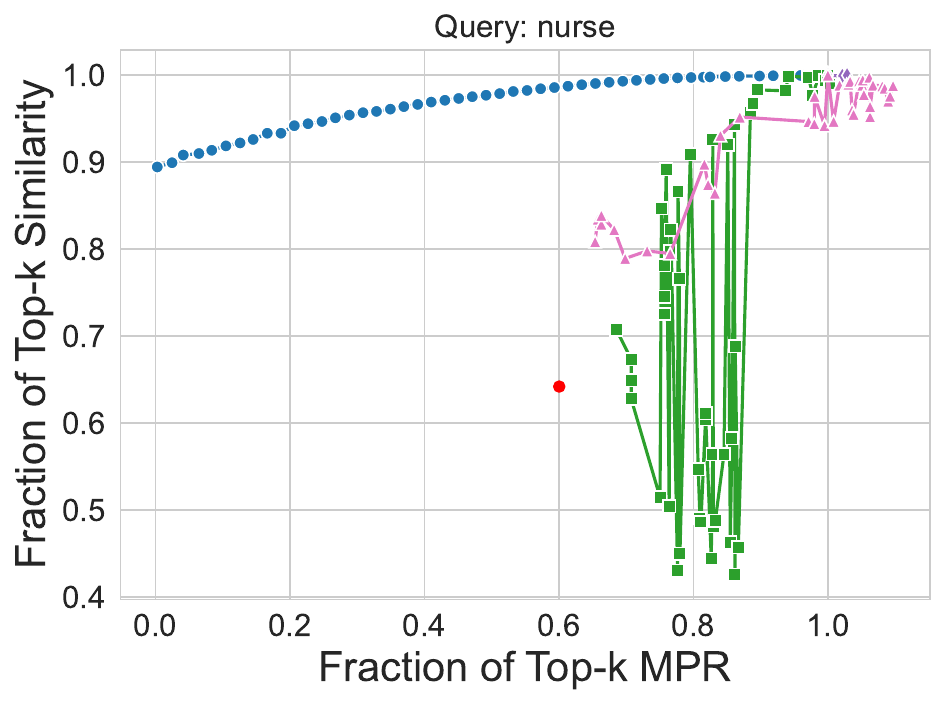}\hfill
\includegraphics[width=.3\textwidth]{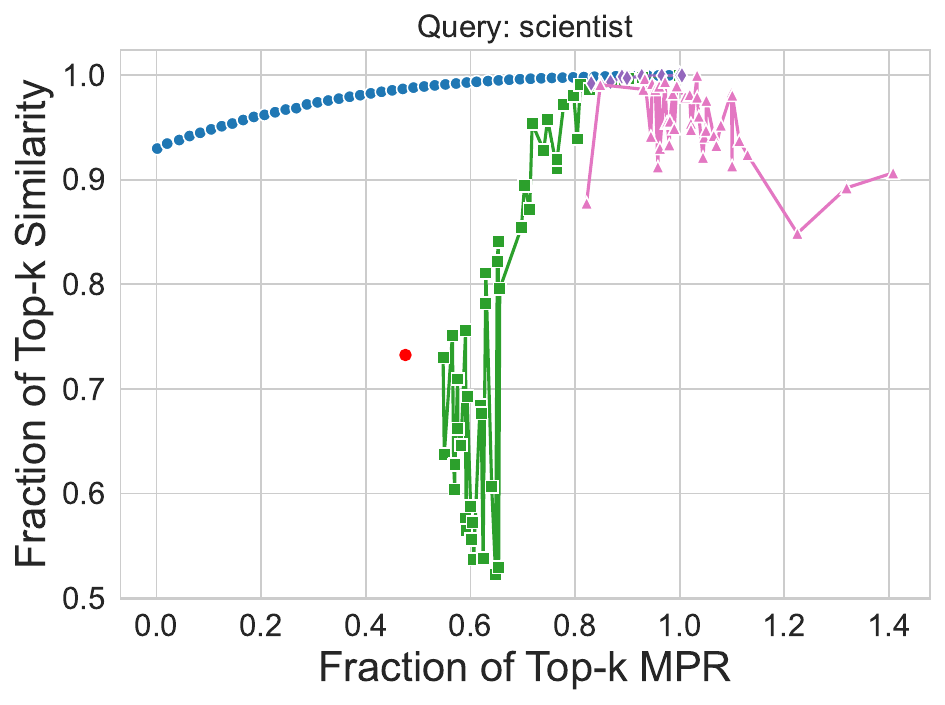}\hfill
\includegraphics[width=.3\textwidth]{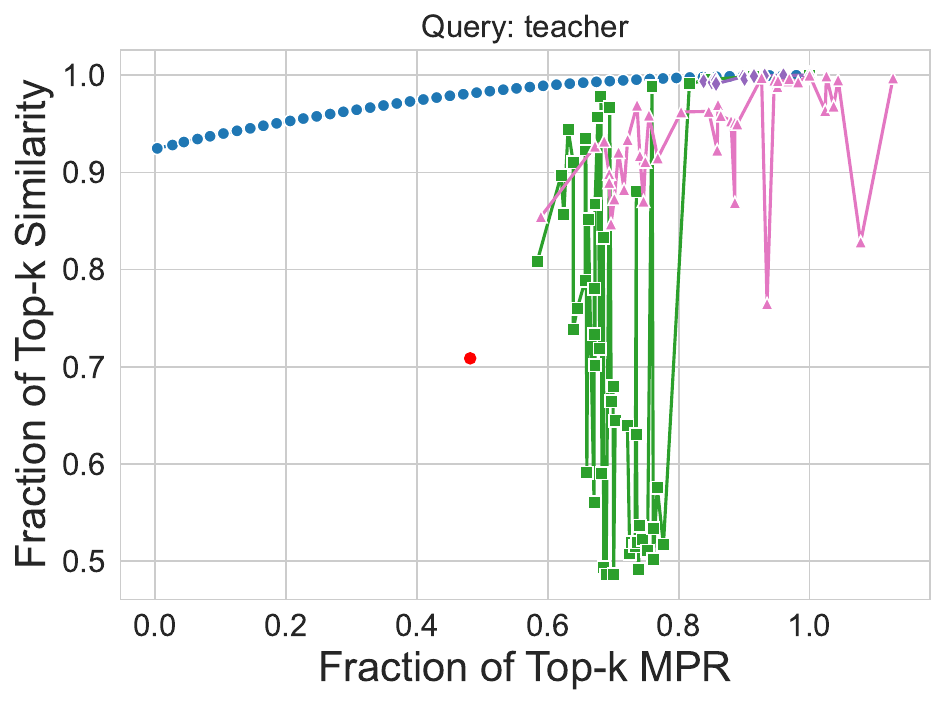} \\
\caption{\small Fraction of Top-$k$ cosine similarity vs Fraction of Top-$k$ MPR over linear regression models for 9 additional queries for the UTKFace dataset. Values are normalized so Top-$k$ achieves point (1,1) in each case.  \methodname~Pareto-dominates baselines and significantly closes the MPR gap.}
\label{fig:utkface_linregs}
\end{figure}

\begin{figure}[h!]
\centering
\includegraphics[width=.3\textwidth]{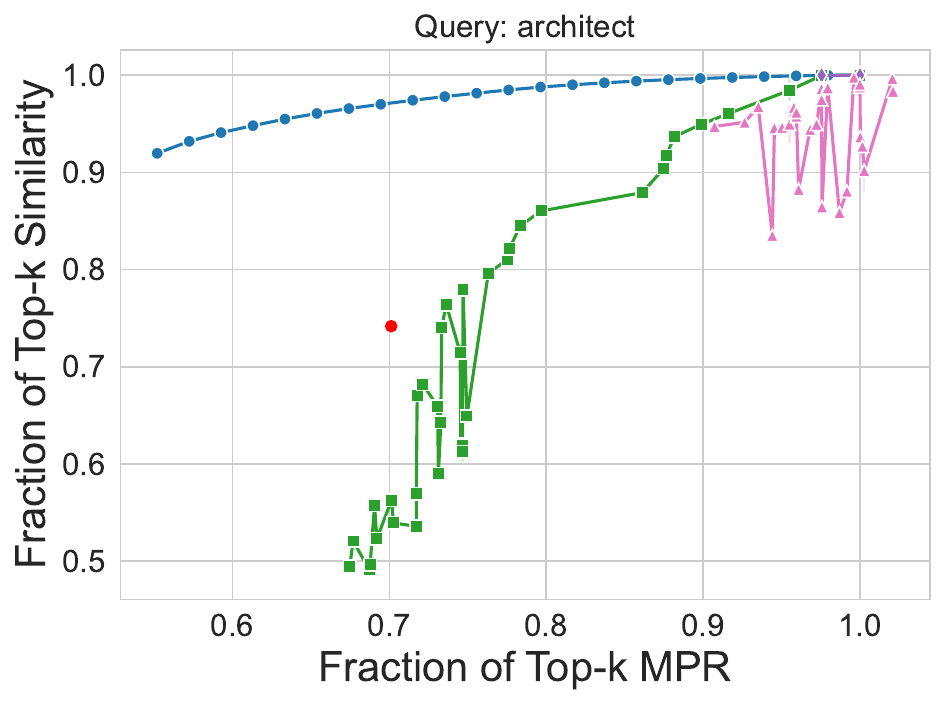}\hfill
\includegraphics[width=.3\textwidth]{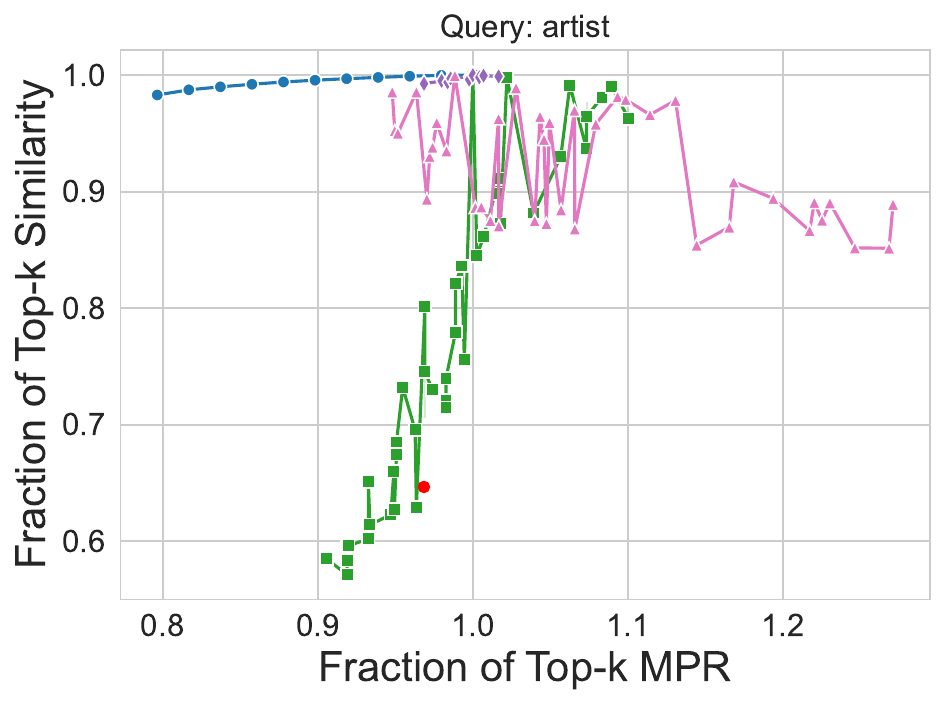}\hfill
\includegraphics[width=.3\textwidth]{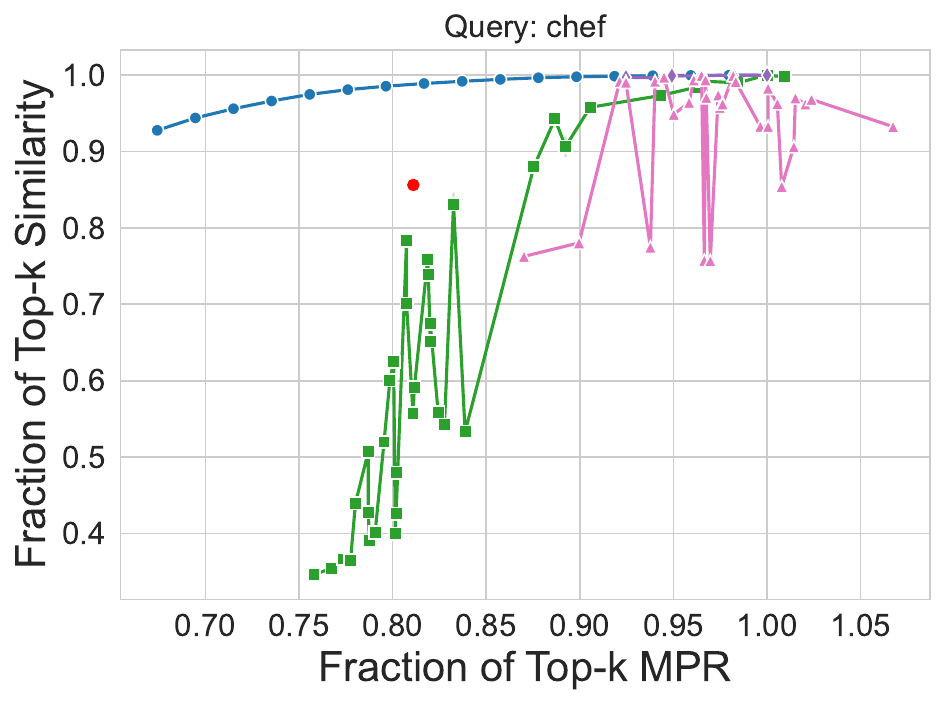} \\
\includegraphics[width=.3\textwidth]{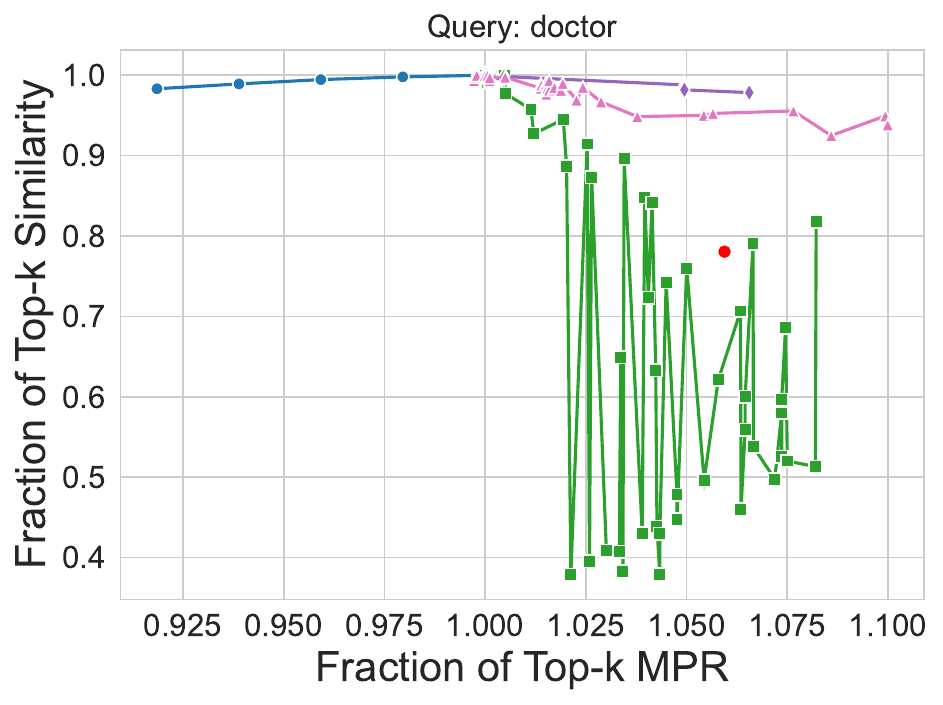}\hfill
\includegraphics[width=.3\textwidth]{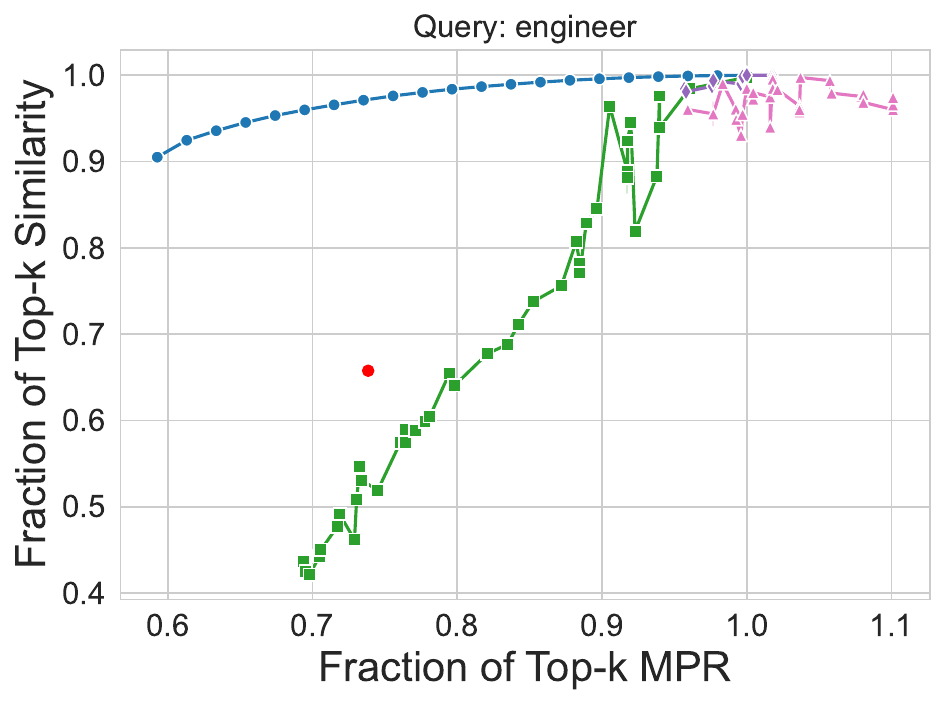}\hfill
\includegraphics[width=.3\textwidth]{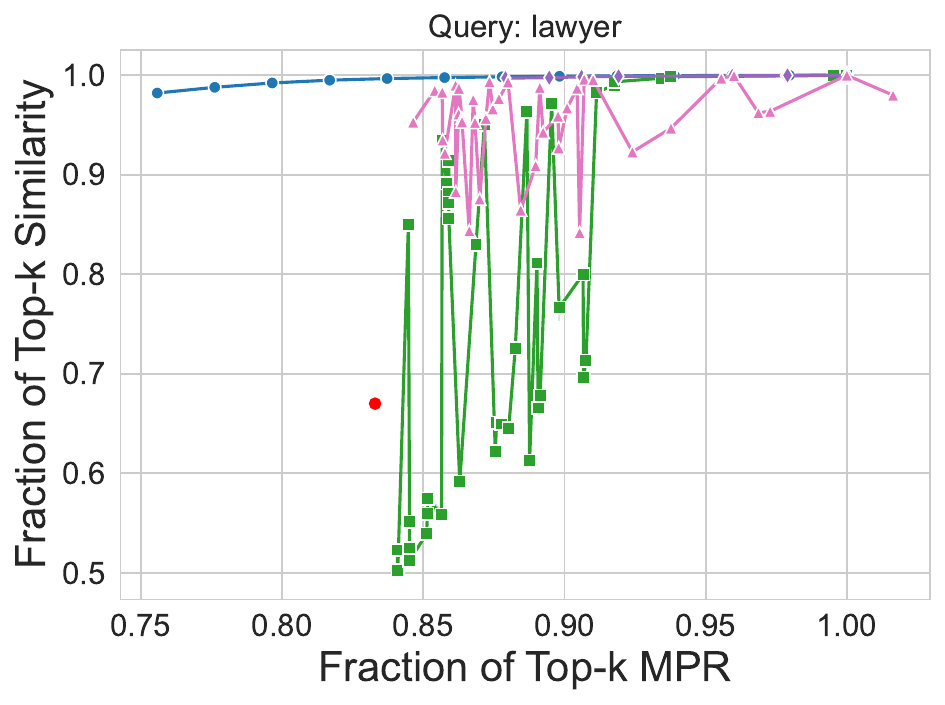} \\
\includegraphics[width=.3\textwidth]{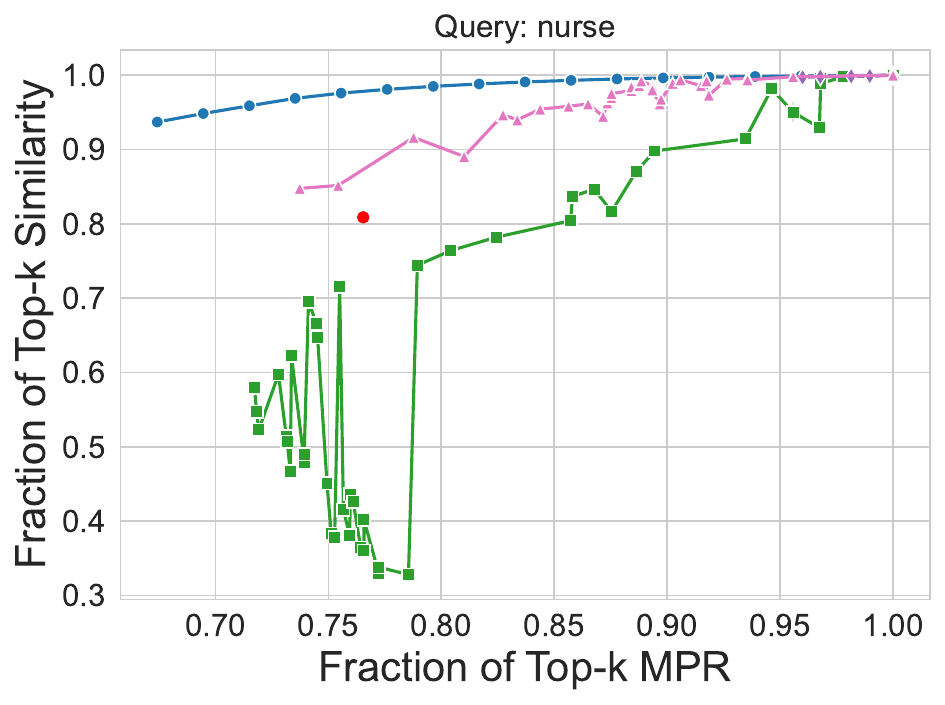}\hfill
\includegraphics[width=.3\textwidth]{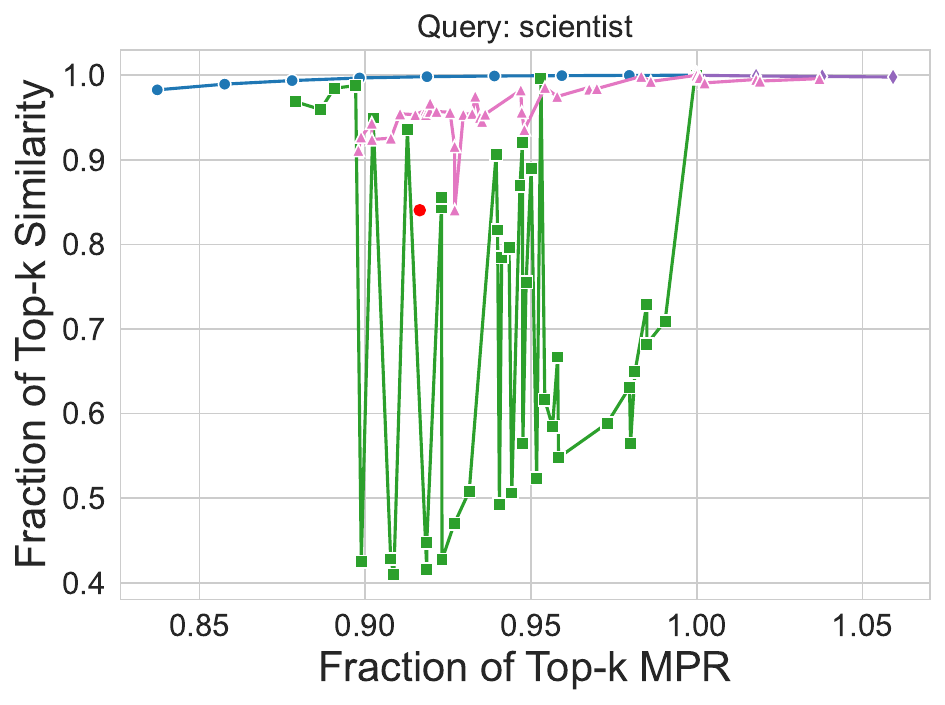}\hfill
\includegraphics[width=.3\textwidth]{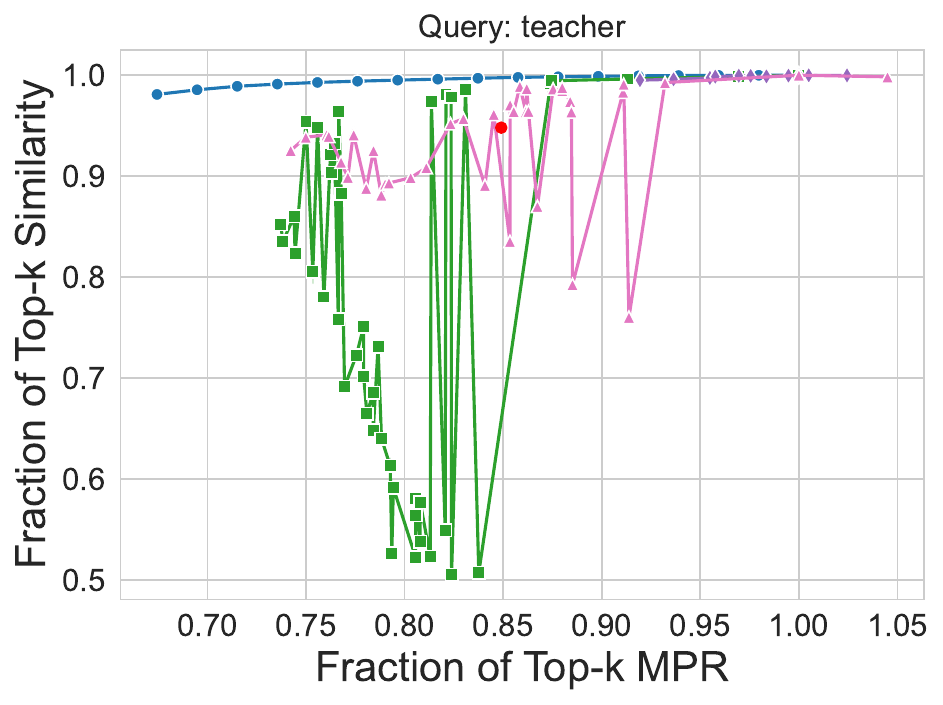} \\
\caption{\small Fraction of Top-$k$ cosine similarity vs Fraction of Top-$k$ MPR over linear regression models for 9 additional queries for the Occupations dataset. Values are normalized so Top-$k$ achieves point (1,1) in each case.  \methodname~Pareto-dominates baselines and significantly closes the MPR gap.}
\label{fig:occupations_linregs}
\end{figure}

\begin{figure}[h!]
 \begin{minipage}[t]{\textwidth}
        \centering
        \includegraphics[width=.6\textwidth]{camera_ready/figures/camera_ready_legend.pdf}
    \end{minipage}
\centering
\includegraphics[width=.25\textwidth]{camera_ready/figures/celeba_k_50_curation_fairface_decisiontree_programmer.pdf}\hfill
\includegraphics[width=.25\textwidth]{camera_ready/figures/celeba_k_50_curation_fairface_decisiontree_architect.pdf}\hfill
\includegraphics[width=.25\textwidth]{camera_ready/figures/celeba_k_50_curation_fairface_decisiontree_artist.pdf}\hfill
\includegraphics[width=.25\textwidth]{camera_ready/figures/celeba_k_50_curation_fairface_decisiontree_chef.pdf} \\
\includegraphics[width=.3\textwidth]{camera_ready/figures/celeba_k_50_curation_fairface_decisiontree_doctor.pdf}\hfill
\includegraphics[width=.3\textwidth]{camera_ready/figures/celeba_k_50_curation_fairface_decisiontree_engineer.pdf}\hfill
\includegraphics[width=.3\textwidth]{camera_ready/figures/celeba_k_50_curation_fairface_decisiontree_lawyer.pdf} \\
\includegraphics[width=.3\textwidth]{camera_ready/figures/celeba_k_50_curation_fairface_decisiontree_nurse.pdf}\hfill
\includegraphics[width=.3\textwidth]{camera_ready/figures/celeba_k_50_curation_fairface_decisiontree_scientist.pdf}\hfill
\includegraphics[width=.3\textwidth]{camera_ready/figures/celeba_k_50_curation_fairface_decisiontree_teacher.pdf} \\
\caption{\small Fraction of Top-$k$ cosine similarity vs. Fraction of Top-$k$ MPR over decision trees for 10 queries for the CelebA dataset. Values are normalized so Top-$k$ achieves point (1,1) in each case.  \methodname~Pareto-dominates baselines and significantly closes the MPR gap.}
\label{fig:celeba_trees}

\end{figure}

\begin{figure}[h!]
\centering
\includegraphics[width=.25\textwidth]{camera_ready/figures/utkface_k_50_curation_fairface_decisiontree_programmer.pdf}\hfill
\includegraphics[width=.25\textwidth]{camera_ready/figures/utkface_k_50_curation_fairface_decisiontree_architect.pdf}\hfill
\includegraphics[width=.25\textwidth]{camera_ready/figures/utkface_k_50_curation_fairface_decisiontree_artist.pdf}\hfill
\includegraphics[width=.25\textwidth]{camera_ready/figures/utkface_k_50_curation_fairface_decisiontree_chef.pdf} \\
\includegraphics[width=.3\textwidth]{camera_ready/figures/utkface_k_50_curation_fairface_decisiontree_doctor.pdf}\hfill
\includegraphics[width=.3\textwidth]{camera_ready/figures/utkface_k_50_curation_fairface_decisiontree_engineer.pdf}\hfill
\includegraphics[width=.3\textwidth]{camera_ready/figures/utkface_k_50_curation_fairface_decisiontree_lawyer.pdf} \\
\includegraphics[width=.3\textwidth]{camera_ready/figures/utkface_k_50_curation_fairface_decisiontree_nurse.pdf}\hfill
\includegraphics[width=.3\textwidth]{camera_ready/figures/utkface_k_50_curation_fairface_decisiontree_scientist.pdf}\hfill
\includegraphics[width=.3\textwidth]{camera_ready/figures/utkface_k_50_curation_fairface_decisiontree_teacher.pdf} \\
\caption{\small Fraction of Top-$k$ cosine similarity vs Fraction of Top-$k$ MPR over decision trees for 10 queries for the UTKFace dataset. Values are normalized so Top-$k$ achieves point (1,1) in each case. \methodname~Pareto-dominates baselines and significantly closes the MPR gap.}
\label{fig:utkface_trees}

\end{figure}

\begin{figure}[h!]
 \begin{minipage}[t]{\textwidth}
        \centering
        \includegraphics[width=.6\textwidth]{camera_ready/figures/camera_ready_legend.pdf}
    \end{minipage}
\centering
\includegraphics[width=.25\textwidth]{camera_ready/figures/occupations_k_50_curation_fairface_decisiontree_programmer.pdf}\hfill
\includegraphics[width=.25\textwidth]{camera_ready/figures/occupations_k_50_curation_fairface_decisiontree_architect.pdf}\hfill
\includegraphics[width=.25\textwidth]{camera_ready/figures/occupations_k_50_curation_fairface_decisiontree_artist.pdf}\hfill
\includegraphics[width=.25\textwidth]{camera_ready/figures/occupations_k_50_curation_fairface_decisiontree_chef.pdf} \\
\includegraphics[width=.3\textwidth]{camera_ready/figures/occupations_k_50_curation_fairface_decisiontree_doctor.pdf}\hfill
\includegraphics[width=.3\textwidth]{camera_ready/figures/occupations_k_50_curation_fairface_decisiontree_engineer.pdf}\hfill
\includegraphics[width=.3\textwidth]{camera_ready/figures/occupations_k_50_curation_fairface_decisiontree_lawyer.pdf} \\
\includegraphics[width=.3\textwidth]{camera_ready/figures/occupations_k_50_curation_fairface_decisiontree_nurse.pdf}\hfill
\includegraphics[width=.3\textwidth]{camera_ready/figures/occupations_k_50_curation_fairface_decisiontree_scientist.pdf}\hfill
\includegraphics[width=.3\textwidth]{camera_ready/figures/occupations_k_50_curation_fairface_decisiontree_teacher.pdf} \\
\caption{\small Fraction of Top-$k$ cosine similarity vs. Fraction of Top-$k$ MPR over decision trees for 10 queries for the Occupations dataset. Values are normalized so Top-$k$ achieves point (1,1) in each case.  \methodname~Pareto-dominates baselines and significantly closes the MPR gap.}
\label{fig:occupations_trees}

\end{figure}

\begin{figure}[h!]
\centering
\includegraphics[width=.25\textwidth]{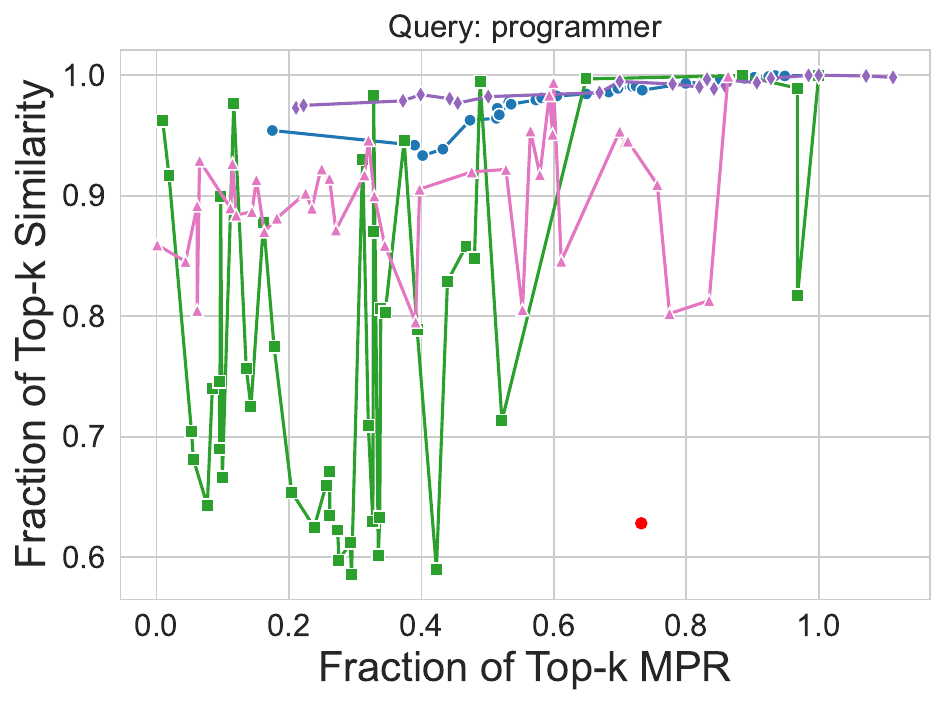}\hfill
\includegraphics[width=.25\textwidth]{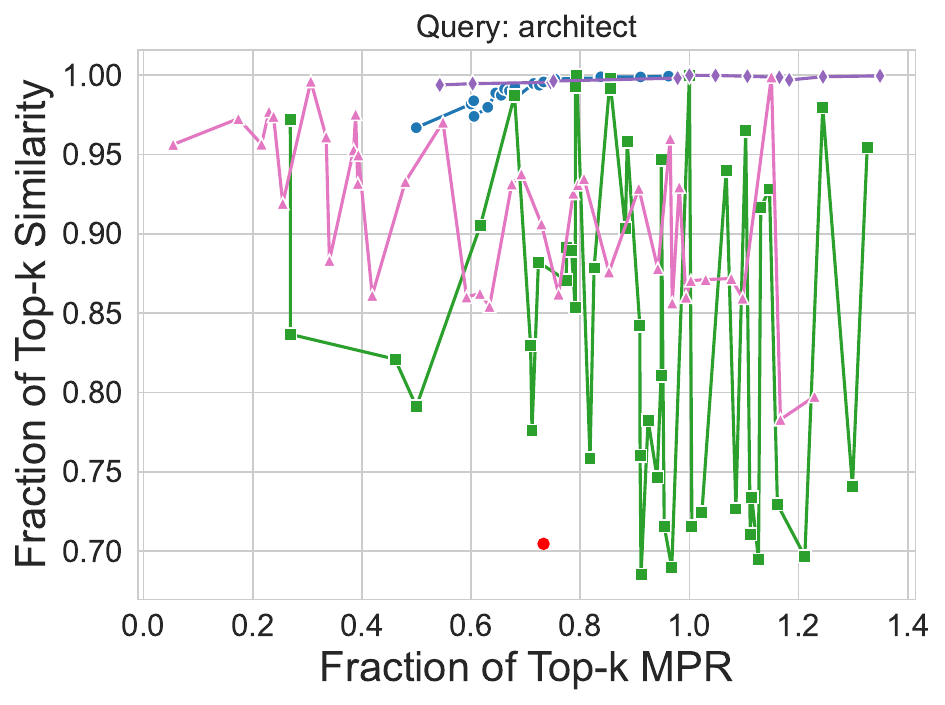}\hfill
\includegraphics[width=.25\textwidth]{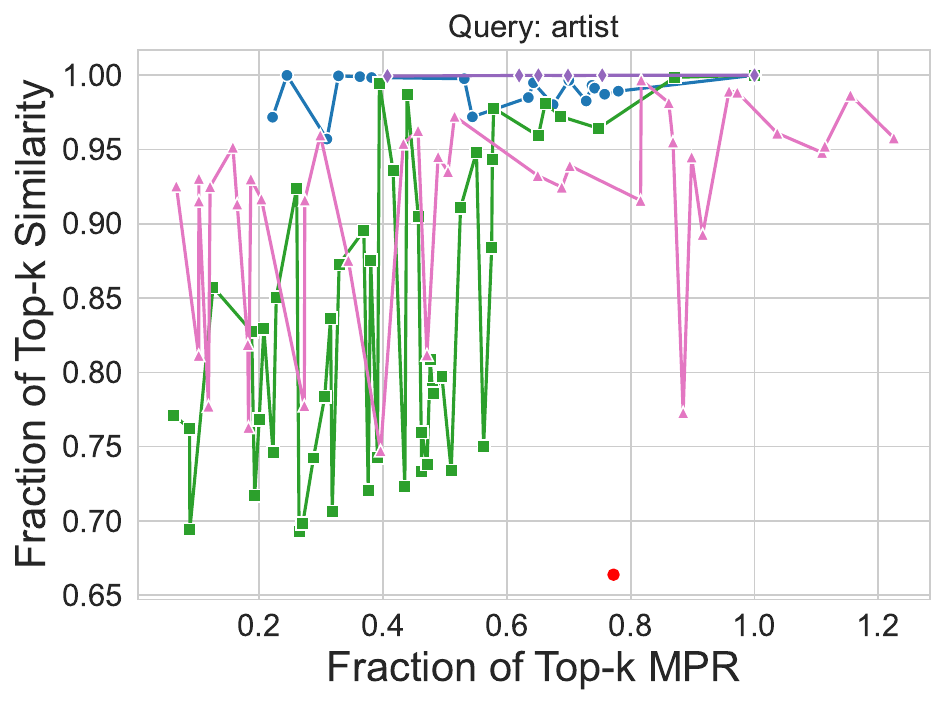}\hfill
\includegraphics[width=.25\textwidth]{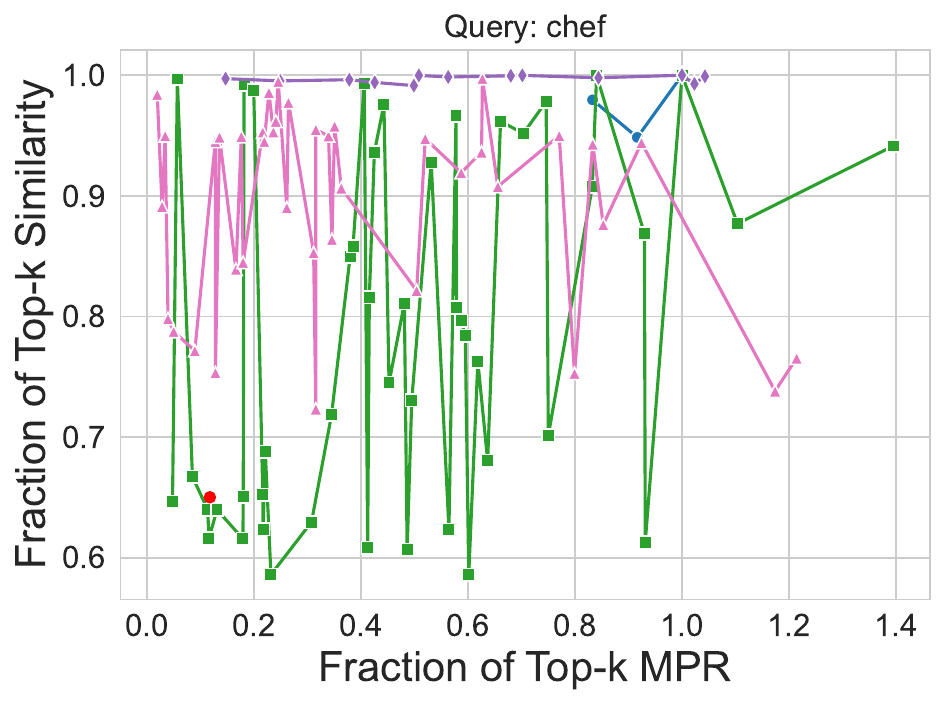} \\
\includegraphics[width=.3\textwidth]{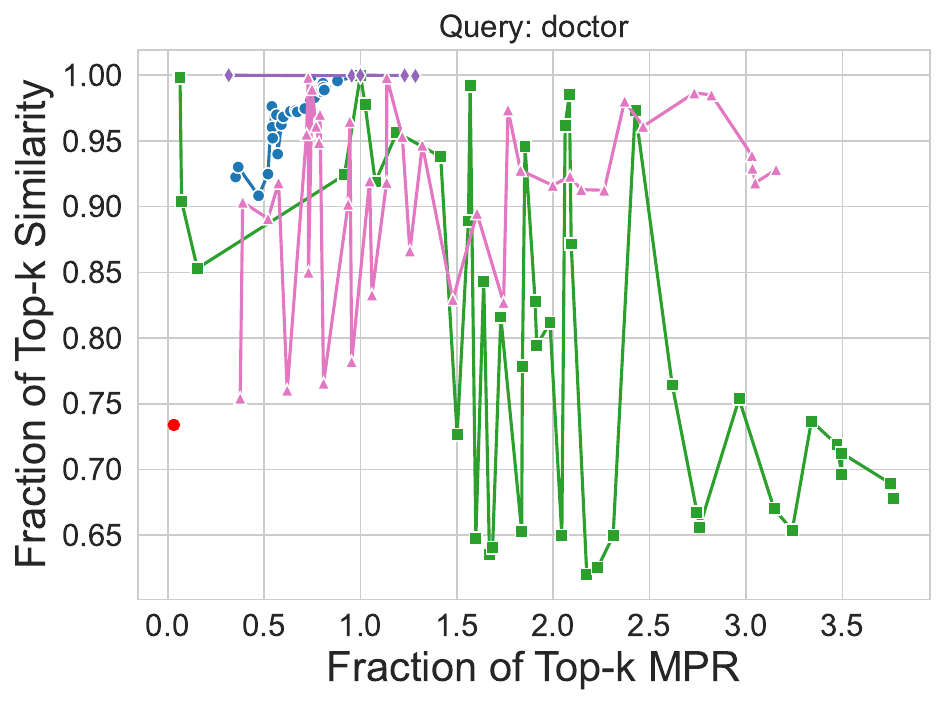}\hfill
\includegraphics[width=.3\textwidth]{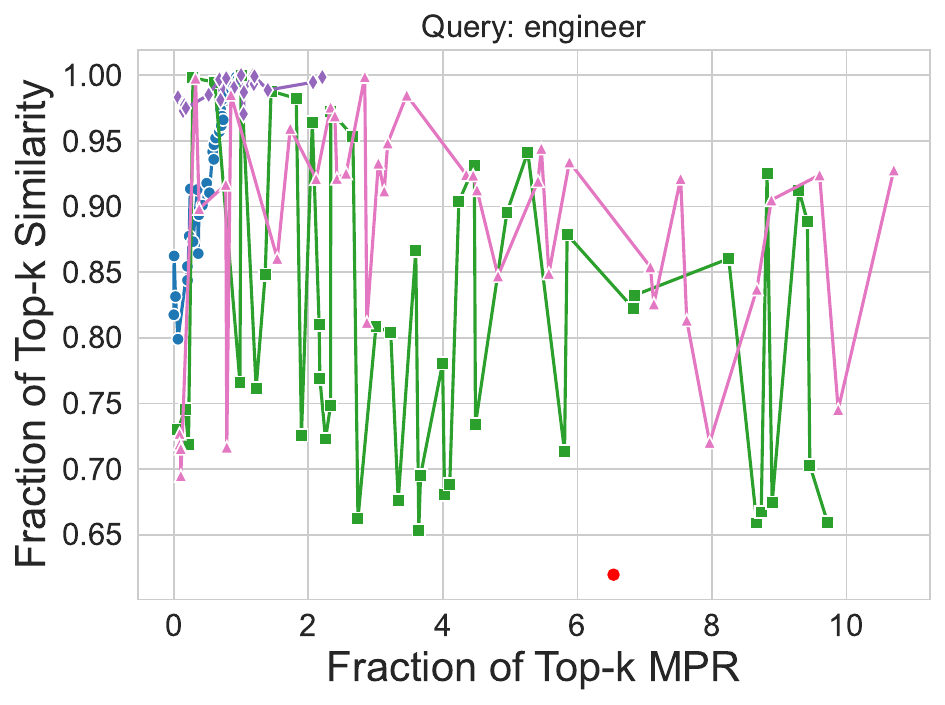}\hfill
\includegraphics[width=.3\textwidth]{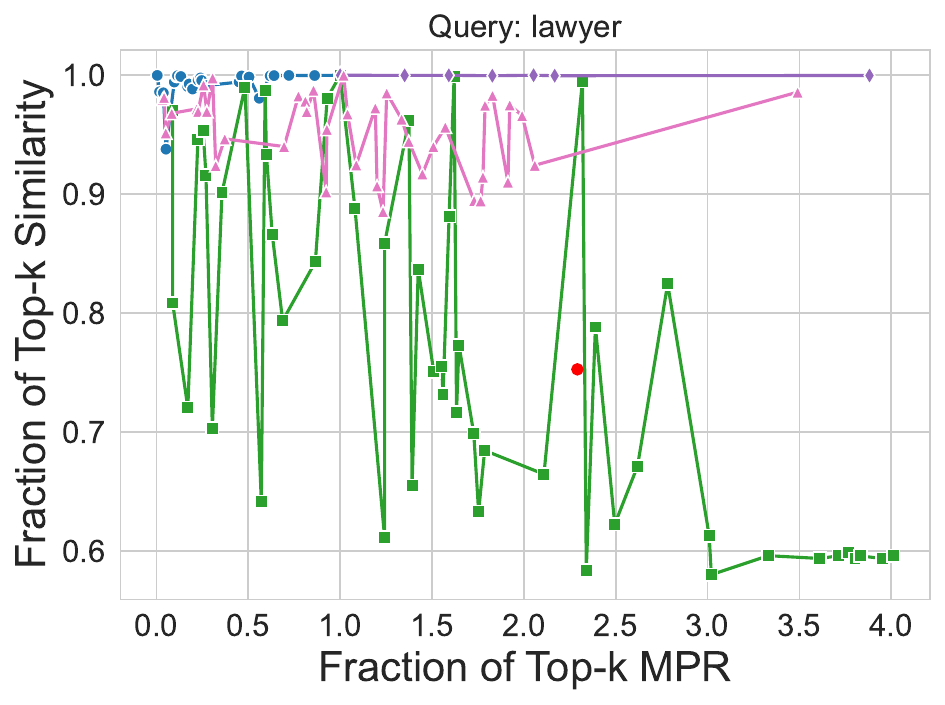} \\
\includegraphics[width=.3\textwidth]{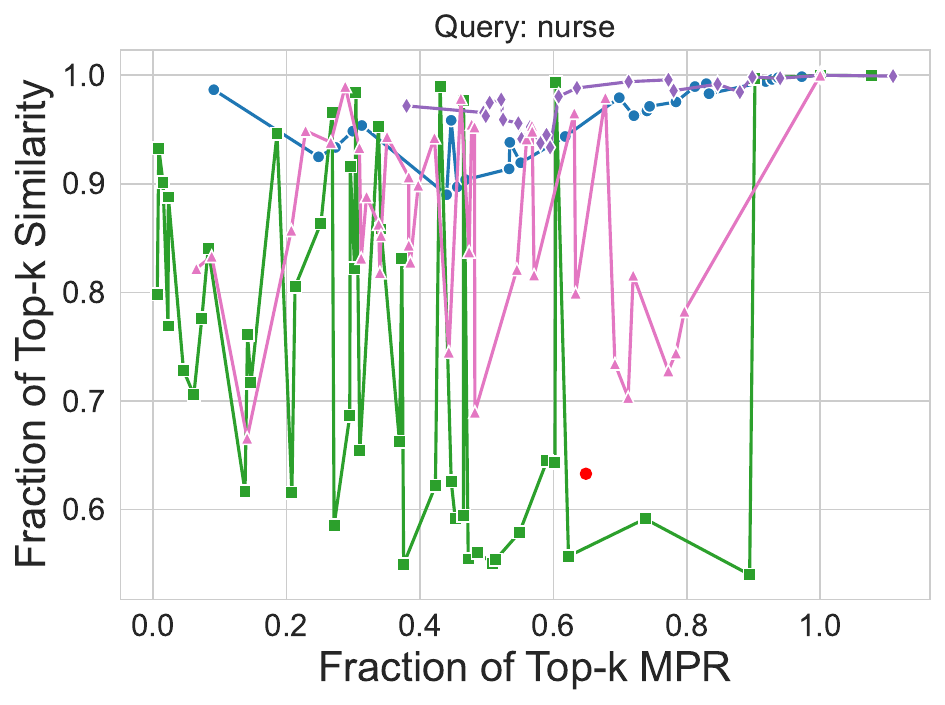}\hfill
\includegraphics[width=.3\textwidth]{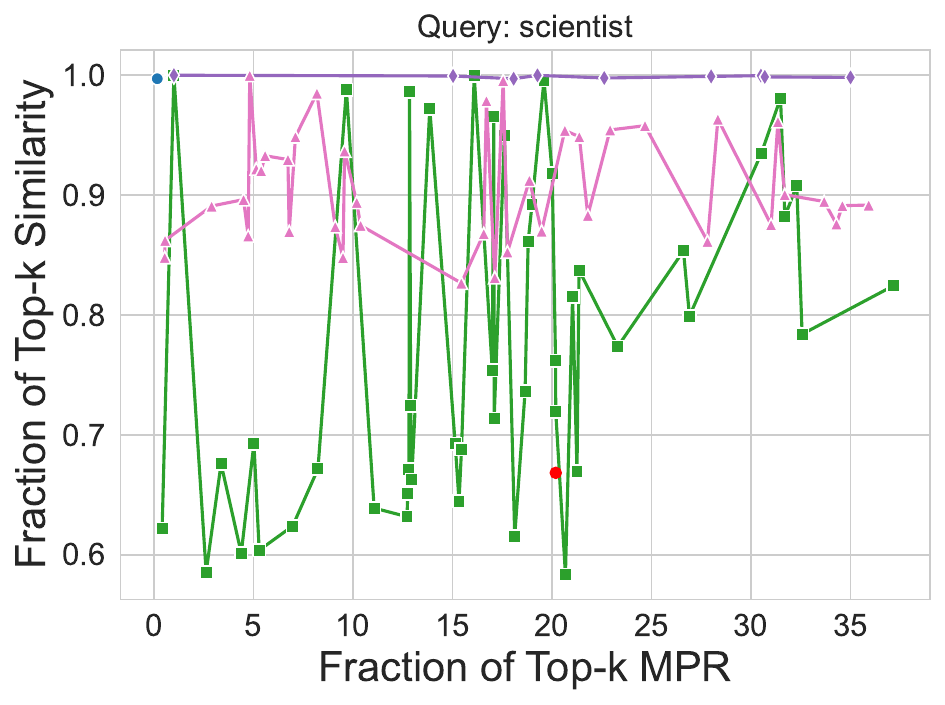}\hfill
\includegraphics[width=.3\textwidth]{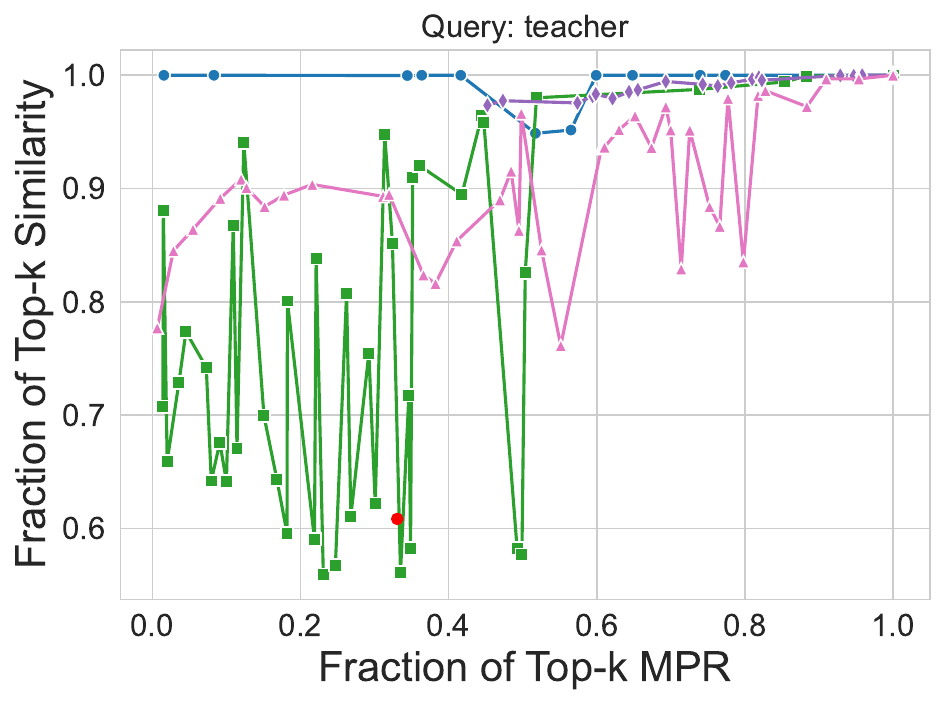} \\
\caption{\small Fraction of Top-$k$ cosine similarity vs Fraction of Top-$k$ MPR over 2-layer MLPs (hidden dimension 64) for 10 queries for the CelebA dataset. Values are normalized so Top-$k$ achieves point (1,1) in each case.  \methodname~Pareto-dominates baselines and significantly closes the MPR gap.}
\label{fig:celeba_mlps}

\end{figure}

\begin{figure}[h!]
 \begin{minipage}[t]{\textwidth}
        \centering
        \includegraphics[width=.6\textwidth]{camera_ready/figures/camera_ready_legend.pdf}
    \end{minipage}
\centering
\includegraphics[width=.25\textwidth]{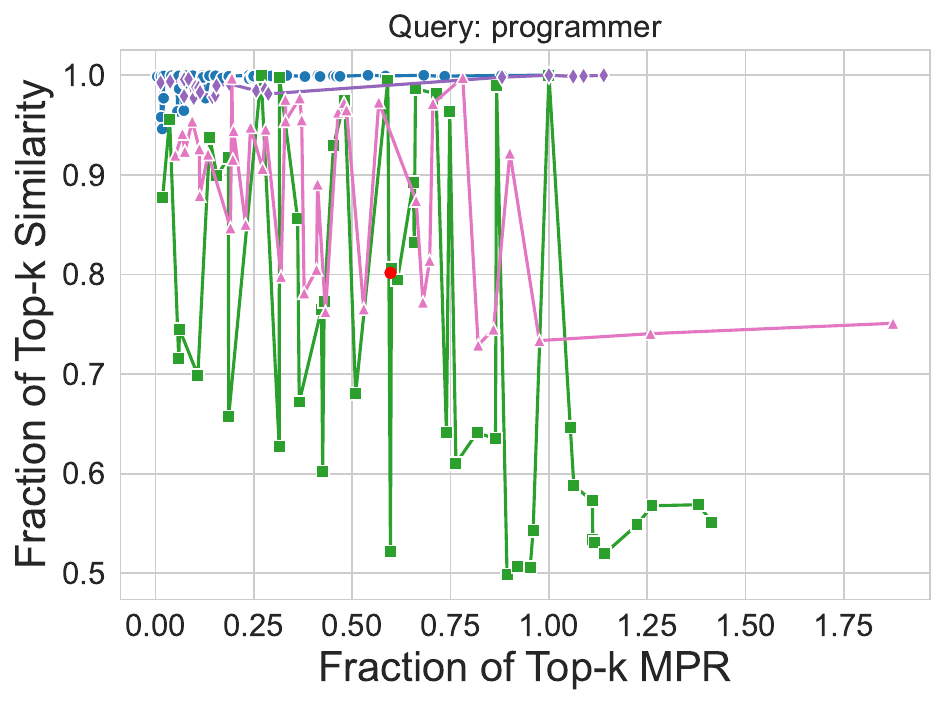}\hfill
\includegraphics[width=.25\textwidth]{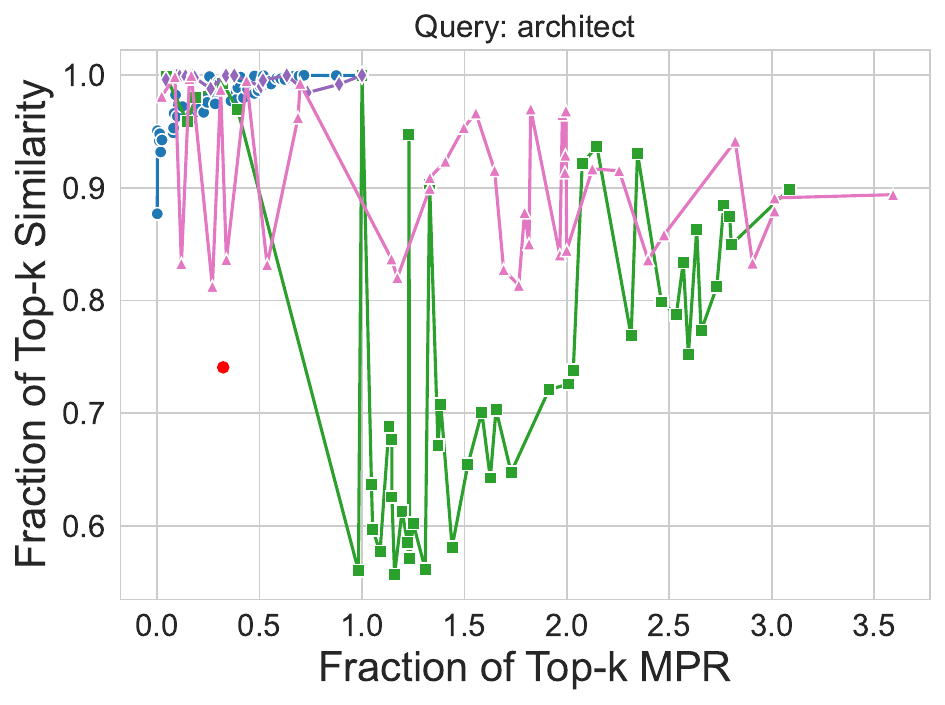}\hfill
\includegraphics[width=.25\textwidth]{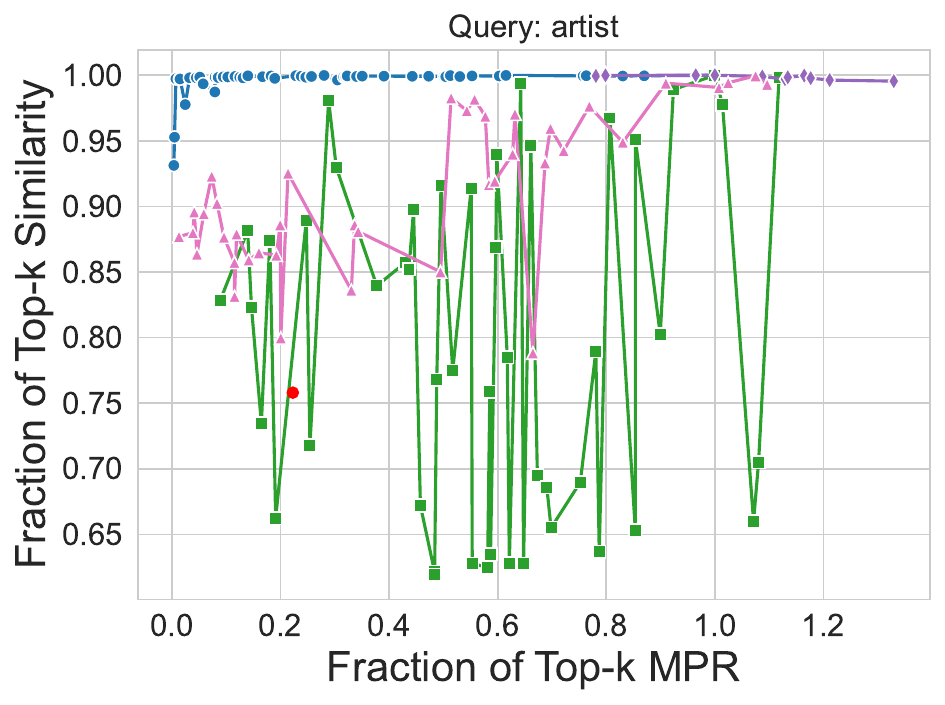}\hfill
\includegraphics[width=.25\textwidth]{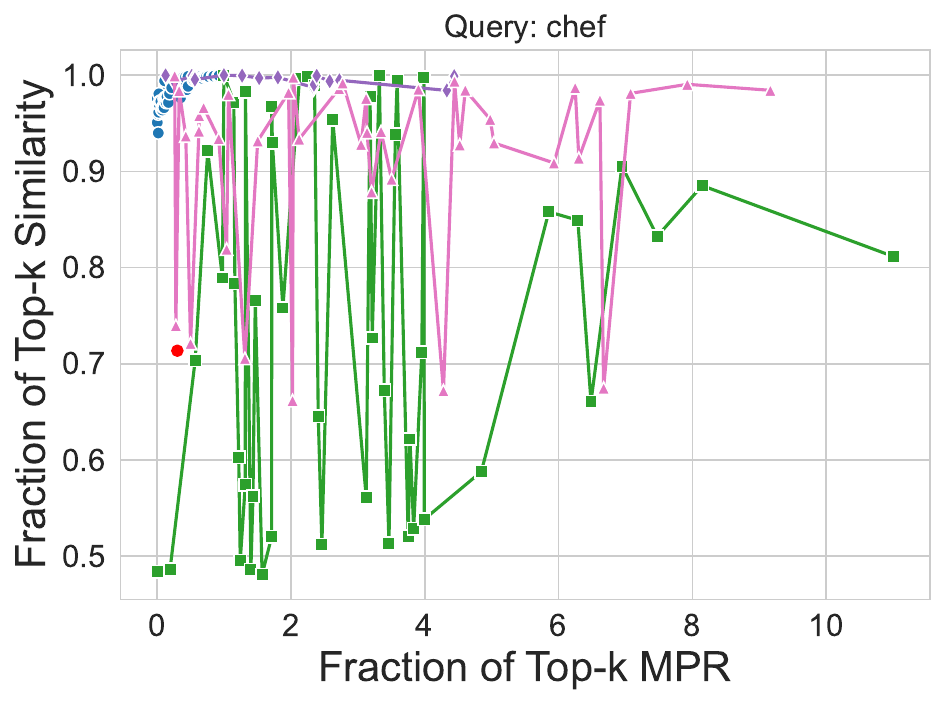} \\
\includegraphics[width=.3\textwidth]{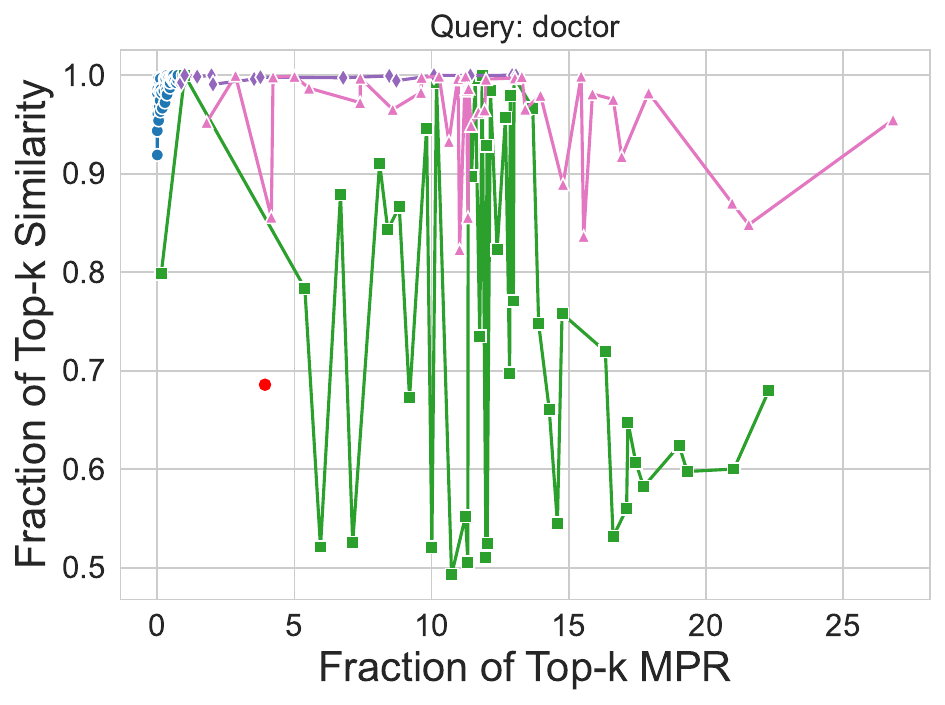}\hfill
\includegraphics[width=.3\textwidth]{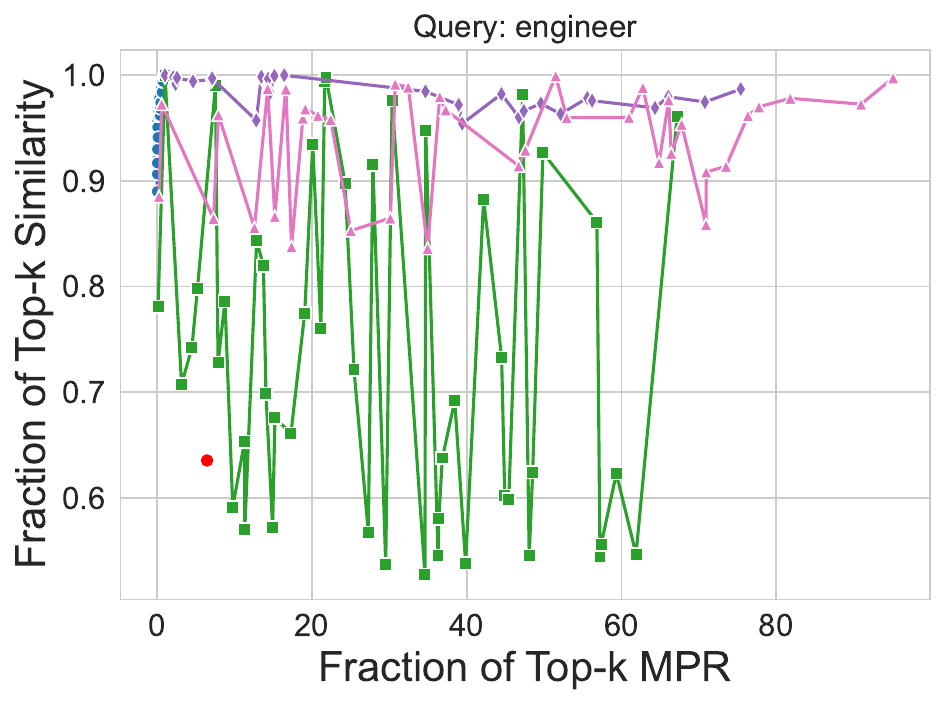}\hfill
\includegraphics[width=.3\textwidth]{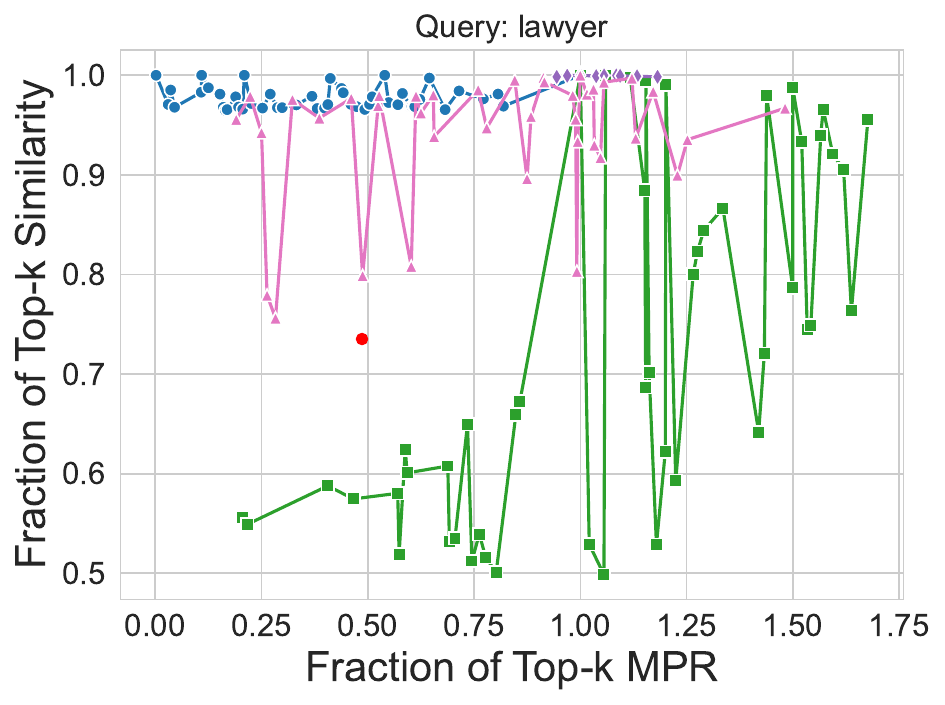} \\
\includegraphics[width=.3\textwidth]{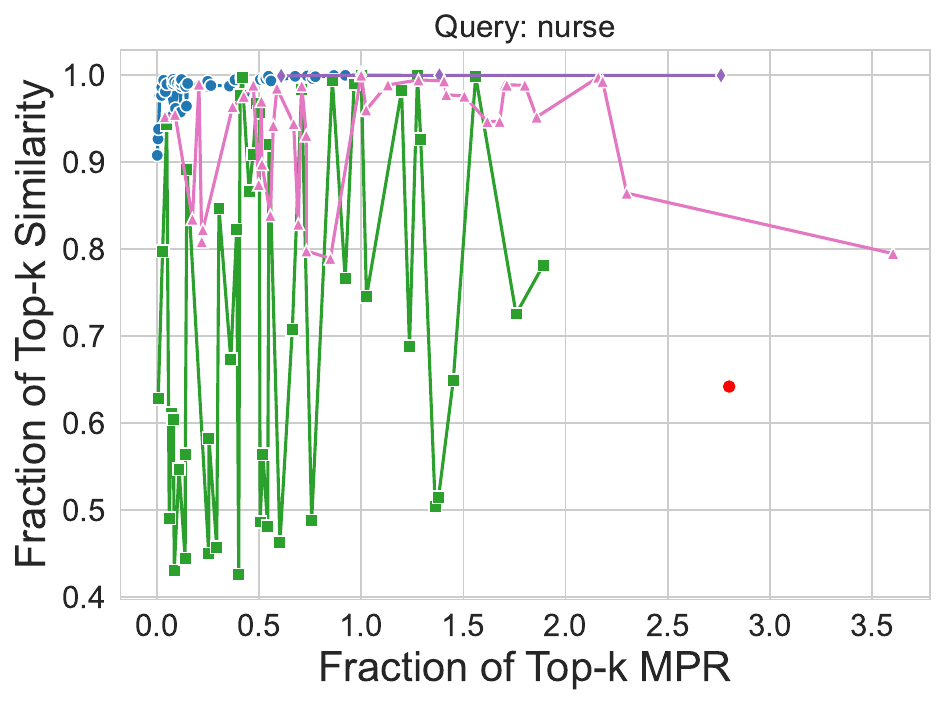}\hfill
\includegraphics[width=.3\textwidth]{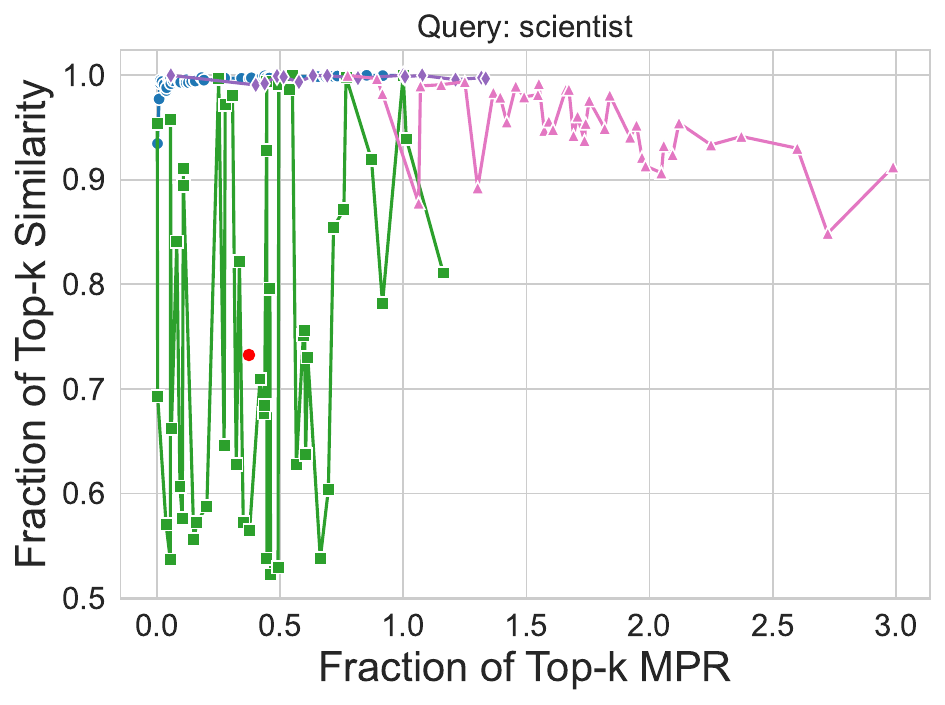}\hfill
\includegraphics[width=.3\textwidth]{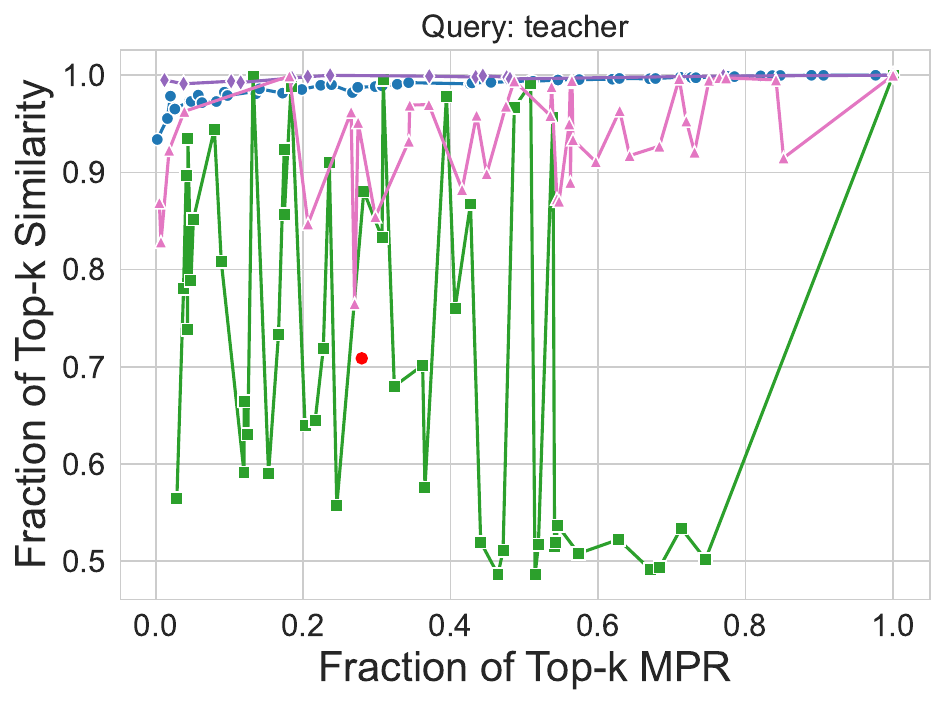} \\
\caption{\small Fraction of Top-$k$ cosine similarity vs Fraction of Top-$k$ MPR over 2-layer MLPs (hidden dimension 64) for 10 queries for the UTKFace dataset. Values are normalized so Top-$k$ achieves point (1,1) in each case.  \methodname~Pareto-dominates baselines and significantly closes the MPR gap.}
\label{fig:utkface_mlps}

\end{figure}

\begin{figure}[h!]
\centering
\includegraphics[width=.25\textwidth]{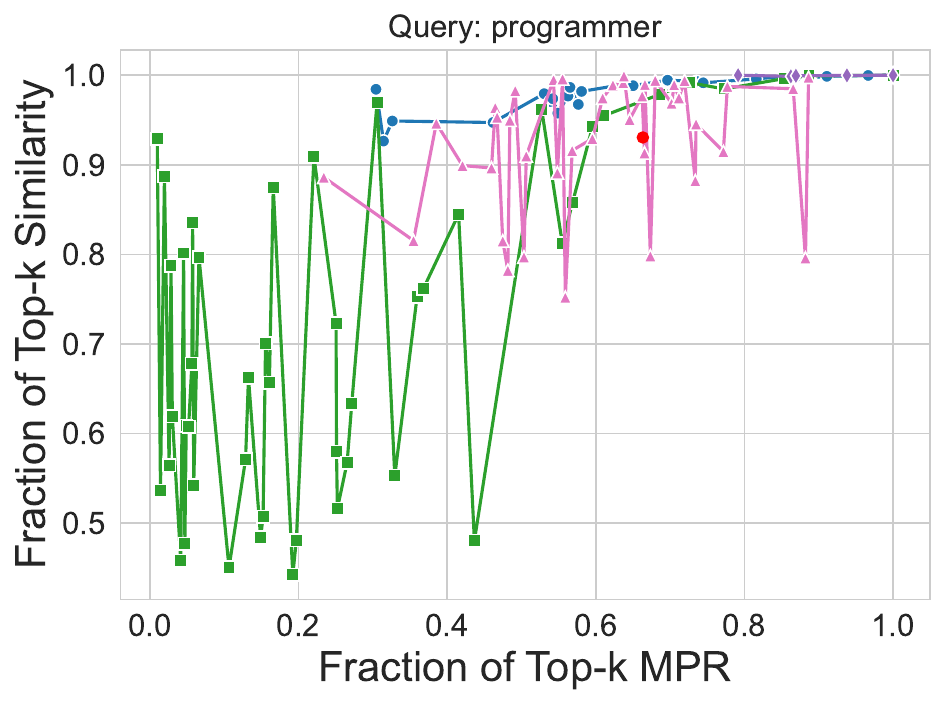}\hfill
\includegraphics[width=.25\textwidth]{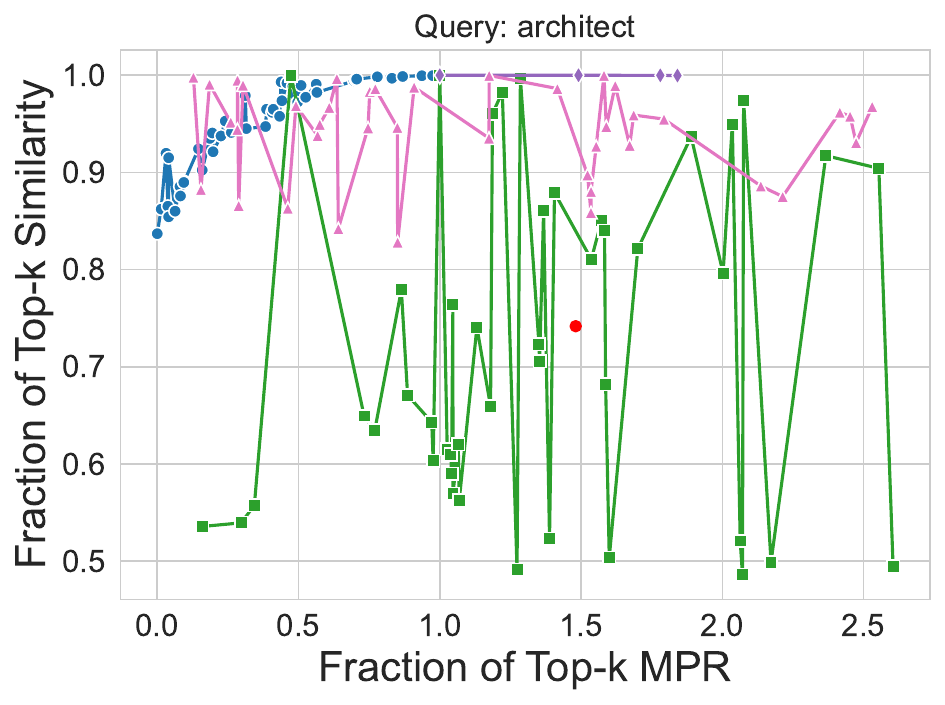}\hfill
\includegraphics[width=.25\textwidth]{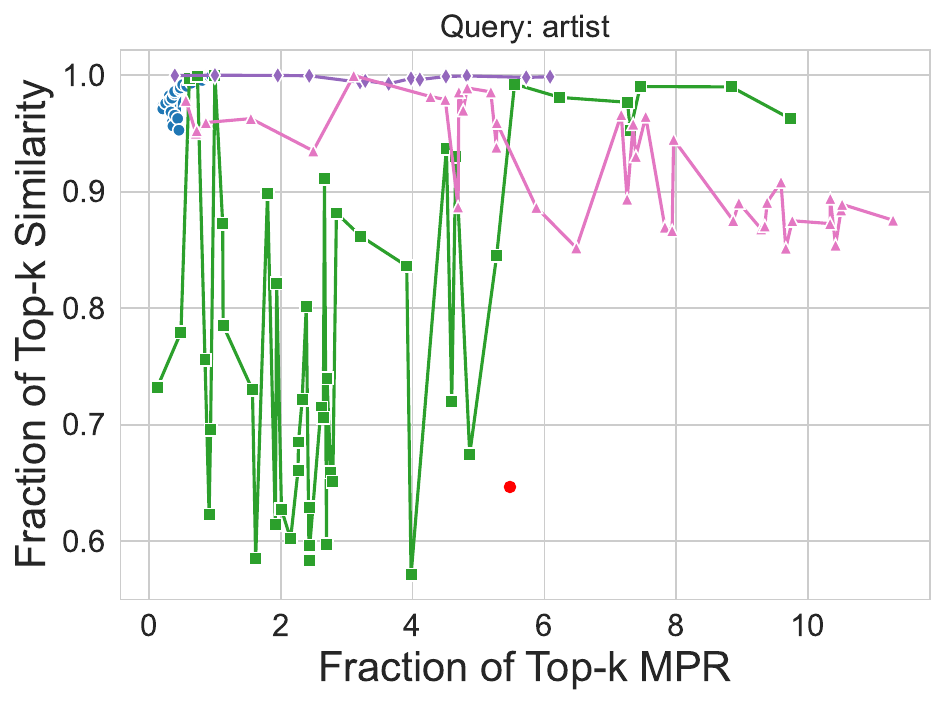}\hfill
\includegraphics[width=.25\textwidth]{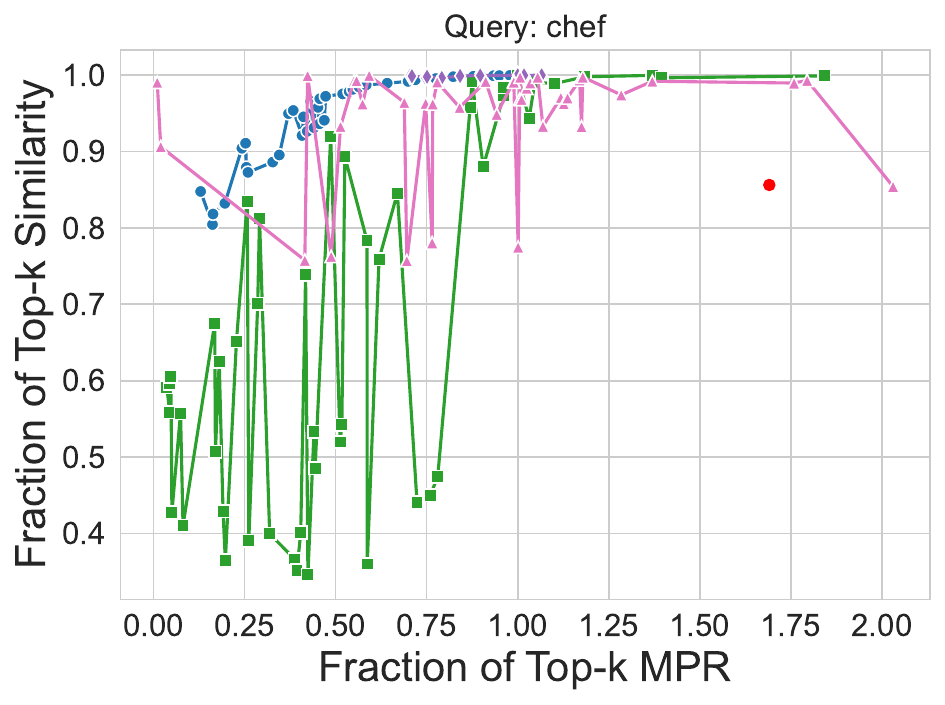} \\
\includegraphics[width=.3\textwidth]{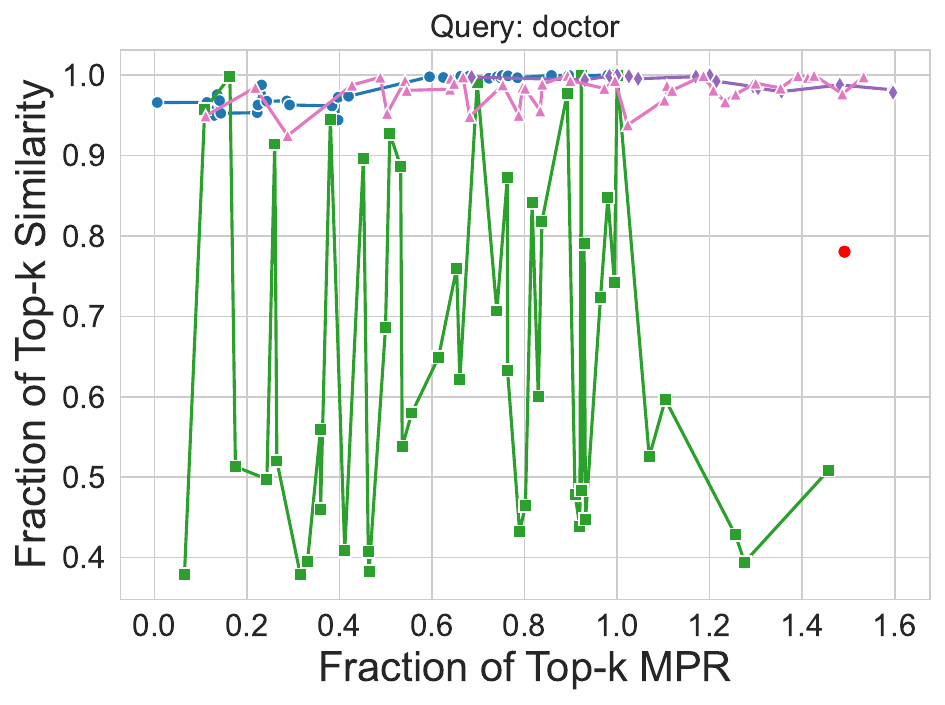}\hfill
\includegraphics[width=.3\textwidth]{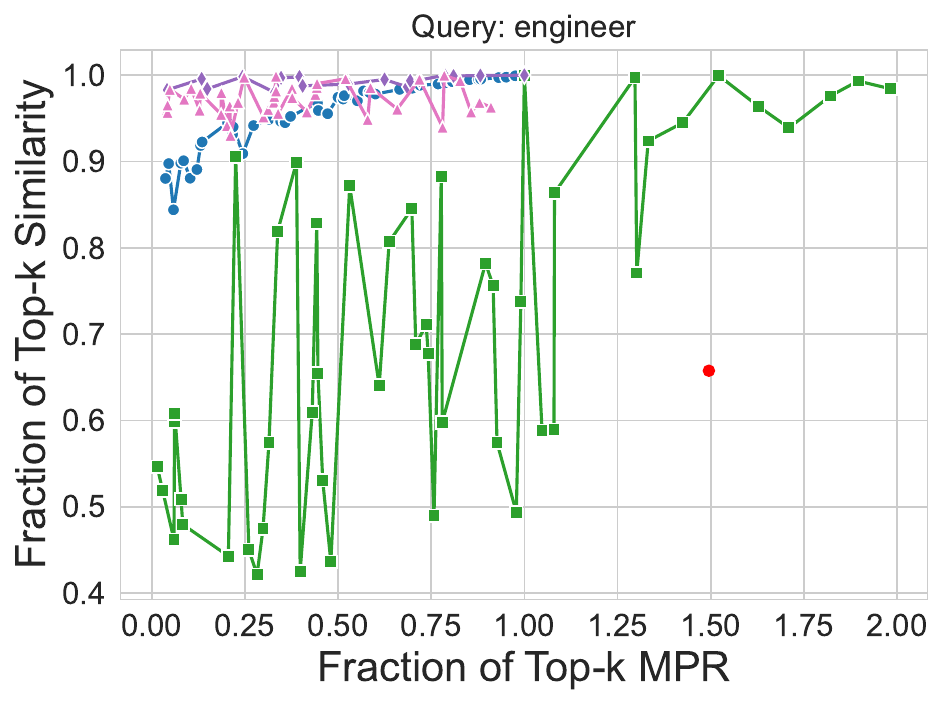}\hfill
\includegraphics[width=.3\textwidth]{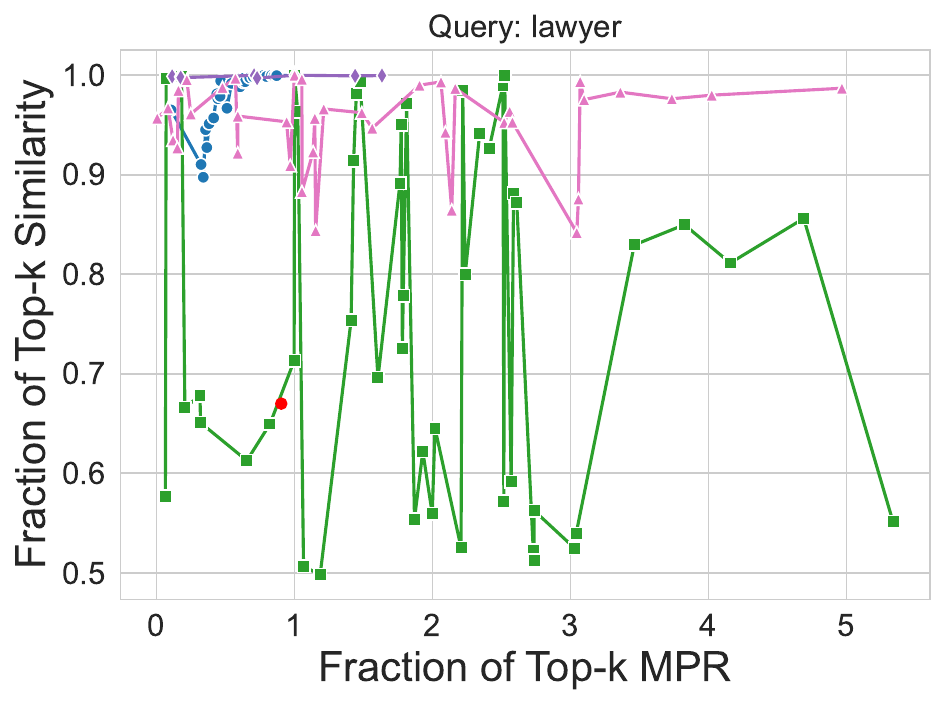} \\
\includegraphics[width=.3\textwidth]{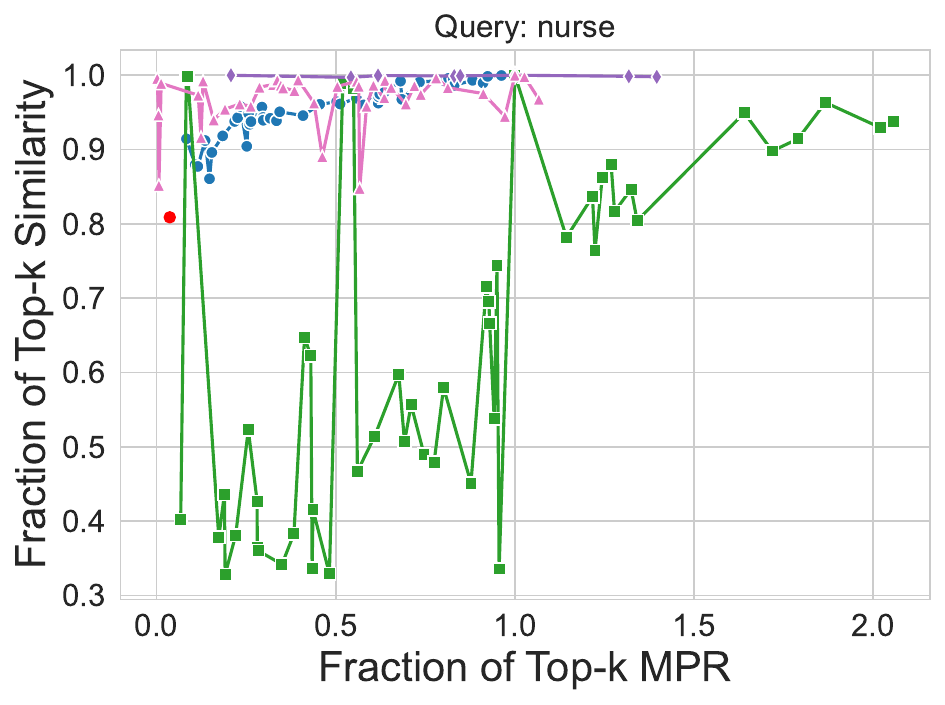}\hfill
\includegraphics[width=.3\textwidth]{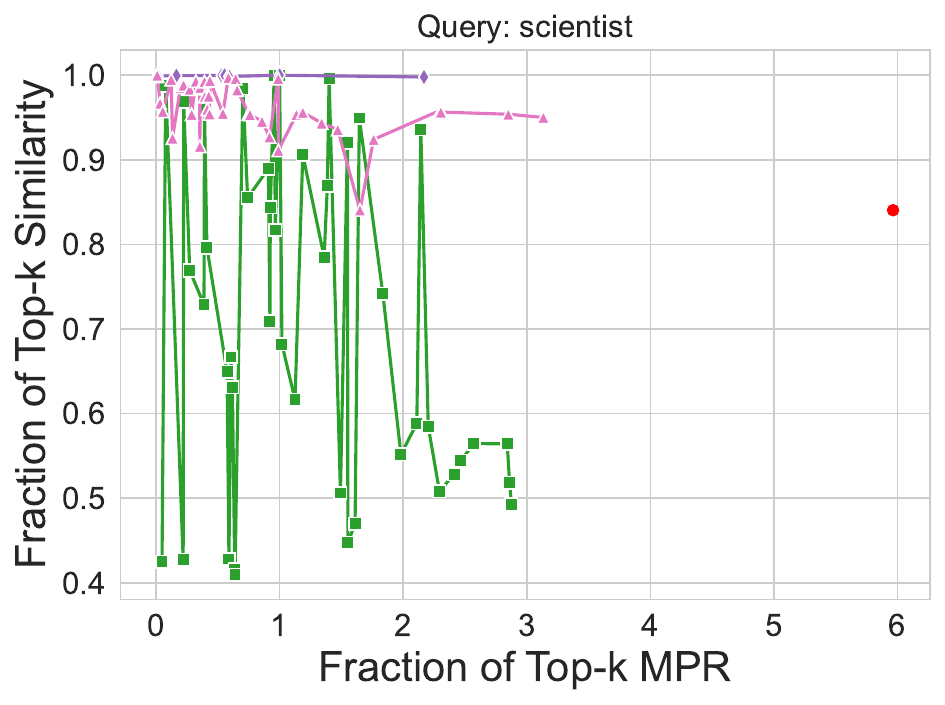}\hfill
\includegraphics[width=.3\textwidth]{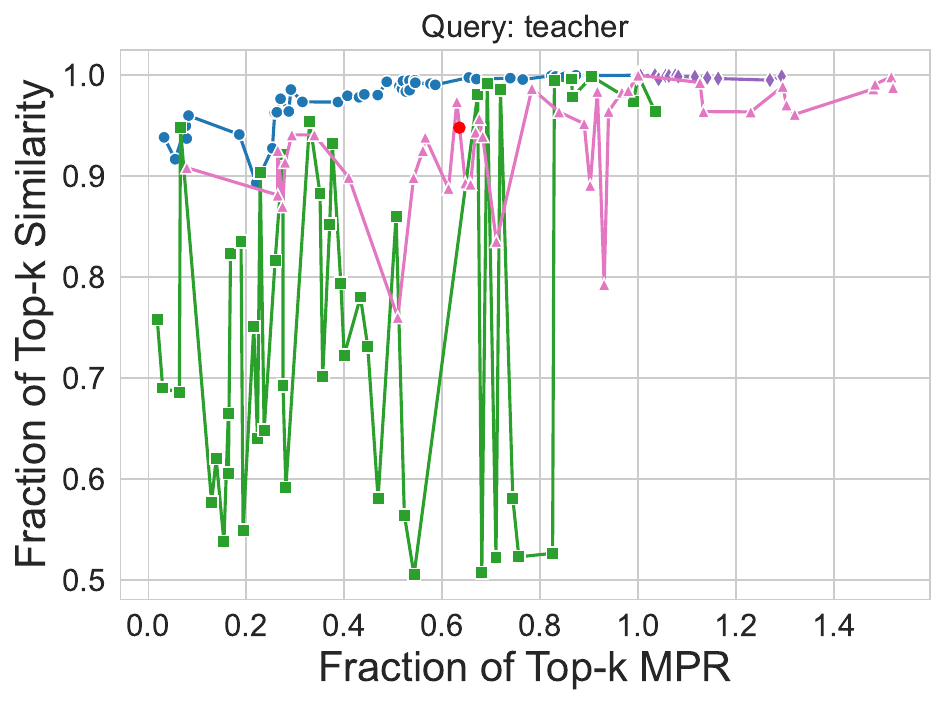} \\
\caption{\small Fraction of Top-$k$ cosine similarity vs Fraction of Top-$k$ MPR over 2-layer MLPs (hidden dimension 64) for 10 queries for the Occupations dataset. Values are normalized so Top-$k$ achieves point (1,1) in each case.  \methodname~Pareto-dominates baselines and significantly closes the MPR gap.}
\label{fig:occupations_mlps}

\end{figure}

\begin{figure}[h!]
 \begin{minipage}[t]{\textwidth}
        \centering
        \includegraphics[width=.6\textwidth]{camera_ready/figures/camera_ready_legend.pdf}
    \end{minipage}
\centering
\includegraphics[width=.3\textwidth]{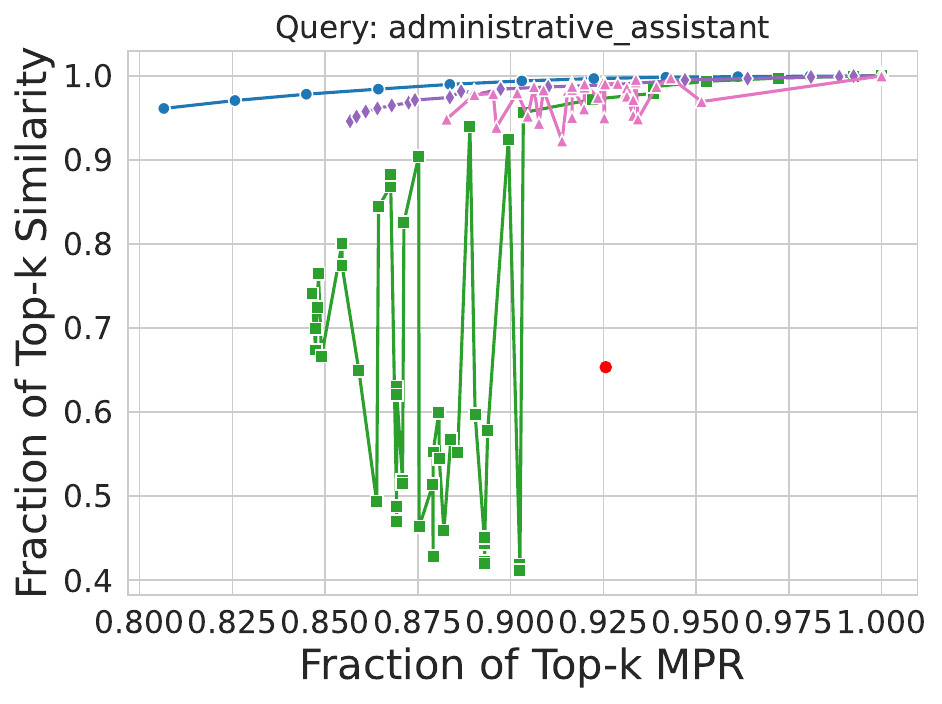}\hfill
\includegraphics[width=.3\textwidth]{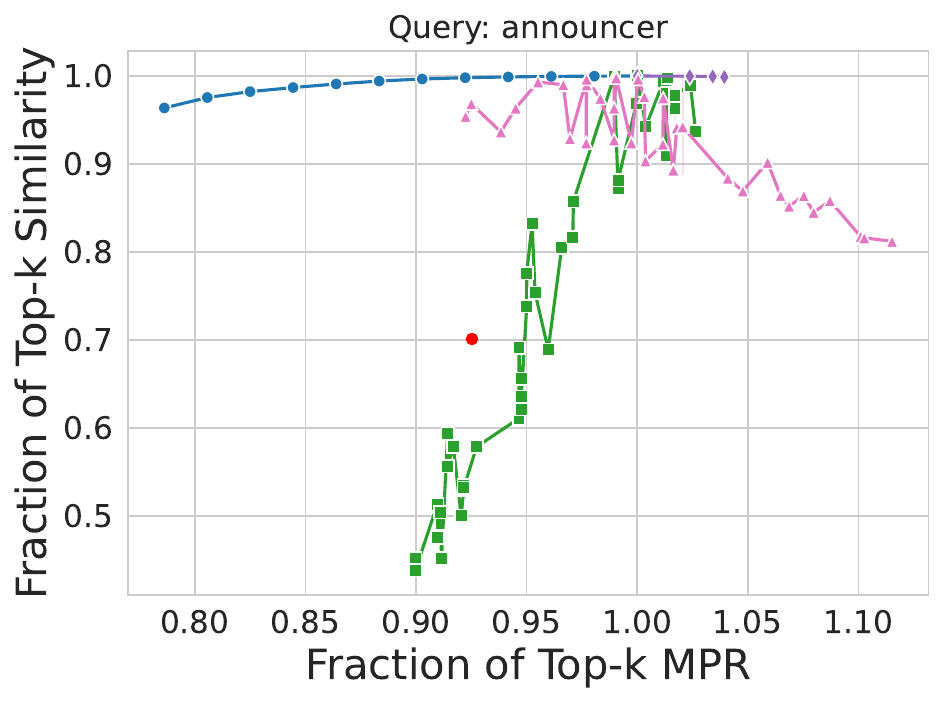}\hfill
\includegraphics[width=.3\textwidth]{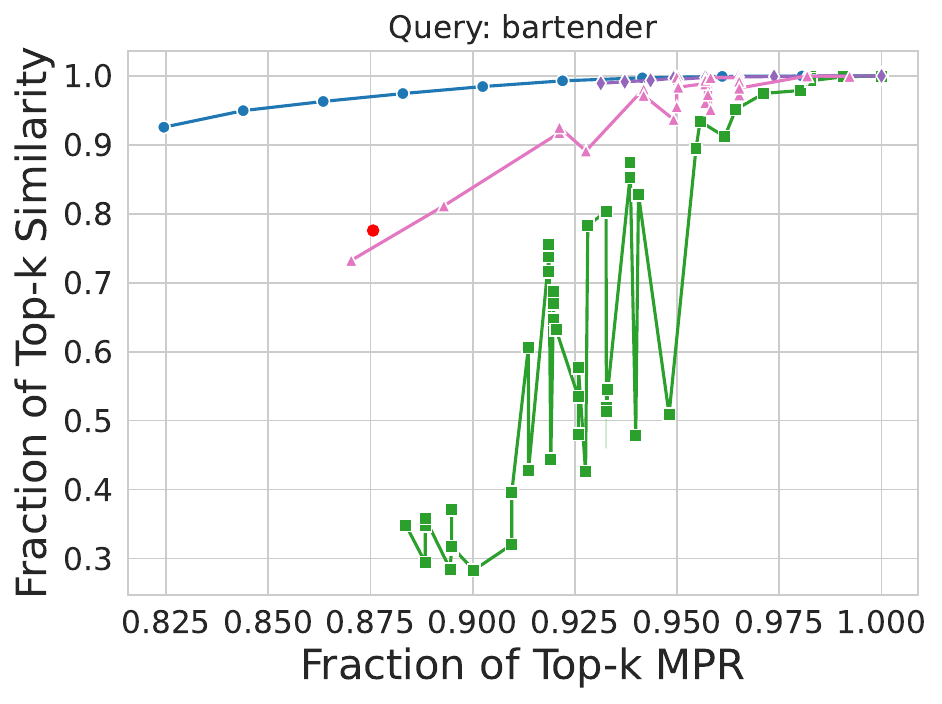} \\
\includegraphics[width=.3\textwidth]{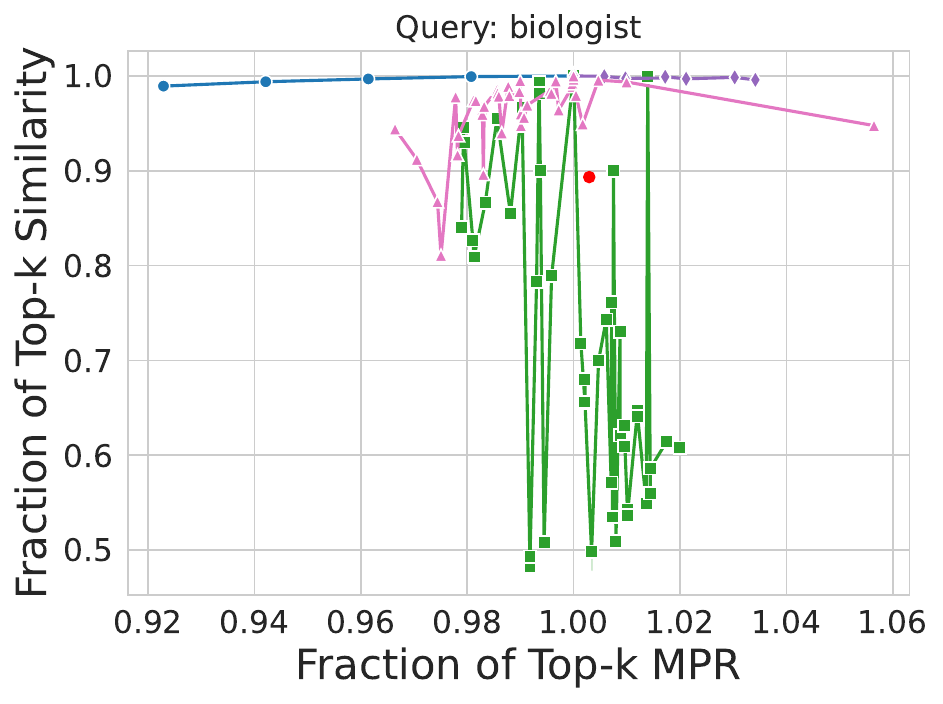}\hfill
\includegraphics[width=.3\textwidth]{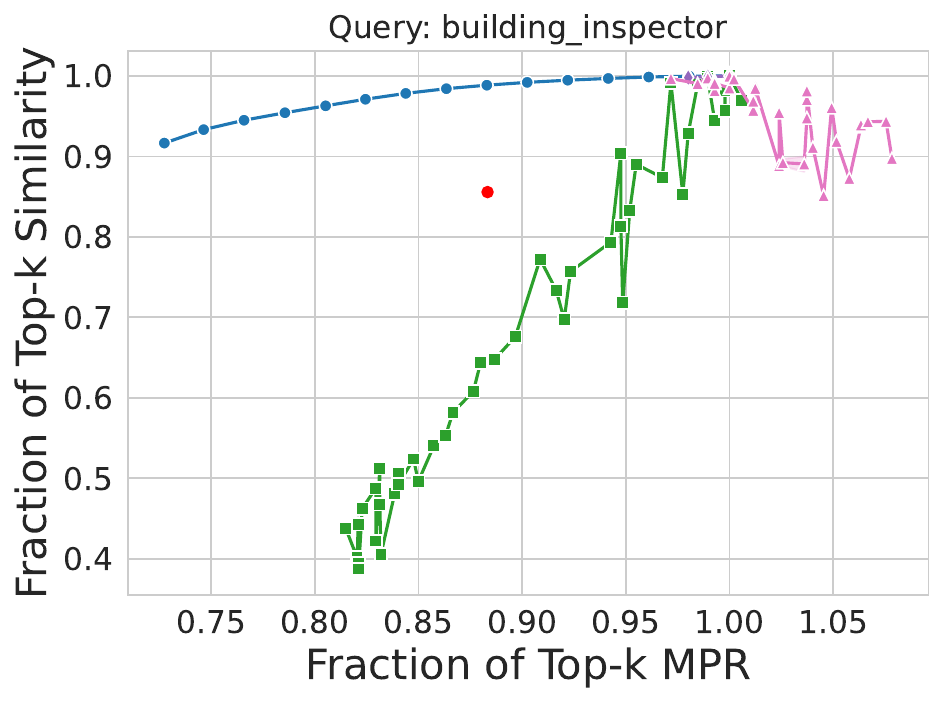}\hfill
\includegraphics[width=.3\textwidth]{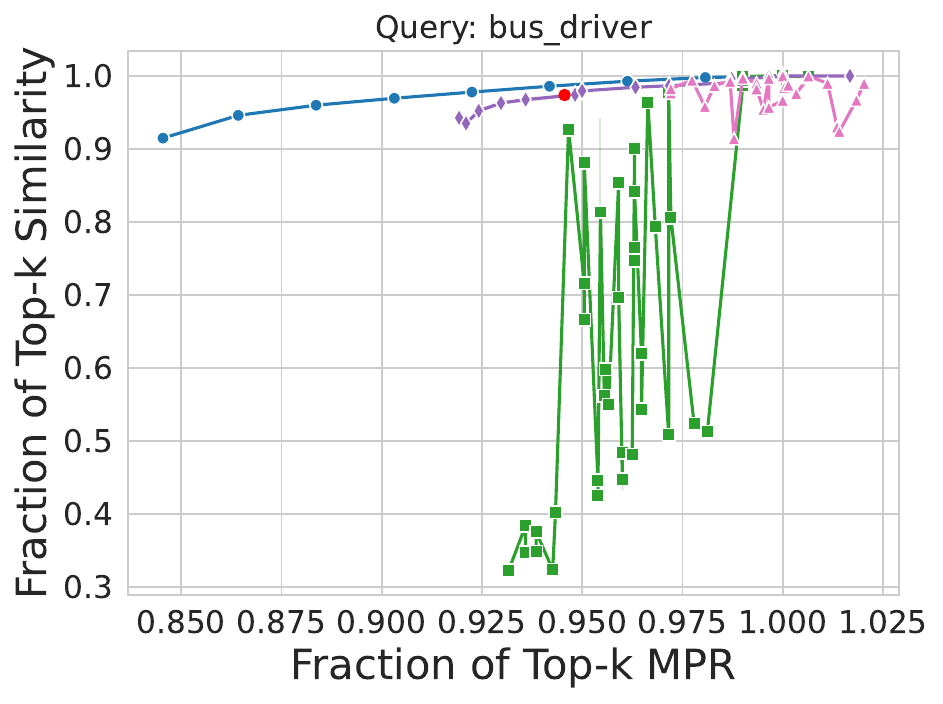} \\
\includegraphics[width=.3\textwidth]{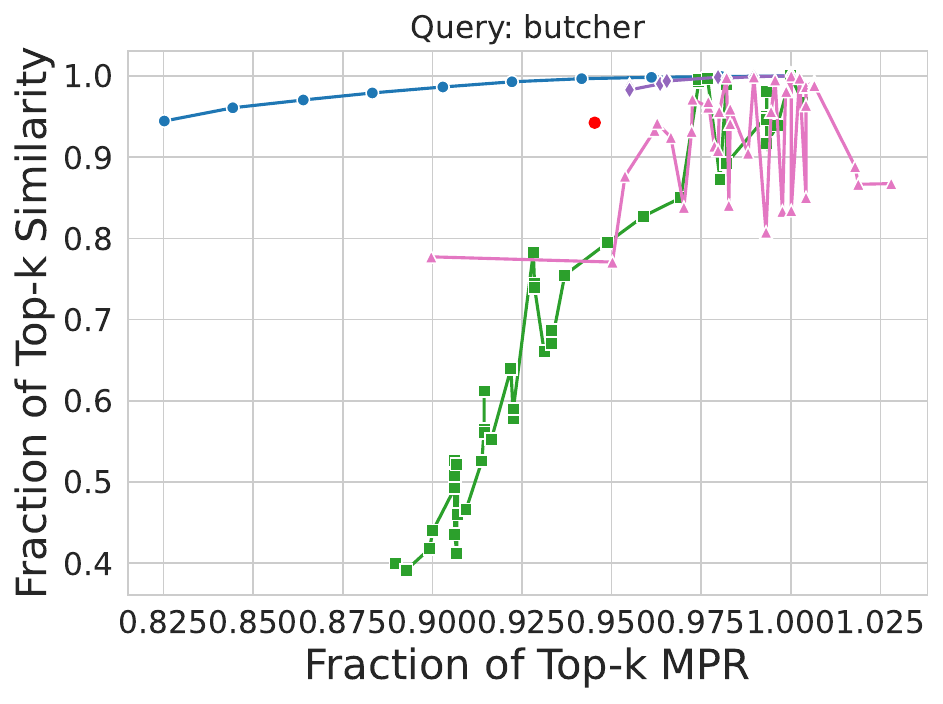}\hfill
\includegraphics[width=.3\textwidth]{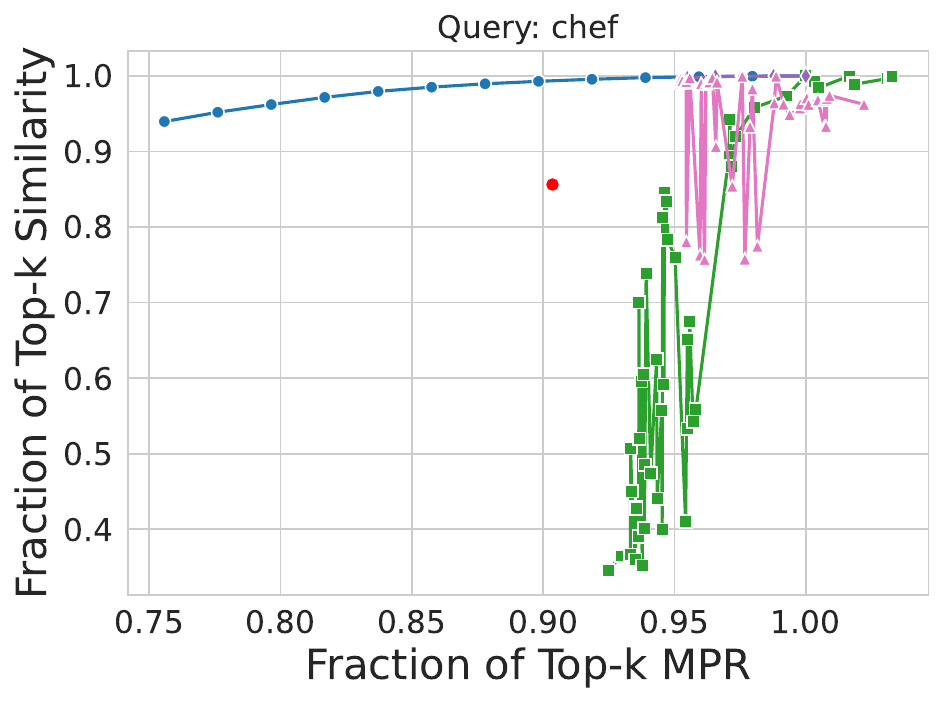}\hfill
\includegraphics[width=.3\textwidth]{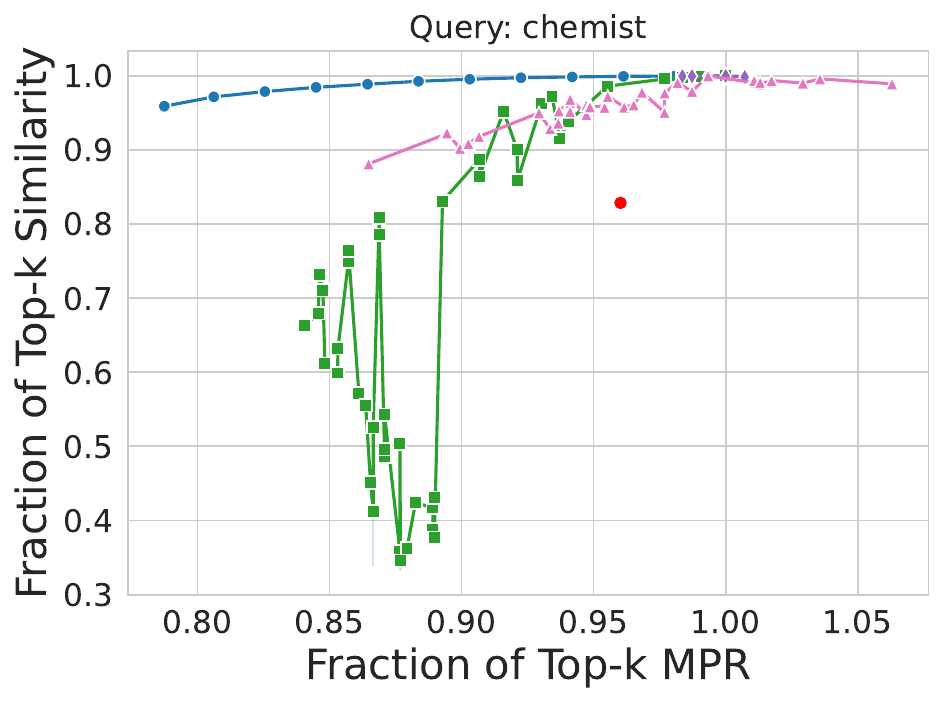} \\
\includegraphics[width=.3\textwidth]{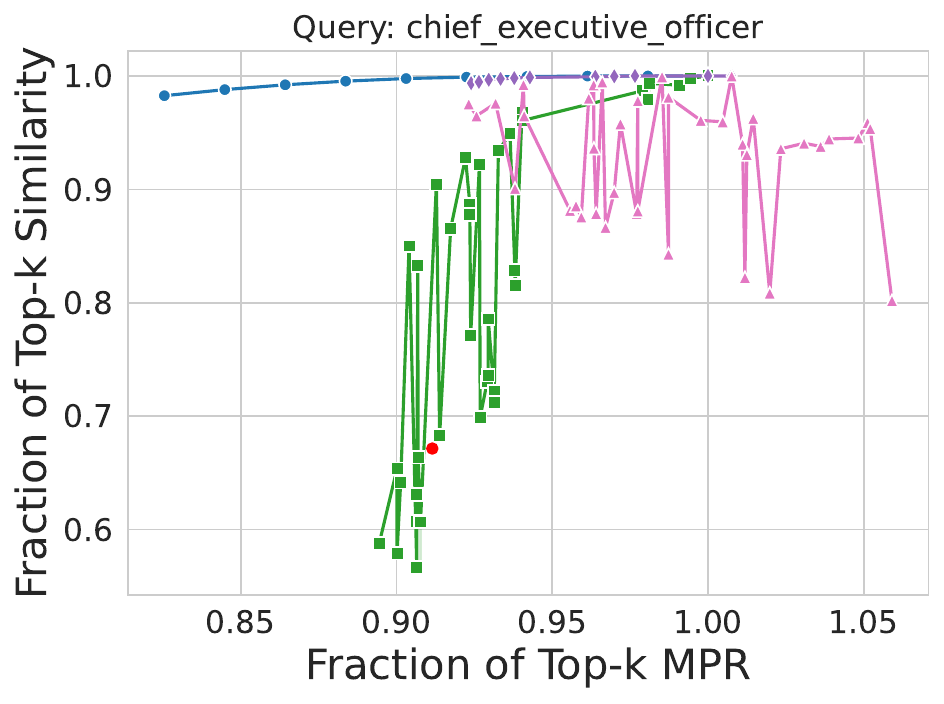}\hfill
\includegraphics[width=.3\textwidth]{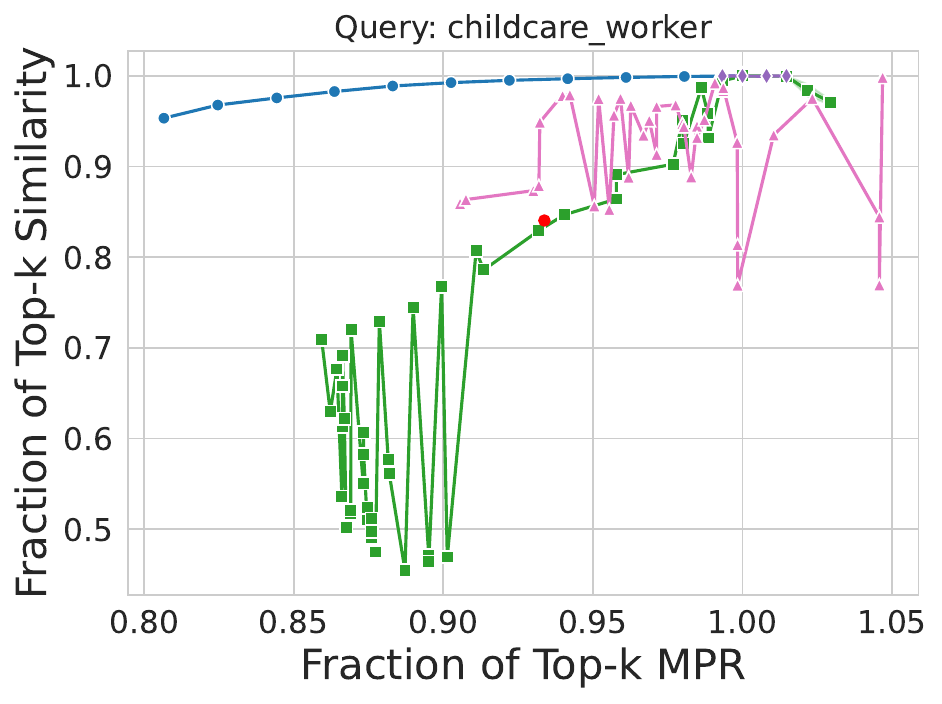}\hfill
\includegraphics[width=.3\textwidth]{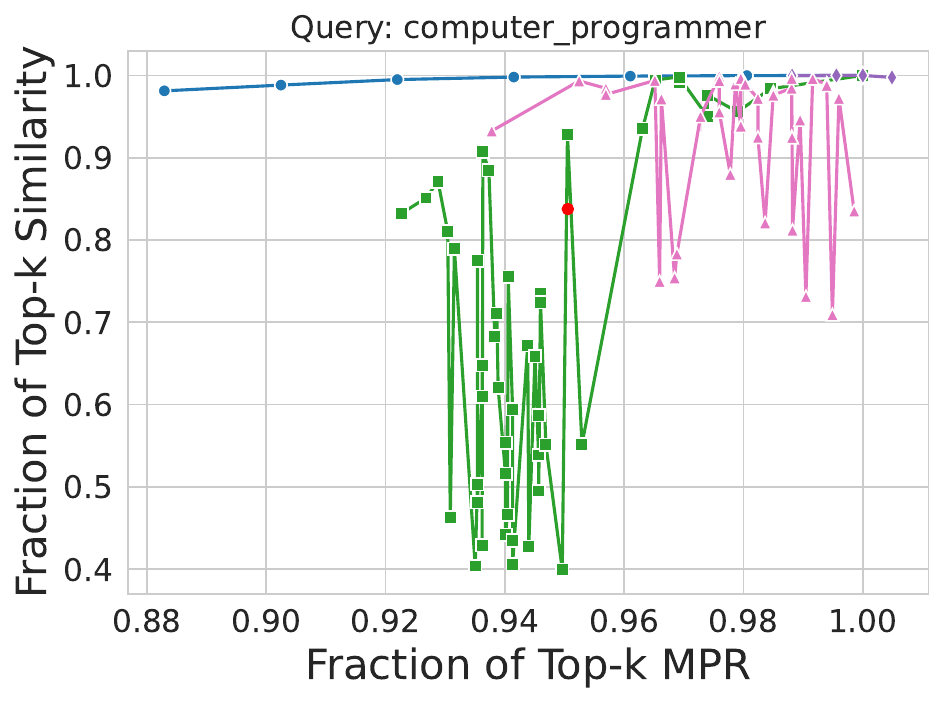} \\
\includegraphics[width=.3\textwidth]{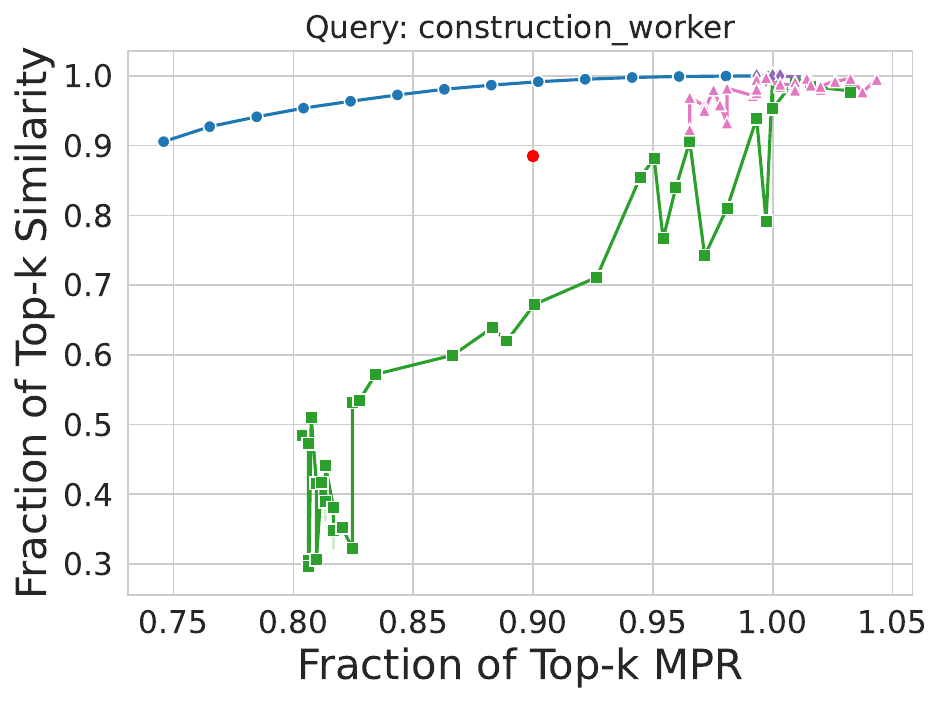}\hfill
\includegraphics[width=.3\textwidth]{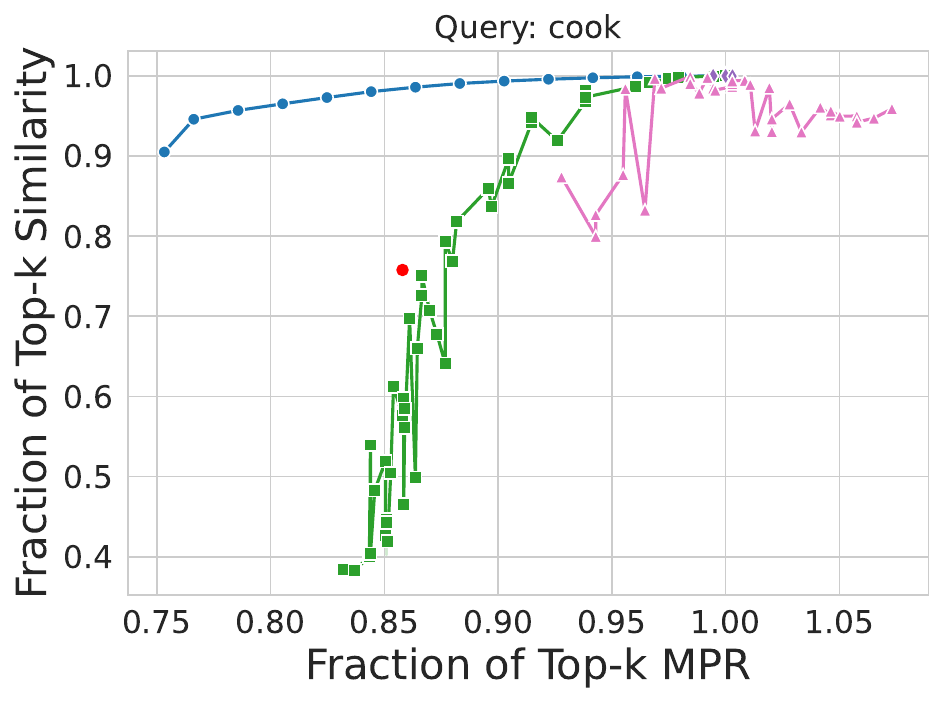}\hfill
\includegraphics[width=.3\textwidth]{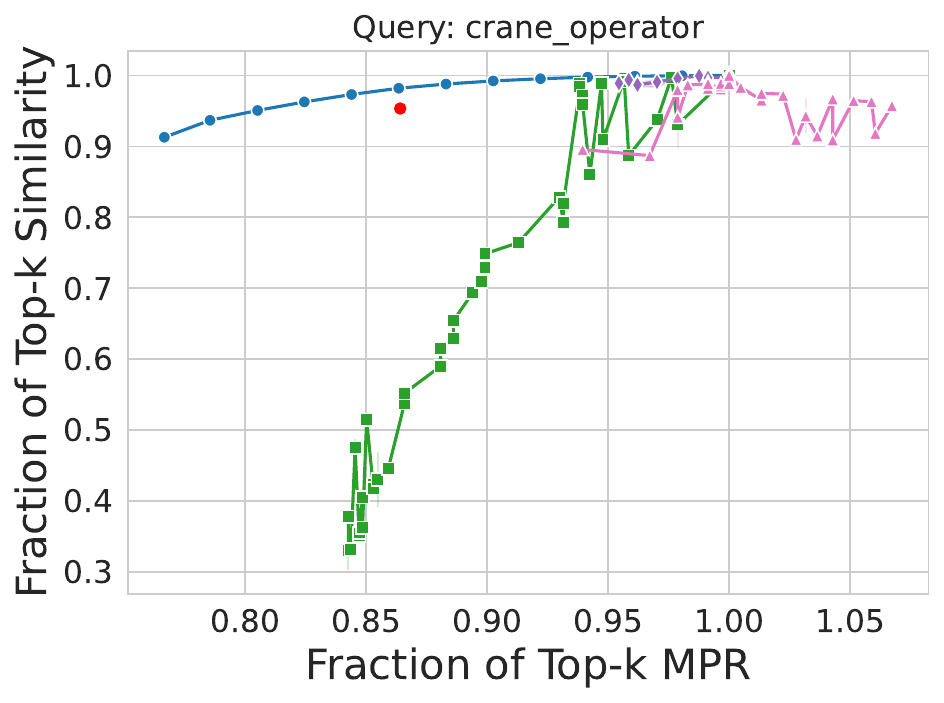} \\
\caption{\small Fraction of Top-$k$ cosine similarity vs Fraction of Top-$k$ MPR over linear regression models for 45 (First 15, see Figures \ref{fig:occupations_linreg_occupations2} and \ref{fig:occupations_linreg_occupations3}) occupations present in the Occupations dataset. Values are normalized so Top-$k$ achieves point (1,1) in each case.  \methodname~Pareto-dominates baselines and significantly closes the MPR gap.}
\label{fig:occupations_linreg_occupations1}
\end{figure}

\begin{figure}[h!]
 \begin{minipage}[t]{\textwidth}
        \centering
        \includegraphics[width=.6\textwidth]{camera_ready/figures/camera_ready_legend.pdf}
    \end{minipage}
\centering
\includegraphics[width=.3\textwidth]{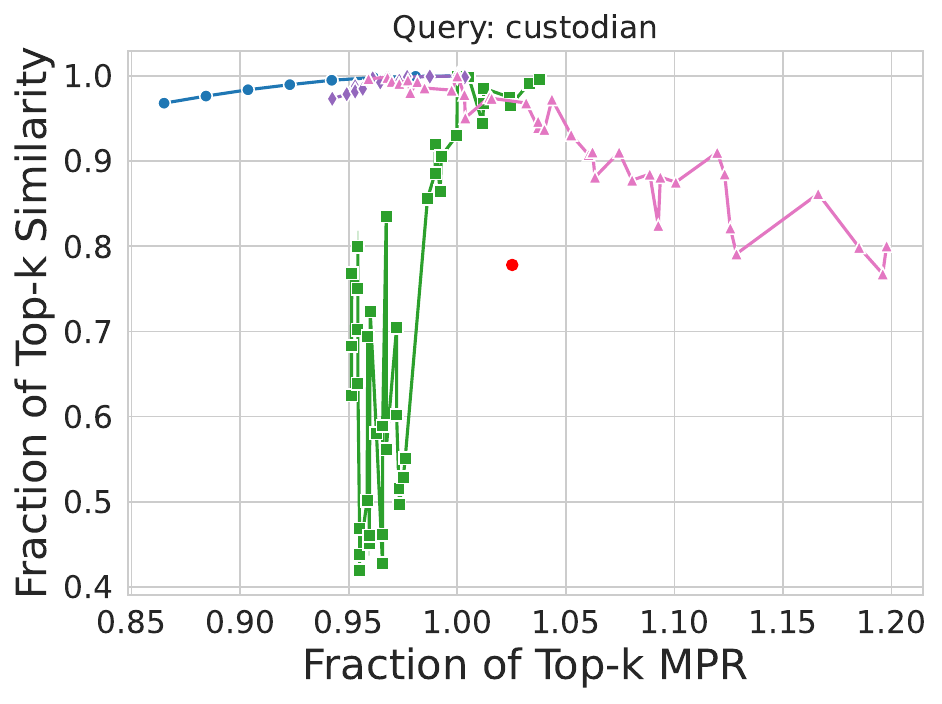}\hfill
\includegraphics[width=.3\textwidth]{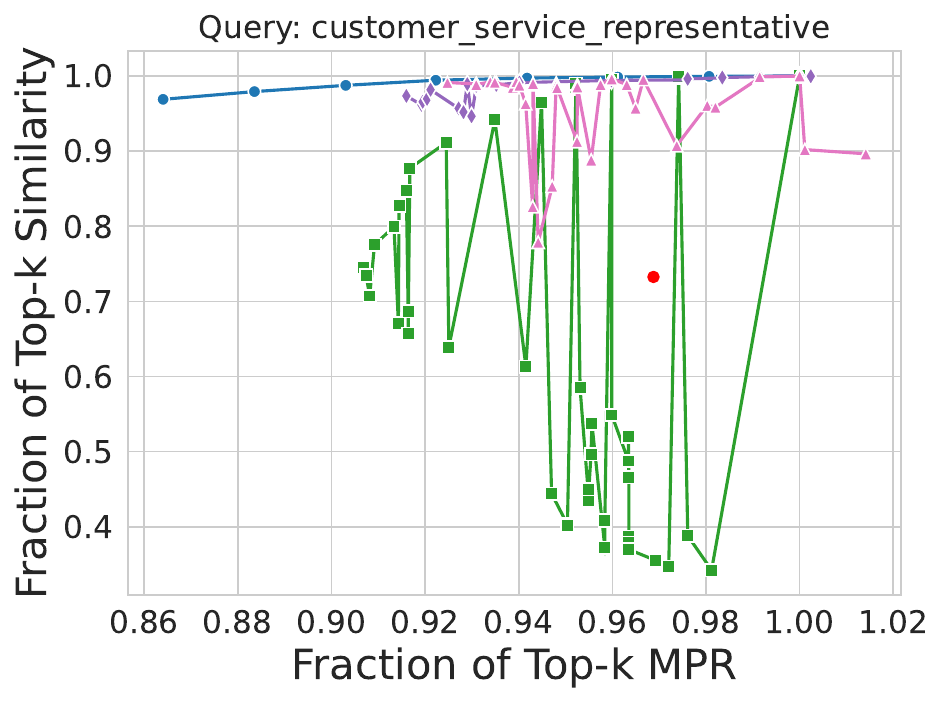}\hfill
\includegraphics[width=.3\textwidth]{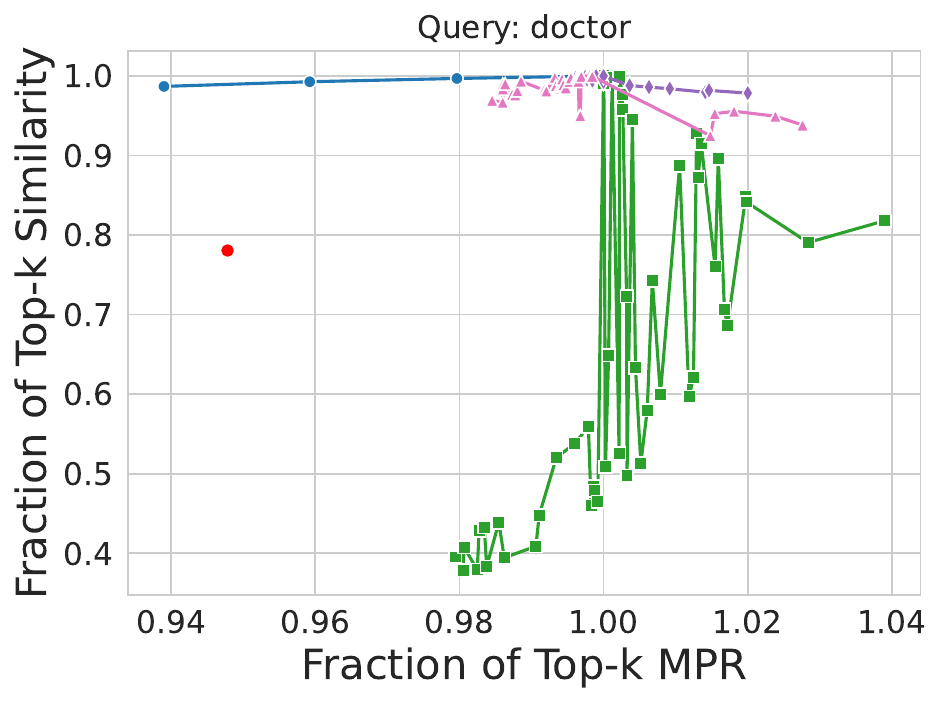} \\
\includegraphics[width=.3\textwidth]{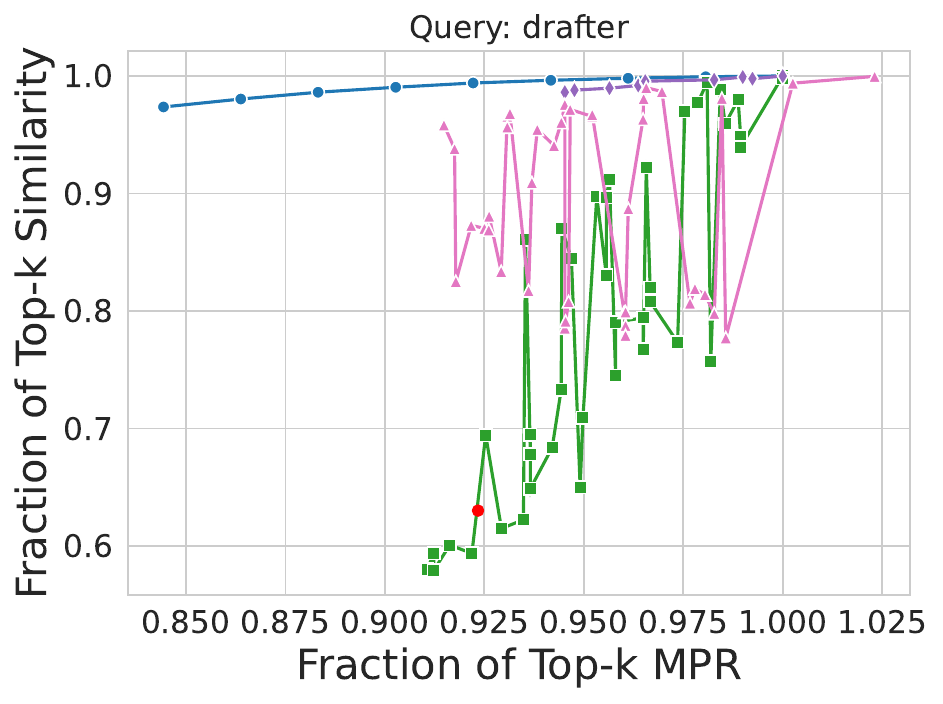}\hfill
\includegraphics[width=.3\textwidth]{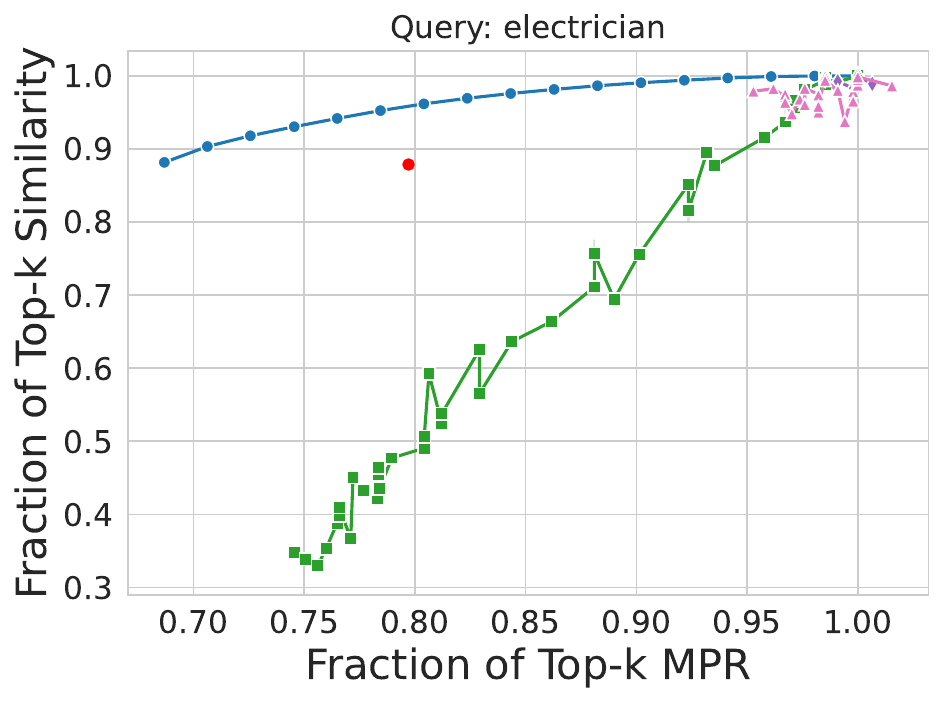}\hfill
\includegraphics[width=.3\textwidth]{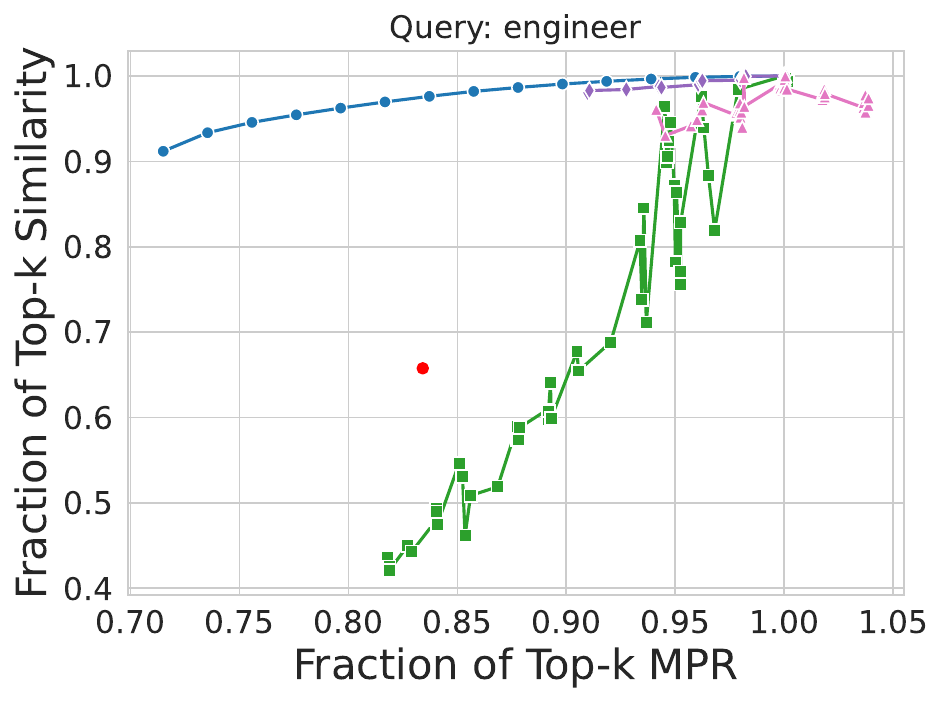} \\
\includegraphics[width=.3\textwidth]{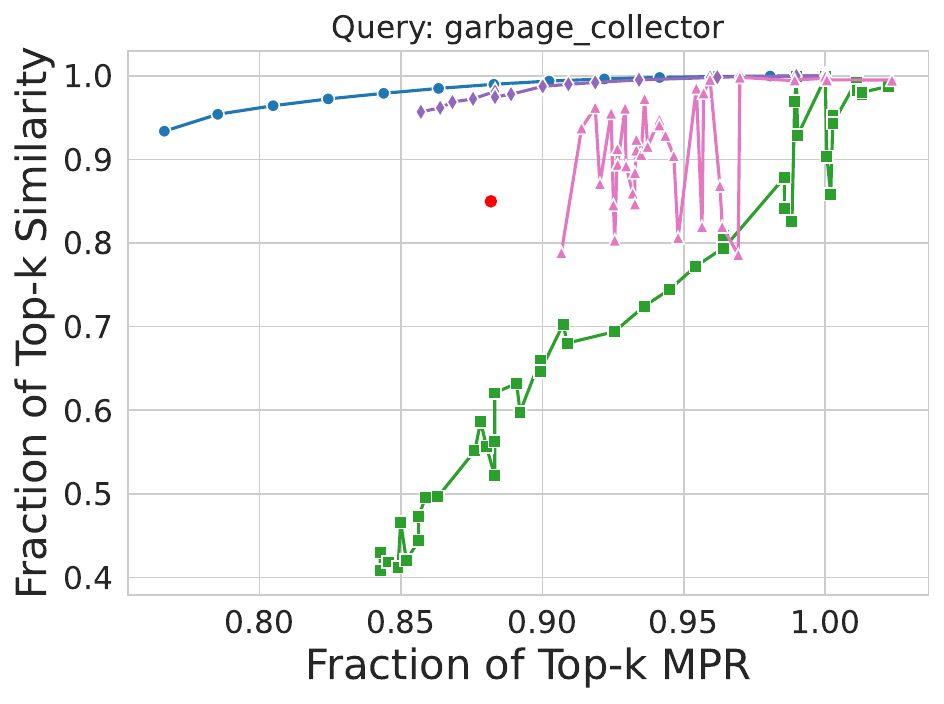}\hfill
\includegraphics[width=.3\textwidth]{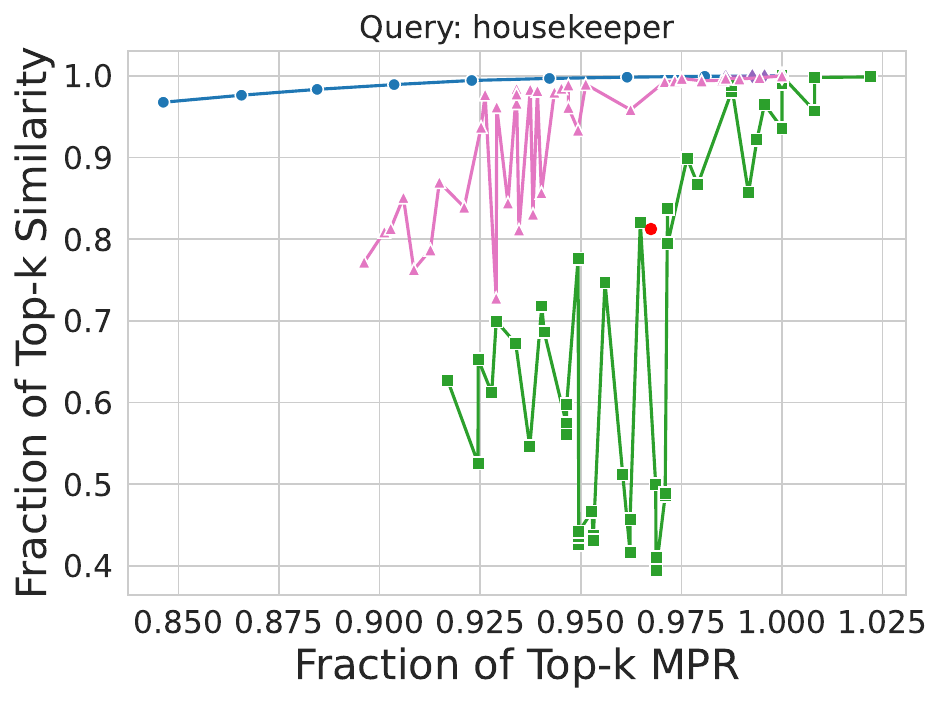}\hfill
\includegraphics[width=.3\textwidth]{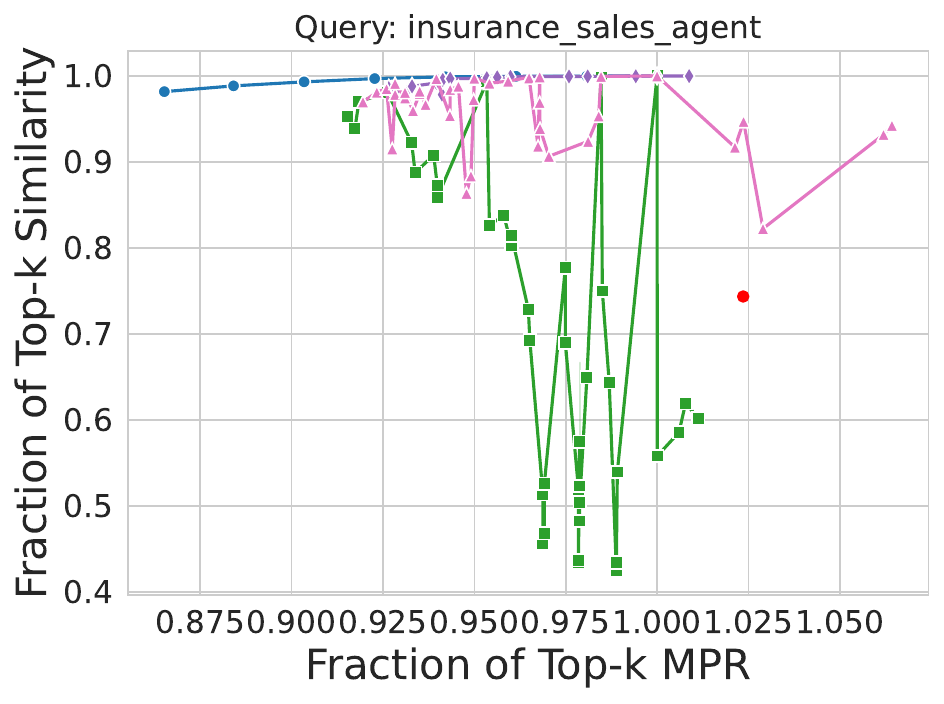} \\
\includegraphics[width=.3\textwidth]{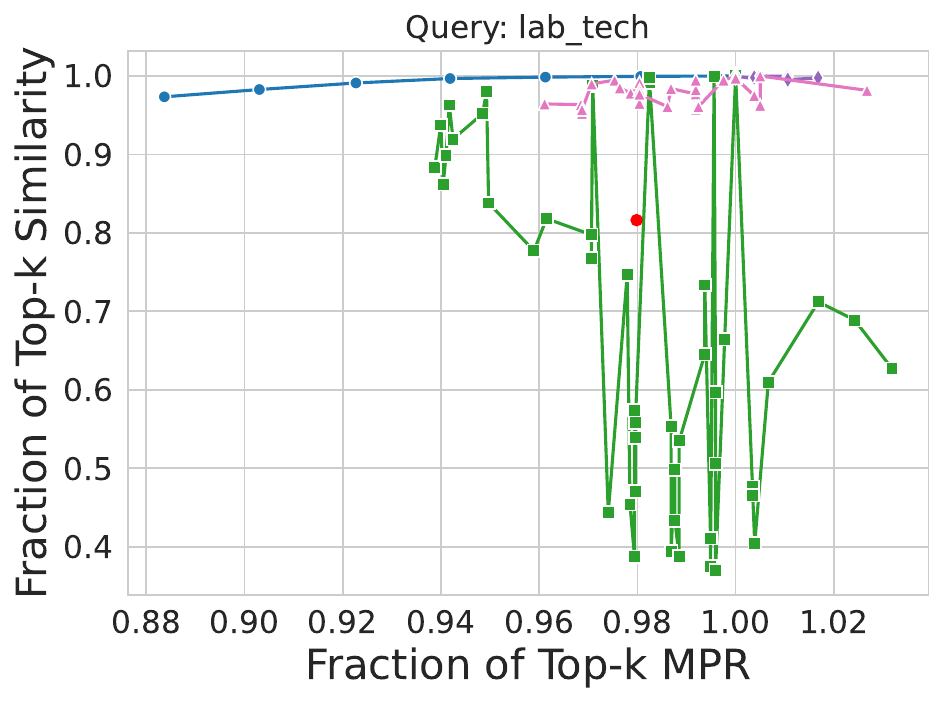}\hfill
\includegraphics[width=.3\textwidth]{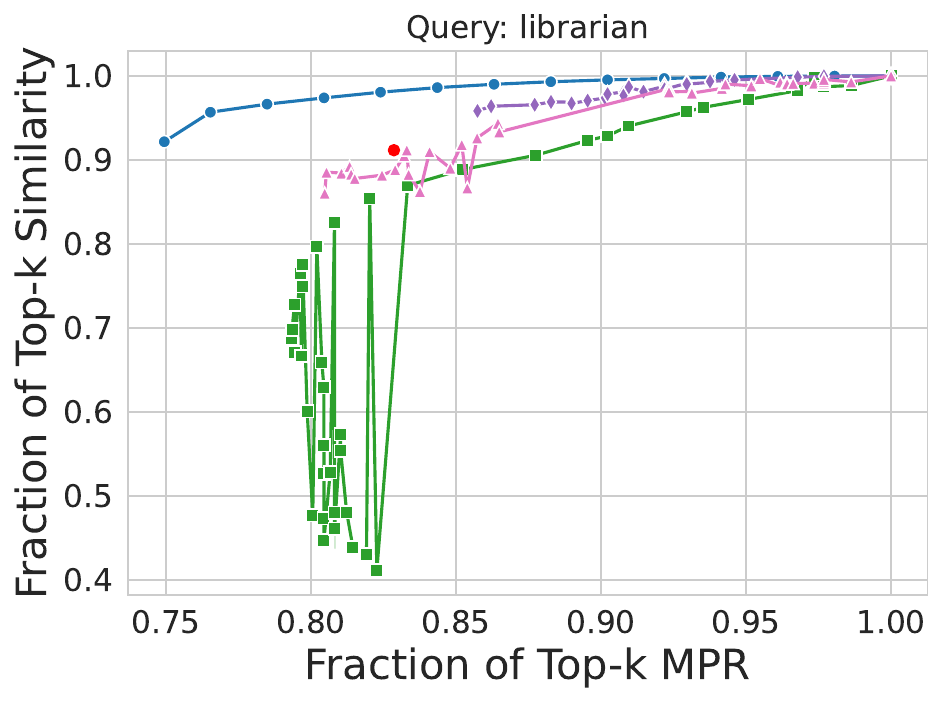}\hfill
\includegraphics[width=.3\textwidth]{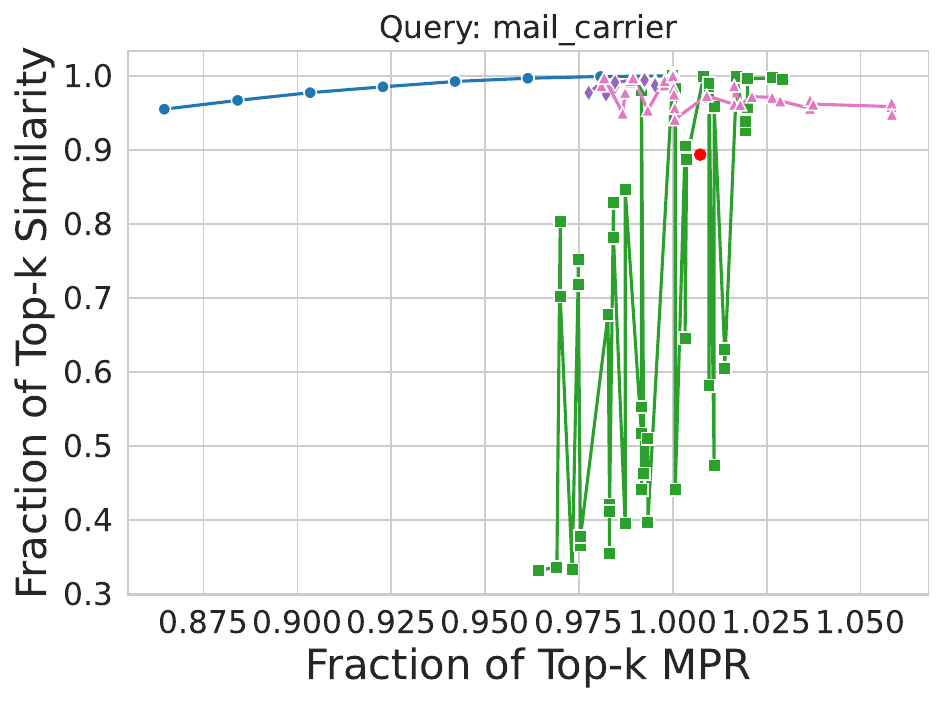} \\
\includegraphics[width=.3\textwidth]{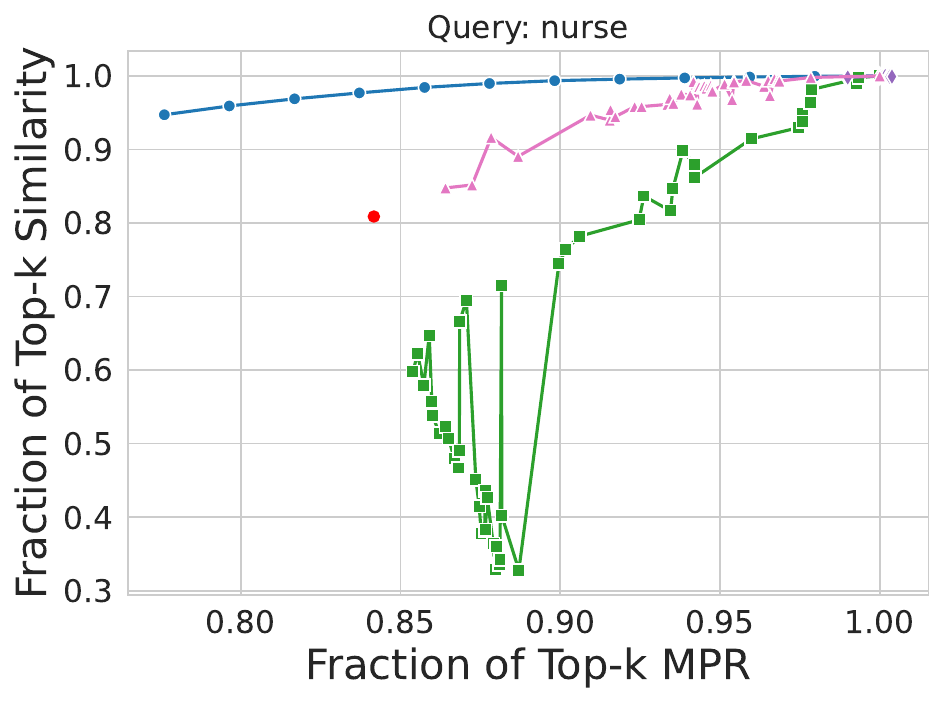}\hfill
\includegraphics[width=.3\textwidth]{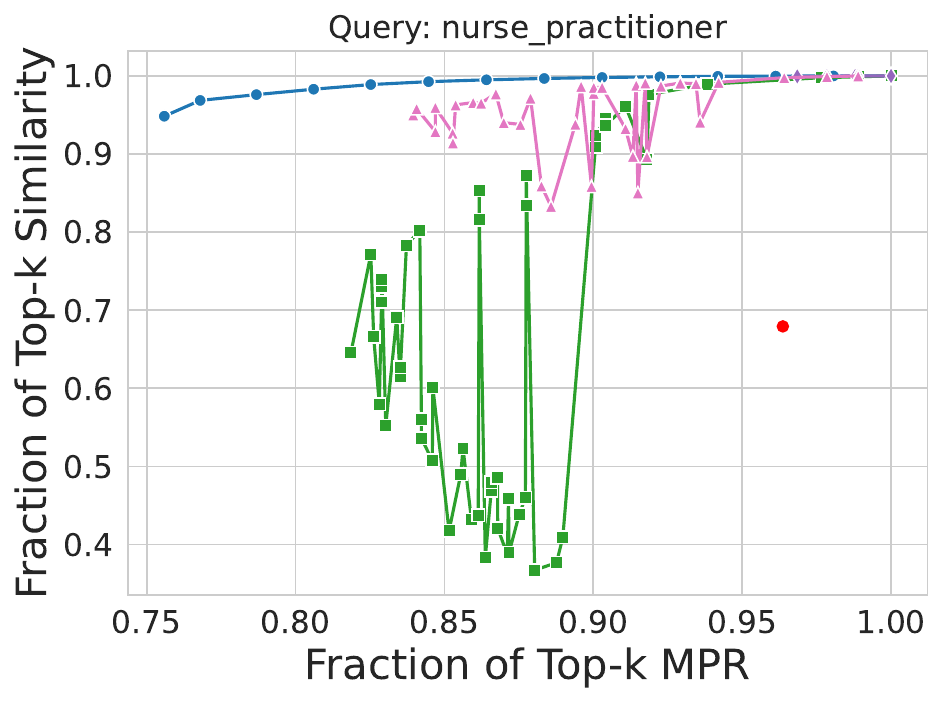}\hfill
\includegraphics[width=.3\textwidth]{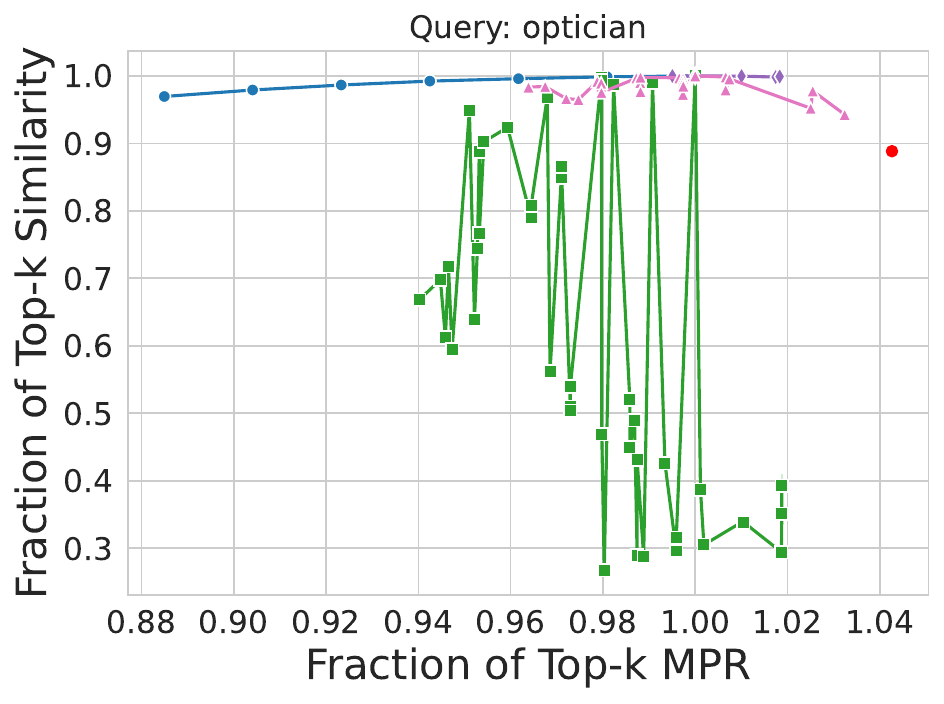} \\
\caption{\small Fraction of Top-$k$ cosine similarity vs Fraction of Top-$k$ MPR over linear regression models for 45 (Second 15, see Figures \ref{fig:occupations_linreg_occupations1} and \ref{fig:occupations_linreg_occupations3}) occupations present in the Occupations dataset. Values are normalized so Top-$k$ achieves point (1,1) in each case.  \methodname~Pareto-dominates baselines and significantly closes the MPR gap.}
\label{fig:occupations_linreg_occupations2}
\end{figure}

\begin{figure}[h!]
 \begin{minipage}[t]{\textwidth}
        \centering
        \includegraphics[width=.6\textwidth]{camera_ready/figures/camera_ready_legend.pdf}
    \end{minipage}
\centering
\includegraphics[width=.3\textwidth]{camera_ready/figures/occupations_k_50_curation_fairface_linearregression_administrative_assistant.pdf}\hfill
\includegraphics[width=.3\textwidth]{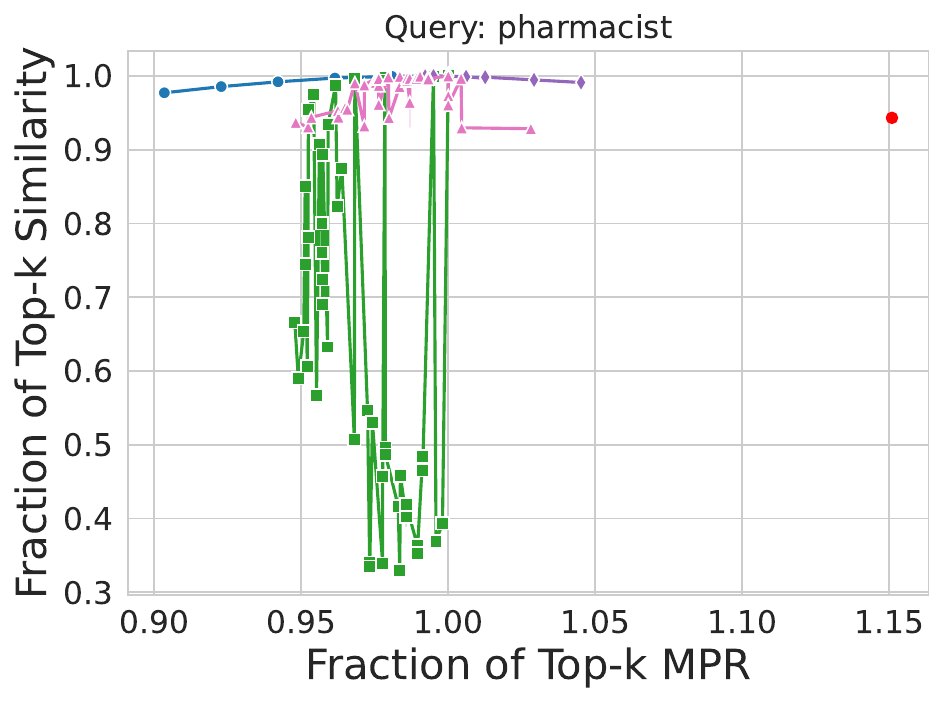}\hfill
\includegraphics[width=.3\textwidth]{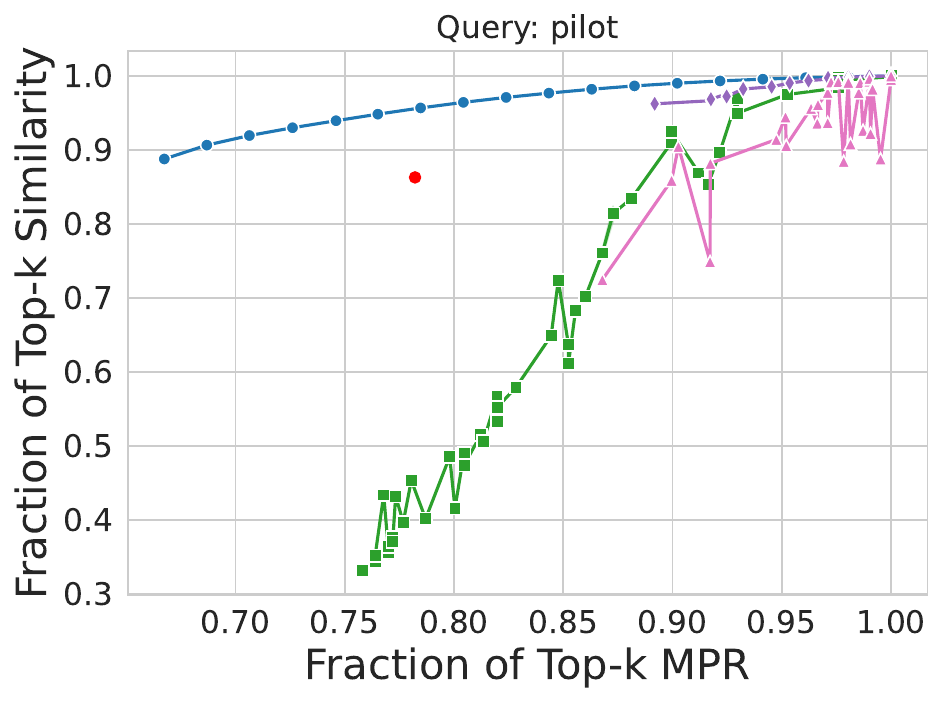} \\
\includegraphics[width=.3\textwidth]{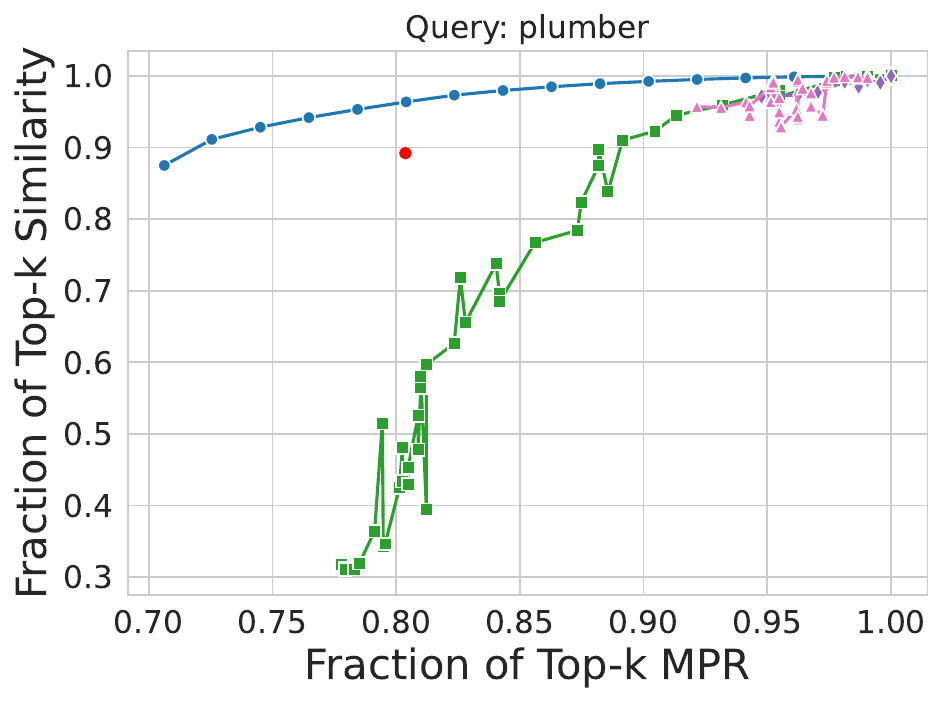}\hfill
\includegraphics[width=.3\textwidth]{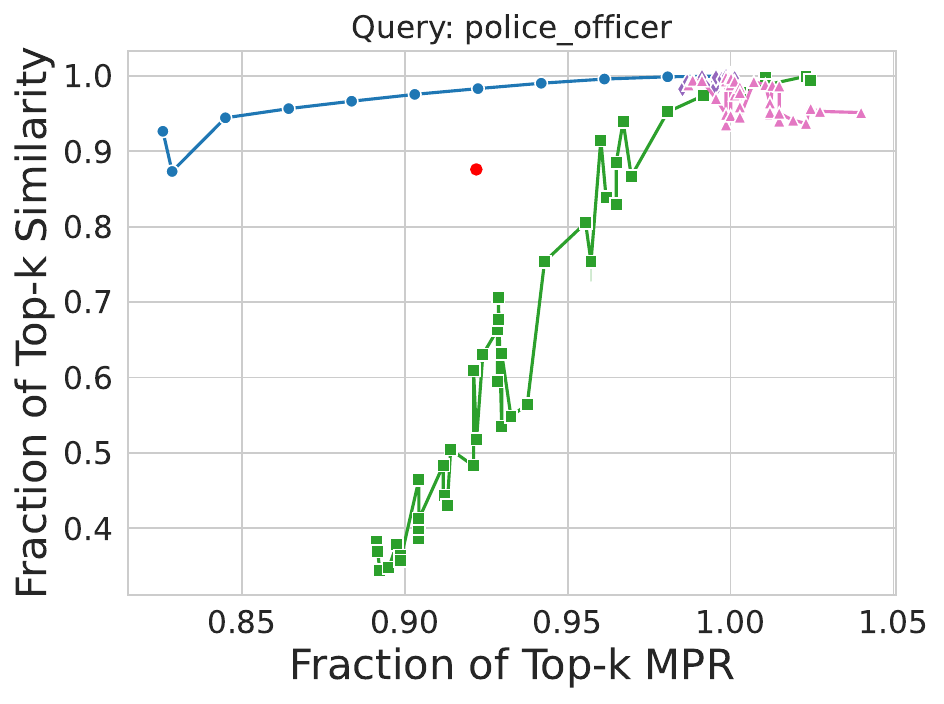}\hfill
\includegraphics[width=.3\textwidth]{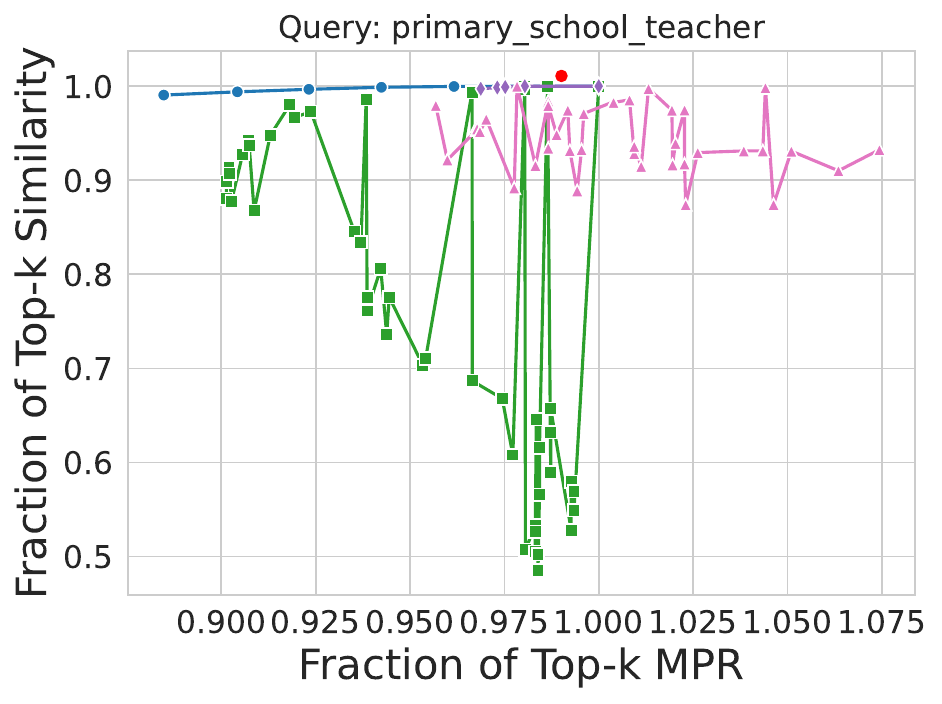} \\
\includegraphics[width=.3\textwidth]{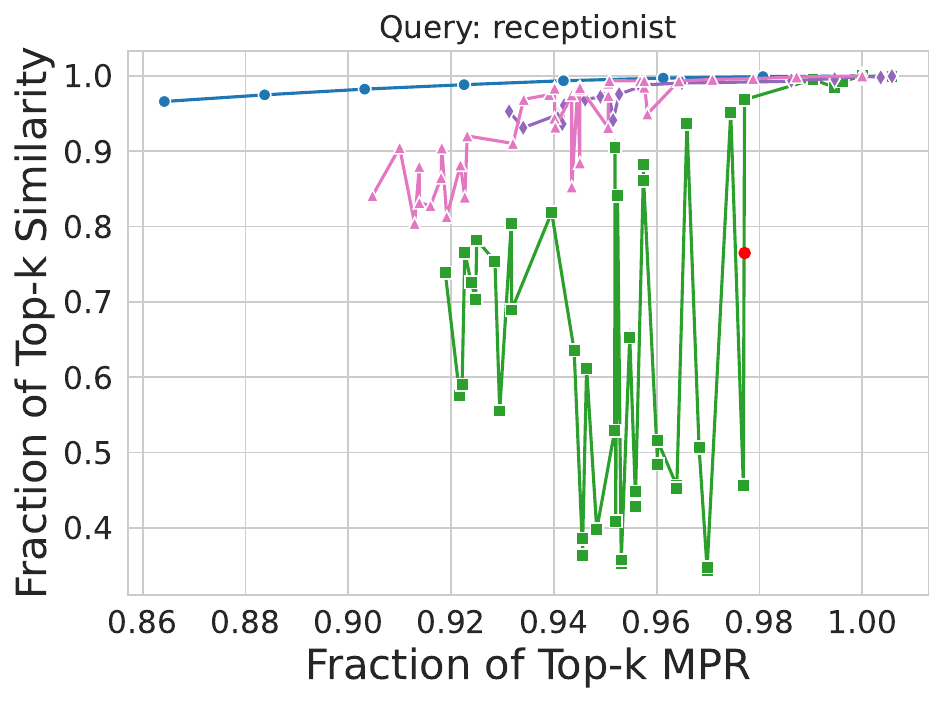}\hfill
\includegraphics[width=.3\textwidth]{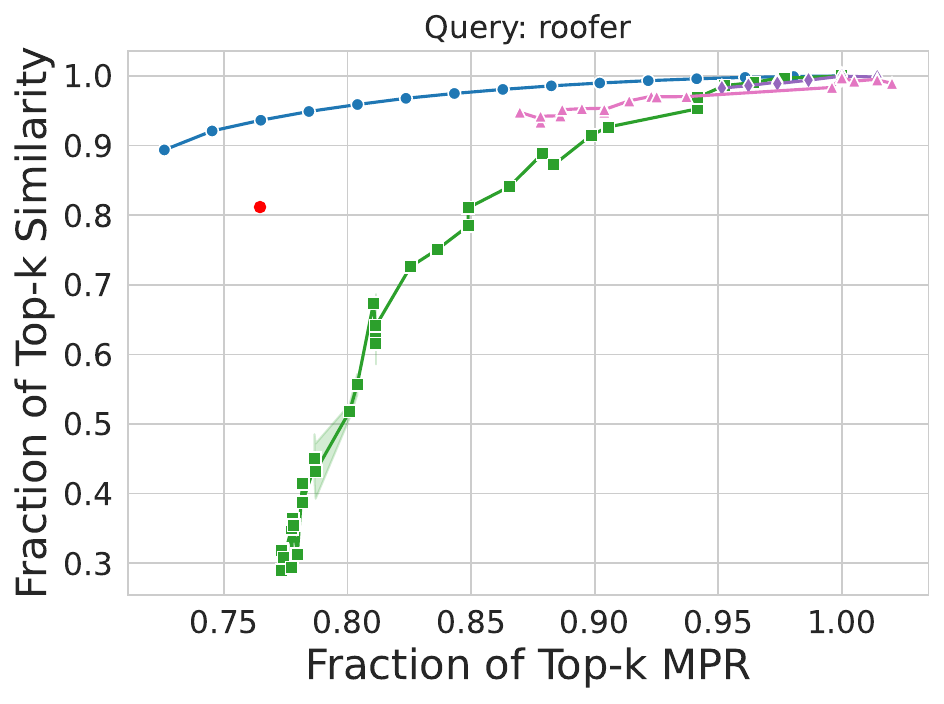}\hfill
\includegraphics[width=.3\textwidth]{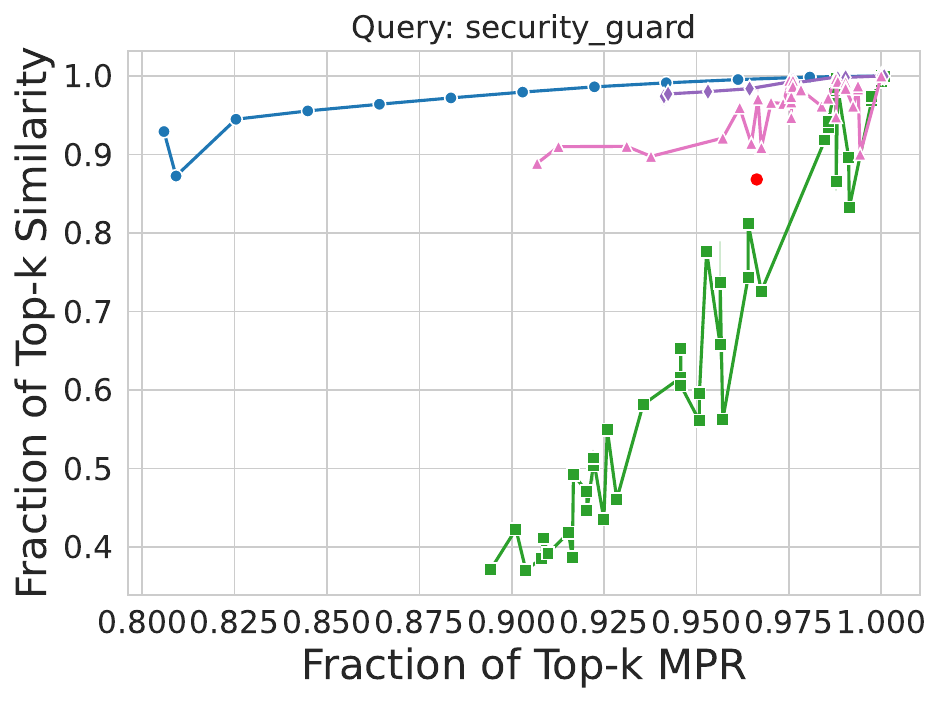} \\
\includegraphics[width=.3\textwidth]{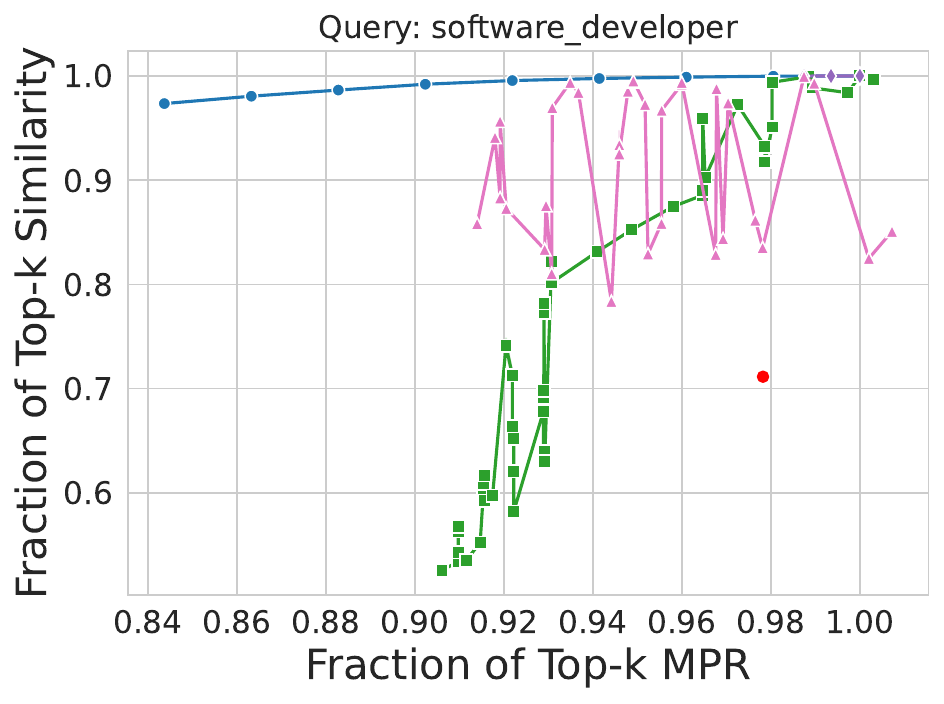}\hfill
\includegraphics[width=.3\textwidth]{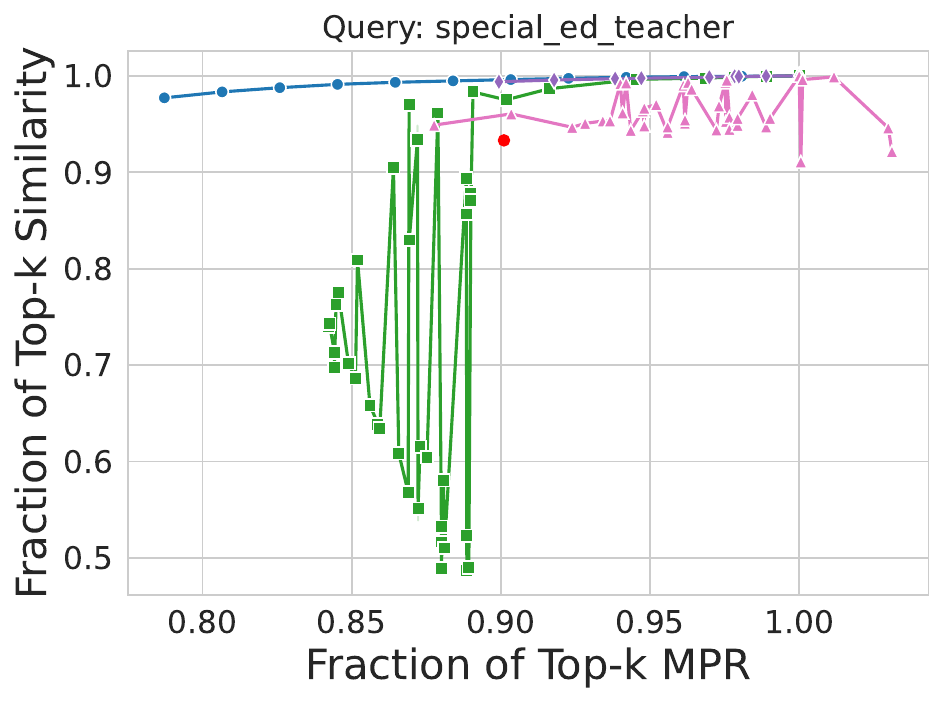}\hfill
\includegraphics[width=.3\textwidth]{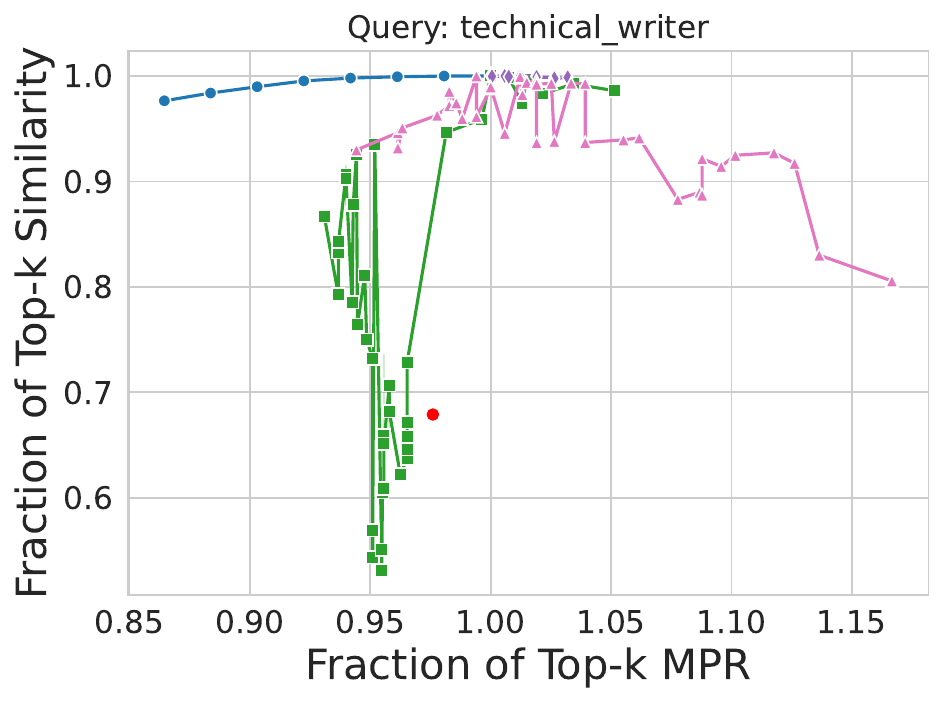} \\
\includegraphics[width=.3\textwidth]{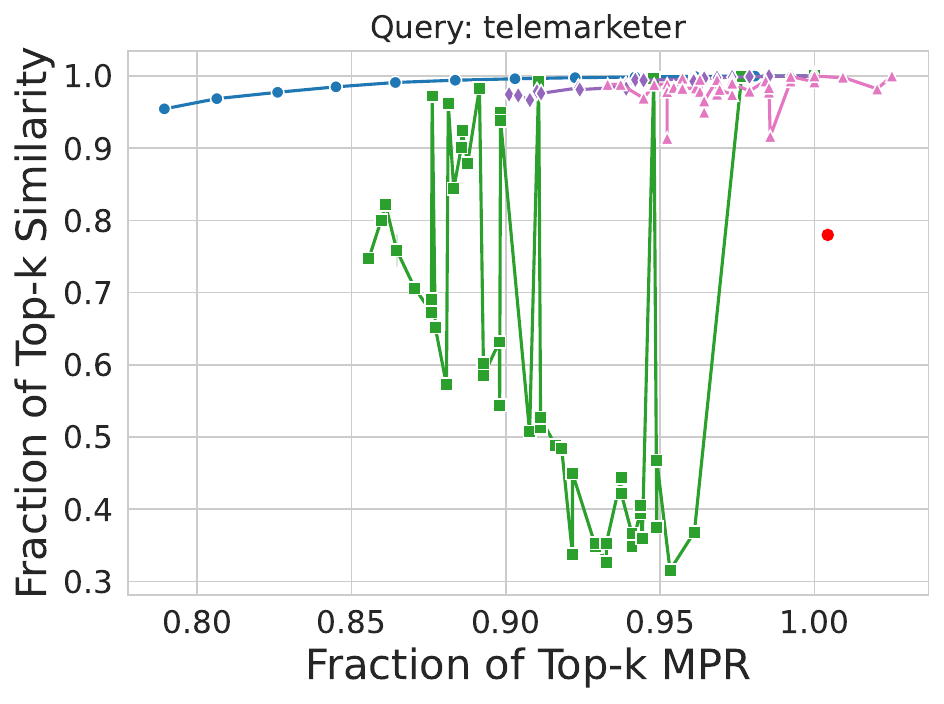}\hfill
\includegraphics[width=.3\textwidth]{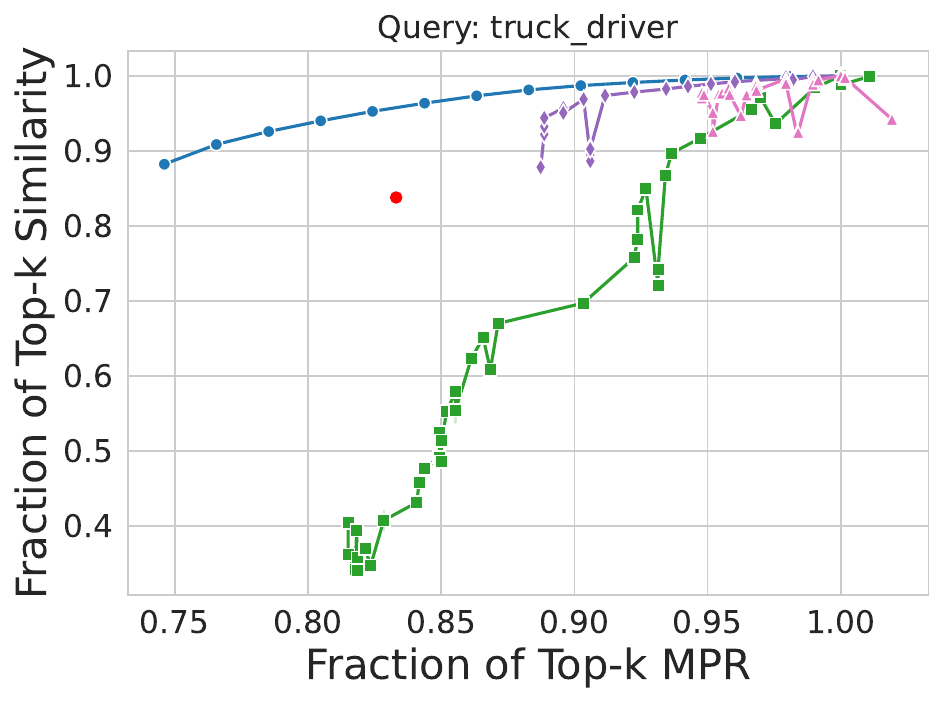}\hfill
\includegraphics[width=.3\textwidth]{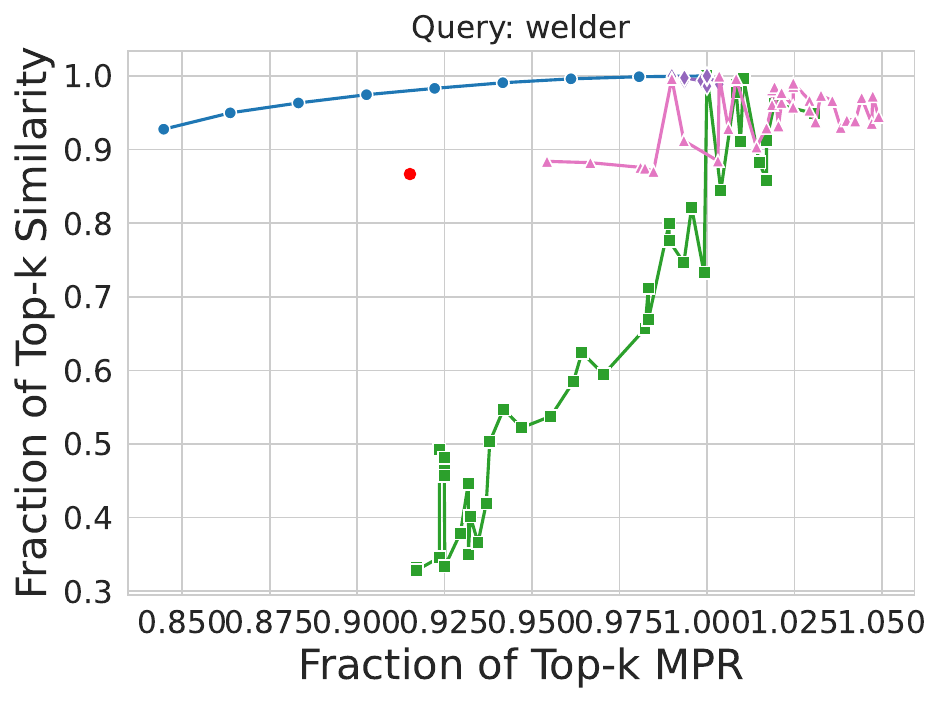} \\
\caption{\small Fraction of Top-$k$ cosine similarity vs Fraction of Top-$k$ MPR over linear regression models for 45 (Third 15, see Figures \ref{fig:occupations_linreg_occupations1} and \ref{fig:occupations_linreg_occupations2}) occupations present in the Occupations dataset. Values are normalized so Top-$k$ achieves point (1,1) in each case.  \methodname~Pareto-dominates baselines and significantly closes the MPR gap.}
\label{fig:occupations_linreg_occupations3}
\end{figure}

\begin{figure}[h]
\begin{minipage}[t]{\textwidth}
        \centering
        \includegraphics[width=.6\textwidth]{camera_ready/figures/camera_ready_legend.pdf}
    \end{minipage}
    \centering
    \includegraphics[width=0.6\textwidth]{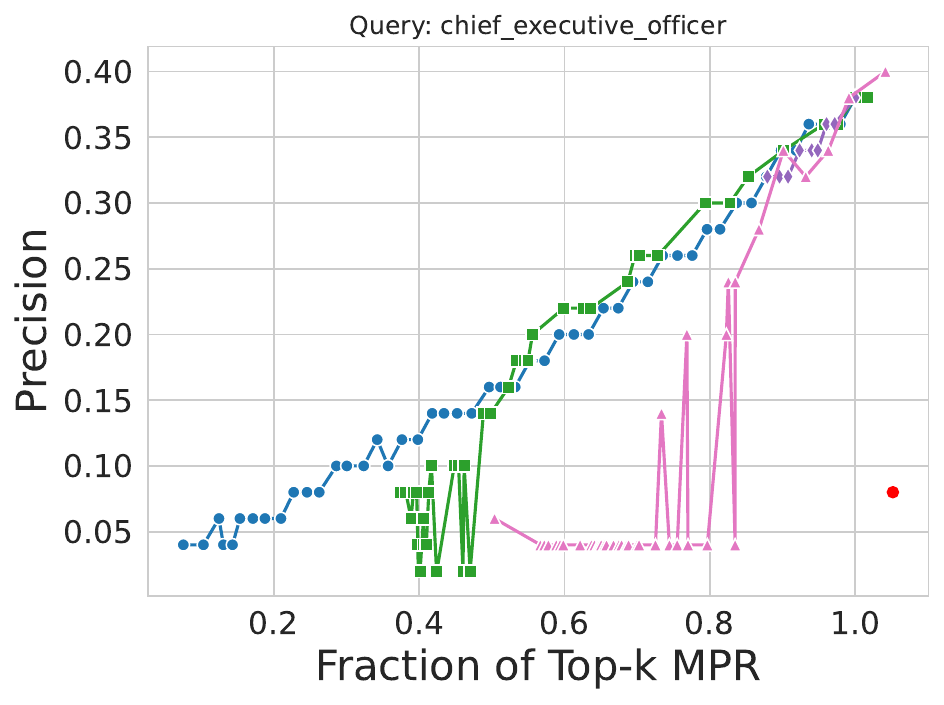}
    \caption{Precision-MPR curve on Occupations dataset over linear regression. Retrieving 50 items for the query ``A photo of a chief executive officer." MPR remains competitive in terms of retrieval performance.}
    \label{fig:ceo_precision}
\end{figure}
\begin{figure}[h]
 \begin{minipage}[t]{\textwidth}
        \centering
        \includegraphics[width=.6\textwidth]{camera_ready/figures/camera_ready_legend.pdf}
    \end{minipage}
    \includegraphics[width=0.3\textwidth]{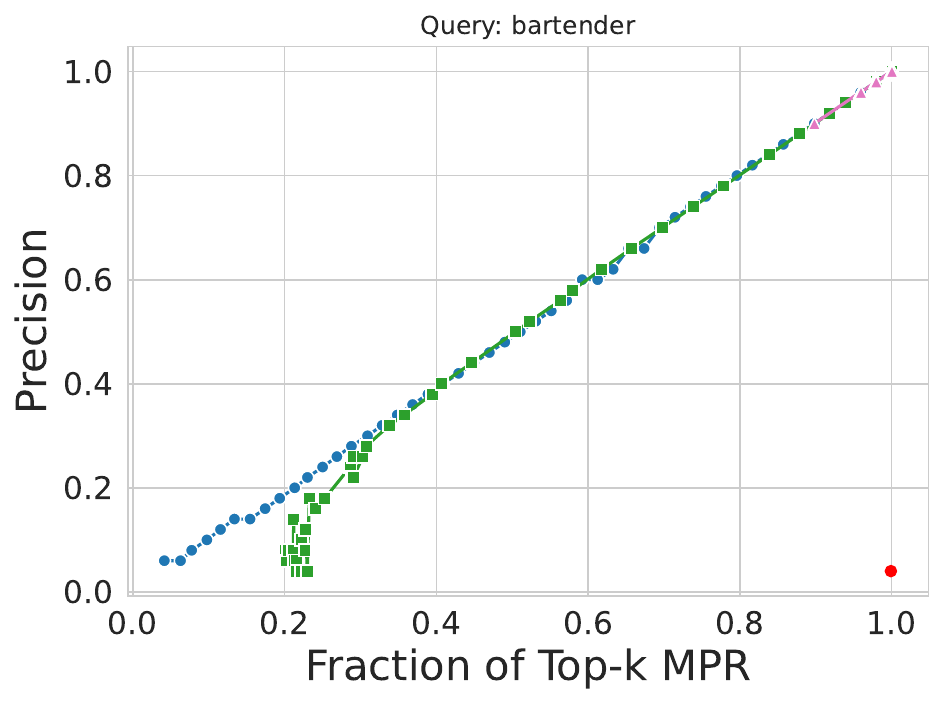}
    \hfill
    \includegraphics[width=0.3\textwidth]{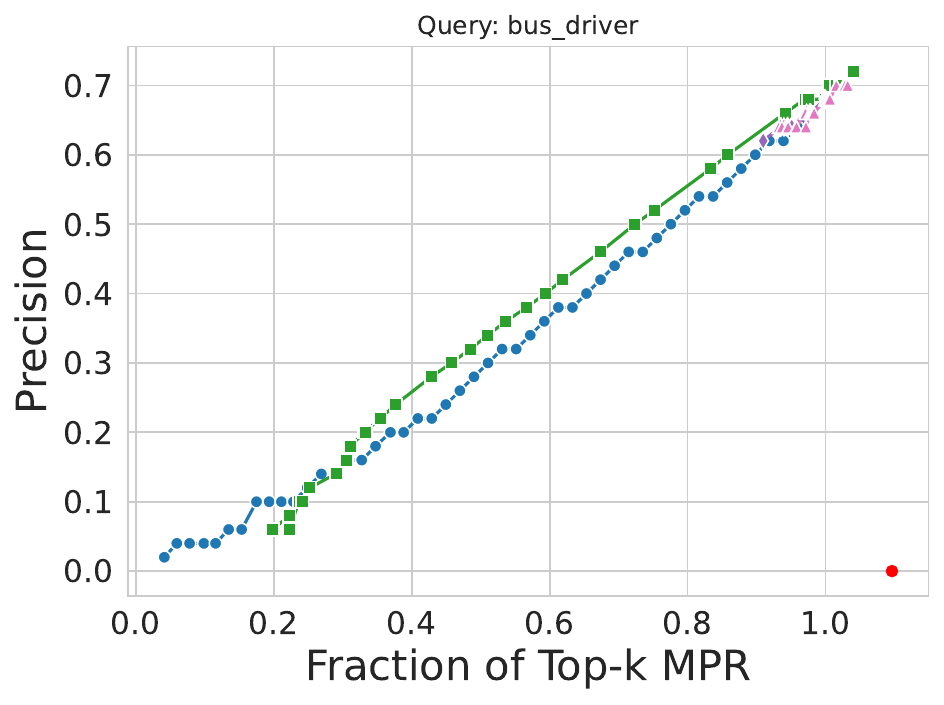}
    \hfill
    \includegraphics[width=0.3\textwidth]{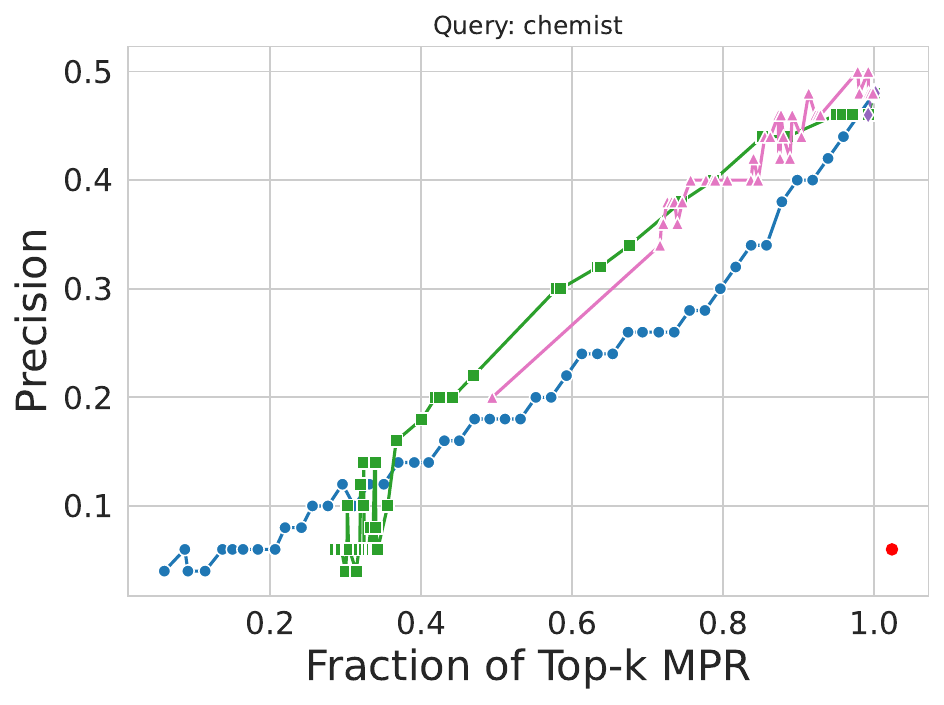}
    \\
    \includegraphics[width=0.3\textwidth]{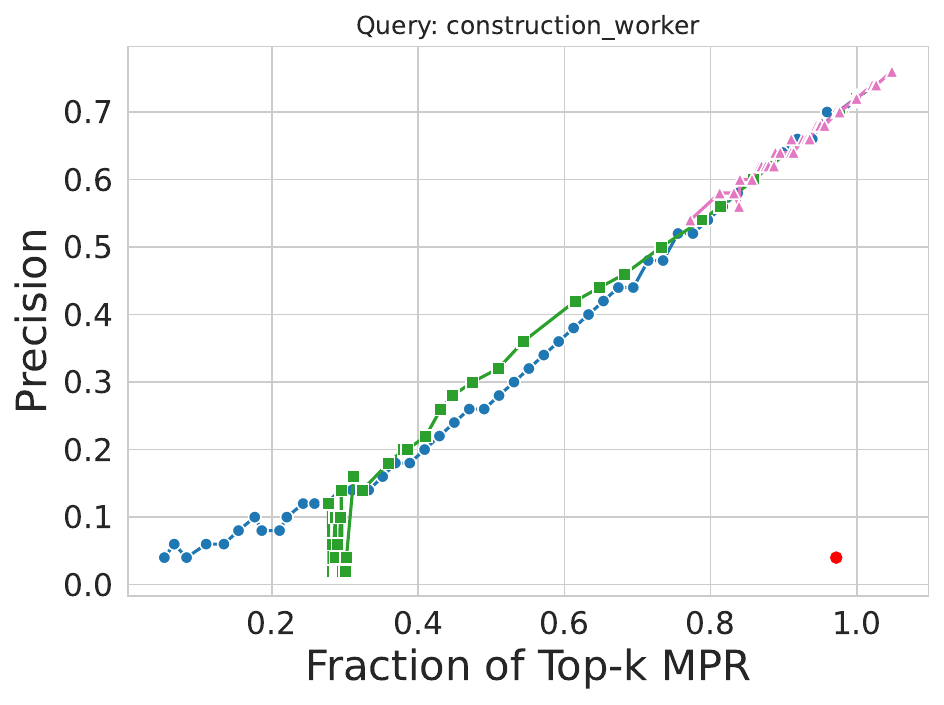}
    \hfill
    \includegraphics[width=0.3\textwidth]{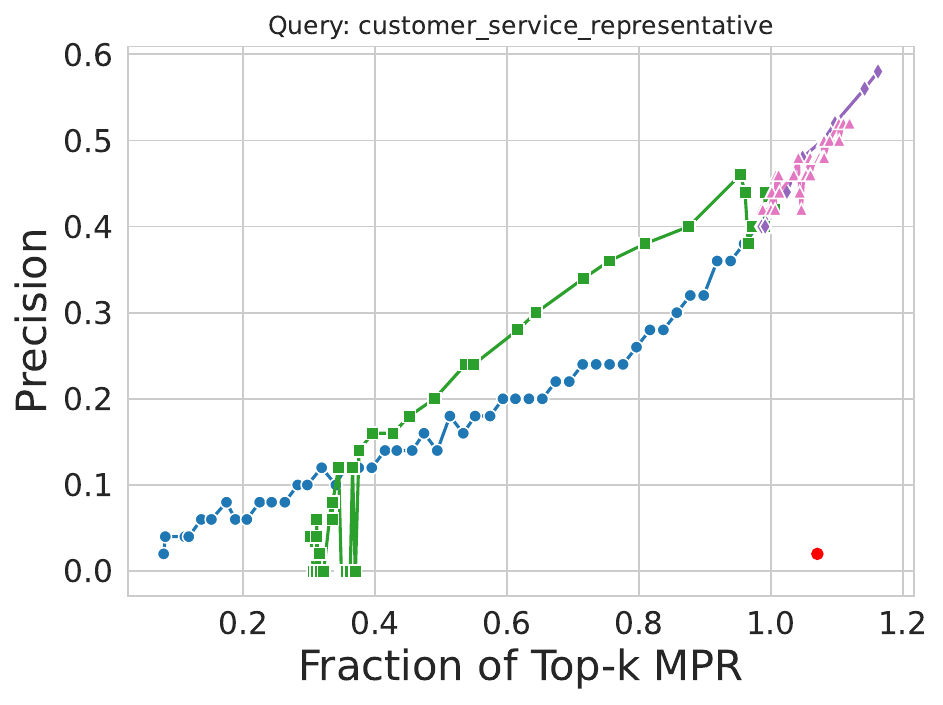}
    \hfill
    \includegraphics[width=0.3\textwidth]{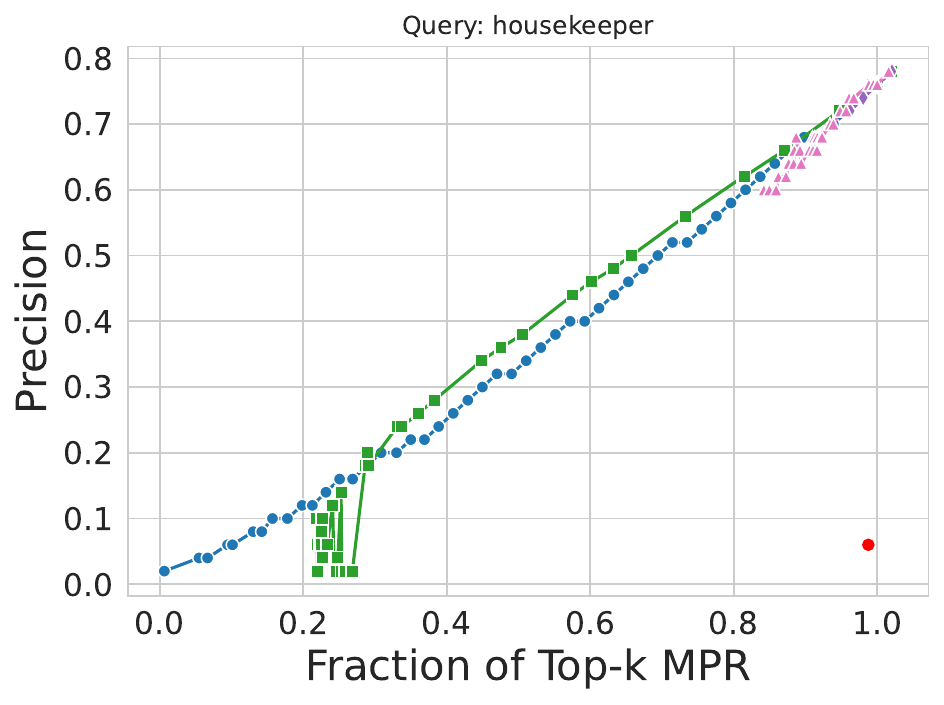}
    \\
    \includegraphics[width=0.3\textwidth]{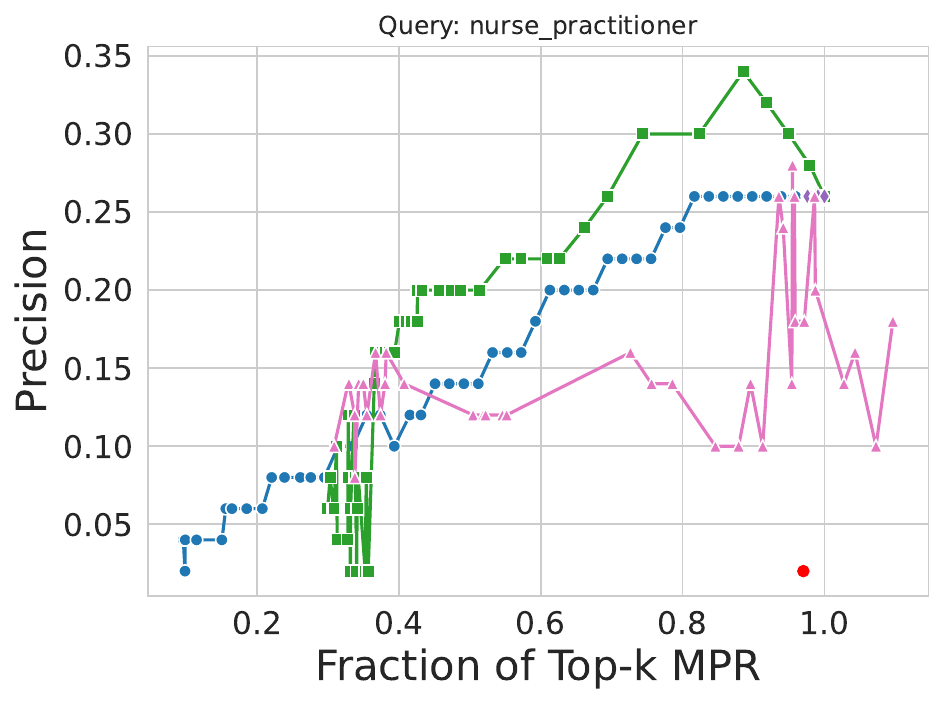}
    \hfill
    \includegraphics[width=0.3\textwidth]{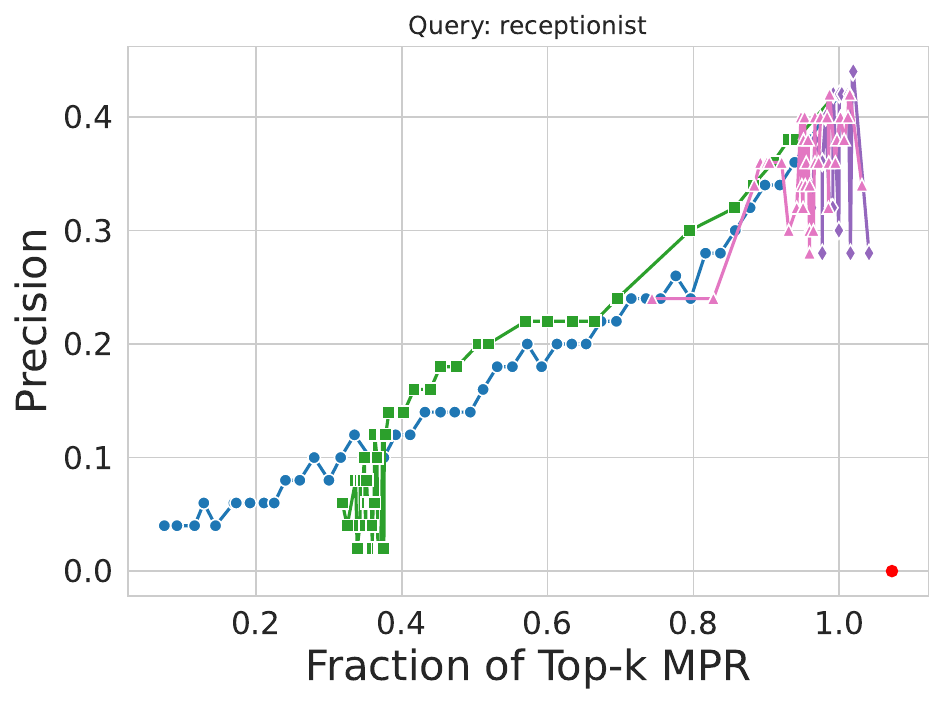}
    \hfill
    \includegraphics[width=0.3\textwidth]{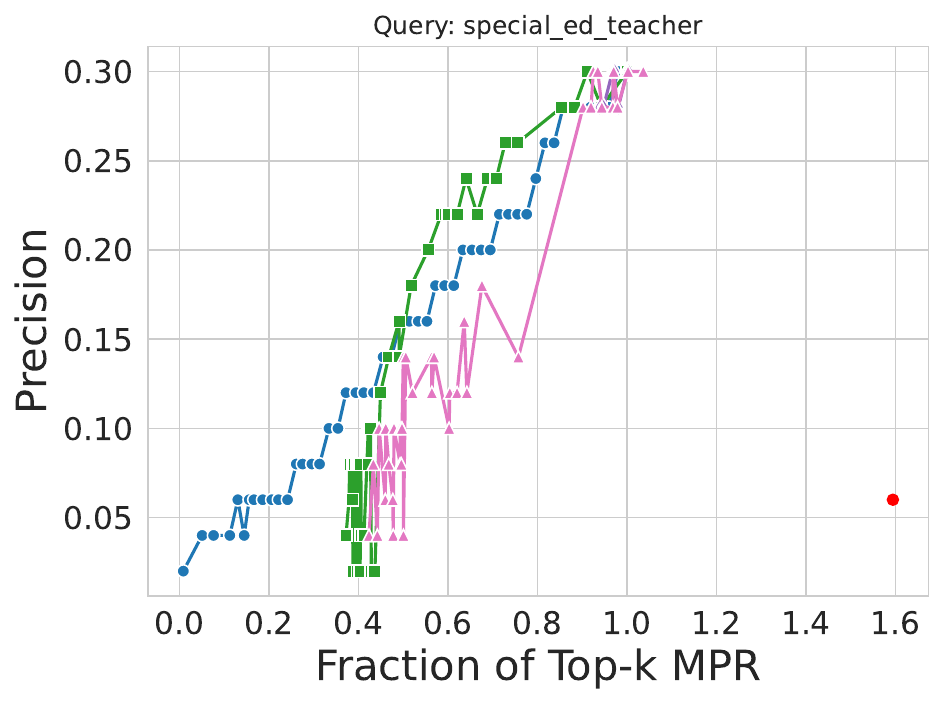}
    \\
    \caption{Additional precision-representation plots over queries ``bartender", ``bus driver", ``chemist", ``construction worker", ``customer service representative", ``housekeeper", ``nurse practitioner", ``receptionist", and ``sepecial ed teacher".}
    \label{fig:additional_precision_curves}
\end{figure}

\begin{figure}
    \centering
    \includegraphics[width=\textwidth]{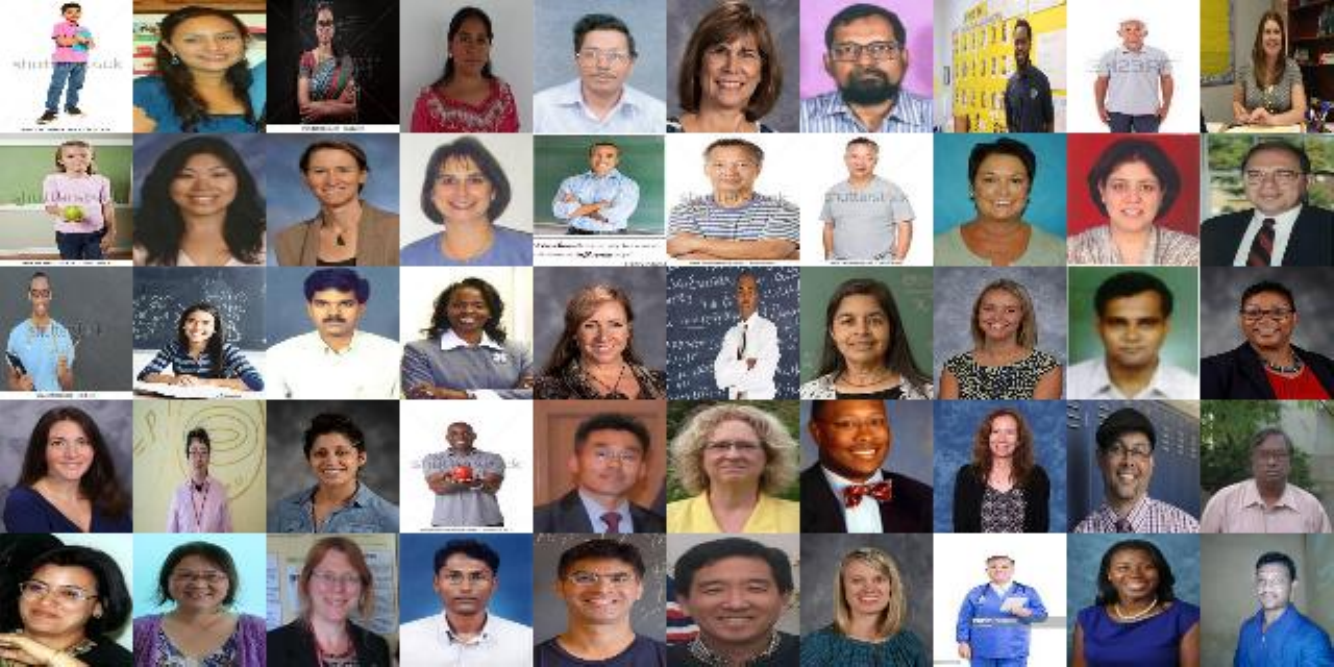}
    \caption{Retrievals for \methodname~with query ``A photo of a teacher"}
    \label{fig:concat_teacher_lp}
\end{figure}
\begin{figure}
    \centering
    \includegraphics[width=\textwidth]{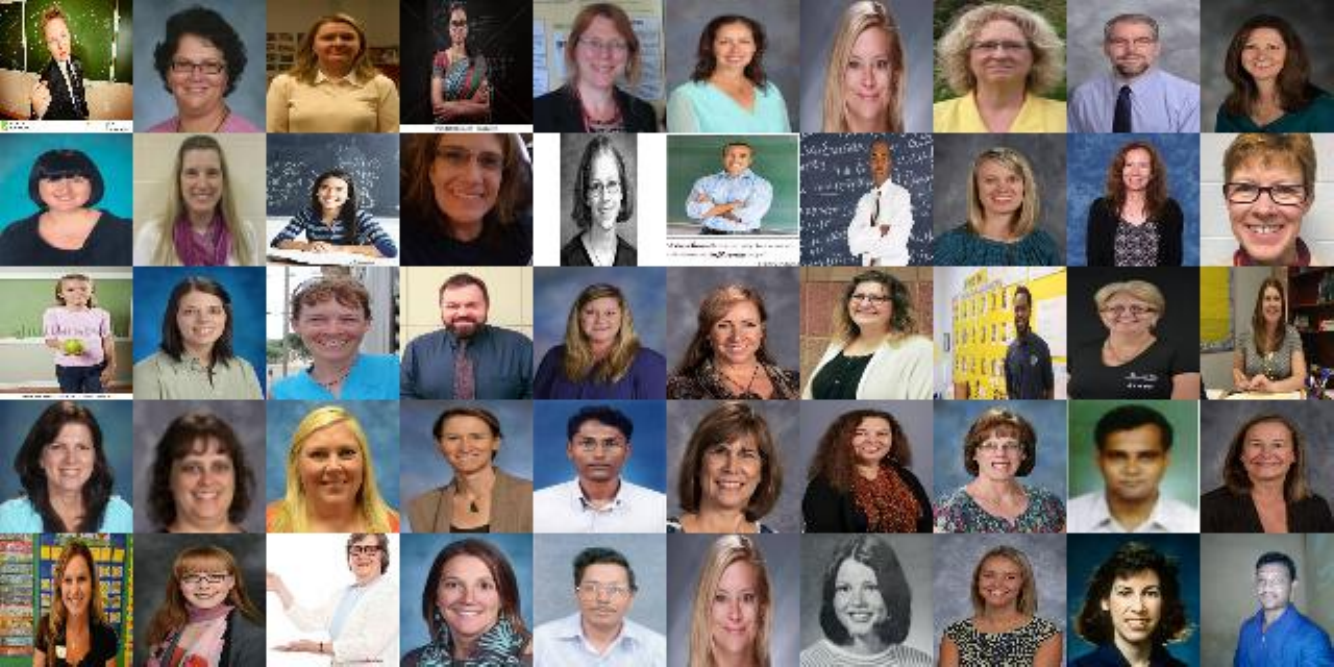}
    \caption{Retrievals for $k$-NN with query ``A photo of a teacher"}
    \label{fig:concat_teacher_top}
\end{figure}
\begin{figure}
    \centering
    \includegraphics[width=\textwidth]{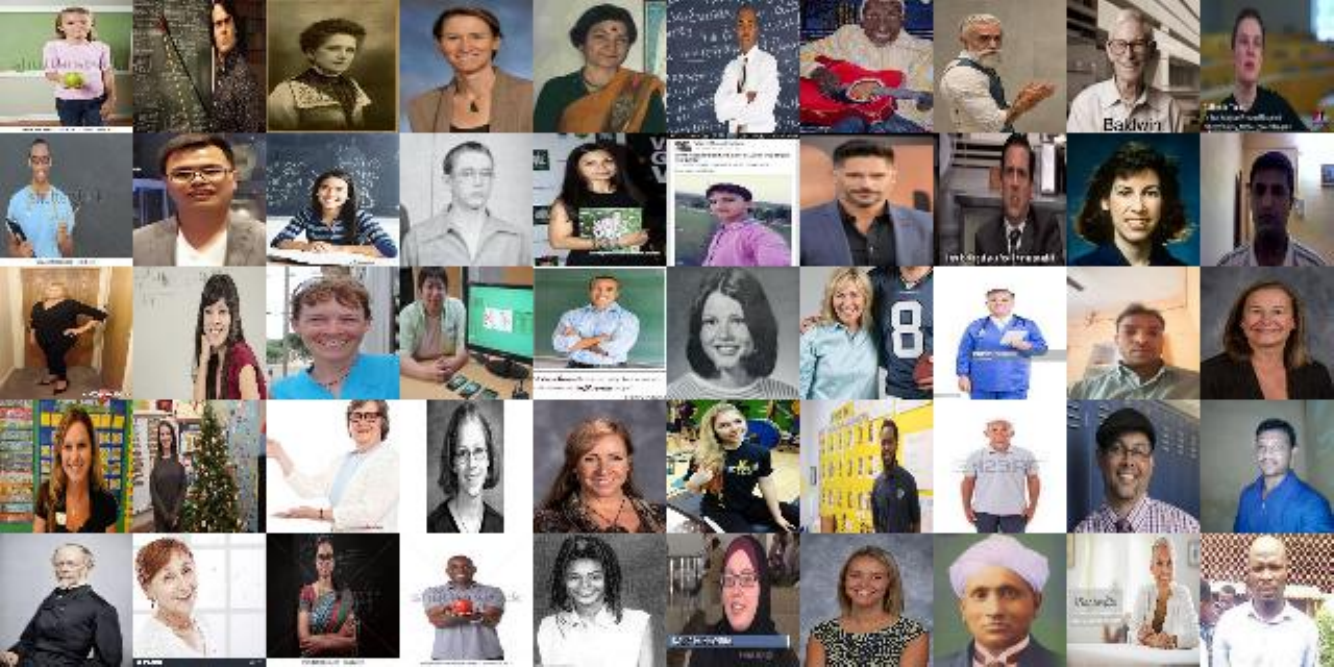}
    \caption{Retrievals for MMR with query ``A photo of a teacher"}
    \label{fig:concat_teacher_mmr}
\end{figure}
\begin{figure}
    \centering
    \includegraphics[width=\textwidth]{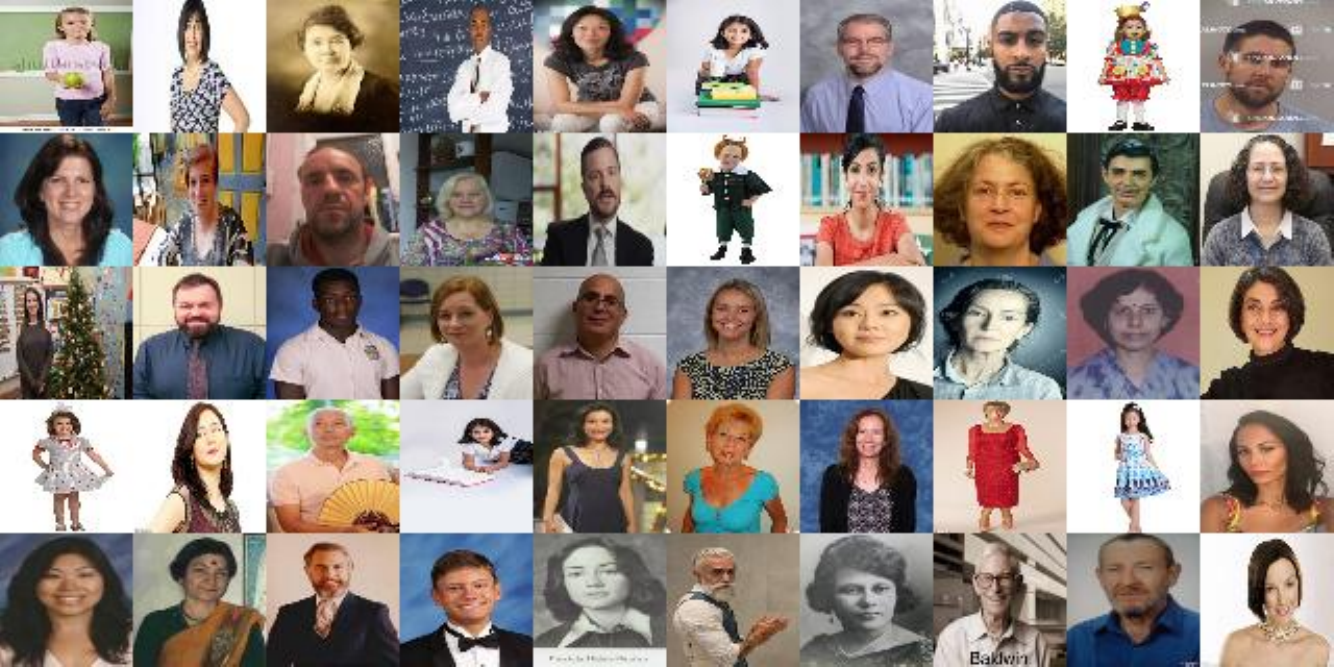}
    \caption{Retrievals for CLIP-Clip with query ``A photo of a teacher"}
    \label{fig:concat_teacher_clipclip}
\end{figure}
\begin{figure}
    \centering
    \includegraphics[width=\textwidth]{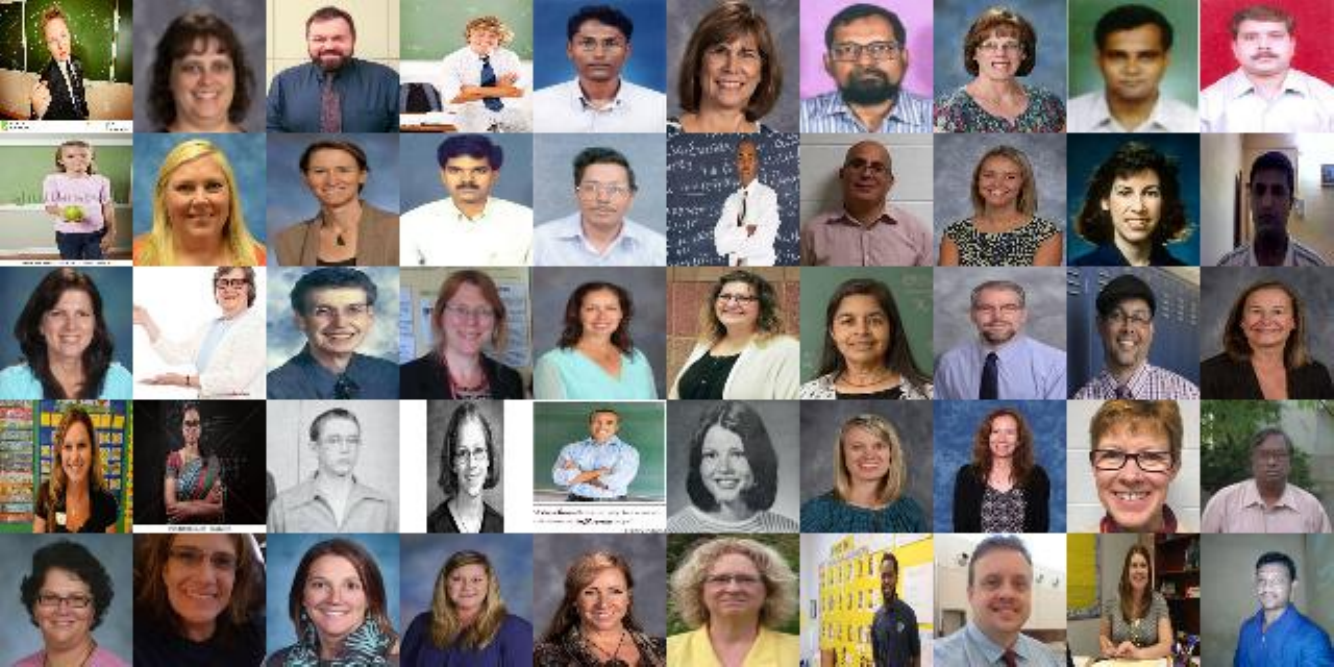}
    \caption{Retrievals for PBM with query ``A photo of a teacher"}
    \label{fig:concat_teacher_pbm}
\end{figure}
\begin{figure}
    \centering
    \includegraphics[width=\textwidth]{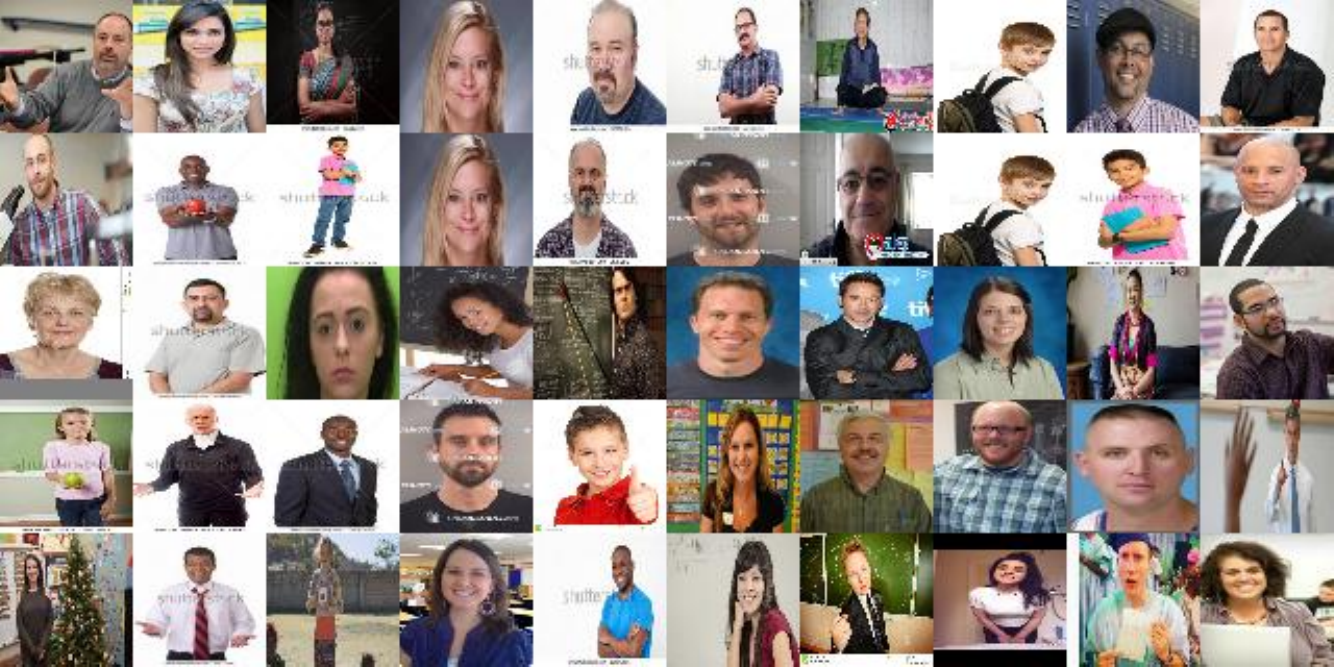}
    \caption{Retrievals for DebiasCLIP with query ``A photo of a teacher"}
    \label{fig:concat_teacher_debias}
\end{figure}

\begin{figure}
    \centering
    \includegraphics[width=\textwidth]{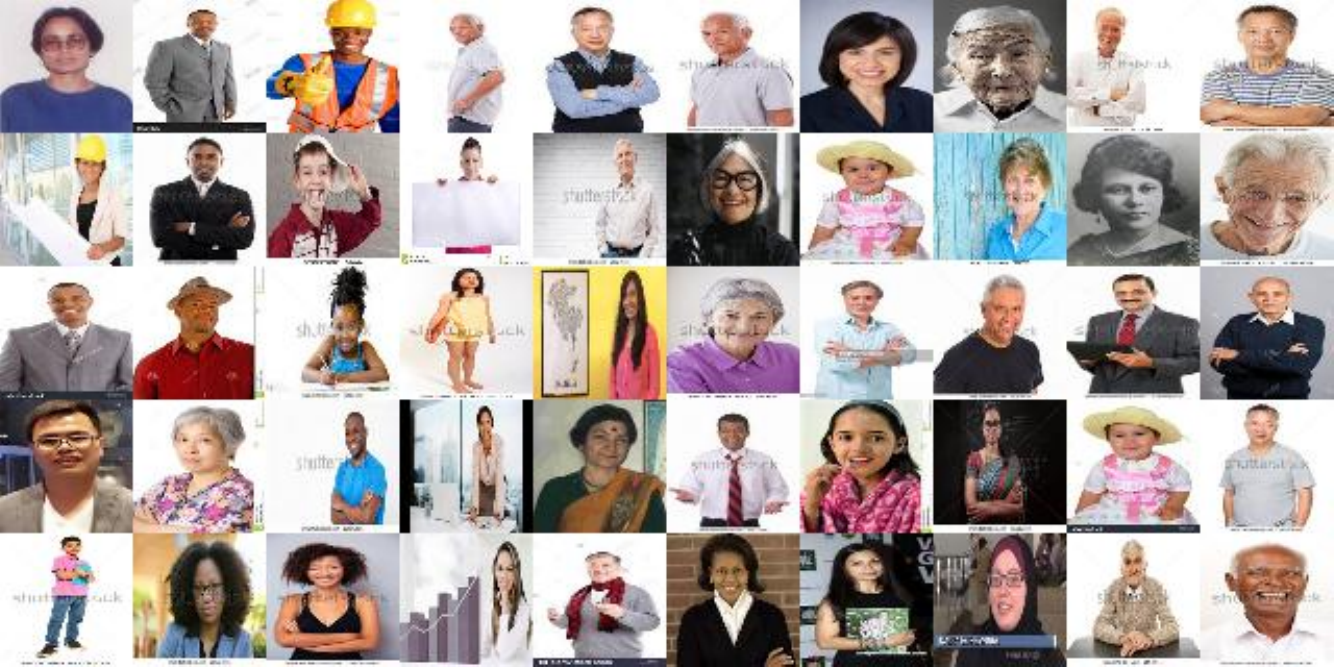}
    \caption{Retrievals for \methodname~ with query ``A photo of an architect"}
    \label{fig:concat_architect_lp}
\end{figure}
\begin{figure}
    \centering
    \includegraphics[width=\textwidth]{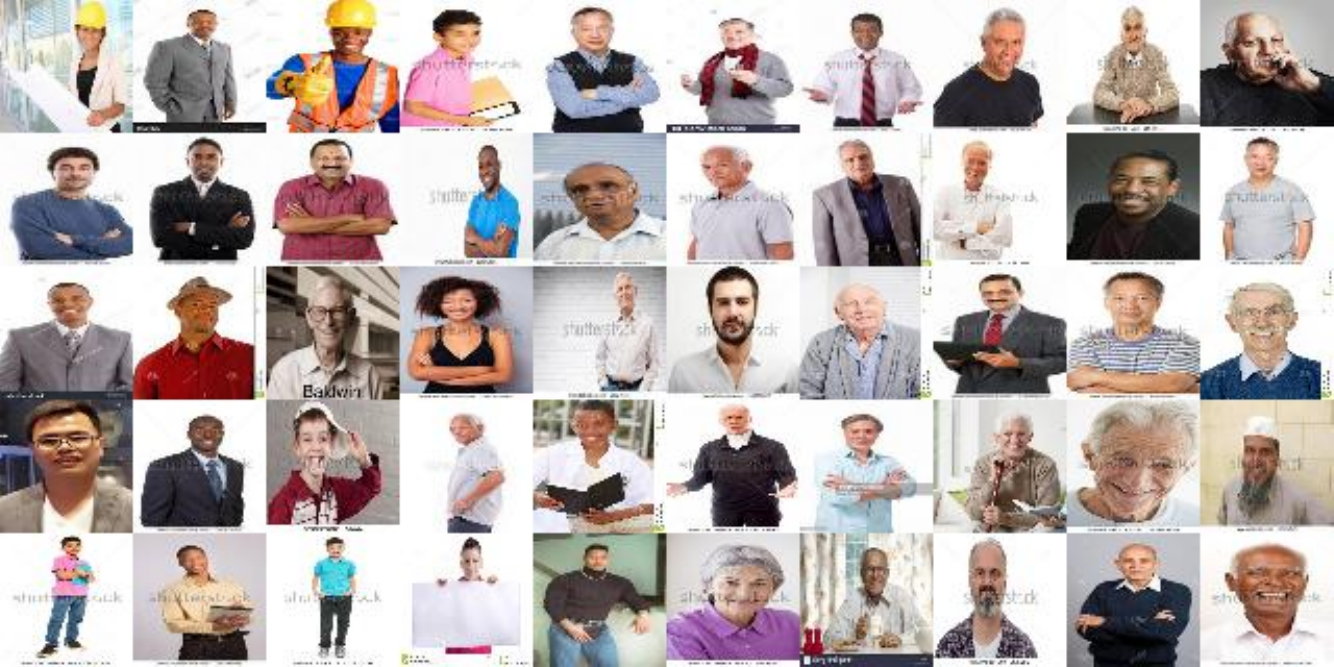}
    \caption{Retrievals for $k$-NN with query ``A photo of an architect"}
    \label{fig:concat_architect_top}
\end{figure}
\begin{figure}
    \centering
    \includegraphics[width=\textwidth]{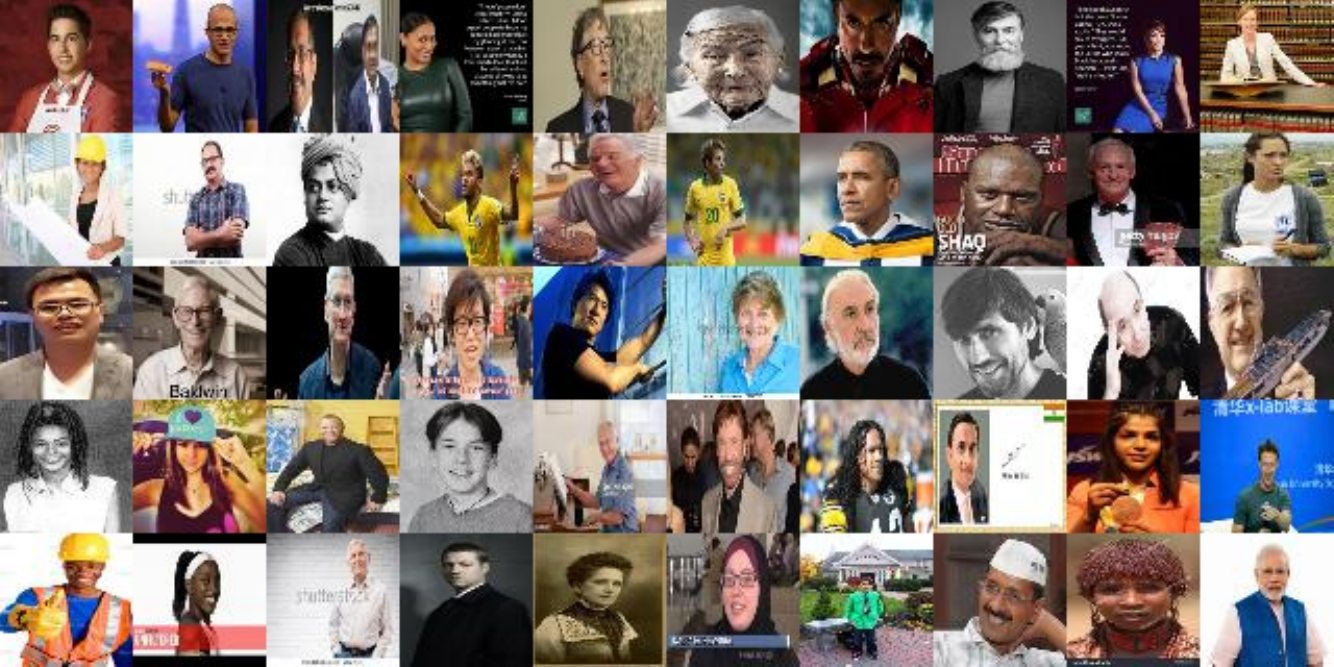}
    \caption{Retrievals for MMR with query ``A photo of an architect"}
    \label{fig:concat_architect_mmr}
\end{figure}
\begin{figure}
    \centering
    \includegraphics[width=\textwidth]{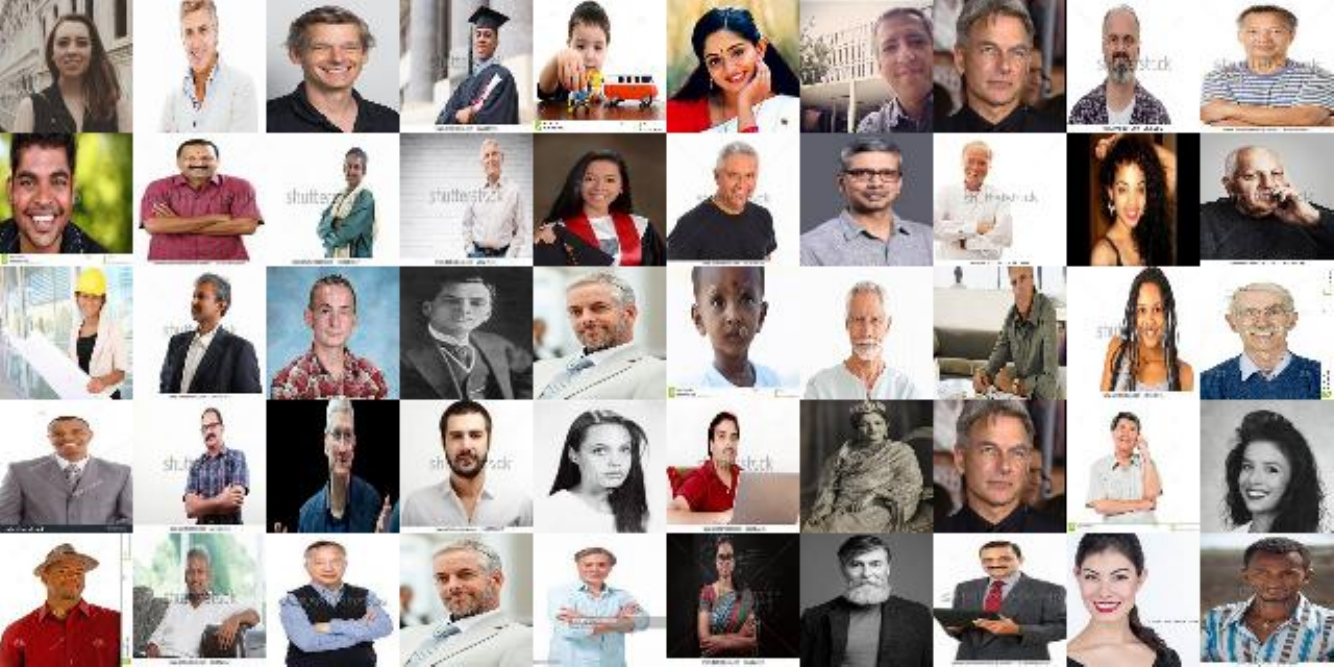}
    \caption{Retrievals for CLIP-Clip with query ``A photo of an architect"}
    \label{fig:concat_architect_clipclip}
\end{figure}
\begin{figure}
    \centering
    \includegraphics[width=\textwidth]{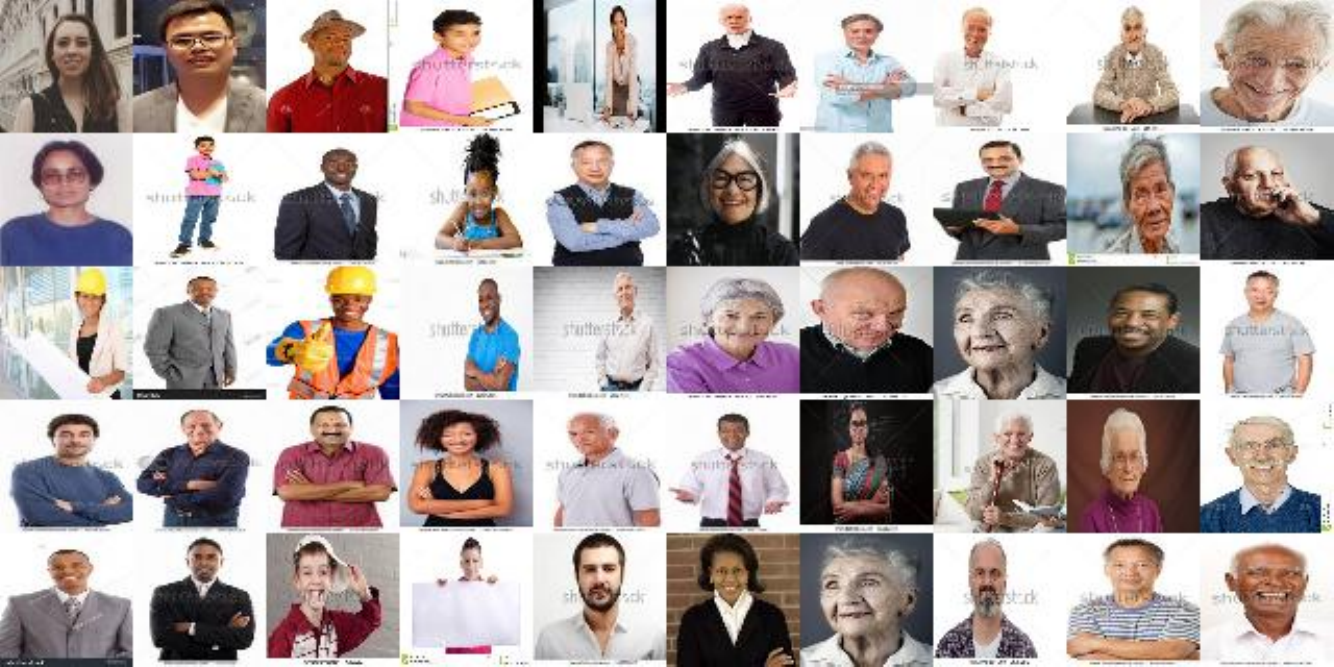}
    \caption{Retrievals for PBM with query ``A photo of an architect"}
    \label{fig:concat_architect_pbm}
\end{figure}
\begin{figure}
    \centering
    \includegraphics[width=\textwidth]{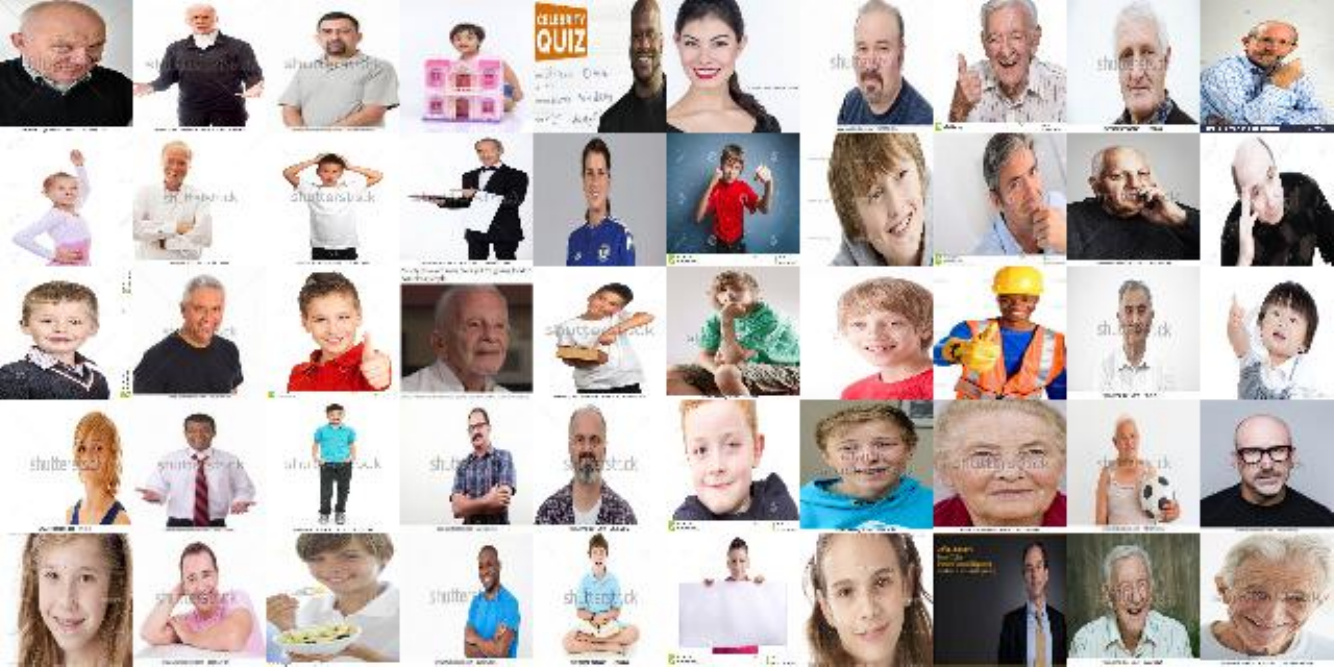}
    \caption{Retrievals for DebiasCLIP with query ``A photo of an architect"}
    \label{fig:concat_architect_debias}
\end{figure}
\clearpage

\section{Experiment Notes} \label{sec:experiment_notes}

\subsection{GPT Generated Queries.}

The 10 queries used in experiments and generated by GPT are as follows: \texttt{[programmer, nurse, architect, scientist, artist, chef, lawyer, teacher, engineer, doctor]}.

\subsection{Experiment Setting and Details.}
These experiments were conducted mainly on CPU after initially using a single A100 GPU to compute CLIP and DebiasCLIP embeddings and run FAISS to compute the top 10k candidates for each. Then, all embeddings are saved so that retrieval across all methods can be run on the CPU. When training linear probes, we conduct a cross-validated hyperparameter search from $0.01$ to $100$ over $\ell_2$ regularized logistic regression models from scikit-learn \cite{scikit-learn} to find the optimal probe. 

To run \methodname, we use Gurobi \citep{gurobi} with an Academic License to solve Eqn.~\eqref{eqn:mip_supremum} each iteration of our cutting plane method and use CVXPY \citep{diamond2016cvxpy, agrawal2018rewriting} to solve our closed-form, quadratic program.

To train our oracle estimators, we call scikit-learn's implementations of Linear Regression, Decision Trees with depth 3, and MLPs with one hidden layer of size 64.

On average, the runtime for \methodname~is approximately 1-5 minutes with 50 queries and 10000 elements in each of our retrieval and curation sets. However, with more features (such as using CLIP embeddings instead of group attributes), this runtime can increase. We note, however, that our most competitive baseline, MMR, is a greedy algorithm that requires 1-2 hours per query to compute. In live image retrieval settings, this runtime is too slow to be of any practical use.

\end{document}